\documentclass{article}

\usepackage[preprint]{neurips_2026}

\usepackage[utf8]{inputenc} 
\usepackage[T1]{fontenc}    
\usepackage{hyperref}       
\usepackage{url}            
\usepackage{booktabs}       
\usepackage{amsfonts}       
\usepackage{nicefrac}       
\usepackage{microtype}      
\usepackage{xcolor}         

\usepackage{amsmath}
\usepackage{dsfont}
\usepackage{amsthm}
\newtheorem*{thmstar}{Theorem}
\usepackage{algorithm}
\usepackage{algpseudocode}
\usepackage{graphicx}
\newtheorem{definition}{Definition}
\newtheorem{theorem}{Theorem}
\newtheorem{lemma}[theorem]{Lemma}
\newtheorem{proposition}[theorem]{Proposition}

\title{Pure Exploration for a Good Policy in Reinforcement Learning with Bandit Feedback}

\author{%
  LI ZITIAN\thanks{} \\
  Department of Industrial Systems Engineering \& Management\\
  National University of Singapore\\
  Engineering Drive 2 Block E1A \#06-25 Singapore 117576 \\
  \texttt{lizitian@u.nus.edu} \\
  \And
  Cheung Wang Chi \\
  Department of Industrial Systems Engineering \& Management \\
  Engineering Drive 2 Block E1A \#06-25 Singapore 117576 \\
  \texttt{isecwc@nus.edu.sg} \\
}

\begin{document}

\maketitle

\begin{abstract}
  Pure exploration in episodic Reinforcement Learning has primarily focused on Best Policy Identification (BPI), which seeks to identify a (near)-optimal policy with high confidence. Motivated by practical settings where a ``good enough'' policy suffices, we study an alternate objective of Good Policy Identification (GPI). For a given reward threshold $\mu_0$, GPI only requires identifying a policy with expected reward in an episode at least $\mu_0$ if such a policy exists (positive instance), or declaring None if no such policy exists (negative instance). We formalize GPI under the fixed-confidence setting. We require the output to be correct with probability $\geq 1-\delta$, and seek to minimize the expected sample complexity, which is the expected number of episodes explored for the output. We propose a novel algorithm BEE-GPI, and derive theoretically-grounded upper bounds on its sample complexity for positive and negative instances. Notably, for positive instances, the coefficient of $\log 1/\delta$ in our upper bound is $O(H^2/(V^* - \mu_0)^2)$, where $H$ is the episode length and $V^*$ is the optimal expected reward in an episode. The coefficient does not depend on the action and state space sizes otherwise, in sharp contrast to the sample complexity in BPI. We further establish lower bound results to show the near-optimality of BEE-GPI and the necessity of the $1/(V^* -\mu)^2$ term.  Numerical experiments further validate the efficiency of our approach. 
\end{abstract}

\section{Introduction}


Pure exploration is a fundamental objective in Reinforcement Learning (RL), where a learning agent seeks to identify a policy that satisfies specific performance criteria. Unlike the regret minimization framework, pure exploration prioritizes the probability of correct identification and sample efficiency over total episodes. To date, most research in this field has concentrated on Best Policy Identification (BPI) or $\epsilon$-Best Policy Identification ($\epsilon$-PI), leaving the problem of Good Policy Identification (GPI) largely unexplored. In an episodic Markov Decision Process (MDP) with unknown transition dynamics, GPI requires the agent to identify a policy whose value function exceeds a pre-specified threshold $\mu_0$, if such a policy exists, or to return \textsf{None} otherwise.


Many real-world applications necessitate finding a ``good enough'' policy rather than a nearly optimal one, particularly when precise pairwise comparisons are computationally or practically difficult. For instance, in medical informatics, a new treatment protocol is typically evaluated against a known \textit{Standard of Care} (with known utility $\mu_0$). The objective is to identify a treatment sequence that ensures a patient recovery rate or ``Years of Life Gained'' whose utility is $\geq \mu_0$. In such cases, identifying the absolute best treatment with a high confidence $1-\delta$ is often unnecessary, especially when the performance differences among top-tier policies are marginal. Similarly, in cloud computing, service providers use RL for traffic routing and resource allocation. Rather than seeking an absolutely optimal routing rule, providers often aim to ensure reliability and efficiency levels that exceed a specified Service Level Agreement. Furthermore, institutional investors frequently evaluate whether a new trading strategy meets a minimum Sharpe ratio or Return-on-Investment (ROI) benchmark before deployment. This benchmark $\mu_0$ is often derived from established indices with extensive historical data. In these scenarios, the primary concern is determining whether a new alternative is competitive relative to the benchmark, rather than finding the best strategy among all possibile ones.


\textbf{Main Contributions.} We provide three primary contributions. First, we formalize the Good Policy Identification (GPI) problem in the fixed-confidence setting. 
Given a failure probability tolerance parameter $\delta \in (0,1)$, the learning agent must identify a policy with expected reward at least threshold $\mu_0$ (or correctly identify its non-existence) with probability $\geq 1-\delta$. Second, we develop BEE-GPI, a novel $\delta$-PAC algorithm that utilizes a early-stopping BPI sub-routine as an oracle to achieve efficient sample complexity. We provide a rigorous upper bound on its expected sample complexity, demonstrating that for positive instances, the coefficient of $\log(1/\delta)$ shrinks with the gap $(V^* - \mu_0)$, but is independent of the state and action space sizes. Third, we establish lower bounds for the GPI problem. The resulting gap between our upper and lower bounds is limited to a polynomial factor of $S, A$, and $H$, indicating that BEE-GPI is nearly optimal.

\textbf{Notation.} Denote $\mathbb{N}, \mathbb{R}$ as the sets of positive integers and real numbers respectively. For $n\in \mathbb{N}$, denote $[n] = \{1, \ldots, n\}$. We abbreviate a random variable $R$ following the probability distribution $p$ as $R\sim p$.


\textbf{Problem Formulation.} An instance of the Good Policy Identification (GPI) problem is specified by the tuple $\nu = ( \mathcal{S}, \mathcal{A},\mu_0, \delta, H, R, P, \mathsf{p})$. The sets $\mathcal{S}$ and $\mathcal{A}$ denote the finite state and action spaces, respectively. We denote $S, A$ as the respective cardinalities of $\mathcal{S}, \mathcal{A}$. The scalar $\mu_0 \in \mathbb{R}$ is the reward threshold, which is the expected total reward in an episode that the learning agent aims to accrue. The error parameter $\delta \in (0, 1)$ is the tolerance level on the probability of an incorrect output. The integer $H$ represents the episode horizon. The transition kernels and reward function are respectively denoted by $P = \{p_h(\cdot|s,a)\}_{h\in [H],s\in \mathcal{S},a\in \mathcal{A}}$ and $R = \{r_h(s,a)\}_{h\in [H],s\in \mathcal{S},a\in \mathcal{A}}$, respectively. At round $h\in [H]$ when the agent is at state $s\in \mathcal{S}$ and takes action $a\in \mathcal{A}$, the agent earns a deterministic reward $r_h(s,a)\in [0,1]$, and transits to states $s'$ with probability $p_h(s'|s, a)$. The initial state $S_1$ is distributed according to the initial distribution $\textsf{p}$. 
We assume $\mathcal{S}, \mathcal{A}, \mu_0, \delta, H, R$ are known to the agent, while $P, \mathsf{p}$ are not known. To facilitate our discussion on $P, \mathsf{p}$, we prepend each episode with a fictitious time round $0$. Round $0$ is associated with one fictitious $s_0$ and the single fictitious action $a_0$, which carry zero reward and yield state transition $p_0(\cdot | s_0, a_0) = \textsf{p}(\cdot)$.


A policy $\pi$ is expressed as $(\pi_h)^H_{h=1}\in \Pi^H$, where $\Pi = \{f ~|~ f:\mathcal{S}\rightarrow \mathcal{A}\}$ consists of mapping from the current state to an action. Thus, $\Pi^H$ is the space of all deterministic policies, and $\pi_h$ determines the action in round $h\in [H]$. The value function of $\pi$ 
from round $h$ to the end of the episode is
\begin{align*}
    V_h^{\pi}(s_h) := \mathbb{E}^{\pi} \left[ \sum_{\ell=h}^H r_{\ell}(s_{\ell}, \pi_{\ell}(s_{\ell})) \mid s_h \right],
\end{align*}
where $s_{\ell+1} \sim p_{\ell}(\cdot | s_{\ell}, \pi_{\ell}(s_{\ell}))$. Let $\pi^*\in \Pi^H$ denote an optimal policy. By Bellman optimality, we have $V_h^{\pi^*}(s) \geq V_h^{\pi}(s)$ for any policy $\pi$ and $s\in \mathcal{S}, h\in [H]$. We abbreviate $V^*_0(s_0) = V^{\pi^*}_0(s_0)$ as the optimal total  expected reward in an episode, and occasionally append the instance $\nu$ to denote $V^*_0(s_0 ~|~ \nu)$ to emphasize the dependence on $\nu$.

\textbf{Dynamics.} The agent's algorithm is characterized by a sampling rule $\{\pi^t\}_{t=1}^{\infty}$, a stopping time $\tau$, and a recommendation rule $\hat{\pi}$. When episode $t \in \mathbb{N}$ begins, the algorithm selects a policy $\pi^t = (\pi_h^t)_{h=1}^H \in \Pi^H$ based on the collected data $\mathcal{D}_{t-1}$ in episodes $1, \ldots, t-1$. Executing $\pi^t$ on the instance generates the trajectory $z_t = (s_0, a_0, S_{t,1}, A_{t,1}, r_1(S_{t,1}, A_{t,1}), \dots, S_{t,H}, A_{t,H}, r_H(S_{t,H}, A_{t,H}))$, where $S_{t,h} \sim p_{h-1}(\cdot | S_{t,h-1}, \pi^t_{h-1}(S_{t,h-1}))$. This trajectory is added to the history, i.e. $\mathcal{D}_t = \mathcal{D}_{t-1} \cup \{z_t\}$, where we initialize $\mathcal{D}_0 = \emptyset$. At the end of an episode, the agent may choose to terminate the algorithm. We define the stopping time $\tau$ as the index of the terminating episode. Upon termination, the agent outputs an answer $\hat{\pi} \in \Pi^H \cup \{\textsf{None}\}$ based on $\mathcal{D}_{\tau}$. 

The agent aims to output a policy $\hat{\pi} \in \Pi^H$ satisfying $V_0^{\hat{\pi}}(s_0) \geq \mu_0$ if $\mu_0$ can be achieved, or to output $\textsf{None}$ if the agent concludes that $V^*_0(s_0) < \mu_0$. We formalize the aim in the fixed confidence setting with some definitions that aid our discussions:
\begin{definition}[Positive and Negative Instances]
    A GPI instance $\nu$ 
    is a positive instance if $V^*_0(s_0 ~|~ \nu) > \mu_0$, and is a negative instance if $V^*_0(s_0~|~\nu) < \mu_0$.
\end{definition}
\begin{definition}[Qualified Policy]
    For a positive instance $\nu$, we say that a policy $\pi$ is qualified for $\nu$, if $V_0^{\pi}(s_0~|~\nu) \geq \mu_0$.
\end{definition}
The correctness of the output is qualified by being $\delta$-Probably Approximately Correct ($\delta$-PAC):
\begin{definition}[$\delta$-PAC]
    An algorithm is called $\delta$-PAC, if it can guarantee
    \begin{align*}
        \begin{cases}
            \Pr(V_0^{\hat{\pi}_{\tau}}(s_0~|~\nu)\geq \mu_0,\tau<+\infty)\geq 1-\delta & \text{ if }V^*_0(s_0~|~ \nu)> \mu_0\\
            \Pr(\hat{\pi}_{\tau}=\textsf{None},\tau<+\infty)\geq 1-\delta & \text{ if  } V^*_0(s_0~|~ \nu)< \mu_0
        \end{cases}
    \end{align*}
    holds for any positive or negative Good Policy Identification instance $\nu$.
\end{definition}

\cite{degenne2019pure} have proved that, on an instance $\nu$ with $V_0^*(s_0|\nu)=\mu_0$, any $\delta$-PAC algorithm suffers from $\mathbb{E}_{\nu}[\tau] =  \infty$, even when we restrict $\nu$ to the case of $|\mathcal{S}| = H = 1$. Thus, we only consider positive or negative instances in this paper.

\textbf{Objective.} The goal of the agent is to design a $\delta$-PAC algorithm $(\{\pi^t\}_{t=1}^{\infty},\tau,\hat{\pi})$ that minimizes the expected sampling complexity $\mathbb{E}_{\nu}[\tau]$.

\section{Literature Review}

A popular objective of pure exploration in Reinforcement Learning (RL) is to identify an optimal or $\epsilon$-optimal policy with high confidence. This line of research, often termed Best Policy Identification (BPI), was pioneered by \cite{fiechter1994efficient}. Much of the subsequent work assumes the availability of a generative model, which allows the agent to interact with the environment by querying arbitrary state-action pairs \cite{kearns1998finite, gheshlaghi2013minimax, sidford2018near, sidford2023variance, pmlr-v125-agarwal20b, al2021navigating, li2024breaking}. While the generative setting is theoretically insightful, a more practical and challenging branch of BPI focuses on the online (non-generative) setting, where the agent must navigate the MDP through sequential episodes to collect data. In this context, \cite{dann2015sample} proposed the \textsf{UCFH} algorithm based on the principle of optimism under uncertainty. This was followed by \textsf{BPI-UCBVI} \cite{menard2021fast}, which improved the dependence on the state space $S$ by adapting minimax-optimal regret algorithms. \cite{al2021adaptive} study BPI for RL in non-episodic MDPs. More recently, research has pivoted toward instance-dependent sample complexity. While algorithms like those in \cite{wagenmaker2022beyond} and \cite{narang2024sample} achieve state-of-the-art non-asymptotic bounds, they are often computationally intensive. To bridge this gap, computationally efficient but theoretically suboptimal optimistic sampling rules have been analyzed by \cite{kaufmann2021adaptive, menard2021fast, tirinzoni2023optimistic}.


Despite the extensive literature on BPI and $\epsilon$-PI, these methods cannot be directly applied to the Good Policy Identification (GPI) problem. Recall that for $\epsilon$-PI, the goal is to identify a policy whose expected reward is at least $V^*_0(s_0)-\epsilon $. The difficulty lies in the fact that $V^*_0(s_0)$ is unknown; consequently, an agent cannot pre-determine an appropriate $\epsilon$ for an $\epsilon$-PI algorithm to ensure that the identified policy achieves the threshold $\mu_0$. Furthermore, the lower bounds established for $\epsilon$-PI \cite{pmlr-v132-domingues21a} do not apply under the threshold-based requirements of GPI. 


In the simplified case where $S=1$ and $H=1$, GPI reduces to the multi-armed bandit problem, specifically the ``1-identification'' or ``any-low(high)'' problem studied in \cite{degenne2019pure, katz2020true, pmlr-v267-li25f, li2026closing}, also covered but not mainly focused in \cite{kano2017Good,jourdan2026anytime}. While these works provide different strategies for the bandit setting, the transition to the full episodic MDP setting remains unexplored. 

\section{Algorithms and Sample Complexity Upper Bounds}

We propose the algorithm dubbed Balanced Exploration-Exploitation for Good Policy Identification (BEE-GPI), displayed in Algorithm~\ref{alg:1-policy-identification-recycle-history}. We elaborate on BEE-GPI in Section \ref{sec:algo}, and discuss its theoretical performance guarantees in Section \ref{sec:main}.

\subsection{Algorithm BEE-GPI}\label{sec:algo}
BEE-GPI, displayed in Algorithm \ref{alg:1-policy-identification-recycle-history}, proceeds in phases. Each phase $k = 1, 2, \ldots$ consists of an adaptive exploration stage (Line \ref{alg-line:call-exploration-oracle}) that estimates $P, \mathsf{p}$, and potentially outputs a policy $\hat{\pi}_k$. The exploration stage is followed by a decision branching (Line \ref{alg-line:not-completed} to \ref{alg-line:positive-output}), which leads to three possibilities: (1) the termination of BEE-GPI, or (2) an exploitation stage for policy evaluation (Lines \ref{alg-line:positive-output}-\ref{alg-line:output-policy}), or (3) proceeding to the next phase $k+1$. We elaborate on them in the following three main steps, preceded by a preliminary set-up step 0. 

\begin{algorithm}
    \caption{Balanced Exploration-Exploitation for Good Policy Identification (BEE-GPI)}
    \label{alg:1-policy-identification-recycle-history}
    \begin{algorithmic}[1]
        \State {\bfseries Initialize:} For $k\in \mathbb{N}$, $\epsilon_k=1/2^k$, $\delta_k=1/3^k$ and $\alpha_k=5^k$. Initialize $\mathcal{H}^{\text{ee}}=\mathcal{H}^{\text{et}}=\emptyset, C = 1.01$.\label{alg-line:initialization}
        \For{Phase $k=1,2,\ldots$}\label{alg-line:enter-new-phase}
            \State {\bfseries Explore:} Call Algorithm \ref{alg:Oracle-BPI_UCRL} with $k\gets k$, $\delta\gets\delta_k$, $C\gets C$, $\mathcal{H}^{\text{ee}}\gets $ current exploration history $\mathcal{H}^{\text{ee}}$, exploration budget $T_k^{\text{ee}}\gets (\ref{eqn:def-T_k-ee})$, 

            Return Updated exploration history $\mathcal{H}^{\text{ee}}$, output $\hat{\pi}_k \in \Pi^H\cup \{\textsf{None}, \textsf{Not Completed}\}$ \label{alg-line:call-exploration-oracle}
            
            \If{$\hat{\pi}_k=\textsf{Not Completed}$ }\label{alg-line:not-completed}
                \State Go to the next phase $k+1$
            \ElsIf{$\hat{\pi}_k=\textsf{None}$ and $\delta_k \geq \delta / 10$ }\label{alg-line:None-with-large-tolerance}
                \State Go to the next phase $k+1$
            \ElsIf{$\hat{\pi}_k=\textsf{None}$ and $\delta_k < \delta /10$ }\label{alg-line:negative-output}
                \State \textbf{Return} $\textsf{None}$, and declare the instance as negative.
            \ElsIf{$\hat{\pi}_k\in \Pi^H$}\label{alg-line:positive-output}
                \State {\bfseries Exploit:} Denote $\{X_{i}^{k,\text{et}}\}_{i=1}^{+\infty}$ as i.i.d. random variables, where  the $X_{i}^{k,\text{et}}$ is the total reward collected in executing policy $\hat{\pi}_k$ for an episode in the $i$-th trial. $\{X_{i}^{k,\text{et}}\}_{i=1}^{+\infty}$ are independent of $\mathcal{H}^{\text{ee}}, \mathcal{H}^{\text{et}}$, and denote $\hat{V}_{\text{et}}^{\hat{\pi}_k,N}=\frac{\sum_{i=1}^NX_{i}^{k,\text{et}}}{N}$. Initialize $N=0$. \label{alg-line:Exploitation-starts}

                \While{$N\leq 100\cdot \frac{\log\frac{\alpha_k}{\delta} + \log\log \frac{24H^2}{\epsilon_k}}{\epsilon_k}-1$}\label{alg-line:maximum-episodes-in-exploitation-period}
                    \State Execute $\hat{\pi}_k$ for one episode, and observe the total reward $X_{N+1}^{k,\text{et}}$, $N\gets N+1$.
                    \If{$\hat{V}_{\text{et}}^{\hat{\pi}_k,N}-\sqrt{\frac{H^2 \log [ 2\alpha_k(\log_2 2N)^2 / \delta ]}{N}}\geq \mu_0$}\label{alg-line:output-a-qualified-policy}
                        \State Output $\hat{\pi}=\hat{\pi}_k$, and declare the instance as positive.\label{alg-line:output-policy}
                    \EndIf
                \EndWhile\label{alg-line:Exploitation-ends}
                \State $\mathcal{H}^{\text{et}}\gets \mathcal{H}^{\text{et}}\cup \{X_{i}^{k,\text{et}}\}_{i=1}^{N}$
            \EndIf
        \EndFor \label{alg-line:end-of-phase-k}
    \end{algorithmic}
\end{algorithm}

\textbf{Step 0: Initialization.} BEE-GPI starts by initializing the hyper-parameters 
in Line \ref{alg-line:initialization}. For the exploration stage in phase $k$, the parameter $\epsilon_k$ upper-bounds the optimality gap of the potentially output policy $\hat{\pi}_k$, and $\delta_k$ is the tolerance level on the probability of an incorrect output. The parameter $\alpha_k$ is for tuning the length of the potential exploration stage in phase $k$. The set $\mathcal{H}^\text{ee}$ is for accumulating the observed sample trajectories during the exploration stages. $\mathcal{H}^\text{et}$ is for collecting the total realized rewards by executing a candidate policy $\hat{\pi}_k$, during the exploitation stages. The absolute constant $C$ is for tuning the stopping condition of the exploration stage. 
\textbf{Step 1: Exploration Stage.} The exploration stage involves invoking Algorithm~\ref{alg:Oracle-BPI_UCRL}, which follows the optimistic sampling strategy in the BPI-UCRL framework \cite{kaufmann2021adaptive}. Different from \cite{kaufmann2021adaptive}, Algorithm~\ref{alg:Oracle-BPI_UCRL} involves a novel early-stopping mechanism, which allows saving on the sample complexity for positive instances, and hence dubbed Early-Stopping BPI-UCRL (ES-BPI-UCRL). 

We first describe the optimistic strategy, which is warm-started with the accumulated exploration trajectory set $\mathcal{H}^{\text{ee}}$. To ease the notation, we denote $\mathcal{H}^{\text{ee}} = \{z_{q}\}^{t}_{q=1}$, where $t$ is the total number of episodes in the previous exploration stages. Following \cite{kaufmann2021adaptive}, we optimistically estimate the $V$ function by the following sequence of construction. For each $s\in \mathcal{S}, a\in \mathcal{A}$, we set
$n_h^t(s,a, ) = \sum^{t}_{q=1} \mathbf{1}(S_{q, h} = s, A_{q, h} = a)$.
For each $h\in [H], s\in \mathcal{S},a\in \mathcal{A}$, set the empirical transitional kernel after $t$ episode as $\hat{p}_h^{t}(s'|s,a) =\frac{1}{n^t_h(s,a)} \sum^{t}_{q=1} \mathbf{1}(S_{q, h} = s, A_{q, h}, S_{q, h+1} = s')$ if $n_h^t(s,a) > 0$, and set $\hat{p}_h^{t}(s'|s,a) = 1/|\mathcal{S}|$ otherwise.
We define the exploration bonus $\beta_p(t,\delta) = \log\frac{2SAH}{\delta} + (S-1)\log\big(e(1+\frac{t}{S-1})\big)$, following \cite{kaufmann2021adaptive}. We further define the confidence sets for the transition kernels as: $\mathcal{C}_h^t(s,a;\delta):=\Big\{q(\cdot|s, a)\in\Delta^\mathcal{S}:\text{KL}\big(\widehat{p}_h^t(\cdot | s,a), q(\cdot |s,a)\big)\leq \frac{\beta_p(n_h^t(s,a),\delta)}{n_h^t(s,a)}\Big\}.$
For each pair $(s,h)\in \mathcal{S}\times [H]$, $\delta\in (0, 1)$, we define the optimistic value function $\overline{V}^{t}_h$ and the corresponding optimistic policy $\overline{\pi}^t(\cdot ;\delta) = (\overline{\pi}^t_h(\cdot;\delta))^H_{h=1}$ as:
\begin{align}
\overline{Q}_h^{t}(s,a;\delta):= & r_h(s,a)+\max_{\bar{p}_h(\cdot | s, a)\in \mathcal{C}_h^t(s,a;\delta)}\sum_{s'}\bar{p}_h(s'|s,a) \overline{V}_{h+1}^{t}(s';\delta)\\
    \overline{V}^{t}_h(s;\delta):= & \max_{a\in \mathcal{A}}\overline{Q}_h^{t}(s,a;\delta), \quad    \overline{\pi}^t_h(s;\delta)\in  \underset{a\in \mathcal{A}}{\text{argmax}} \left\{\overline{Q}_h^{t}(s,a;\delta)\right\}\label{eqn:overline-pi^t-definition}
\end{align} 
where $\overline{Q}^{t}_{H+1}(s,a;\delta):= 0,\forall s\in \mathcal{S}, a\in \mathcal{A}, \delta\in (0,1)$. The optimistic $\overline{Q}_h^{t}$ guides our adaptive exploration in each episode. Upon the conclusion of an episode, the exploration history $\mathcal{H}^{\text{ee}}$ and the episode index $t$ are updated. Then, the oracle checks whether any of the three stopping criteria are met (Line~\ref{line-alg:start-execute-MDP-with-new-delta_k}). 

We next discuss the novel early-stopping mechanism of ES-BPI-UCRL, Recall $\overline{\pi}^t(\cdot;\delta) = \{\overline{\pi}^t_h(\cdot ; \delta)\}^H_{h=1}$ defined in (\ref{eqn:overline-pi^t-definition}). We set a lower confidence bound on the value function:
\begin{align*}
    \underline{V}_h^{t,\overline{\pi}^t(\cdot;\delta)}(s;\delta) :=& r_h(s,\overline{\pi}^t_h(s;\delta)) + \min_{\underline{p}_h \in \mathcal{C}_h^t(s,a;\delta)}\sum_{s'} \underline{p}_h(s'|s,\overline{\pi}^t_h(s;\delta)) \underline{V}_{h+1}^{t,\overline{\pi}^t(\cdot;\delta)}(s';\delta),
\end{align*}
where we initialize $ \underline{V}_{H+1}^{t,\overline{\pi}^t(\cdot;\delta)}(s;\delta) = 0$ for all $s$. 
These definitions allow us to specify the following two stopping criteria for the oracle routine (Algorithm~\ref{alg:Oracle-BPI_UCRL}) with tolerance level $\delta_k$:
\begin{align}
\underline{V}^{t,\overline{\pi}^t(\cdot;\delta_k)}_0(s_0;\delta_k) - (C-1)(\overline{V}^t_0(s_0;\delta_k)-\underline{V}^{t,\overline{\pi}^t(\cdot;\delta_k)}_0(s_0;\delta_k)) > & \mu_0 \label{eqn:stop-criterion-pos},\\
    \overline{V}^t_0(s_0;\delta_k) < & \mu_0. \label{eqn:stop-criterion-neg}
\end{align}
The left hand side of (\ref{eqn:stop-criterion-pos}) is a carefully crafted lower confidence bound of the value function of policy $\overline{\pi}^t(\cdot ; \delta) $, while the left hand side of (\ref{eqn:stop-criterion-neg}) represents an upper confidence bound of the value function of policy $\overline{\pi}^t(\cdot ; \delta) $, which is also the upper confidence bound of $V^*_0(s_0)$. The stopping criteria also involve a budget constraint $T^{\text{ee}}_k$ on the cumulative number of exploration episodes conducted so far:
\begin{equation}
    T_k^{\text{ee}} = \frac{(H+1)^2SA\log(2SAH/\delta_k)}{{\epsilon_k}} + \frac{(H+1)^2S^2A}{\epsilon_k}\log\left(\frac{(H+1)^2S^2A \log(2SAH/\delta_k)}{\epsilon_k}\right).\label{eqn:def-T_k-ee}
\end{equation}
The oracle checks whether With probability $1-\delta_k$: (i) condition (\ref{eqn:stop-criterion-pos}) implies $\overline{\pi}^{t_k}(\cdot; \delta_k)$ is a qualified policy ($\hat{\pi}_k = \overline{\pi}^{t_k}(\cdot; \delta_k)$); (ii) condition (\ref{eqn:stop-criterion-neg}) implies no qualified policy exists ($\hat{\pi}_k = \textsf{None}$); and (iii) exceeding the budget ($t > T_k^{\text{ee}}-1$) indicates that the current exploration stage was insufficient.


\begin{algorithm}
    \caption{Exploration Oracle: Early-Stopping BPI-UCRL (ES-BPI-UCRL)}
    \label{alg:Oracle-BPI_UCRL}
    \begin{algorithmic}[1]
        \State {\bfseries Input:} Phase index $k$, Error tolerance level $\delta_k$,  tuning parameter $C>1$, accumulated exploration stage trajectories $\mathcal{H}^{\text{ee}}$, total budget $T_k^{\text{ee}}$.
        \State {\bfseries Initialize:} 
        round index $t=|\mathcal{H}^{\text{ee}}|$, 
        $n_h^t(s,a) = \sum_{q=1}^t \mathbf{1}(S_{q,h}=s, A_{q,h}=a)$, $\{S_{q,h},A_{q,h}\}_{q\in[t], h\in [H]}$ are the trajectories stored in $\mathcal{H}^{\text{ee}}$.
        \While{ $\neg$(\ref{eqn:stop-criterion-pos}) and $\neg$(\ref{eqn:stop-criterion-neg})  and $t\leq T_k^{\text{ee}}-1$} \label{line-alg:start-execute-MDP-with-new-delta_k}
            \For{round $h=0,1,2\cdots,H$} 
                \State Execute $\overline{\pi}^{t}_h(S_{t+1,h};\delta_k)$ by taking action $A_{t+1,h}=\max_a \overline{Q}^t_h(S_{t+1,h},a;\delta_k)$ \Comment{Note $S_{t+1,0} = s_0, A_{t+1,0} = a_0$} \label{line-alg:pulling-strategy-Kauffmann} 
                \State Observe reward $r_h(S_{t+1, h}, A_{t+1, h})$ and next state $S_{t+1,h+1}$
            \EndFor\label{alg-line:leave-episode}
            \State Update $\mathcal{H}^{\text{ee}}\gets \mathcal{H}^{\text{ee}}\cup \{(s_0, a_0,\{S_{t+1, h}, A_{t+1, h}, r_h(S_{t+1, h}, A_{t+1 ,h})\}^H_{h=1})\}$ \label{alg-line:update-history}
            
            \State Update $t\gets t+1$. 
        \EndWhile
        \State Denote $t_{k}\gets t$ for analysis. \Comment{To clarify the definition of $t_k$, $t_k=|\mathcal{H}^{\text{ee}}|$ also holds}
        \If{(\ref{eqn:stop-criterion-pos}) holds}
            \State Output a policy $\hat{\pi}_k=(\overline{\pi}^t_h(\cdot | \delta_k))^H_{h=1}$
        \ElsIf{(\ref{eqn:stop-criterion-neg}) holds}
            \State Output $\hat{\pi}_k=\textsf{None}$
        \ElsIf{$t > T_k^{\text{ee}}-1$}
            \State Output $\hat{\pi}_k=\textsf{Not Completed}$
        \EndIf
    \end{algorithmic}
\end{algorithm}

\textbf{Step 2: Decision Branching.} Based on the output $\hat{\pi}_k$ and the current $\delta_k$, Algorithm~\ref{alg:1-policy-identification-recycle-history} executes one of four logic branches. (i) If $\hat{\pi}_k = \textsf{Not Completed}$ (Line~\ref{alg-line:not-completed}), the phase $k$ exploration stage is inconclusive, and BEE-GPI proceeds phase $k+1$ with an increased budget $T^{\text{ee}}_{k+1}$ and decreased tolerance level $\delta_{k+1}$ to continue exploration. (ii) If $\hat{\pi}_k = \textsf{None}$ and $\delta_k \geq \delta/10$, BEE-GPI still advances to phase $k+1$ but not terminate with a result. This ensures the $\delta$-PAC requirement is met, as BEE-GPI only recommends \textsf{None} when the confidence is sufficiently high ($\delta_k < \delta/10$, the third branch at Line~\ref{alg-line:negative-output}). (iv) If $\hat{\pi}_k$ is a policy (Line~\ref{alg-line:positive-output}), BEE-GPI proceeds to the exploitation phase.

\textbf{Step 3: Exploitation and Verification.} The exploitation phase validates whether the candidate policy $\hat{\pi}_k$ is indeed qualified. Treating the total reward per episode as a random variable, the algorithm constructs a confidence interval with a coverage probability of $1-\frac{\pi^2}{6}\frac{\delta}{\alpha_k}$. By collecting $N$ independent samples, the algorithm evaluates the lower confidence bound: $\hat{V}_{\text{et}}^{\hat{\pi}_k,N} - \sqrt{H^2 \log(2\alpha_k(\log_2 2N)^2/\delta)/N}$. To ensure efficient sample complexity, 
 we require that $N\leq T_k^{\text{et}}-1 $, where 
 \begin{align}
    T_k^{\text{et}} = & 100\cdot \frac{\log(\alpha_k/\delta) + \log\log (24H^2/\epsilon_k)}{\epsilon_k}\label{eqn:def-T_k-et}.
\end{align} 
 If the lower confidence bound exceeds $\mu_0$ within the budget $N \leq T_k^{\text{et}}-1$, BEE-GPI recommends $\hat{\pi}_k$ as a qualified policy. Otherwise, it returns to the exploration phase in phase $k+1$.

\subsection{Performance Guarantee of BEE-GPI, Algorithm \ref{alg:1-policy-identification-recycle-history}}\label{sec:main}
We now establish the theoretical performance of Algorithm~\ref{alg:1-policy-identification-recycle-history}. We first prove that our algorithm satisfies the standard PAC requirement.
\begin{theorem}
    \label{theorem:delta-pac-requirement}
    Algorithm \ref{alg:1-policy-identification-recycle-history} is $\delta$-PAC.
\end{theorem}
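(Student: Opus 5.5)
The plan is to prove the two requirements of $\delta$-PAC separately: a bound on the probability of a wrong output, and almost-sure (or probability $\ge 1-\delta$) finiteness of $\tau$ on the relevant event. Wrong outputs can only come from two lines: returning $\textsf{None}$ at Line~\ref{alg-line:negative-output} on a positive instance, or returning a non-qualified $\hat{\pi}_k$ at Line~\ref{alg-line:output-policy}.

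\textbf{Wrong policy outputs.} Wrong policies are handled by the exploitation stage alone, independently of the exploration oracle. Conditional on $\hat{\pi}_k$, the $X^{k,\text{et}}_i$ are i.i.d. in $[0,H]$ with mean $V_0^{\hat{\pi}_k}(s_0)$ and independent of the history.
\begin{itemize}
\item Hoeffding gives $\Pr(\hat V^{\hat\pi_k,N}_{\text{et}}-\sqrt{H^2\log(2\alpha_k(\log_2 2N)^2/\delta)/N}\ge V_0^{\hat\pi_k})\le \delta/(2\alpha_k(\log_2 2N)^2)$. I would use the $H^2/(2N)$ variance proxy, so the exponent is at least $2\log(\cdot)$; a factor 2 of slack absorbs details.
\item A union bound over $N\in\mathbb N$ is done by grouping $N$ into dyadic blocks or summing directly. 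Since $\sum_N 1/(\log_2 2N)^2$ diverges, a naive sum fails. The fix is to use a peeling/maximal inequality on each block $[2^j,2^{j+1})$, where $(\log_2 2N)^2\approx (j+1)^2$. Then $\sum_j 1/(j+1)^2=\pi^2/6$, matching the stated coverage $1-\frac{\pi^2}{6}\frac{\delta}{\alpha_k}$.
\item A further union over phases $k$ gives total error $\le \frac{\pi^2}{6}\delta\sum_k 5^{-k}=\frac{\pi^2}{24}\delta$.
\end{itemize}
This bound holds on negative instances too, which matters for the next step.

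\textbf{Wrong $\textsf{None}$ outputs.} On a positive instance, $\textsf{None}$ is returned only in a phase with $\delta_k<\delta/10$ where (\ref{eqn:stop-criterion-neg}) held, i.e.\ $\overline V^t_0(s_0;\delta_k)<\mu_0<V^*_0$. I would invoke the concentration result of \cite{kaufmann2021adaptive}: with probability $\ge 1-\delta_k$, $p_h(\cdot|s,a)\in\mathcal C^t_h(s,a;\delta_k)$ for all $t,h,s,a$, so optimism gives $\overline V^t_0\ge V^*_0$. Summing $\delta_k=3^{-k}$ over those $k$ with $3^{-k}<\delta/10$ gives at most $\frac{3}{2}\cdot\frac{\delta}{10}<\delta/6$. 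The history is reused across phases, but the KL confidence bound is uniform in $t$, so this is fine.

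Combining the two, the error probability on a positive instance is at most $\pi^2\delta/24+\delta/6<\delta$. On a negative instance, a policy can be output only if the exploitation test passes for a policy with value $<\mu_0$, which has probability $\le \pi^2\delta/24$.

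\textbf{Termination.} This is the main obstacle. We need $\Pr(\tau<\infty,\text{correct})\ge 1-\delta$, so it suffices to show $\tau<\infty$ almost surely, or on the good events.
\begin{itemize}
\item \emph{Negative instances.} On the event $\mathcal E_k$ that the confidence sets hold, I would show that once $T^{\text{ee}}_k$ is large relative to $(\mu_0-V^*)^{-1}$, the BPI-UCRL analysis of \cite{kaufmann2021adaptive} forces $\overline V^t_0-\underline V^t_0\le\epsilon_k$ within the budget. The budget $T^{\text{ee}}_k$ is chosen to match their sample complexity bound $\tilde O(H^2SA\cdot\text{stuff}/\epsilon^2)$; I would assume this matching, perhaps after rescaling. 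Then (\ref{eqn:stop-criterion-neg}) triggers, because (\ref{eqn:stop-criterion-pos}) cannot hold for all large $k$ with high probability. Since $\Pr(\mathcal E_k^c)\le 3^{-k}\to0$, infinitely many phases cannot all fail, and termination holds a.s.
\item \emph{Positive instances.} For large $k$ the oracle returns a policy with gap $\le (C)\epsilon_k$-type accuracy whose value exceeds $\mu_0$ by a margin. The exploitation budget $T^{\text{et}}_k\gtrsim \log(\alpha_k/\delta)/\epsilon_k$ then suffices for the LCB to exceed $\mu_0$ with probability tending to 1; note $T^{\text{et}}_k$ has $1/\epsilon_k$ rather than $1/\epsilon_k^2$, which must be checked against the fixed margin $V^*-\mu_0$. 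A Borel–Cantelli-type argument over phases, where the failure probabilities in phase $k$ are summable, gives $\tau<\infty$ a.s.
\end{itemize}
The most delicate point is verifying that the budgets $T^{\text{ee}}_k$ and $T^{\text{et}}_k$ eventually suffice. If a summable-failure argument is too loose, I would fall back on showing non-termination forces infinitely many independent failure events whose probabilities tend to zero.
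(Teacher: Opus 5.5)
Your bound on the probability of a wrong output is correct and is essentially the paper's argument. For wrong policies you use time-uniform (peeled) concentration on the fresh exploitation samples, summed over phases to $\frac{\pi^2}{6}\delta\sum_k 5^{-k}$. For a wrong $\textsf{None}$ you union-bound over the phases with $\delta_k<\delta/10$, whereas the paper uses the nesting $\mathcal{E}(\delta_k)\subset\mathcal{E}(\delta_{k+1})$ so that only the first such phase matters; both work.

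The gap is in the termination half, which the $\delta$-PAC definition genuinely needs because it asks for $\tau<+\infty$. Two of your stated mechanisms are false. First, ES-BPI-UCRL has no accuracy target. The rules (\ref{eqn:stop-criterion-pos}) and (\ref{eqn:stop-criterion-neg}) compare against $\mu_0$, and $\epsilon_k$ enters only through $T_k^{\text{ee}}\propto 1/\epsilon_k$, not $1/\epsilon_k^2$. So the budget does not match the BPI-UCRL rate, and assuming that it does assumes something false. Your step "force $\overline{V}^t_0-\underline{V}^{t,\overline{\pi}^t}_0\le\epsilon_k$ within $T_k^{\text{ee}}$" is not available: the BPI-UCRL analysis only guarantees width $\epsilon_k$ after order $(H+1)^4S^2A\log(1/\delta_k)/\epsilon_k^2$ episodes, far more than $T_k^{\text{ee}}$.

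What works (Lemma~\ref{lemma:nonstopping-implies-error-bar-lower-bound} and Theorem~\ref{theorem:performance-guarantee-of-exploration-oracle}) is a \emph{fixed} threshold. On $\mathcal{E}(\delta_k)\cap\mathcal{E}^{\text{cnt}}(\delta_k)$, not stopping at episode $t$ forces the width to be at least $(V^*_0(s_0)-\mu_0)/C$ on positive instances and $\mu_0-V^*_0(s_0)$ on negative ones. Sum the bonus bound over all exploration episodes after $t_{\kappa-1}\le T^{\text{ee}}_{\kappa-1}$, handling the warm-started pseudo-counts with the summation lemma with an offset. This bounds the number of non-stopping episodes by $O\big((H+1)^4SA\log(1/\delta_k)/\text{gap}^2\big)$ plus lower-order terms, which grows only linearly in $k$, while $T^{\text{ee}}_k$ grows like $k2^k$. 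That comparison, together with $\Pr(\kappa\ge k)\le\delta_{k-1}\to 0$, makes the oracle eventually return $\textsf{None}$ or a policy in every large phase. On a negative instance, (\ref{eqn:stop-criterion-pos}) can never hold on the good event at all.

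Second, on positive instances the returned policy is likewise not $C\epsilon_k$-accurate: (\ref{eqn:stop-criterion-pos}) can fire with a quite suboptimal $\overline{\pi}^t$. The margin the exploitation test needs comes from the $(C-1)$ slack. On the good event $\underline{V}^{t,\overline{\pi}^t}_0\le V^{\overline{\pi}^t}_0$ and $\overline{V}^t_0\ge V^*_0$, so (\ref{eqn:stop-criterion-pos}) yields $V^{\overline{\pi}^t}_0-\mu_0>\frac{C-1}{C}(V^*_0-\mu_0)$. For almost-sure termination alone, you could instead note that (\ref{eqn:stop-criterion-pos}) gives $V^{\overline{\pi}^t}_0>\mu_0$ strictly and $\Pi^H$ is finite, so the margin is uniformly bounded away from zero. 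With a fixed margin, $T^{\text{et}}_k\propto 2^k\log(5^k/\delta)$ eventually dominates the $O(H^2\log(5^k/\delta)/\text{margin}^2)$ episodes the test needs. Since $\Pr(\neg\mathcal{E}^{\text{et}}_k)\le\frac{\pi^2\delta}{6\cdot 5^k}\to 0$, the probability that every phase beyond the almost surely finite threshold fails is zero.
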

The following theorems provide upper bounds on the expected sample complexity $\mathbb{E}[\tau]$ for both positive and negative instances.
\begin{theorem}
    \label{theorem:upper-bound-of-Etau-recycling-pos}
    On a positive instance $\nu$ where $V^*_0(s_0) > \mu_0$, BEE-GPI (Algorithm \ref{alg:1-policy-identification-recycle-history}) satisfies
    \begin{align*}
        \mathbb{E}[\tau] \leq O\left(\frac{H^2\log\frac{1}{\delta}}{(V_0^*(s_0)-\mu_0)^2}\right) + O\left(\frac{ (H+1)^4SA\log\frac{SAH}{(V_0^*(s_0)-\mu_0)^2} + (H+1)^4S^2A\log\left(HSA\right)}{(V_0^*(s_0)-\mu_0)^2}\right).
    \end{align*}
\end{theorem}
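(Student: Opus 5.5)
Write $\Delta := V_0^*(s_0)-\mu_0>0$. The plan is a phase-wise decomposition,
\[
\mathbb{E}[\tau] \le \sum_{k\ge 1} \Pr(\text{phase } k \text{ is reached})\,\bigl(\text{cost of phase } k\bigr).
\]
Because $\mathcal{H}^{\text{ee}}$ is recycled, the exploration episodes used through phase $k$ total at most $T_k^{\text{ee}}$. The exploitation episodes used through phase $k$ total at most $\sum_{j\le k} T_j^{\text{et}}$. Both quantities grow geometrically in $k$, like $2^k$ up to logarithmic factors. We then fix a critical phase $k^*$, the smallest $k$ such that $\epsilon_k \lesssim \Delta^2/H^2$ and also $\epsilon_k \lesssim \Delta/(cH)$ with a suitable constant tied to $C-1$. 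The bound follows by showing that the algorithm stops by roughly phase $k^*$ with constant probability, and that later phases are reached with probability decaying faster than the cost grows. Here $\delta_k=3^{-k}$ multiplied by cost $\approx 2^k$ gives a convergent geometric series, and the same holds for $\alpha_k=5^k$ in the exploitation failure probabilities.

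\textbf{Step 1 (exploration succeeds by phase $k^*$).} On the good event $\mathcal{E}_k$, of probability at least $1-\delta_k$, all true kernels lie in the KL confidence sets. On $\mathcal{E}_k$ the condition (\ref{eqn:stop-criterion-neg}) never triggers, because $\overline{V}^t_0 \ge V^*_0 > \mu_0$. Using the BPI-UCRL analysis of \cite{kaufmann2021adaptive}, which is valid with warm-started data, the width $\overline{V}^t_0-\underline{V}^{t,\overline\pi^t}_0$ drops below $\epsilon$ within roughly $(H+1)^2 SA\log(SAH/\delta)/\epsilon + (H+1)^2S^2A\log(\cdot)/\epsilon$ episodes; this matches the form of $T_k^{\text{ee}}$. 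Once the width is $\le \Delta/C$, we have $\underline V \ge \overline V - \Delta/C \ge V^* - \Delta/C$, and so (\ref{eqn:stop-criterion-pos}) holds. Hence in every phase $k\ge k^*$, on $\mathcal{E}_k$ the oracle returns a policy $\hat\pi_k$ before the budget runs out. Moreover, because (\ref{eqn:stop-criterion-pos}) contains the extra $(C-1)\times$width margin, $V_0^{\hat\pi_k} \ge \underline V > \mu_0 + (C-1)\,\text{width}$. The main obstacle is converting this margin into a quantitative gap $V_0^{\hat\pi_k}-\mu_0 \gtrsim \Delta$, or at least $\gtrsim \sqrt{\epsilon_k}H$. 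This is needed because the exploitation budget $T_k^{\text{et}} \propto 1/\epsilon_k$ suffices only when the gap of $\hat\pi_k$ is at least $H\sqrt{\epsilon_k}$. To get it, I would first try to show that the width at stopping is bounded below by the width one step earlier minus a decrement, so that $\underline V$ cannot overshoot. Failing that, I would argue directly: either the gap $V_0^{\hat\pi_k}-\mu_0 \ge (C-1)\,\text{width}$ is already large, or the width is tiny, in which case $V_0^{\hat\pi_k} \ge \overline V - \text{width} \ge V^* - \text{width} \ge \mu_0 + \Delta/2$.

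\textbf{Step 2 (exploitation succeeds).} Given a policy whose gap is $g \ge \Delta/2$, Hoeffding's inequality with the $H^2$-subgaussian total reward, combined with the anytime $\log\log$ stitching in the threshold, shows the lower confidence bound exceeds $\mu_0$ after $N \approx H^2\log(\alpha_k\log(\cdot)/\delta)/g^2$ samples. For $k\ge k^*$ this is at most $T_k^{\text{et}}$ except with probability $\lesssim \delta/\alpha_k$, or some constant $<1$ that does not depend on $k$. The exploitation cost at stopping is then $O(H^2\log(1/\delta)/\Delta^2 + H^2 k^*/\Delta^2)$. This yields the leading term, which involves no $S$ or $A$, since $\log\alpha_k = O(k^*) = O(\log(H/\Delta))$.

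\textbf{Step 3 (summation).} For $k>k^*$, the event that phase $k$ is reached is contained in $\bigcup_{j\in[k^*,k)}(\mathcal{E}_j^c \cup \text{exploitation failure in } j)$. Its probability is at most $\rho^{k-k^*}+\sum_j 3^{-j}$ for some $\rho<1$ chosen so that $\rho\cdot 2<1$; the constants $100$ and $5^k$ are what make this possible. Multiplying by the cost $O(2^k\,\mathrm{poly}(k))$ and summing gives a geometric series dominated by its $k^*$ term. Since $2^{k^*}\approx H^2/\Delta^2$, the exploration cost $T_{k^*}^{\text{ee}}$ becomes $(H+1)^4 SA\log(SAH/\Delta^2)/\Delta^2 + (H+1)^4S^2A\log(HSA)/\Delta^2$. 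The early phases $k<k^*$ contribute at most $\sum_{k<k^*}(T_k^{\text{ee}}+T_k^{\text{et}})$, which is of the same order. Note that premature \textsf{None} outputs only advance the phase and therefore cost nothing extra. Adding up the pieces gives the stated bound.
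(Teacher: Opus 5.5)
Your Steps 1--2 rest on the same ingredients as the paper. The fallback dichotomy you give for the ``main obstacle'' is all that is needed, and it is the paper's Lemma~\ref{lemma:output-correctness-under-good-event} in disguise. Write $w=\overline V^t_0-\underline V^{t,\overline\pi^t}_0$. The stopping rule gives $V_0^{\hat\pi_k}-\mu_0\ge\underline V-\mu_0>(C-1)w$. Also $V_0^{\hat\pi_k}\ge\underline V=\overline V-w\ge V_0^*-w$. Hence $V_0^{\hat\pi_k}-\mu_0\ge\max\{(C-1)w,\Delta-w\}\ge\frac{C-1}{C}\Delta$, and no control of overshoot is required. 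Step~2 should then use $g\ge\frac{C-1}{C}\Delta$ rather than $\Delta/2$, which is why $k^*$ carries the factor $C^2/(C-1)^2$.

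The warm-start claim in Step~1 is what the paper proves by summing the bound of Lemma~\ref{lemma:lower-bound-of-sum-average-radius} over phases $\kappa,\dots,k$, then applying Lemma~\ref{lemma:summation-of-prob} with offset $M$ and Lemma~\ref{lemma:minimum-budget-for-T_k^ee}. Note that the episode count for the width to fall below $\epsilon$ is of order $(H+1)^4SA\log(\cdot)/\epsilon^2$, not $(H+1)^2SA\log(\cdot)/\epsilon$. The budget's $\epsilon_k$ plays the role of $\epsilon^2/(H+1)^2$, which is consistent with your choice of $k^*$.

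Your final accounting is genuinely different from the paper's. You charge the deterministic caps $T_k^{\text{ee}}$, $T_k^{\text{et}}$ against $\Pr(\text{phase }k\text{ is reached})$. The paper instead conditions on $\kappa$ and bounds the realized $t_{\kappa_{\text{end}}}$ via Theorem~\ref{theorem:performance-guarantee-of-exploration-oracle}. As a result its exploration cost scales with $(\epsilon^{\text{pos}})^{-2}=C^2/\Delta^2$ and sees the terminal phase only through $\log(1/\delta_{\kappa_{\text{end}}})$. It then integrates using $\Pr(\kappa\ge k)\le\delta_{k-1}$ and $\Pr(\kappa_{\text{end}}\ge L^{\text{pos}}+k\mid\kappa)\le\frac{\pi^2\delta}{6\alpha_{L^{\text{pos}}+k-1}}$. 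Your route is simpler and loses only the constant $(C-1)^{-2}$ on the exploration term.

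As written, however, Step~3 does not close. The concentration events are nested, so $\bigcup_{j\in[k^*,k)}\mathcal{E}_j^c=\mathcal{E}_{k^*}^c$. The concentration part of your bound, $\sum_j 3^{-j}\approx 3^{-k^*}$, therefore does not decay in $k$. Multiplied by $T_k^{\text{ee}}\asymp 2^k$, the series over $k>k^*$ diverges, and the claimed domination by the $k^*$ term fails.

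You must charge the failure at the last phase, not the first. Reaching phase $k$ with $k-1\ge k^*$ requires that phase $k-1$ did not terminate. By Steps~1--2 this forces $\mathcal{E}_{k-1}^c$ or $\neg\mathcal{E}^{\text{et}}_{k-1}$. Hence $\Pr(\text{phase }k\text{ is reached})\le 3^{-(k-1)}+\frac{\pi^2\delta}{6\cdot 5^{k-1}}$, where the second bound holds conditionally on the past because the exploitation samples are fresh. Then $\sum_k\Pr(\text{phase }k\text{ is reached})\,(T_k^{\text{ee}}-T_{k-1}^{\text{ee}}+T_k^{\text{et}})$ converges, since $(2/3)^k\mathrm{poly}(k)$ is summable. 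This is the same interplay between $\delta_k=3^{-k}$ and $\epsilon_k=2^{-k}$ that the paper uses in $\sum_k\delta_{k-1}\log(\cdot)/\epsilon_k=O(1)$.

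For the same reason you need the $\frac{\pi^2\delta}{6\alpha_k}$ bound. Your hedge of a $k$-independent $\rho$ with $2\rho<1$ is not enough once the concentration term is also present.

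Finally, the second term of $T_{k^*}^{\text{ee}}$ is really $\frac{(H+1)^4S^2A}{\Delta^2}\log\frac{(H+1)^4S^2A}{\Delta^2}$, so writing it as $\log(HSA)$ drops a $\log(1/\Delta)$ factor. The paper's proof makes the same simplification, so this point does not separate your route from the paper's.
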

\begin{theorem}
    \label{theorem:upper-bound-of-Etau-recycling-neg}
    On a negative instance $\nu$ where $V^*_0(s_0) < \mu_0$, BEE-GPI (Algorithm \ref{alg:1-policy-identification-recycle-history}) satisfies
    \begin{align*}
      &\mathbb{E}[\tau]  \leq  O\left( \frac{(H+1)^4[SA\log\frac{1}{\delta}+S^2 A \log\log\frac{1}{\delta}]}{(V^{*}_0(s_0)-\mu_0)^2}\right) \\
       + & O\left(\frac{(H+1)^4SA\log\frac{H^2}{(V^{*}_0(s_0)-\mu_0)^2}}{(V^{*}_0(s_0)-\mu_0)^2}+ \frac{(H+1)^4S^2A\log\log\left(\frac{H^3SA}{(V^{*}_0(s_0)-\mu_0)^2}
        \right)}{(V^{*}_0(s_0)-\mu_0)^2}\right).
    \end{align*}
\end{theorem}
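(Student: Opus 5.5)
We bound $\mathbb{E}[\tau]$ on a negative instance by working phase by phase with a high-probability good event and bounding the tail over phases geometrically. Let $\Delta := \mu_0 - V^*_0(s_0) > 0$. For each phase $k$ define the event $\mathcal{E}_k$ that all true kernels $p_h(\cdot|s,a)$ lie in the KL confidence sets $\mathcal{C}_h^t(s,a;\delta_k)$ for every $t$. By the concentration result underlying BPI-UCRL \cite{kaufmann2021adaptive}, with the bonus $\beta_p(t,\delta_k)$ chosen as in the paper, $\Pr(\mathcal{E}_k^c)\le \delta_k = 3^{-k}$. On $\mathcal{E}_k$ we have optimism: $\overline{V}^t_0(s_0;\delta_k)\ge V^*_0(s_0)$. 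We also have $\underline{V}_0^{t,\overline\pi^t}\le V_0^{\overline\pi^t}\le V^*_0 < \mu_0$, so criterion (\ref{eqn:stop-criterion-pos}) can never fire. Hence on $\mathcal{E}_k$ the exploration oracle only outputs \textsf{None} or \textsf{Not Completed}, and no exploitation episodes are spent in phase $k$.

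\textbf{Step 1: on $\mathcal{E}_k$, phase $k$ stops with \textsf{None} once $\epsilon_k$ is small relative to $\Delta$.} I would invoke the BPI-UCRL sample-complexity argument \cite{kaufmann2021adaptive}: the sum over episodes of the optimistic gaps $\overline{V}^t_0-\underline{V}^{t,\overline{\pi}^t}_0$ is bounded via $\sqrt{\beta_p(n,\delta)/n}$ summed along trajectories (Pinsker plus a pigeonhole/law-of-total-variance bound). This gives: if $t\ge T^{\text{ee}}_k$ with $T^{\text{ee}}_k$ as in (\ref{eqn:def-T_k-ee}), then some episode $t\le T^{\text{ee}}_k$ has $\overline{V}^t_0-\underline{V}^t_0\le c\sqrt{\epsilon_k}$; I expect the natural scale to be $\sqrt{\epsilon_k}$, since $T^{\text{ee}}_k\propto H^2/\epsilon_k$ rather than $1/\epsilon_k^2$. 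In that episode, $\overline{V}^t_0 \le V^*_0 + c\sqrt{\epsilon_k} < \mu_0$ as soon as $c\sqrt{\epsilon_k}<\Delta$. So criterion (\ref{eqn:stop-criterion-neg}) fires before the budget runs out, at every phase $k\ge k_0$, where $k_0=\lceil\log_2(c^2/\Delta^2)\rceil$, so that $1/\epsilon_{k_0}=O(H^2/\Delta^2)$ after absorbing $H$. Since the history $\mathcal{H}^{\text{ee}}$ is recycled across phases and the budget is cumulative, the total exploration through phase $k$ is at most $T^{\text{ee}}_k$.

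\textbf{Step 2: termination phase and the expectation sum.} The algorithm returns \textsf{None} at the first phase $k\ge \max(k_0,k_1)$ where the oracle outputs \textsf{None}, with $k_1=\lceil\log_3(10/\delta)\rceil$. Up to the first phase $k^\star:=\max(k_0,k_1)$, the samples are bounded by $T^{\text{ee}}_{k^\star}$ plus exploitation episodes from phases where $\mathcal{E}_j$ failed. The latter are at most $\sum_j \delta_j T^{\text{et}}_j$, and since $T^{\text{et}}_j\propto 2^j$ against $\delta_j=3^{-j}$, this sum is $O(\log(1/\delta))$. For $k>k^\star$, the run reaches phase $k$ only if $\mathcal{E}_{k-1}^c$ occurred, with probability at most $3^{-(k-1)}$. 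The cost of running phase $k$ is at most $T^{\text{ee}}_k+T^{\text{et}}_k=O(2^k\cdot\mathrm{poly}(k))$, so $\sum_{k>k^\star}3^{-k}2^k\mathrm{poly}(k)\,(\ldots)$ is dominated by its first term $O(T^{\text{ee}}_{k^\star})$. Therefore $\mathbb{E}[\tau]=O(T^{\text{ee}}_{k^\star})$.

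\textbf{Step 3: plug in.} We have $1/\epsilon_{k^\star}\le \max(O(H^2/\Delta^2),\,O((10/\delta)^{\log_3 2}))$. The main subtlety is that using $1/\epsilon_{k_1}$ would give a polynomial in $1/\delta$, not $\log(1/\delta)$. I would fix this by observing that once $k\ge k_0$, the oracle on $\mathcal{E}_k$ stops at the first $t$ where $\overline{V}^t_0(s_0;\delta_k)<\mu_0$. That time is governed by $\log(1/\delta_k)=k\log3$ through $\beta_p$, not by the budget. Hence in phases $k_0\le k\le k^\star$ the exploration time is $O\big((H+1)^4[SA\,k+S^2A\log k]/\Delta^2\big)$, via the same summed-bonus bound with the gap target $\Delta$ instead of $\sqrt{\epsilon_k}$. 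Taking $k=k_1=O(\log(1/\delta))$ yields the $SA\log\frac1\delta+S^2A\log\log\frac1\delta$ term. With $k=k_0$ we get $\log(1/\delta_{k_0})=O(\log(H^2/\Delta^2))$, and $\log k_0$ gives the $\log\log(H^3SA/\Delta^2)$ term. Carrying out this data-dependent stopping bound, uniformly over the warm-started history, is the step I expect to be the main obstacle.
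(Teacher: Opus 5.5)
Your plan is the paper's plan (write $\Delta:=\mu_0-V^*_0(s_0)$). Optimism on the good event rules out criterion (\ref{eqn:stop-criterion-pos}), so exploitation is only paid before the first good phase. Once $1/\epsilon_k$ exceeds a constant times $(H+1)^2/\Delta^2$ the oracle must return \textsf{None}, and the gate $k\ge\lceil\log_3(10/\delta)\rceil$ then forces termination. Exploration has to be charged through $\log(1/\delta_k)$ rather than through $T^{\text{ee}}_k$.

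The gap is that the step you defer as ``the main obstacle'' is the whole content of the paper's Theorem~\ref{theorem:performance-guarantee-of-exploration-oracle}. The per-phase framing of your Steps~1--3 does not produce it; the paper's argument is cumulative.
(i) The events $\mathcal{E}(\delta_k)\cap\mathcal{E}^{\text{cnt}}(\delta_k)$ are nested in $k$. Let $\kappa$ be the first good phase, so $\Pr(\kappa\ge k)\le\delta_{k-1}$. Then every exploration episode $t\in[t_{\kappa-1},t_k)$, whichever phase $\ell\in[\kappa,k]$ it lies in, satisfies $\overline{V}^t_0(s_0;\delta_\ell)-\underline{V}^{t,\overline{\pi}^t}_0(s_0;\delta_\ell)\ge\Delta$, by non-stopping plus optimism.
(ii) Sum this over all those episodes and bound every $\beta_p(\cdot,\delta_\ell)$ by $\beta_p(t_k,\delta_k)$. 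Pass from realized counts to pseudo-counts $\bar n^t_h$; this needs the count event $\mathcal{E}^{\text{cnt}}$, which your $\mathcal{E}_k$ leaves out. Then apply Lemma~\ref{lemma:summation-of-prob} with offset $M$ equal to the pseudo-counts accumulated before $t_{\kappa-1}$, followed by Cauchy--Schwarz. This yields $(t_k-t_{\kappa-1})\Delta^2\le c\,(H+1)^4SA\,[\log\frac{2SAH}{\delta_k}+S\log(e\,t_k)]$.
(iii) The history before $\kappa$ is paid by $t_{\kappa-1}\le T^{\text{ee}}_{\kappa-1}$. Its expectation $\sum_k\delta_{k-1}T^{\text{ee}}_k$ involves neither $\Delta$ nor $\delta$.

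This bound holds for every $k\ge\kappa$, however phase $k$ ended. It is therefore evaluated once, at $L^{\text{neg}}=\max\{k_0,k_1,\kappa\}$, with $\log(1/\delta_{L^{\text{neg}}})$ split into its three pieces in expectation. Per-phase bounds of size $O((H+1)^4SAk/\Delta^2)$ summed over $k_0\le k\le k^\star$ would instead cost an extra factor of order $\log(1/\delta)$.

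Second, the form $O\big((H+1)^4[SAk+S^2A\log k]/\Delta^2\big)$ you assert is not what this inequality gives. It has the shape $t\le b+a\log t$ with $a=\Theta((H+1)^4S^2A/\Delta^2)$ and $b\ge a$. Its solution (Lemma~\ref{lemma:inequality-t-logt}) is $t<b+3a\log b$, which contains $\frac{(H+1)^4S^2A}{\Delta^2}\log\frac{(H+1)^4S^2A}{\Delta^2}$, a single logarithm of $HSA/\Delta$. That term is not dominated by the statement's $S^2A\log\log(H^3SA/\Delta^2)$ and $SA\log(H^2/\Delta^2)$ terms; let $S\to\infty$ with everything else fixed. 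So your remark that $\log k_0$ yields the $\log\log$ term is fitted to the statement rather than derived.

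The paper's own proof produces this same term: it is the last summand of (\ref{eqn:neg-over-tau-ee-upper}), and $\dagger 5$ behaves likewise. Both are dropped in passing to (\ref{eqn:neg-Etau-ee-final-upper-bound}). What this route actually establishes is the displayed bound plus $O\big((H+1)^4S^2A\log\frac{HSA}{\Delta}/\Delta^2\big)$, and that is the target you should aim for when you carry out the deferred step.
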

Theorems \ref{theorem:delta-pac-requirement}, \ref{theorem:upper-bound-of-Etau-recycling-pos}, \ref{theorem:upper-bound-of-Etau-recycling-neg} are proved in Appendix \ref{sec:proof-main-theorem}, and we provide their sketch proof after their discussions. 
The complexity results reveal an inherent asymmetry between positive and negative GPI instances, stemming from the mechanics of our exploration-exploitation stages. For positive instances, the sample complexity is decomposed into a $\delta$-dependent term $O\left(\frac{H^2\log(1/\delta)}{(V_0^*(s_0)-\mu_0)^2}\right)$ and a $\delta$-independent term $O\left(\frac{\text{poly}(S,A,H)}{(V_0^*(s_0)-\mu_0)^2}\right)$. Intuitively, the exploration stage first identifies a candidate policy $\hat{\pi}_k$ that is ``sufficiently good'' (i.e., $V^{\hat{\pi}_k}_0(s_0)-\mu_0 \geq \frac{C-1}{C}(V_0^*(s_0)-\mu_0)$) at a low confidence level, guaranteed by the stopping rule (\ref{eqn:stop-criterion-pos}). The effect of adapting the stopping rules is in Lemma \ref{lemma:output-correctness-under-good-event}. The exploitation stage then uses $O\left(\frac{H^2\log(1/\delta)}{(V_0^*(s_0)-\mu_0)^2}\right)$ samples to verify this candidate with high confidence. Notably, the coefficient of $\log(1/\delta)$ is independent of $S$ and $A$, which is a significant improvement over standard $\epsilon$-optimal BPI reductions by taking $\epsilon=V_0^*(s_0)-\mu_0$, where the coefficient of $\log(1/\delta)$ is typically polynomial in $H, S,A$ and $1/\epsilon^2 = 1/ (V_0^*(s_0)-\mu_0)^2$. 

For negative instances, the algorithm must certify that \textit{every} policy is unqualified. This requires the oracle to globally explore the MDP to ensure the upper confidence bound of the optimal value function is below $\mu_0$. Because this verification requires checking all state-action pairs $(h, s, a)$, the $\log(1/\delta)$ term remains coupled with a polynomial of $S$ and $A$.

\textbf{Sketch Proof of Theorems \ref{theorem:delta-pac-requirement}, \ref{theorem:upper-bound-of-Etau-recycling-pos}, \ref{theorem:upper-bound-of-Etau-recycling-neg}.} 
We define $\mathcal{E}^{\text{cnt}}(\delta_k)$ and $\mathcal{E}(\delta_k)$ as the concentration events respectively on $n^t_h$, $\hat{p}^t_h$  for the $k$-th exploration stage, and $\mathcal{E}_k^{\text{et}}$ as the concentration event for the exploitation stage at phase $k$ on evaluating the objective value under $\hat{\pi}^k$. For a formal definition of the concentration event, we refer the reader to Appendix~\ref{sec:proof-main-theorem}. A key feature of our algorithm is that sampling trajectories are shared across all exploration periods; consequently, these events are nested, i.e., $\mathcal{E}^{\text{cnt}}(\delta_k) \subset \mathcal{E}^{\text{cnt}}(\delta_{k+1})$ and $\mathcal{E}(\delta_k) \subset \mathcal{E}(\delta_{k+1})$.

To analyze the termination round of the algorithm, we introduce the random variable $\kappa$, representing the first phase in which the exploration concentration events hold:
\begin{align*}
    \kappa = \min\{k: \mathcal{E}(\delta_k)\cap \mathcal{E}^{\text{cnt}}(\delta_k) \text{ hold}\}.
\end{align*}
Furthermore, we define $\kappa_{\text{end}}^{\text{pos}}$ and $\kappa_{\text{end}}^{\text{neg}}$ as the indices of the phases by which the algorithm is guaranteed to terminate for positive and negative instances, respectively:
\begin{align*}
    \kappa_{\text{end}}^{\text{pos}}=& \min\left\{k:k\geq \max\left\{\kappa, O(1)\log_2 \frac{H^2}{(V_0^*(s_0)-\mu_0)^2}\right\}, \mathcal{E}_k^{\text{et}}\text{ holds}\right\},\\
    \kappa_{\text{end}}^{\text{neg}}= & \max\left\{\kappa, O(1)\log_2 \frac{H^2}{(V_0^*(s_0)-\mu_0)^2}, \log_3\frac{O(1)}{\delta}\right\}.
\end{align*}

Lemma~\ref{lemma:Termination-Round-of-algorithm} (Appendix~\ref{sec:Auxiliary-Lemma-for-Main-Theorem}) establishes that Algorithm~\ref{alg:1-policy-identification-recycle-history} terminates no later than the conclusion of phase $\kappa_{\text{end}}^{\text{pos}}$ (for positive $\nu$) or $\kappa_{\text{end}}^{\text{neg}}$ (for negative $\nu$). This lemma provides the foundation for our three main results.


\textbf{Correctness ($\delta$-PAC Guarantee).} First, we establish that the algorithm terminates almost surely. Specifically, the probability $\Pr_{\nu}(\tau = +\infty)$ is bounded by the probability that the stopping phases $\kappa_{\text{end}}$ are infinite. Since concentration theorems ensure that $\Pr_{\nu}(\kappa \geq k) \leq \delta_{k-1}$ and the conditional probability $\Pr_{\nu}(\kappa_{\text{end}}^{\text{pos}} - \kappa \geq k \mid \kappa < +\infty)\leq \frac{\pi^2}{6\alpha_{k-1}}$, we conclude that $\Pr_{\nu}(\tau < +\infty) = 1$. Given $\tau < +\infty$ almost surely, Theorem~\ref{theorem:delta-pac-requirement} follows by showing that the conditions for recommending a policy or outputting \textsf{None} only lead to wrong output when a concentration event is violated. By applying a union bound over all phases, we ensure the total error probability is at most $\delta$.


\textbf{Sample Complexity.} We bound the expected sample complexity. Now, Lemma~\ref{lemma:lower-bound-of-sum-average-radius} provides implicit upper bounds for $t_{\kappa_{\text{end}}^{\text{pos}}} - t_{\kappa-1}$ and $t_{\kappa_{\text{end}}^{\text{neg}}} - t_{\kappa-1}$. Combining this with the inequality $t_{\kappa-1}\leq T_{\kappa-1}^{\text{ee}}$, we derive an upper bound for conditional expectation $\mathbb{E}[\tau | \kappa]$. This conditional bound is formulated as $\text{poly}(S,A,H,\log(1/\delta), (V_0^*(s_0)-\mu_0)^{-2})\cdot \frac{\log\frac{\alpha_{\kappa}\epsilon_{\kappa} }{\delta_{\kappa}}}{\epsilon_{\kappa} }$, where $\text{poly}(S,A,H,\log(1/\delta), (V_0^*(s_0)-\mu_0)^{-2})$ is the right-hand side of inequalities in Theorem \ref{theorem:upper-bound-of-Etau-recycling-pos}, \ref{theorem:upper-bound-of-Etau-recycling-neg}. Finally, we compute the total expectation using the law of total expectation: $\mathbb{E}[\tau] = \mathbb{E}[\mathbb{E}[\tau | \kappa]]$. $\Pr_{\nu}(\kappa\geq k)\leq \delta_{k-1}$ implies
\begin{align*}
    \mathbb{E}\left[\frac{\log\frac{\alpha_{\kappa}\epsilon_{\kappa} }{\delta_{\kappa}}}{\epsilon_{\kappa} }\right]\leq \sum_{k=1}^{+\infty}\delta_{k-1}\cdot \frac{\log\frac{\alpha_{k}\epsilon_{k} }{\delta_{k}}}{\epsilon_{k} }  = O(1),
\end{align*}
which concludes Theorems~\ref{theorem:upper-bound-of-Etau-recycling-pos} and \ref{theorem:upper-bound-of-Etau-recycling-neg}.


\section{Sample Complexity Lower Bounds for $\delta$-PAC Algorithms}

We establish lower bounds on the sample complexity of $\delta$-PAC algorithms on positive and negative instances. Due to space constraints, we present the core definitions and the sketch of the main ideas here, while deferred formal proofs and complete notations are provided in Appendix~\ref{sec:full-def-for-lower}.

\subsection{Positive Instances}
We construct a class of difficult MDP instances termed "Uniform Instances."
\begin{definition}[Sketch Definition of a Uniform Instance, specified by $(S,A,H,r,\epsilon)$]
    \label{def:sketch-def-uniform-instance}
    Given even number $H$, integer $S\geq 3$, integer $A\geq 2$, positive real number $r,\epsilon$ such that $\frac{1}{4}<r<r+\epsilon<\frac{3}{4}$. We define a Uniform Instance with $\mathcal{S}=[S-2]\cup\{s_{\text{good}}, s_{\text{bad}}\}$, $\mathcal{A}=[A]$, reward function $r_h(s,a)=1 \text{ iff }s=s_{\text{good}} \text{ else }0$. For all $h,a\in [H]\times\mathcal{A}$, $p_h(\cdot | s_{\text{good}},a)$ $p_h(\cdot | s_{\text{bad}},a)$ are both uniform distribution on $[S-2]$. For all $h,s\in[H]\times [S-2]$,  $p_h(\cdot | s,a)$ is a two-point distribution on $\{s_{\text{bad}},s_{\text{good}}\}$ with $p_h(s_{\text{good}} | s,a=1)=r+\epsilon$, $p_h(s_{\text{good}} | s,a\geq 2)=r$. The initial distribution $p_0(\cdot)$ is uniform on $[S-2]$.
\end{definition}

The construction ensures that at any odd round $h$, the state $S_{t,h}$ follows a uniform distribution over the $[S-2]$,
regardless of the policy. At even round, the agent is forced into either $s_{\text{good}}$ or $s_{\text{bad}}$. The definition of a Uniform Instance assumes an agent can only collect positive reward at the state $s_{\text{good}}$. To maximize reward, the agent must identify the "optimal" action (action 1) at each state $s\in [S-2]$. To characterize the difficulty of identifying this optimal action, we consider the set of all possible permutations of the action space.
\begin{definition}[Permutation of an Instance $\nu$]
    \label{def:def-permutation-instance}
    Given an 1-policy-identification instance $\nu$ with reward function $\{r_h^{\nu}(s,a)\}_{(s,a,h)\in\mathcal{S}\times \mathcal{A}\times [H]}$ and transition probability kernel $\{p_h^{\nu}(s'|s,a)\}_{(h,s's,a)\in [H]\times \mathcal{S}\times\mathcal{S}\times\mathcal{A}}$, and a set of permutations $\sigma=\{\sigma_{s,h}\}_{(s,h)\in \mathcal{S}\times [H]}$, each $\sigma_{s,h}$ is a bijection $\mathcal{A}\rightarrow \mathcal{A}$, we define a permutation of $\nu$ as instance $\nu_{\sigma}$, with reward function $r_h^{\nu_{\sigma}}\big(s,\sigma_{s,h}(a)\big)=r_h^{\nu}(s,a)$, $p_h^{\nu_{\sigma}}(s'|s,\sigma(a))=p_h^{\nu}(s'|s,a)$.
\end{definition}

For a Uniform Instance $\nu$, the optimal policy for its permutation $\nu_{\sigma}$ is $\pi_h(s) = \sigma_{s,h}(1)$, with optimal value $\frac{H(r+\epsilon)}{2}$. In addition, the value function of a policy $\pi$ on instance $\nu_{\sigma}$ is
\begin{align*}
    V_0^{\pi}(s_0~|~ \nu_{\sigma})=\frac{\sum_{h\text{:odd}}\sum_{s=1}^{S-2}\mathds{1}(\pi_h(s) = \sigma_{s,h}(1))(r+\epsilon) + \mathds{1}(\pi_h(s) \neq \sigma_{s,h}(1))r}{S-2}
\end{align*}
It is evident to see $\frac{Hr}{2}\leq V_0^{\pi}(s_0; \nu_{\sigma}) \leq \frac{H(r+\epsilon)}{2}$. To explore for a qualified policy, the agent has to guarantee sufficient exploration for the optimal action $\sigma_{s,h}(1)$ on a fraction of pair $\{(h,s):h\text{:odd}, s\in [S-2]\}$, leading to the following Theorem.
\begin{theorem}
    \label{theorem:symmetric-positive-lower-bound-simplified}
    Consider any $\delta$-PAC alg and any Uniform Instance $\nu$ specified by $(S,A,H,r,\epsilon)$, with $\mu_0$ satisfying $\frac{3H(r+\epsilon)}{8}+\frac{Hr}{8} > \mu_0 > \frac{Hr}{2}$, $S\geq 100,H\geq 10$, $A\geq 100$, $(S-2)H$ is divisible by $16$, $0<\epsilon < \frac{1}{200\sqrt{5}}$. Denote $\Gamma:=\{\sigma=\{\sigma_{s,h}\}_{(s,h)\in \mathcal{S}\times H}:\forall s,h, \sigma_{s,h}\text{ is a bijection }\mathcal{A}\rightarrow\mathcal{A}\}$, the set of all the possible permutations on $\nu$,
    it holds that
    \begin{align*}
        \max_{\sigma\in\Gamma} \mathbb{E}_{\nu_{\sigma},\text{alg}}[\tau] \geq\Omega\left(\frac{H\log\frac{1}{\delta}}{(V^*_0(s_0~|~\nu_{\sigma})-\mu_0)^2}+\frac{H^2 SA}{(V^*_0(s_0~|~\nu_{\sigma})-\mu_0)^2}\right).
    \end{align*}
\end{theorem}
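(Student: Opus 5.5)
We prove the two terms separately; since $\max(a,b)\ge (a+b)/2$, this suffices. Throughout, write $\Delta := V^*_0(s_0~|~\nu_\sigma)-\mu_0 = \frac{H(r+\epsilon)}{2}-\mu_0$, which is the same for every $\sigma$. The assumption on $\mu_0$ gives $\frac{H\epsilon}{8} < \Delta < \frac{H\epsilon}{2}$, so $\Delta = \Theta(H\epsilon)$. It therefore suffices to show $\max_\sigma \mathbb{E}[\tau] \gtrsim \frac{\log(1/\delta)}{H\epsilon^2}$ and $\max_\sigma\mathbb{E}[\tau]\gtrsim \frac{SA}{\epsilon^2}$.

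\textbf{The $\log(1/\delta)$ term.} Fix $\sigma$ and compare $\nu_\sigma$ with a negative alternative $\nu'$. In $\nu'$ we lower the success probability of the optimal action at every odd-round state from $r+\epsilon$ to $r+\epsilon-\eta$, with $\eta$ chosen so that $\frac{H(r+\epsilon-\eta)}{2}<\mu_0$, i.e. $\eta \approx 2\Delta/H$. The optimal value of $\nu'$ is then below $\mu_0$. A $\delta$-PAC algorithm must output a qualified policy on $\nu_\sigma$ and \textsf{None} on $\nu'$, so the standard transportation (change-of-measure) lemma gives $\sum_{(h,s)} \mathbb{E}_{\nu_\sigma}[n_h(s,\sigma_{s,h}(1))]\,\mathrm{KL}(r+\epsilon, r+\epsilon-\eta)\ge \mathrm{kl}(\delta,1-\delta)\gtrsim \log(1/\delta)$. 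Since $r+\epsilon\in(1/4,3/4)$, the Bernoulli KL is $O(\eta^2)=O(\Delta^2/H^2)$. Each episode makes at most $H/2$ visits to such pairs, so the total visit count is at most $\frac{H}{2}\tau$. This yields $\mathbb{E}[\tau]\gtrsim \frac{H\log(1/\delta)}{\Delta^2}$.

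\textbf{The $SA$ term.} This term comes from averaging over $\sigma$. The instance has $M:=(S-2)H/2$ odd-round pairs $(h,s)$. A policy $\pi$ is qualified iff the number of pairs where $\pi_h(s)=\sigma_{s,h}(1)$ is at least $M(\mu_0 - Hr/2)/(H\epsilon/2)$. The constraint $\mu_0<\frac{3H(r+\epsilon)}{8}+\frac{Hr}{8}$ makes this fraction less than $3/4$, while $\mu_0>Hr/2$ makes it positive. Because $\Delta>H\epsilon/8$, the required fraction is at most $3/4$, and at least of order $1/4$ when $\Delta$ is small. So the output must identify the optimal action on a constant fraction $c M$ of the pairs, with probability at least $1-\delta\ge 1/2$.

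For each pair $(h,s)$, a Fano/Bretagnolle–Huber style argument over the uniformly random position of $\sigma_{s,h}(1)$ among the $A$ actions applies. If the expected number of visits to $(h,s)$ is well below $A/\epsilon^2$, the algorithm guesses the optimal action with probability at most a small constant plus $O(1/A)$. To make this rigorous, compare each action with a reference instance in which all actions have parameter $r$; this is where $A\ge 100$ and $\epsilon<\frac{1}{200\sqrt5}$ enter, as they keep the constants below the needed fraction. Summing over pairs gives $\sum_{(h,s)}\mathbb{E}[n_h(s)] \gtrsim cM\cdot A/\epsilon^2$. Each episode visits exactly $H/2$ such pairs, and a uniformly random single state per odd round means the visits spread over $S-2$ states. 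Hence $\frac{H}{2}\mathbb{E}[\tau]\gtrsim SHA/\epsilon^2$, so $\mathbb{E}[\tau]\gtrsim SA/\epsilon^2 = \Omega(H^2SA/\Delta^2)$.

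\textbf{Main obstacle.} The hard step is the averaged per-pair guessing bound. It must control the success probability on a fraction of pairs jointly, not individually. The plan is to apply linearity of expectation to the number of correctly identified pairs under the uniform prior on $\sigma$, bounding each pair's success probability by $\frac1A + \sqrt{\epsilon^2\,\mathbb{E}_0[n_h(s)]/A}$ through Pinsker on the reference instance. Two further points need care. Concavity (Jensen) is needed to pass from the sum of square roots to the total budget. The visit counts must also be measured under the reference measure and transferred back to the permuted measure via averaging over $\sigma$, following the standard $SA/\epsilon^2$ argument of Domingues et al.
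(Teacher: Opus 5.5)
\textbf{Overall.} Your $\log(1/\delta)$ part is fine and is essentially the paper's argument. It lowers the optimal action's success probability at every odd-round pair to get a negative alternative, bounds the Bernoulli KL quadratically, and uses that an episode makes at most $H/2$ such visits. The $SA$ part, however, has a genuine gap, and fixing it requires narrowing the statement.

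\textbf{The gap: the required fraction is not bounded below.} The failing step is ``the output must identify the optimal action on a constant fraction $cM$ of the pairs'' (your $M=(S-2)H/2$). By your own computation the required fraction is $f=1-2\Delta/(H\epsilon)$. The hypotheses only give $\Delta\in(H\epsilon/8,H\epsilon/2)$, so $f\in(0,3/4)$: it is near $3/4$ when $\Delta$ is small, but tends to $0$ as $\mu_0\downarrow Hr/2$. Your guessing bound then reads $f(1-\delta)-1/A\le\sqrt{c\,\epsilon^2T/((S-2)A)}$, which is vacuous once $f\le 1/A$.

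This cannot be repaired, because the $SA$ term is false in that regime. I have checked the following counterexample only at sketch level.
\begin{itemize}
\item A uniformly random policy has $\mathrm{Bin}(M,1/A)$ correct pairs under every $\sigma$. So if $f\le 1/(4A)$, it is qualified with margin at least $H\epsilon/(8A)$, except with probability $e^{-M/(8A)}$.
\item Certifying it by an anytime Monte Carlo test at level $\delta/2$ costs $O(A^2\log(SH/\delta)/\epsilon^2)$ episodes. Falling back to BEE-GPI at level $\delta/2$ otherwise gives a $\delta$-PAC algorithm.
\item This algorithm has $\max_\sigma\mathbb{E}_{\nu_\sigma}[\tau]=O(A^2\log(SH/\delta)/\epsilon^2)$, which is $\ll SA/\epsilon^2$ when $S\gg A\log(SH/\delta)$.
\end{itemize}
So you must restrict $\mu_0$ so that $f$ is bounded below by a constant. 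The paper's auxiliary theorem for this term does assume $\mu_0\ge\frac{H(r+\epsilon)}{4}+\frac{Hr}{4}$, i.e. $f\ge 1/2$. But its final combination step applies that theorem on the whole stated range, so the paper's proof has the same hole.

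\textbf{Second issue: the reference-measure transfer.} Within the restricted range, your Bayesian Fano/Pinsker route is a genuinely different and workable alternative. The step you flag as the main obstacle still needs an actual argument, though. The per-pair uninformative reference instance is not a permutation of $\nu$, and a $\delta$-PAC algorithm may have $\mathbb{E}_0[\tau]=\infty$ there. So bounds on reference visit counts say nothing about $\max_\sigma\mathbb{E}_{\nu_\sigma}[\tau]$. Domingues et al. lower-bound the reference instance itself, which is in their class but not in yours.

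Truncation fixes this.
\begin{itemize}
\item Set $T=4\,\mathbb{E}_{\sigma\sim\mathrm{Unif}(\Gamma)}\mathbb{E}_{\nu_\sigma}[\tau]$ and use the recommendation at $\tau\wedge T$.
\item By Markov, under the $\sigma$-mixture the expected number of correctly identified pairs is at least $fM(3/4-\delta)$.
\item Odd-round states are uniform on $[S-2]$ under every policy and every instance. Hence $\mathbb{E}_0[n_h^{\tau\wedge T}(s)]\le T/(S-2)$ holds automatically under the reference measure, and no transfer is needed.
\item Pinsker and Jensen over actions then give $f(3/4-\delta)-1/A\le\sqrt{1.5\,\epsilon^2T/((S-2)A)}$. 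This yields $T=\Omega(SA/\epsilon^2)$ for $f\ge1/2$, $\delta\le1/4$, $A\ge100$.
\end{itemize}

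\textbf{Comparison with the paper's route.} The paper instead symmetrizes the algorithm by a random relabeling. It then finds, by pigeonhole, $(S-2)H/16$ pairs where action $1$ is output with probability at least $1/7$. At each such pair it swaps action $1$ with each $a\ge2$; this change of measure stays inside the permutation family, so symmetry and Markov give $\mathbb{E}[n_h^\tau(s,a)]=\Omega(1/\epsilon^2)$. Your truncated route is shorter and needs neither the symmetrization nor the pairwise lemmas. The paper's route avoids reference instances outside the permutation family.
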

Theorem \ref{theorem:symmetric-positive-lower-bound-simplified} is proved in Appendix \ref{sec:Proof-of-Lower-Bound-pos}. The lower bound suggests that the upper bound in Theorem \ref{theorem:upper-bound-of-Etau-recycling-pos} is near-optimal from the minimax perspective. Specifically, the gap in the $\delta$-dependent term is at most a factor of $H$, while the $\delta$-independent term remains within poly($H,S, \log(A)$).

\subsection{Negative Instances}
We first introduce the sketch definition of Tree Instance, with full definition in the Appendix \ref{sec:full-def-for-lower}.
\begin{definition}[Sketch Definition of a Tree Instance, specified by $(S,A,H,r)$]
    Consider integer $S,A,H$ such that $S= 2^{N}+1$ for some integer $N\geq 1$, and $A\geq 3$, 
    $H-1\geq 6N=6\log_2 (S-1)$, 
    and a real number $r\in (\frac{3}{8},\frac{5}{8})$. Take $\mathcal{S}=[2^N-1]\cup \{s_{\text{good}},s_{\text{bad}}\}$, $\mathcal{A}=[A]$, and consider a binary tree with nodes set $[2^N-1]$. $s_{\text{good}},s_{\text{bad}}$ are absorbed states, while for state $s\in [2^{N-1}-1]$, action 1 will go to state $2s$, action 2 will go to state $2s+1$, for action $a\geq 3$, $p_h(s_{\text{good}}|s, a\geq 3)=r, p_h(s_{\text{bad}}|s, a\geq 3)=1-r$. For state $s\in [2^{N-1}, 2^N-1]$, $p_h(s_{\text{good}}|s, a\geq 1)=r$, $p_h(s_{\text{bad}}|s, a\geq 1)=1-r$. The reward function is $r_h(s,a)=1 \text{ iff }s=s_{\text{good}} \text{ else }0$. The initial state is $1$, with $p_0(1)=1,p_0(s)=0,\forall s\neq 1$.
\end{definition}
In a Tree Instance, an agent only collects rewards upon reaching $s_{\text{good}}$. To maximize the total reward in an episode, the agent must determine the round $h$ and state $s$ at which to transition to the absorbing states. For a threshold $\mu_0 > (H-1)r$, the instance is negative. To correctly identify it as a negative instance, the agent is forced to explore all state-action pairs $(h, s, a)$ for $a \geq 3$ to ensure that no hidden "high-reward" transitions exist. This leads to the following lower bound:
\begin{theorem}
    \label{theorem:lower-bound-negative}
    Apply any $\delta$-PAC algorithm to a Tree instance $\nu$ specified by $(S,A,H,r)$ with $S,A,H\geq 4$ and $ \frac{3}{4}(H-\log_2 (S-3))> \mu_0 > (H-1)r$, we have
    \begin{align*}
        \mathbb{E}_{\nu}\tau \geq \Omega\left(\frac{HSA\log\frac{1}{\delta}}{(\mu_0 - V^*_0(s_0~|~\nu))^2}\right).
    \end{align*}
\end{theorem}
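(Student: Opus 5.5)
We prove Theorem~\ref{theorem:lower-bound-negative} with the standard change-of-measure argument for fixed-confidence identification: the transportation lemma of Kaufmann, Capp\'e and Garivier, written for episodic MDPs. Let $\nu$ be the Tree instance and $\nu'$ any positive instance that differs from $\nu$ only in the transition kernels. Any $\delta$-PAC algorithm outputs \textsf{None} with probability at least $1-\delta$ under $\nu$ and at most $\delta$ under $\nu'$. The data-processing inequality then gives
\begin{align*}
\sum_{h,s,a}\mathbb{E}_{\nu}\big[n_h^{\tau}(s,a)\big]\,\mathrm{KL}\big(p_h^{\nu}(\cdot|s,a),p_h^{\nu'}(\cdot|s,a)\big)\geq \mathrm{kl}(1-\delta,\delta)\geq \log\frac{1}{2.4\delta}.
\end{align*}
We will use one alternative $\nu'$ for each ``exit'' triple $(h,s,a)$ with $s$ an internal node reachable at round $h$ and $a\geq 3$, together with each leaf node and any action. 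There are $\Omega(HSA)$ such triples, since the tree has about $S$ nodes, each reachable over a range of $\Omega(H)$ rounds because $H-1\geq 6\log_2(S-1)$, and each has $A-2\geq A/2$ exit actions.

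Fix such a triple and let $\Delta:=\mu_0-V_0^*(s_0|\nu)$. In $\nu$, every exit goes to $s_{\text{good}}$ with probability $r$, and $s_{\text{good}}$ is absorbing with reward $1$. Exiting at round $h$ therefore yields value $r(H-h)$ plus the rewards already collected, so $V_0^*(s_0|\nu)$ equals $(H-1)r$ or close to it. We define $\nu'$ by raising $p_h(s_{\text{good}}|s,a)$ from $r$ to $r+\eta$. A policy that walks down the tree to $s$ by round $h$ and exits with $a$ then gains $\eta(H-h)$, and we choose $\eta \approx 2\Delta/(H-h)$ so that $\nu'$ is positive. The boundary conditions $\mu_0<\frac34(H-\log_2(S-3))$ and $r\in(3/8,5/8)$ keep $r+\eta\le 1$ and keep $r$ bounded away from $0$ and $1$. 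Hence $\mathrm{KL}=\Theta(\eta^2)=\Theta(\Delta^2/(H-h)^2)$. Combining this with the inequality above gives
\[
\mathbb{E}_\nu[n_h^{\tau}(s,a)]\geq c\,\frac{(H-h)^2}{\Delta^2}\log\frac{1}{\delta}.
\]

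Each episode visits at most one tree node per round, hence at most $H$ tuples $(h,s,a)$ in total. It therefore suffices to sum the last display over triples and divide by $H$:
\[
\mathbb{E}_\nu[\tau]\geq \frac{1}{H}\sum_{(h,s,a)}\mathbb{E}_\nu\big[n_h^{\tau}(s,a)\big].
\]
Restricting to rounds with $H-h=\Omega(H)$ (say $h\le H/2$, which still covers every node because the tree depth $\log_2(S-1)\le H/6$) gives $\Omega(SA\cdot H)$ triples. Each such triple contributes $H^2\log(1/\delta)/\Delta^2$, so the sum is $\Omega(H^3SA\log(1/\delta)/\Delta^2)$. Dividing by $H$ yields $\Omega(H^2SA\log(1/\delta)/\Delta^2)$, which is at least the claimed $\Omega(HSA\log(1/\delta)/\Delta^2)$.

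If the per-episode visit count turns out to be subtler, the weaker normalization still suffices. Each episode exits the tree exactly once, so the number of exit-type samples per episode is at most $1$. The relevant counts for a node $s$ at depth $d$ are, however, split across the rounds $h\ge d$, and we must take care with which $(h,s)$ are reachable. With the weak normalization we simply keep $\Omega(HSA)$ triples, each contributing $\Omega(\log(1/\delta)/\Delta^2)$, obtaining $\Omega(HSA\log(1/\delta)/\Delta^2)$ directly.

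The main obstacle is the second step. We must compute $V_0^*(s_0|\nu)$ exactly for the Tree instance and check that a single-triple perturbation makes $\nu'$ positive with a perturbation of size $\eta=O(\Delta/(H-h))$, or at least $O(\Delta)$ up to constants. This is delicate because the best policy in $\nu'$ must actually route through $(h,s)$, which costs the rewards lost on the way down the tree; the depth condition $H-1\ge 6\log_2(S-1)$ is what makes that loss affordable. We will handle this by first bounding $V_0^*(s_0|\nu)\le (H-1)r$, then lower bounding $V_0^{\pi}(s_0|\nu')$ for the explicit routing policy $\pi$. If exactness fails, we will fall back to the choice $\eta=\Theta(\Delta/H)$ uniformly over $h\le H/2$, which still gives the stated bound.
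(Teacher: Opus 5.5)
Your change-of-measure skeleton is the same as the paper's, but two of your steps fail.

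First, the count of triples is wrong, because nothing in the Tree instance loops. The episode starts at node $1$ at round $1$. Every internal node either moves deterministically to a child or exits, and every leaf exits. So node $s$ is occupied only at the single round $h(s)=\lceil\log_2(s+1)\rceil$, and there are only $\Theta(SA)$ reachable exit triples, not $\Omega(HSA)$. For $h\neq h(s)$ one has $n_h^\tau(s,a)=0$ almost surely. Perturbing $p_h(\cdot|s,a)$ alone at such a triple changes no policy's value, so no positive alternative can be built there. Both your strong normalization ($\Omega(SAH)$ triples with $h\le H/2$) and your weak fallback rely on $\Omega(H)$ rounds per node, so neither produces the factor $H$. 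The paper uses one alternative per $(s_0,a_0)$ at round $h(s_0)$. It gets its $H$ from the $(H-1)^2$ in the per-triple bound, divided by the $H$ in $\tau=\frac1H\sum_{h,s,a}n_h^\tau(s,a)$. Your remark that each episode exits exactly once is correct. It even gives the sharper identity $\tau=\sum_{(s,a)\in\mathcal{X}}n_{h(s)}^\tau(s,a)$, where $\mathcal{X}$ is the set of exit pairs.

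Second, and more seriously, the step you call the main obstacle does fail, and neither of your remedies fixes it. No reward is earned inside the tree. A policy that walks to a depth-$h$ node and exits with success probability $q$ therefore has value exactly $(H-h)q$, which under $\nu$ equals $V_0^*(s_0|\nu)-(h-1)r$. With $\Delta=\mu_0-(H-1)r$, positivity of $\nu'$ requires $q-r>\frac{\Delta+(h-1)r}{H-h}$. Your choice $\eta\approx 2\Delta/(H-h)$ gives value $(H-h)r+2\Delta<\mu_0$ whenever $(h-1)r>\Delta$, and the fallback $\eta=\Theta(\Delta/H)$ fails the same way. The depth condition controls the denominator $H-h$, not the additive loss $(h-1)r>3/8$.

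Consequences of the second gap:
\begin{itemize}
\item Every non-root exit has KL of order $(\Delta+(h-1)r)^2/H^2$, which is bounded below independently of $\Delta$. The transportation inequality then gives a count bound of order only $H^2\log\frac1\delta/(\Delta+(h-1)r)^2$ there.
\item Only the $A-2$ root exits give $\Omega(H^2\log\frac1\delta/\Delta^2)$ each. Since every episode passes through the root, $\tau=\sum_a n_1^\tau(1,a)$, and this yields $\Omega(H^2A\log\frac1\delta/\Delta^2)$.
\item That reaches $HSA\log\frac1\delta/\Delta^2$ only if $H\ge cS$, whereas $H-1\ge 6\log_2(S-1)$ allows $S$ exponential in $H$.
\end{itemize}

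The paper's proof does not supply the missing idea either. It chooses $q=\frac{\mu_0}{H-h(s_0)}+\eta$ correctly, so its alternatives are genuinely positive. It then replaces $\frac{\mu_0}{\frac23(H-1)}-r$ by $\frac{3\Delta}{2(H-1)}$, but this quantity actually equals $\frac r2+\frac{3\Delta}{2(H-1)}$, so the same $\Delta$-independent term is dropped. Single-exit change of measure cannot put a factor $S$ in front of $1/\Delta^2$. This route therefore does not establish the statement as written, and the statement itself looks too strong when $S\gg H$ and $\Delta$ is small.
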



Full proof is in Appendix \ref{sec:Proof-of-Lower-Bound-neg}. Comparing Theorem~\ref{theorem:lower-bound-negative} with our upper bound in Theorem~\ref{theorem:upper-bound-of-Etau-recycling-neg}, we observe that the gap is restricted to a factor of $H^3 \log\frac{SAH}{(\mu_0 - V^*_0)^2} + S \log\log(1/\delta)$. Since $\log\log(1/\delta) < \log(1/\delta)$ for all $\delta \in (0, 1)$, the gap between our upper and lower bounds is bounded by a polynomial in $S, A$, and $H$. Thus, our proposed algorithm BEE-GPI achieves a nearly optimal sample complexity.


\section{Numerical Experiments}



We provide numerical evidence to evaluate the empirical efficiency of our proposed BEE-GPI. We consider two classic environments: the Single Chain and Double Chain MDPs \cite{kaufmann2021adaptive}. For both settings, we fix $\delta=0.001$, $H=8$, $\mathcal{S}=\{0,1,2,3\}$, and $\mathcal{A}=\{\text{left}, \text{right}\}$. The transition dynamics are defined such that the "intended" direction occurs with probability $0.9$ and the "reverse" direction with probability $0.1$. Specifically, $p_h(\max\{s-1,0\} \mid s, \text{left}) = 0.9$ and $p_h(\min\{s+1,3\} \mid s, \text{right}) = 0.9$. The two environments primarily differ in the initial state distribution: the agent always starts at state $0$ in the Single Chain and at state $1$ in the Double Chain.

To evaluate the performance of BEE-GPI, we select threshold values $\mu_0 \in \{1, 1.5, 2.0, 2.5, 3.0\}$, ensuring that all tested instances are positive ($V^*_0 > \mu_0$). We compare our approach against the $\epsilon$-optimal BPI algorithm, BPI-UCRL in \cite{kaufmann2021adaptive} for its execution efficiency. To adapt BPI-UCRL for the GPI task, we provide it with the oracle parameter $\epsilon = V^*_0 - \mu_0$, a significant prior-knowledge advantage. Both algorithms were implemented using the MOSEK optimization solver. We run each experiment with 20 independent trials, and deploy them on an Intel i5-14500 CPU over a total computation time of approximately 72 hours.

\begin{figure}
    \centering
    \includegraphics[width=0.9\textwidth]{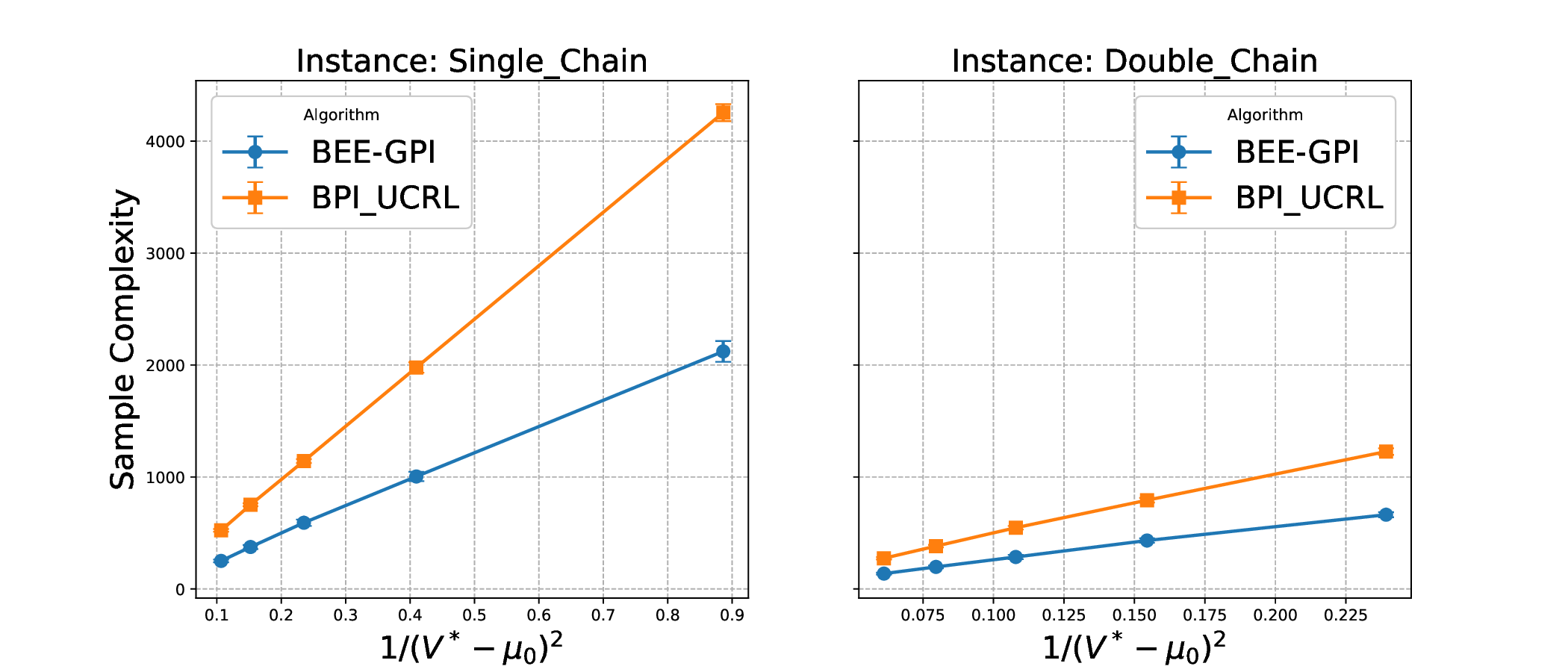}
    \caption{Numerical comparison between BEE-GPI and BPI-UCRL (the lower the better).}
    \label{fig:numeric-result}
\end{figure}


Figure~\ref{fig:numeric-result} presents the empirical stopping times averaged for each setting. The error bars represent three standard deviations ($3\sigma$), though they are notably small due to the low variance of termination round. As illustrated in Figure~\ref{fig:numeric-result}, BEE-GPI consistently outperforms BPI-UCRL by requiring significantly fewer episodes to identify a qualified policy across all tested thresholds. These observations validate our theoretical contribution in Theorem~\ref{theorem:upper-bound-of-Etau-recycling-pos}: by decoupling the $\log(1/\delta)$ coefficient from the state and action space sizes ($S, A$), BEE-GPI achieves a tighter sample complexity than a standard reductions to the state-of-the-art $\epsilon$-optimal BPI algorithms.

\section{Conclusion}

We formalized the Good Policy Identification (GPI) problem and proposed the BEE-GPI algorithm. We establish upper and lower bounds on sample complexity, demonstrating that BEE-GPI is minimax optimal up to a polynomial factor of $S, A$, and $H$. Critically, we showed that for positive instances, our algorithm decouples the $\log 1/\delta$ term from the state-action space size, an efficiency gain validated by our numerical experiments. Future research includes deriving tighter instance-dependent bounds and extending this threshold-based framework to large-scale MDPs using function approximation.





\bibliographystyle{plainnat}
\bibliography{neurips2026}

@article{narang2024sample,
  title={Sample complexity reduction via policy difference estimation in tabular reinforcement learning},
  author={Narang, Adhyyan and Wagenmaker, Andrew and Ratliff, Lillian and Jamieson, Kevin G},
  journal={Advances in Neural Information Processing Systems},
  volume={37},
  pages={22772--22826},
  year={2024}
}

@article{kaufmann2016complexity,
  title={On the complexity of best-arm identification in multi-armed bandit models},
  author={Kaufmann, Emilie and Capp{\'e}, Olivier and Garivier, Aur{\'e}lien},
  journal={The Journal of Machine Learning Research},
  volume={17},
  number={1},
  pages={1--42},
  year={2016},
  publisher={JMLR. org}
}

@inproceedings{al2021adaptive,
  title={Adaptive sampling for best policy identification in markov decision processes},
  author={Al Marjani, Aymen and Proutiere, Alexandre},
  booktitle={International Conference on Machine Learning},
  pages={7459--7468},
  year={2021},
  organization={PMLR}
}

@article{al2021navigating,
  title={Navigating to the best policy in markov decision processes},
  author={Al Marjani, Aymen and Garivier, Aur{\'e}lien and Proutiere, Alexandre},
  journal={Advances in Neural Information Processing Systems},
  volume={34},
  pages={25852--25864},
  year={2021}
}

@inproceedings{wagenmaker2022beyond,
  title={Beyond no regret: Instance-dependent pac reinforcement learning},
  author={Wagenmaker, Andrew J and Simchowitz, Max and Jamieson, Kevin},
  booktitle={Conference on Learning Theory},
  pages={358--418},
  year={2022},
  organization={PMLR}
}

@inproceedings{kaufmann2021adaptive,
  title={Adaptive reward-free exploration},
  author={Kaufmann, Emilie and M{\'e}nard, Pierre and Domingues, Omar Darwiche and Jonsson, Anders and Leurent, Edouard and Valko, Michal},
  booktitle={Algorithmic Learning Theory},
  pages={865--891},
  year={2021},
  organization={PMLR}
}

@article{JMLR:v11:jaksch10a,
  author  = {Thomas Jaksch and Ronald Ortner and Peter Auer},
  title   = {Near-optimal Regret Bounds for Reinforcement Learning},
  journal = {Journal of Machine Learning Research},
  year    = {2010},
  volume  = {11},
  number  = {51},
  pages   = {1563-1600},
  url     = {http://jmlr.org/papers/v11/jaksch10a.html}
}

@article{dann2015sample,
  title={Sample complexity of episodic fixed-horizon reinforcement learning},
  author={Dann, Christoph and Brunskill, Emma},
  journal={Advances in Neural Information Processing Systems},
  volume={28},
  year={2015}
}

@InProceedings{pmlr-v267-li25f,
  title = 	 {Near Optimal Non-asymptotic Sample Complexity of 1-Identification},
  author =       {Li, Zitian and Cheung, Wang Chi},
  booktitle = 	 {Proceedings of the 42nd International Conference on Machine Learning},
  pages = 	 {34144--34184},
  year = 	 {2025},
  editor = 	 {Singh, Aarti and Fazel, Maryam and Hsu, Daniel and Lacoste-Julien, Simon and Berkenkamp, Felix and Maharaj, Tegan and Wagstaff, Kiri and Zhu, Jerry},
  volume = 	 {267},
  series = 	 {Proceedings of Machine Learning Research},
  month = 	 {13--19 Jul},
  publisher =    {PMLR},
  url = 	 {https://proceedings.mlr.press/v267/li25f.html},
}

@inproceedings{menard2021fast,
  title={Fast active learning for pure exploration in reinforcement learning},
  author={M{\'e}nard, Pierre and Domingues, Omar Darwiche and Jonsson, Anders and Kaufmann, Emilie and Leurent, Edouard and Valko, Michal},
  booktitle={International Conference on Machine Learning},
  pages={7599--7608},
  year={2021},
  organization={PMLR}
}

@InProceedings{pmlr-v132-domingues21a,
  title = 	 {Episodic Reinforcement Learning in Finite MDPs: Minimax Lower Bounds Revisited},
  author =       {Domingues, Omar Darwiche and M{\'e}nard, Pierre and Kaufmann, Emilie and Valko, Michal},
  booktitle = 	 {Proceedings of the 32nd International Conference on Algorithmic Learning Theory},
  pages = 	 {578--598},
  year = 	 {2021},
  editor = 	 {Feldman, Vitaly and Ligett, Katrina and Sabato, Sivan},
  volume = 	 {132},
  series = 	 {Proceedings of Machine Learning Research},
  month = 	 {16--19 Mar},
  publisher =    {PMLR},
  url = 	 {https://proceedings.mlr.press/v132/domingues21a.html}
}

@inproceedings{tirinzoni2023optimistic,
  title={Optimistic pac reinforcement learning: the instance-dependent view},
  author={Tirinzoni, Andrea and Al-Marjani, Aymen and Kaufmann, Emilie},
  booktitle={International Conference on Algorithmic Learning Theory},
  pages={1460--1480},
  year={2023},
  organization={PMLR}
}

@inproceedings{fiechter1994efficient,
  title={Efficient reinforcement learning},
  author={Fiechter, Claude-Nicolas},
  booktitle={Proceedings of the seventh annual conference on Computational learning theory},
  pages={88--97},
  year={1994}
}

@article{kearns1998finite,
  title={Finite-sample convergence rates for Q-learning and indirect algorithms},
  author={Kearns, Michael and Singh, Satinder},
  journal={Advances in neural information processing systems},
  volume={11},
  year={1998}
}

@article{gheshlaghi2013minimax,
  title={Minimax PAC bounds on the sample complexity of reinforcement learning with a generative model},
  author={Gheshlaghi Azar, Mohammad and Munos, R{\'e}mi and Kappen, Hilbert J},
  journal={Machine learning},
  volume={91},
  number={3},
  pages={325--349},
  year={2013},
  publisher={Springer}
}

@InProceedings{pmlr-v125-agarwal20b,
  title = 	 {Model-Based Reinforcement Learning with a Generative Model is Minimax Optimal},
  author =       {Agarwal, Alekh and Kakade, Sham and Yang, Lin F.},
  booktitle = 	 {Proceedings of Thirty Third Conference on Learning Theory},
  pages = 	 {67--83},
  year = 	 {2020},
  editor = 	 {Abernethy, Jacob and Agarwal, Shivani},
  volume = 	 {125},
  series = 	 {Proceedings of Machine Learning Research},
  month = 	 {09--12 Jul},
  publisher =    {PMLR},
  url = 	 {https://proceedings.mlr.press/v125/agarwal20b.html}
}

@article{sidford2018near,
  title={Near-optimal time and sample complexities for solving Markov decision processes with a generative model},
  author={Sidford, Aaron and Wang, Mengdi and Wu, Xian and Yang, Lin and Ye, Yinyu},
  journal={Advances in Neural Information Processing Systems},
  volume={31},
  year={2018}
}

@article{sidford2023variance,
  title={Variance reduced value iteration and faster algorithms for solving Markov decision processes},
  author={Sidford, Aaron and Wang, Mengdi and Wu, Xian and Ye, Yinyu},
  journal={Naval Research Logistics (NRL)},
  volume={70},
  number={5},
  pages={423--442},
  year={2023},
  publisher={Wiley Online Library}
}

@article{li2024breaking,
  title={Breaking the sample size barrier in model-based reinforcement learning with a generative model},
  author={Li, Gen and Wei, Yuting and Chi, Yuejie and Chen, Yuxin},
  journal={Operations Research},
  volume={72},
  number={1},
  pages={203--221},
  year={2024},
  publisher={INFORMS}
}

@inproceedings{katz2020true,
  title={The true sample complexity of identifying good arms},
  author={Katz-Samuels, Julian and Jamieson, Kevin},
  booktitle={International Conference on Artificial Intelligence and Statistics},
  pages={1781--1791},
  year={2020},
  organization={PMLR}
}

@article{degenne2019pure,
  title={Pure exploration with multiple correct answers},
  author={Degenne, R{\'e}my and Koolen, Wouter M},
  journal={Advances in Neural Information Processing Systems},
  volume={32},
  year={2019}
}

@article{kano2017Good,
  title={Good Arm Identification via Bandit Feedback},
  author={ Kano, Hideaki  and  Honda, Junya  and  Sakamaki, Kentaro  and  Matsuura, Kentaro  and  Sugiyama, Masashi },
  journal={Machine Learning},
  number={2},
  year={2017},
}

@article{li2026closing,
  title={Closing the Gap on the Sample Complexity of 1-Identification},
  author={Li, Zitian and Cheung, Wang Chi},
  journal={arXiv preprint arXiv:2601.15620},
  year={2026}
}

@article{jourdan2026anytime,
  title={An anytime algorithm for good arm identification},
  author={Jourdan, Marc and Delahaye-Duriez, Andr{\'e}e and R{\'e}da, Cl{\'e}mence},
  journal={Journal of Machine Learning Research},
  volume={27},
  number={19},
  pages={1--90},
  year={2026}
}


\appendix

\section{Notation}
\label{sec:appendix-notations}
In this section, We list the notation used in Algorithms \ref{alg:1-policy-identification-recycle-history}, \ref{alg:Oracle-BPI_UCRL}, and we supplement the details on the notation in the analysis in Appendix \ref{sec:Performance-Guarantee-of-Algorithm-main-body}.
\begin{itemize}
    \item $r_h(s,a)$: mean reward of taking action $a$ in state $s$ at round $h$.
    \item $p_h(\cdot|s,a)$: true transition kernel when we take action $a$ in state $s$ at round $h$.
    \item $p_h^{\pi}(s,a):$ Execute a given policy $\pi$ on the MDP. Denote $S_h$ as the random state the agent locate on round $h$, we define $p_h^{\pi}(s,a) = \Pr(S_h=s,\pi_h(s)=a)$.
    \item $S_{t,h}$: realized state round $h$ at $t$ th episode.
    \item 
    $\beta^{\text{cnt}}(\delta) = \log\frac{2SAH}{\delta}$,  $\beta_p(t,\delta) = \log\frac{2SAH}{\delta} + (S-1)\log\Big(e(1+\frac{t}{S-1})\Big)$

    \item $\hat{p}_h^{t}(\cdot|s,a)$ as empirical transition kernel after executing $t$ episodes during the exploration stages. $\hat{p}_h^{t}(s'|s,a) =\frac{1}{n^t_h(s,a)} \sum^{t}_{q=1} \mathbf{1}(S_{q, h} = s, A_{q, h}, S_{q, h+1} = s')$ if $n_h^t(s,a) > 0$, and set $\hat{p}_h^{t}(s'|s,a) = 1/|\mathcal{S}|$ otherwise.
    \item $n^t_h(s,a):=\sum^{t}_{q=1} \mathbf{1}(S_{q, h} = s, A_{q, h} = a)$, visiting times of pair $(s,a,h)$ after executing $t$ episode during the exploration stages. $n^t_h(s',s,a):=\sum^{t}_{q=1} \mathbf{1}(S_{q, h+1} = s, S_{q, h} = s, A_{q, h} = a)$, visiting times of pair $(s,a,h)$ and the next state is $s'$ after executing $t$ episodes. Here we denote $\mathcal{H}^{\text{ee}}=\{S_{q,h},A_{q,h}\}_{q\in [t], h\in [H]}$.
    \item $\bar{n}_h^t(s,a):$  Denote $k_q$ as the stage index at $q$-th sampled episode, and $\bar{\pi}^q:=\overline{\pi}^{q}(\cdot; \delta_{k_q})$, defined in (\ref{eqn:overline-pi^t-definition}). We define $\bar{n}_h^t(s,a):=\sum_{q=0}^{t-1}p_h^{\bar{\pi}^q}(s,a)$. We define $\overline{\pi}^0$ by (\ref{eqn:overline-pi^t-definition}) with $\mathcal{C}_h^0(s,a;\delta_1):=\Delta^\mathcal{S}$.
    \item $V^*_0(s_0|\nu)$ optimal value function of instance $\nu$. If we only mention one instance $\nu$, we may use $V^*_0(s_0)$ for simplicity.
\end{itemize}

What's more, we introduce the confidence interval for each pair $(s,a,h)$
\begin{align*}
    \mathcal{C}_h^t(s,a;\delta):=\left\{q(\cdot|s, a)\in\Delta^\mathcal{S}:\text{KL}\big(\widehat{p}_h^t(\cdot | s,a), q(\cdot |s,a)\big)\leq \frac{\beta_p(n_h^t(s,a),\delta)}{n_h^t(s,a)}\right\}.
\end{align*}
Following \cite{al2021adaptive}, we define the same optimistic and pessimistic approximation value function, for each $(h,s,a)$.
\begin{align*}
    \overline{Q}_h^{t}(s,a;\delta):= & r_h(s,a)+\max_{\bar{p}_h(\cdot | s, a)\in \mathcal{C}_h^t(s,a;\delta)}\sum_{s'}\bar{p}_h(s'|s,a) \overline{V}_{h+1}^{t}(s';\delta)\\
    \overline{V}^{t}_h(s;\delta):= & \max_a\overline{Q}_h^{t}\big(s,a;\delta\big)\\
    \overline{V}^{t}_{H+1}(s;\delta):= & 0\\
    \overline{p}_h^{t}(s,a;\delta)\in & \arg\max_{\bar{p}_h\in \mathcal{C}_h^t(s,a,\delta)}\sum_{s'}\bar{p}_h(s'|s,a) \overline{V}_{h+1}^{t}(s';\delta)\\
    \overline{\pi}^t_h(s;\delta)\in & \arg\max_{a}\overline{Q}_h^{t}\big(s,a;\delta\big)\\
    \overline{V}_0^t(s_0;\delta) = & \max_{\bar{p}_0\in \mathcal{C}_0^t(s_0;\delta)}\sum_{s'}\bar{p}_0(s'|s_0,a_0) \overline{V}_{1}^{t}(s';\delta)
\end{align*}
\begin{align*}
    \underline{Q}_h^{t}(s,a;\delta) :=& r_h(s,a) + \min_{\underline{p}_h \in \mathcal{C}_h^t(s,a;\delta)}\sum_{s'} \underline{p}_h (s'|s,a)\underline{V}_{h+1}^{t}(s';\delta) \\
    \underline{V}_h^{t}(s;\delta) :=& \max_a \underline{Q}_h^{t}(s,a;\delta) \\
    \underline{V}_{H+1}^{t}(s;\delta) :=& 0 \\
    \underline{p}_h^{t}(s,a;\delta) &\in \arg\min_{\underline{p} \in \mathcal{C}_h^t(s,a;\delta)} \sum_{s'} \underline{p}_h (s'|s,a)\underline{V}_{h+1}^{t}(s';\delta)\\
    \underline{\pi}^t_h(s;\delta)\in & \arg\max_{a}\underline{Q}_h^{t}\big(s,a;\delta\big)\\
    \underline{V}_0^t(s_0;\delta) = & \min_{\underline{p}_0\in \mathcal{C}_0^t(s_0;\delta)}\sum_{s'}\bar{p}_0(s'|s_0,a_0) \underline{V}_{1}^{t}(s';\delta)
\end{align*}
\begin{align*}
    \overline{Q}_h^{t,\pi}(s,a;\delta):= & r_h(s,a) +\max_{\bar{p}_h\in \mathcal{C}_h^t(s,a;\delta)}\sum_{s'}\bar{p}_h(s'|s,a) \overline{V}_{h+1}^{t,\pi}(s';\delta)\\
    \overline{V}^{t,\pi}_h(s;\delta):= & \overline{Q}_h^{t,\pi}\big(s,\pi(s);\delta\big)\\
    \overline{V}^{t,\pi}_{H+1}(s;\delta):= & 0\\
    \overline{p}_h^{t,\pi}(s,a;\delta)\in & \arg\max_{\bar{p}_h\in \mathcal{C}_h^t(s,a,\delta)}\sum_{s'}\bar{p}_h(s'|s,a) \overline{V}_{h+1}^{t,\pi}(s';\delta)
\end{align*}
\begin{align*}
    \underline{Q}_h^{t,\pi}(s,a;\delta) :=& r_h(s,a) + \min_{\underline{p}_h \in \mathcal{C}_h^t(s,a;\delta)}\sum_{s'} \underline{p}_h(s'|s,a) \underline{V}_{h+1}^{t,\pi}(s';\delta) \\
    \underline{V}_h^{t,\pi}(s;\delta) :=& \underline{Q}_h^{t,\pi}(s,\pi(s);\delta) \\
    \underline{V}_{H+1}^{t,\pi}(s;\delta) :=& 0 \\
    \underline{p}_h^{t,\pi}(s,a;\delta) &\in \arg\min_{\underline{p} \in \mathcal{C}_h^t(s,a;\delta)} \sum_{s'} \underline{p}_h(s'|s,a) \underline{V}_{h+1}^{t,\pi}(s';\delta).
\end{align*}
Finally, we introduce the commonly used notation $Q$ and $V$, noted as the ``True'' value function.
\begin{align*}
    Q_h^{\pi}(s,a):= & r_h(s,a)+\sum_{s'}p_h(s'|s,a) V_{h+1}(s')\\
    V_h^{\pi}(s):= & Q_h^{\pi}(s,\pi(s))\\
    V_{H+1}^{\pi}(s):= & 0\\
    V_0^{\pi}(s_0) = & \sum_{s'}p_0(s'|s_0,a_0) V_{1}^{\pi}(s').
\end{align*}

\section{Performance Guarantee of Algorithm \ref{alg:1-policy-identification-recycle-history}}
\label{sec:Performance-Guarantee-of-Algorithm-main-body}

\subsection{Proof of Main Theorems}
\label{sec:proof-main-theorem}

Recall we take default value $\delta_k=\frac{1}{3^k}$ in Algorithm \ref{alg:1-policy-identification-recycle-history}. We define
\begin{align}
    \mathcal{E}^{\text{cnt}}(\delta)= & \left\{\forall t\in\mathbb{N}^*, \forall (s,a,h): n^t_h(s,a)\geq \frac{1}{2}\bar{n}_h^t(s,a)-\beta^{\text{cnt}}(\delta)\right\}\label{eqn:E-cnt-event}\\
    \mathcal{E}(\delta)= & \left\{\forall t\in \mathbb{N},\forall h\in [H], \forall (s,a, h), \text{KL}(\hat{p}^t_h(\cdot| s,a), p_h(\cdot| s,a))\leq \frac{\beta_p(n_h^t(s,a),\delta)}{n_h^t(s,a)}\right\}\label{eqn:E-event}\\
    \kappa = & \min\{k: \mathcal{E}(\delta_k)\cap \mathcal{E}^{\text{cnt}}(\delta_k) \text{ hold}\}\\
    \mathcal{E}_k^{\text{et}} = & \left\{\forall N>0, \left|\hat{V}_{\text{et}}^{\hat{\pi}_k,N} - V^{\hat{\pi}_k}_0\right|\leq 
    \sqrt{\frac{4H^2 \log\frac{2\alpha_k(\log_2 2N)^2}{\delta}}{N}} \right\}, \label{eqn:E-et-event}
\end{align}
where $\mathcal{E}^{\text{cnt}}(\delta), \mathcal{E}(\delta)$ are only defined for the samples collected in the exploration period. Recall A key feature of Algorithm \ref{alg:1-policy-identification-recycle-history} is that sampling trajectories are shared across all exploration periods; consequently, $\{\mathcal{E}^{\text{cnt}}(\delta_k), \mathcal{E}(\delta_k)\}_{k=1}^{+\infty}$ are nested, i.e., $\mathcal{E}^{\text{cnt}}(\delta_k) \subset \mathcal{E}^{\text{cnt}}(\delta_{k+1})$ and $\mathcal{E}(\delta_k) \subset \mathcal{E}(\delta_{k+1})$.

We first prove the output correctness of Algorithm \ref{alg:1-policy-identification-recycle-history}.
\begin{thmstar}[Restatement of Theorem \ref{theorem:delta-pac-requirement}]
    Algorithm \ref{alg:1-policy-identification-recycle-history} is $\delta$-PAC.
\end{thmstar}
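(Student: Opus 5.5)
The proof splits into two parts: almost-sure termination, and bounding the probability of a wrong output by $\delta$ via a union bound over phases. Throughout we work with the nested concentration events $\mathcal{E}^{\text{cnt}}(\delta_k)$, $\mathcal{E}(\delta_k)$ and the exploitation events $\mathcal{E}^{\text{et}}_k$ defined in (\ref{eqn:E-cnt-event})--(\ref{eqn:E-et-event}).

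\textbf{Step 1: probabilities of the concentration events.}
\begin{itemize}
\item For the exploration events, cite the concentration results of \cite{kaufmann2021adaptive} for the KL confidence sets with bonus $\beta_p$ and for the counts with $\beta^{\text{cnt}}$. These give $\Pr(\mathcal{E}(\delta)^c)\le \delta/2$ and $\Pr(\mathcal{E}^{\text{cnt}}(\delta)^c)\le \delta/2$. Hence $\Pr(\kappa > k)\le \delta_k$.
\item For the exploitation event at phase $k$, $\Pr((\mathcal{E}^{\text{et}}_k)^c)\le \frac{\pi^2}{6}\frac{\delta}{\alpha_k}$. Each episode reward lies in $[0,H]$, so Hoeffding's inequality applies. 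Peel over dyadic blocks $N\in[2^j,2^{j+1})$: the $(\log_2 2N)^2$ factor makes the per-block failure probability at most $\delta/(\alpha_k (j+1)^2)$. The data $X^{k,\text{et}}_i$ are fresh and independent of the exploration history, so we condition on $\hat{\pi}_k$ and then apply Hoeffding.
\end{itemize}

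\textbf{Step 2: correctness on good events.}

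\emph{Positive outputs.} A policy $\hat{\pi}_k$ is output only when the test in Line~\ref{alg-line:output-a-qualified-policy} passes. On $\mathcal{E}^{\text{et}}_k$, the deviation $|\hat{V}-V^{\hat{\pi}_k}_0|$ is at most the stated radius. Note that the test uses $H^2\log(\cdot)/N$ while the event uses $4H^2$; I would align these, using the tighter Hoeffding constant ($H^2/(2N)\cdot\log(2/\cdot)$ suffices). Then passing the test implies $V^{\hat{\pi}_k}_0\ge \mu_0$. This argument is independent of the exploration event, so the probability of a wrong positive output is at most $\sum_k \frac{\pi^2}{6}\frac{\delta}{\alpha_k}\le \frac{\pi^2}{24}\delta$ (using $\alpha_k=5^k$). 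This is the same on positive and negative instances; on a negative instance no policy is qualified, so every positive output is wrong, and the bound covers that case too.

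\emph{None outputs.} \textsf{None} is returned only at a phase $k$ with $\delta_k<\delta/10$ and criterion (\ref{eqn:stop-criterion-neg}) satisfied. On $\mathcal{E}(\delta_k)$ the true kernels lie in the confidence sets $\mathcal{C}^t_h$, so a backward induction gives optimism: $\overline{V}^t_0(s_0;\delta_k)\ge V^*_0(s_0)$. Thus \textsf{None} is wrong on a positive instance only on $\mathcal{E}(\delta_k)^c$ for some such $k$. Because the events are nested, this is contained in $\mathcal{E}(\delta_{k_0})^c$ with $k_0$ the first index satisfying $\delta_{k_0}<\delta/10$. Its probability is at most $\delta_{k_0}<\delta/10$.

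The total error is therefore at most $\delta(\pi^2/24+1/10)<\delta$.

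\textbf{Step 3: termination with probability one (the main obstacle).} This requires the termination lemma referenced in the sketch (Lemma~\ref{lemma:Termination-Round-of-algorithm}): on $\{\kappa<\infty\}$, the algorithm stops by phase $\kappa^{\text{pos}}_{\text{end}}$ or $\kappa^{\text{neg}}_{\text{end}}$ respectively. Its core claim is as follows. Once $k\ge\kappa$ and $\epsilon_k\lesssim (V^*_0-\mu_0)^2/H^2$, the budget $T^{\text{ee}}_k$ is large enough that the optimistic/pessimistic gap $\overline{V}-\underline{V}$ shrinks below a constant multiple of $|V^*_0-\mu_0|$ before the budget runs out. This uses the sum-of-radii bound of \cite{kaufmann2021adaptive} (Lemma~\ref{lemma:lower-bound-of-sum-average-radius}). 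Consequently:
\begin{itemize}
\item On a negative instance, (\ref{eqn:stop-criterion-neg}) fires.
\item On a positive instance, (\ref{eqn:stop-criterion-pos}) fires rather than (\ref{eqn:stop-criterion-neg}), because optimism rules out the latter. The resulting $\hat{\pi}_k$ has a gap $V^{\hat{\pi}_k}_0-\mu_0\ge\frac{C-1}{C}(V^*_0-\mu_0)$. Then, on $\mathcal{E}^{\text{et}}_k$, the budget $T^{\text{et}}_k$ suffices for the exploitation test to pass.
\end{itemize}

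Given the lemma, the almost-sure termination follows:
\begin{itemize}
\item $\Pr(\kappa=\infty)\le\lim_k\delta_k=0$.
\item The exploitation events $\mathcal{E}^{\text{et}}_k$ are independent across phases, with failure probabilities $\le \pi^2/(6\alpha_k)\to 0$. So $\Pr(\kappa^{\text{pos}}_{\text{end}}=\infty\mid\kappa<\infty)=0$.
\item $\kappa^{\text{neg}}_{\text{end}}$ is finite whenever $\kappa$ is.
\end{itemize}
Hence $\tau<\infty$ almost surely, and combining this with Step~2 gives both PAC inequalities.
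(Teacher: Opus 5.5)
Your proposal is correct and follows essentially the same route as the paper. Almost-sure termination comes from the termination lemma together with $\Pr(\kappa>k)\le\delta_k$ and vanishing exploitation-failure probabilities. Wrong policy outputs are controlled by the union bound $\sum_k\frac{\pi^2}{6}\frac{\delta}{\alpha_k}\le\frac{\pi^2}{24}\delta$, and wrong \textsf{None} outputs by optimism on the nested exploration events. The only differences are minor: you bound the \textsf{None} error directly by the failure of $\mathcal{E}(\delta_{k_0})$, getting $\delta/10$ rather than the paper's $3\delta/10$ via the tail of $\kappa$, and you explicitly reconcile the $H^2$ versus $4H^2$ radius mismatch that the paper glosses over.
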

\begin{proof}[Proof of Theorem \ref{theorem:delta-pac-requirement}]
    By the Lemma \ref{lemma:Must-Terminate-with-Prob1}, we know $\Pr_{\nu}(\tau<+\infty)=1$ holds no matter $\nu$ is positive or negative. The remaining work is to prove $\Pr_{\nu}(\hat{\pi}\neq \textsf{None}, V^{\hat{\pi}}_0(s_0) > \mu_0)\geq 1-\delta$ for any positive instance $\nu$ and $\Pr_{\nu}(\hat{\pi}=\textsf{None})\geq 1-\delta$ for any negative instance $\nu$. Or equivalently, $\Pr_{\nu}(\hat{\pi}= \textsf{None}\text{ or } V^{\hat{\pi}}_0(s_0) < \mu_0)<\delta$ holds for any positive instance $\nu$, and $\Pr_{\nu}(\hat{\pi}\neq\textsf{None})< \delta$

    If $\nu$ is positive, we have
    \begin{align}
        \Pr_{\nu}(\hat{\pi}= \textsf{None}\text{ or } V^{\hat{\pi}}_0(s_0) < \mu_0)
        \leq \Pr_{\nu}(\hat{\pi}= \textsf{None}) + \Pr_{\nu}(\hat{\pi}\neq \textsf{None}, V^{\hat{\pi}}_0(s_0) < \mu_0).\label{eqn:pos-wrong-output-event}
    \end{align}
    We are going to prove $\Pr_{\nu}(\hat{\pi}= \textsf{None})\leq (1+\frac{\pi^2}{3})\frac{3\delta}{10}$ and $\Pr_{\nu}(\exists k\in\mathbb{N}, \hat{\pi}\neq \textsf{None}, V^{\hat{\pi}}_0(s_0)\leq \frac{\pi^2\delta}{24}$.
    
    By the Line (\ref{alg-line:negative-output}) in Algorithm \ref{alg:1-policy-identification-recycle-history}, we have
    \begin{align}
        & \Pr_{\nu}(\hat{\pi}= \textsf{None})\notag\\
        \leq & \Pr_{\nu}\left(\exists \delta_k\leq\frac{\delta}{10}, \exists t\in\mathbb{N}, \overline{V}^{\overline{\pi}^t(\cdot;\delta_k)}_0(s_0;\delta_k) < \mu_0\right)\notag\\
        \leq & \Pr_{\nu}\left(\exists \delta_k\leq\frac{\delta}{10}, \exists t\in\mathbb{N}, \overline{V}^{\overline{\pi}^t(\cdot;\delta_k)}_0(s_0;\delta_k) < V_0^*(s_0)\right)\notag\\
        \leq & \Pr_{\nu}\left(\exists k\geq \lceil \log_3\frac{10}{\delta}\rceil, \kappa \geq k\right)\label{eqn:kappa-is-larger-than-log_delta/10}\\
        \leq & \frac{1}{3^{\lceil \log_3\frac{10}{\delta}\rceil-1}}.\label{eqn:output-None}\\
        \leq & \frac{3\delta}{10}\label{eqn:pos-prob-output-failure-None}.
    \end{align}
    Step (\ref{eqn:kappa-is-larger-than-log_delta/10}) is by the Proposition \ref{proposition:Vbar-Vunder-bound-V} and Step (\ref{eqn:output-None}) by the Lemma \ref{lemma:kappa-tail-distribution}.

    By the Line (\ref{alg-line:output-a-qualified-policy}) in Algorithm \ref{alg:1-policy-identification-recycle-history}, we have
    \begin{align*}
        & \Pr_{\nu}(\exists k\in\mathbb{N},\hat{\pi}_k\text{ is a policy}, V^{\hat{\pi}_k}_0(s_0) < \mu_0, \hat{\pi}=\hat{\pi}_k)\\
        \leq & \Pr_{\nu}\left(\exists k\in\mathbb{N}, \exists N\in\mathbb{N}, \hat{V}_{\text{et}}^{\hat{\pi}_k,N} - \sqrt{\frac{4H^2 \log\frac{2\alpha_k(\log_2 2N)^2}{\delta}}{N}}\geq \mu_0\right),
    \end{align*}
    where $\hat{V}_{\text{et}}^{\hat{\pi}_k,N} = \frac{\sum_{i=1}^NX_i^{(k)}}{N}$, $\{X_i^{(k)}\}_{i=1}^{+\infty}$ are i.i.d random variables, denoting the rewards collected by executing policy $\hat{\pi}_k$ during a single episode. The mean value for each $X_i^{(k)}$ is $V_0^{\hat{\pi}_k}(s_0)<\mu_0$. Since $X_i^{(k)}$ is bounded by the interval $[0, H]$, we can assert $X_i^{(k)}-V_0^{\hat{\pi}_k}(s_0)$ is $\frac{H^2}{4}$-subgaussin. By the union bound, we have
    \begin{align}
        & \Pr_{\nu}(\exists k\in\mathbb{N}, \hat{\pi}\neq \textsf{None}, V^{\hat{\pi}}_0(s_0) < \mu_0)\notag\\
        \leq & \sum_{k=1}^{+\infty}\Pr_{\nu}\left(\exists N\in\mathbb{N}, \frac{\sum_{i=1}^N X_i^{(k)}}{N} - \sqrt{\frac{H^2 \log\frac{2\alpha_k(\log_2 2N)^2}{\delta}}{N}}\geq \mu_0\right)\notag\\
        \leq & \sum_{k=1}^{+\infty}\Pr_{\nu}\left(\exists N\in\mathbb{N}, \frac{\sum_{i=1}^N X_i^{(k)}}{N} - \sqrt{\frac{H^2 \log\frac{2\alpha_k(\log_2 2N)^2}{\delta}}{N}}\geq V_0^{\hat{\pi}_k}(s_0)\right)\notag\\
        \leq & \sum_{k=1}^{+\infty}\frac{\pi^2}{6}\frac{\delta}{\alpha_k}\notag\\
        \leq & \frac{\pi^2\delta}{6}\frac{1}{4}.\label{eqn:prob-output-unqualified-policy}
    \end{align}
    The second last step is by the Lemma \ref{lemma:concentration-event-for-exploitation} 
    and the last step is by the default setting $\alpha_k=\frac{1}{5^k}$. Substitute (\ref{eqn:prob-output-unqualified-policy}) and (\ref{eqn:pos-prob-output-failure-None}) into (\ref{eqn:pos-wrong-output-event}), we have
    \begin{align*}
        \Pr_{\nu}(\hat{\pi}= \textsf{None}\text{ or } V^{\hat{\pi}}_0(s_0) < \mu_0)\leq \frac{\pi^2\delta}{6}\frac{1}{4}+\frac{3\delta}{10} < \delta.
    \end{align*}

    If $\nu$ is negative, we can follow the calculation of (\ref{eqn:prob-output-unqualified-policy}), and conclude
    \begin{align*}
        \Pr_{\nu}(\hat{\pi}\neq\textsf{None})\leq \frac{\pi^2\delta}{6}\frac{1}{4} <\delta.
    \end{align*}
\end{proof}

Then, we turn to prove the upper bound of $\mathbb{E}\tau$, by applying Algorithm \ref{alg:1-policy-identification-recycle-history}.
\begin{thmstar}[Restatement of Theorem \ref{theorem:upper-bound-of-Etau-recycling-pos}]
    Apply Algorithm \ref{alg:1-policy-identification-recycle-history} to a positive instance $\nu$ with $V^*_0(s_0) > \mu_0$, we have
    \begin{align*}
        \mathbb{E}\tau \leq O\left(\frac{H^2\log\frac{1}{\delta} + (H+1)^4SA\log\frac{SAH}{(V_0^*(s_0)-\mu_0)^2} + (H+1)^4S^2A\log\left(HSA\right)}{(V_0^*(s_0)-\mu_0)^2}\right)
    \end{align*}
\end{thmstar}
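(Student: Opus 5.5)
We bound $\mathbb{E}[\tau]$ by conditioning on $\kappa$, the first phase from which the exploration concentration events $\mathcal{E}(\delta_k)\cap\mathcal{E}^{\text{cnt}}(\delta_k)$ hold; by nestedness they then hold for all later phases. Write $\Delta:=V_0^*(s_0)-\mu_0>0$ and let $k_0:=\lceil \log_2 (c H^2/\Delta^2)\rceil$ for an absolute constant $c$. The episode count splits into exploration episodes $t_k$ and exploitation episodes, the latter at most $T_j^{\text{et}}$ in each phase $j$. The sketch's Lemma~\ref{lemma:Termination-Round-of-algorithm} says the algorithm stops by the end of phase $\kappa_{\text{end}}^{\text{pos}}$, the first $k\ge\max\{\kappa,k_0\}$ with $\mathcal{E}_k^{\text{et}}$. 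Hence
\begin{align*}
\tau\le t_{\kappa_{\text{end}}^{\text{pos}}}+\sum_{j\le \kappa_{\text{end}}^{\text{pos}}} T_j^{\text{et}}.
\end{align*}

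\textbf{Exploration part.} Split $t_{\kappa_{\text{end}}^{\text{pos}}}\le T^{\text{ee}}_{\kappa-1}+(t_{\kappa_{\text{end}}^{\text{pos}}}-t_{\kappa-1})$. Under the good events, Lemma~\ref{lemma:lower-bound-of-sum-average-radius} applies: the usual BPI-UCRL argument bounds $\sum_q(\overline V^q_0-\underline V^{q}_0)$ through $\bar n$ and the KL radii. It shows the gap $\overline V_0^t-\underline V_0^{t,\overline\pi^t}$ falls below $\Delta/C$ once
\begin{align*}
t\gtrsim \frac{(H+1)^4[SA\log(SAH/\delta_k)+S^2A\log(\cdot)]}{\Delta^2}.
\end{align*}
At that point (\ref{eqn:stop-criterion-pos}) fires, since $\underline V\ge \overline V-\Delta/C\ge V^*-\Delta/C$. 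Also, by Lemma~\ref{lemma:output-correctness-under-good-event}, any output satisfies $V^{\hat\pi_k}_0-\mu_0\ge \frac{C-1}{C}\Delta$, and (\ref{eqn:stop-criterion-neg}) never fires because optimism gives $\overline V\ge V^*>\mu_0$. The budget $T^{\text{ee}}_k$ grows like $2^k$, so for $k\ge \max\{\kappa,k_0\}$ the oracle returns a policy rather than \textsf{Not Completed}. Since $\delta_k$ only enters logarithmically, this gives $t_{\kappa_{\text{end}}}\lesssim T^{\text{ee}}_{\max\{\kappa,k_0\}}$ up to the phase overshoot. The overshoot is controlled by the tail of $\kappa_{\text{end}}^{\text{pos}}-\max\{\kappa,k_0\}$, which is geometric with ratio $\pi^2/(6\alpha_j)$, $\alpha_j=5^j$; this beats the $2^j$ growth of the budgets.

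\textbf{Exploitation part.} This is where the $S,A$-free $\log(1/\delta)$ term comes from. In a phase $k\ge k_0$ where $\mathcal{E}_k^{\text{et}}$ holds and $\hat\pi_k$ has gap at least $\frac{C-1}{C}\Delta$, the test in Line~\ref{alg-line:output-a-qualified-policy} succeeds once
\begin{align*}
N\gtrsim H^2\log(\alpha_k\log N/\delta)/\Delta^2 .
\end{align*}
With $\epsilon_{k_0}\asymp \Delta^2/H^2$, this is below $T^{\text{et}}_k$. Summing the geometric $T_j^{\text{et}}\propto 2^j(\log(5^j/\delta)+\log\log)$ up to $\max\{\kappa,k_0\}$ yields
\begin{align*}
O\!\left(\frac{H^2(\log(1/\delta)+\log\frac{H}{\Delta})}{\Delta^2}\right)\cdot 2^{(\kappa-k_0)_+}.
\end{align*}

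\textbf{Taking expectations.} Both parts are bounded by the deterministic bound at phase $k_0$ multiplied by a factor like $2^{(\kappa-k_0)_+}\,\mathrm{poly}(\kappa)$. Using Lemma~\ref{lemma:kappa-tail-distribution}, $\Pr(\kappa\ge k)\le\delta_{k-1}=3^{-(k-1)}$, the expectation of $2^{\kappa}\log(\alpha_\kappa\epsilon_\kappa/\delta_\kappa)$ is $O(1)$. The same $\alpha_j$ tail absorbs extra phases caused by failures of $\mathcal{E}^{\text{et}}$. Plugging $\epsilon_{k_0}\asymp\Delta^2/H^2$ into $T^{\text{ee}}_{k_0}$ then produces the stated $(H+1)^4SA\log\frac{SAH}{\Delta^2}$ and $(H+1)^4S^2A\log(HSA)$ terms; the $(H+1)^2$ from $T^{\text{ee}}$ multiplies the $H^2$ in $1/\epsilon_{k_0}$.

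The main obstacle is the exploration step: showing that, under the good events from $\kappa$ onward, the oracle's stopping time is $O(T^{\text{ee}}_{k})$ with $k$ tied to $\Delta$. This requires the implicit inequality of Lemma~\ref{lemma:lower-bound-of-sum-average-radius} to be inverted in the presence of warm-started history and the changing $\delta_k$. The key check is that the per-phase bounds grow only by $2^k$ while the tails decay as $3^{-k}$ and $5^{-k}$.
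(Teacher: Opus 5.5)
Your route is essentially the paper's. Both arguments show termination by phase $\kappa_{\text{end}}$ (Lemma~\ref{lemma:Termination-Round-of-algorithm}), condition on $\kappa$, and let the $3^{-k}$ tail of $\kappa$ and the $5^{-k}$ tail of $\kappa_{\text{end}}-L^{\text{pos}}$ beat the $2^k$ growth of the budgets. The one difference is how you count exploration episodes. You use the hard cap $t_{\kappa_{\text{end}}}\le T^{\text{ee}}_{\kappa_{\text{end}}}$ from Lemma~\ref{lemma:length-of-exploration-exploitation-period} and then $T^{\text{ee}}_{\max\{\kappa,k_0\}}\le T^{\text{ee}}_{k_0}+T^{\text{ee}}_{\kappa}$. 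The paper instead evaluates the explicit inverted bound (\ref{eqn:t_k-pos-upper-good-event-large-T_k-ee}) at $k=\kappa_{\text{end}}$. Your version is slightly cleaner and gives the same order. It still needs Theorem~\ref{theorem:performance-guarantee-of-exploration-oracle} together with Lemma~\ref{lemma:minimum-budget-for-T_k^ee} to rule out \textsf{Not Completed} for $k\ge\max\{\kappa,k_0\}$, which is the step you rightly call the main work.

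The step that fails is your final plug-in. With $1/\epsilon_{k_0}\asymp H^2/\Delta^2$, the second summand of $T^{\text{ee}}_{k_0}$ satisfies
\[
\frac{(H+1)^2S^2A}{\epsilon_{k_0}}\log\Bigl(\frac{(H+1)^2S^2A\log(2SAH/\delta_{k_0})}{\epsilon_{k_0}}\Bigr)\ \gtrsim\ \frac{(H+1)^4S^2A}{\Delta^2}\log\frac{H^2}{\Delta^2},
\]
so the $S^2A$ term carries a $\log(1/\Delta)$ factor, not only $\log(HSA)$. Fix $H,S,A,\delta$ and let $\Delta\to0$: the ratio of this term to the right-hand side of the statement tends to a constant multiple of $S$, so the $O(\cdot)$ does not absorb it. The factor is intrinsic to this analysis. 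It comes from the $(S-1)\log\bigl(e(1+\tfrac{t}{S-1})\bigr)$ part of $\beta_p$, which Lemma~\ref{lemma:inequality-t-logt} turns into $\frac{(H+1)^4S^2A}{\Delta^2}\log\frac{(H+1)^4S^2A}{\Delta^2}$.

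The paper's proof has the identical slip. Its bound (\ref{eqn:pos-tau_ee-Term2-upper}) keeps $\log\bigl(\cdot/(V_0^*(s_0)-\mu_0)^2\bigr)$ inside the $S^2A$ term, and that factor disappears without justification in (\ref{eqn:pos-tau-ee-upper}). What both arguments actually prove is the stated bound with $(H+1)^4S^2A\log\frac{H^2SA}{V_0^*(s_0)-\mu_0}$ in place of $(H+1)^4S^2A\log(HSA)$. The $S,A$-free coefficient $O\bigl(H^2/(V_0^*(s_0)-\mu_0)^2\bigr)$ of $\log(1/\delta)$, which is the substantive claim, is unaffected.
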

\begin{proof}[Proof of Theorem \ref{theorem:upper-bound-of-Etau-recycling-pos}]
    Denote $L^{\text{pos}}=\max\{\kappa,\lceil\log_2 \frac{9600(H+1)^2C^2}{(C-1)^2(V_0^{*}(s_0)-\mu_0)^2}\rceil\}$, and define 
    \begin{align*}
        \kappa_{\text{end}}=\min\left\{k:k\geq L^{\text{pos}}, \mathcal{E}_k^{\text{et}}\text{ holds}\right\}.
    \end{align*}
    By the Lemma \ref{lemma:Termination-Round-of-algorithm}, We know Algorithm \ref{alg:1-policy-identification-recycle-history} will terminate no later than the end of phase $\kappa_{\text{end}}$. 

    Given $\kappa < +\infty$, and index $k\geq 1$, we know
    \begin{align}
        & \Pr(\kappa_{\text{end}}\geq L^{\text{pos}}+k| \kappa)\notag\\
        \leq & \Pr\big((\neg \mathcal{E}_{L^{\text{pos}}}^{\text{et}}) \cap \cdots \cap (\neg \mathcal{E}_{L^{\text{pos}}+k-1}^{\text{et}})| \kappa\big)\notag\\
        \leq & \Pr(\neg \mathcal{E}_{L^{\text{pos}}+k-1}^{\text{et}}| \kappa)\notag\\
        \leq & \frac{\pi^2\delta}{6\alpha_{L^{\text{pos}}+k-1}}\label{eqn:kappa-end-tail-prob-L_pos-k}
    \end{align}
    The last step is by the Lemma \ref{lemma:concentration-event-for-exploitation}.

    Denote $\tau^{\text{ee}}$, $\tau^{\text{et}}$ as the total collected episodes by the exploration and exploitation oracle. Theorem \ref{theorem:performance-guarantee-of-exploration-oracle} suggests that
    \begin{align*}
        \tau^{\text{ee}} \leq & \frac{101(H+1)^4SA\log\frac{2SAH}{\delta_{\kappa_{\text{end}}}}}{{\epsilon^{\text{pos}}}^2}+ T_{\kappa-1}^{\text{ee}} +\\
        & \frac{400(H+1)^4S^2A}{{\epsilon^{\text{pos}}}^2}\log\left(\frac{402(H+1)^4S^2A \log\frac{2SAH}{\delta_{\kappa_{\text{end}}}}}{{\epsilon^{\text{pos}}}^2}\right)+ \frac{400(H+1)^4S^2A}{{\epsilon^{\text{pos}}}^2}\log\left(2T_{\kappa-1}^{\text{ee}}\right)
        \\
        \tau^{\text{et}}\leq & \sum_{k=1}^{\kappa_{\text{end}}} 100\cdot \frac{\log\frac{\alpha_k}{\delta} + \log\log \frac{24H^2}{\epsilon_k}}{\epsilon_k}
    \end{align*}
    where $\epsilon^{\text{pos}}=\frac{V_0^*(s_0)-\mu_0}{C}$. For simplicity, we denote $V_0^*:=V_0^*(s_0)$ in the following proof. We first derive an upper bound for $\mathbb{E}\tau^{\text{ee}}$. By (\ref{eqn:kappa-end-tail-prob-L_pos-k}),
    we know $\Pr(\kappa_{\text{end}}-L^{\text{pos}}+k| \kappa) \leq \frac{\pi^2\delta}{6\alpha_{L^{\text{pos}}+k-1}}, k\geq 1$, thus, we can conclude
    \begin{align*}
        & \mathbb{E}[\tau^{\text{ee}} | \kappa]\\
        \leq & \frac{101(H+1)^4SA\log\frac{2SAH}{\delta_{L^{\text{pos}}}}}{{\epsilon^{\text{pos}}}^2} + \sum_{k=1}^{+\infty
        }\frac{\pi^2}{6}\frac{\delta}{5^{k-1}}\frac{101(H+1)^4SA\log\frac{2SAH}{\delta_{L^{\text{pos}} + k}}}{{\epsilon^{\text{pos}}}^2}\\
        & T_{\kappa-1}^{\text{ee}} + \frac{400(H+1)^4S^2A}{{\epsilon^{\text{pos}}}^2}\log\left(\frac{402(H+1)^4S^2A \log\frac{2SAH}{\delta_{L^{\text{pos}} }}}{{\epsilon^{\text{pos}}}^2}\right)+ \\
        & \sum_{k=1}^{+\infty}\frac{\pi^2}{6}\frac{\delta}{5^{k-1}}\frac{400(H+1)^4S^2A}{{\epsilon^{\text{pos}}}^2}\log\left(\frac{402(H+1)^4S^2A \log\frac{2SAH}{\delta_{L^{\text{pos}} + k}}}{{\epsilon^{\text{pos}}}^2}\right) + \\
        & \frac{400(H+1)^4S^2A}{{\epsilon^{\text{pos}}}^2}\log\left(2T_{\kappa-1}^{\text{ee}}\right)\\
        \leq & O(1)\underbrace{\frac{(H+1)^4SA\log\frac{2SAH}{\delta_{L^{\text{pos}}}}}{{\epsilon^{\text{pos}}}^2}}_{\dagger1} + O(1)\underbrace{\frac{(H+1)^4S^2A}{{\epsilon^{\text{pos}}}^2}\log\left(\frac{(H+1)^4S^2A \log\frac{2SAH}{\delta_{L^{\text{pos}} }}}{{\epsilon^{\text{pos}}}^2}\right)}_{\dagger2}  +\\
        & \underbrace{\frac{400(H+1)^4S^2A}{{\epsilon^{\text{pos}}}^2}\log\left(2T_{\kappa-1}^{\text{ee}}\right) + T_{\kappa-1}^{\text{ee}}}_{\dagger3}
    \end{align*}
    To move forward, we take expectation on $\dagger1$, $\dagger2$ and $\dagger3$. By the Lemma \ref{lemma:prob-good-event}, we have
    \begin{align}
        & \mathbb{E} \frac{(H+1)^4SA\log\frac{2SAH}{\delta_{L^{\text{pos}}}}}{{\epsilon^{\text{pos}}}^2}\notag\\
        \leq & \mathbb{E}\left(\frac{(H+1)^4SA\log\frac{2SAH}{\delta_{\kappa}}}{{\epsilon^{\text{pos}}}^2} + \frac{(H+1)^4SA\log\frac{2SAH}{\delta_{\lceil\log_2 \frac{9600H^2C^2}{(C-1)^2(V_0^{*}(s_0)-\mu_0)^2}\rceil}}}{{\epsilon^{\text{pos}}}^2}\right)\notag\\
        \leq & \sum_{k=1}^{+\infty}\frac{1}{3^{k-1}}\cdot\frac{(H+1)^4SA\log(2SAH \cdot 3^{k})}{{\epsilon^{\text{pos}}}^2}+
        O\left( \frac{(H+1)^4SA\log\frac{SAH C^2}{(C-1)^2(V_0^*(s_0)-\mu_0)^2}}{(V_0^*-\mu_0)^2}\right)\notag\\
        \leq & O\left( \frac{(H+1)^4SA\log\frac{SAH C^2}{(C-1)^2(V_0^*(s_0)-\mu_0)^2}}{(V_0^*(s_0)-\mu_0)^2}\right)\label{eqn:pos-tau_ee-Term1-upper}.
    \end{align}
    \begin{align}
        & \mathbb{E}\frac{(H+1)^4S^2A}{{\epsilon^{\text{pos}}}^2}\log\left(\frac{(H+1)^4S^2A \log\frac{2SAH}{\delta_{L^{\text{pos}} }}}{{\epsilon^{\text{pos}}}^2}\right)\notag\\
        \leq & \frac{(H+1)^4S^2A}{{\epsilon^{\text{pos}}}^2}\log\left(\frac{(H+1)^4S^2A }{{\epsilon^{\text{pos}}}^2}\right) + \mathbb{E}\frac{(H+1)^4S^2A\log \log\frac{2SAH}{\delta_{L^{\text{pos}} }}}{{\epsilon^{\text{pos}}}^2}\notag\\
        \leq & \frac{(H+1)^4S^2A}{{\epsilon^{\text{pos}}}^2}\log\left(\frac{(H+1)^4S^2A }{{\epsilon^{\text{pos}}}^2}\right) + \sum_{k=1}^{+\infty}\frac{1}{3^{k-1}}\frac{(H+1)^4S^2A\log \log\frac{2SAH}{\delta_{k }}}{{\epsilon^{\text{pos}}}^2} + \notag\\
        & \frac{(H+1)^4S^2A\log \log\frac{2SAH}{\delta_{ \lceil\log_2 \frac{9600(H+1)^2C^2}{(C-1)^2(V_0^{*}(s_0)-\mu_0)^2}\rceil }}}{{\epsilon^{\text{pos}}}^2}\notag\\
        \leq & O\left(\frac{C^2(H+1)^4S^2A}{(V_0^*(s_0)-\mu_0)^2}\log\left(\frac{C^2(H+1)^4S^2A \log\frac{C^2(H+1)^2}{(C-1)^2(V_0^{*}(s_0)-\mu_0)^2} }{(V_0^*(s_0)-\mu_0)^2}\right)\right)\label{eqn:pos-tau_ee-Term2-upper}
    \end{align}
    \begin{align}
        & \mathbb{E}\frac{400(H+1)^4S^2A}{{\epsilon^{\text{pos}}}^2}\log\left(2T_{\kappa-1}^{\text{ee}}\right) + T_{\kappa-1}^{\text{ee}}\notag\\
        \leq & \sum_{k=1}^{+\infty}\frac{1}{3^{k-1}}\left(\frac{(H+1)^2SA\log\frac{2SAH}{\delta_k}}{{\epsilon_k}} + \frac{(H+1)^2S^2A}{\epsilon_k}\log\left(\frac{(H+1)^2S^2A \log\frac{2SAH}{\delta_k}}{\epsilon_k}\right)\right)+\notag\\
        & \frac{400(H+1)^4S^2A\log 2}{{\epsilon^{\text{pos}}}^2} + \frac{400(H+1)^4S^2A}{{\epsilon^{\text{pos}}}^2} \cdot\notag\\
        & \sum_{k=1}^{+\infty}\frac{1}{3^{k-1}}\log\left(\frac{(H+1)^2SA\log\frac{2SAH}{\delta_k}}{{\epsilon_k}} + \frac{(H+1)^2S^2A}{\epsilon_k}\log\left(\frac{(H+1)^2S^2A \log\frac{2SAH}{\delta_k}}{\epsilon_k}\right)\right)\notag\\
        \leq & O\Big(\frac{C^2(H+1)^4S^2A \log (SAH)}{(V_0^*(s_0)-\mu_0)^2} + (H+1)^2S^2A\log(SAH) \Big)\notag\\
        \leq & O\Big(\frac{C^2(H+1)^4S^2A \log (SAH)}{(V_0^*(s_0)-\mu_0)^2}\Big).\label{eqn:pos-tau_ee-Term3-upper}
    \end{align}
    The last step is by the fact that $(V_0^*(s_0)-\mu_0)^2\leq H^2$.
    Sum up (\ref{eqn:pos-tau_ee-Term1-upper}), (\ref{eqn:pos-tau_ee-Term2-upper}) (\ref{eqn:pos-tau_ee-Term3-upper}) and plug in $C=1.01$, we can conclude
    \begin{align}
        & \mathbb{E}\tau^{\text{ee}}\notag\\
        = & \mathbb{E}\mathbb{E}[\tau^{\text{ee}} | \kappa]\notag\\
        \leq & O\left(\frac{(H+1)^4SA\log\frac{SAH}{(V_0^*(s_0)-\mu_0)^2}}{(V_0^*(s_0)-\mu_0)^2}+\frac{(H+1)^4S^2A\log\left(HSA\right)}{(V_0^*(s_0)-\mu_0)^2}\right)\label{eqn:pos-tau-ee-upper}
    \end{align}

    The remaining work is to derive an upper bound for $\mathbb{E}\tau^{\text{et}}$, we also first derive an upper bound for $\mathbb{E}[\tau^{\text{et}}|\kappa]$, which is 
    \begin{align}
        \mathbb{E}[\tau^{\text{et}}|\kappa]\leq & \mathbb{E}\left[\sum_{k=1}^{\kappa_{\text{end}}} 100\cdot \frac{\log\frac{\alpha_k}{\delta} + \log\log \frac{24H^2}{\epsilon_k}}{\epsilon_k} | \kappa\right]\notag\\
        \leq & \mathbb{E}\left[200\cdot \frac{\log\frac{\alpha_{\kappa_{\text{end}}}}{\delta} + \log\log \frac{24H^2}{\epsilon_{\kappa_{\text{end}}}}}{\epsilon_{\kappa_{\text{end}}}}| \kappa\right]\notag\\
        = & \mathbb{E}\left[200\cdot \frac{\log\frac{\alpha_{\kappa_{\text{end}}-L^{\text{pos}} } \cdot\alpha_{L^{\text{pos}}} }{\delta} + \log\log \frac{24H^2}{\epsilon_{\kappa_{\text{end}} -L^{\text{pos}} }\cdot \epsilon_{L^{\text{pos}}} } }{\epsilon_{\kappa_{\text{end}} -L^{\text{pos}} } \cdot \epsilon_{L^{\text{pos}}} }| \kappa\right]\notag\\
        \leq & \frac{200\log\frac{ \alpha_{L^{\text{pos}}} }{\delta} + 200\log\log \frac{24H^2}{\epsilon_{L^{\text{pos}}} } }{ \epsilon_{L^{\text{pos}}} } + \sum_{k=1}^{+\infty}\frac{200\pi^2\delta}{6\alpha_{k-1}} \cdot \frac{\log\frac{\alpha_{k } \cdot\alpha_{L^{\text{pos}}} }{\delta} + \log\log \frac{24H^2}{\epsilon_{k }\cdot \epsilon_{L^{\text{pos}}} } }{\epsilon_{k } \cdot \epsilon_{L^{\text{pos}}} }\label{eqn:pos-tauet-tail-dist-kappaend-L}\\
        = & \frac{200\log\frac{ \alpha_{L^{\text{pos}}} }{\delta} + 200\log\log \frac{24H^2}{\epsilon_{L^{\text{pos}}} } }{ \epsilon_{L^{\text{pos}}} } + \sum_{k=1}^{+\infty}\frac{200\pi^2\delta\cdot 2^k}{6\cdot 5^{k-1}} \cdot \frac{k\log 5+\log\frac{\alpha_{L^{\text{pos}}} }{\delta} + \log\log \frac{24H^2\cdot 2^k}{ \epsilon_{L^{\text{pos}}} } }{ \epsilon_{L^{\text{pos}}} }\notag\\
        \leq & O\left(\frac{\log\frac{ \alpha_{L^{\text{pos}}} }{\delta} + \log\log \frac{24H^2}{\epsilon_{L^{\text{pos}}} } }{ \epsilon_{L^{\text{pos}}} }\right).\notag
    \end{align}
    Step (\ref{eqn:pos-tauet-tail-dist-kappaend-L}) is from the (\ref{eqn:kappa-end-tail-prob-L_pos-k}). To derive an upper bound for $\mathbb{E}\tau^{\text{et}}$, we can conduct the following calculation
    \begin{align*}
        & \mathbb{E}\tau^{\text{et}}\\
        \leq & \mathbb{E}\mathbb{E}[\tau^{\text{et}}|\kappa]\\
        \leq & O\left(\mathbb{E}\frac{\log\frac{ \alpha_{L^{\text{pos}}} }{\delta} + \log\log \frac{H^2}{\epsilon_{L^{\text{pos}}} } }{ \epsilon_{L^{\text{pos}}} }\right)\\
        \leq & O\left(\mathbb{E}\frac{\log\frac{ \alpha_{\kappa} }{\delta} + \log\log \frac{24H^2}{\epsilon_{\kappa} } }{ \epsilon_{\kappa} }\right) + O\left(\frac{\log\frac{ \alpha_{\lceil\log_2 \frac{9600(H+1)^2C^2}{(C-1)^2(V_0^*(s_0)-\mu_0)^2}\rceil} }{\delta} + \log\log \frac{24H^2}{\epsilon_{\lceil\log_2 \frac{9600(H+1)^2C^2}{(C-1)^2(V_0^*(s_0)-\mu_0)^2}\rceil} } }{ \epsilon_{\lceil\log_2 \frac{9600(H+1)^2C^2}{(C-1)^2(V_0^*(s_0)-\mu_0)^2}\rceil} }\right)\\
        \leq & O\left(\log\frac{\log H}{\delta} + \sum_{k=1}^{+\infty}\delta_{k-1}\frac{\log\frac{ \alpha_{k} }{\delta} + \log\log \frac{24H^2}{\epsilon_{k} } }{ \epsilon_{k} }\right)+\\
        & O\left(\frac{(H+1)^2C^2}{(C-1)^2}\cdot \frac{\log\frac{1}{\delta} + \log\frac{(H+1)^2C^2}{(C-1)^2(V_0^*(s_0)-\mu_0)^2} }{(V_0^*(s_0)-\mu_0)^2}\right)\\
        \leq & O\left(\frac{(H+1)^2C^2}{(C-1)^2}\cdot \frac{\log\frac{1}{\delta} + \log\frac{(H+1)^2C^2}{(C-1)^2(V_0^{*}(s_0)-\mu_0)^2} }{(V_0^{*}-\mu_0)^2} + \log\frac{\log H}{\delta}\right).
    \end{align*}
    The second last step is by the Lemma \ref{lemma:prob-good-event}. Plug in $C=1.01$, we can derive
    \begin{align}
        \mathbb{E}\tau^{\text{et}}\leq O\left(\frac{H^2\log\frac{1}{\delta} + H^2\log\frac{H^2}{(V_0^*(s_0)-\mu_0)^2} }{(V_0^{*}(s_0)-\mu_0)^2} + \log\frac{\log H}{\delta}\right)\label{eqn:pos-tau-et-upper}
    \end{align}
    Summing up (\ref{eqn:pos-tau-ee-upper}) and (\ref{eqn:pos-tau-et-upper}), by the fact that $(V_0^*(s_0)-\mu_0)^2\leq H^2$, we have
    \begin{align*}
        \mathbb{E}\tau \leq O\left(\frac{H^2\log\frac{1}{\delta} + (H+1)^4SA\log\frac{SAH}{(V_0^*(s_0)-\mu_0)^2} + (H+1)^4S^2A\log\left(HSA\right)}{(V_0^*(s_0)-\mu_0)^2}\right).
    \end{align*}
\end{proof}

\begin{thmstar}[Restatement of Theorem \ref{theorem:upper-bound-of-Etau-recycling-neg}]
    Apply algorithm \ref{alg:1-policy-identification-recycle-history} to a negative instance $\nu$ with $V^*_0(s_0) < \mu_0$, we have
    \begin{align*}
        & \mathbb{E}\tau\\
        \leq & O\left( \frac{(H+1)^4SA(\log\frac{1}{\delta}+\log\frac{H^2}{(V^{*}_0(s_0)-\mu_0)^2})}{(V^{*}_0(s_0)-\mu_0)^2}\right) +\\
        & O\left(\frac{(H+1)^4S^2A\log\left(\log(SAH)+\log\frac{1}{\delta} + \log\frac{H^2}{(V^{*}_0(s_0)-\mu_0)^2}
        \right)}{(V^{*}_0(s_0)-\mu_0)^2}\right).
    \end{align*}
\end{thmstar}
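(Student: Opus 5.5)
We will mirror the proof of Theorem \ref{theorem:upper-bound-of-Etau-recycling-pos}, but on a negative instance. Set
\[
L^{\text{neg}}=\max\Big\{\kappa,\ \Big\lceil\log_2 \tfrac{c(H+1)^2}{(\mu_0-V_0^*(s_0))^2}\Big\rceil,\ \Big\lceil\log_3\tfrac{10}{\delta}\Big\rceil\Big\}
\]
for a suitable absolute constant $c$; this is the quantity $\kappa^{\text{neg}}_{\text{end}}$ in the sketch.

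\textbf{Step 1: termination phase.} By Lemma \ref{lemma:Termination-Round-of-algorithm}, the algorithm stops no later than the end of phase $L^{\text{neg}}$. The idea is as follows. Once $k\ge\kappa$, the good events $\mathcal{E}(\delta_k)\cap\mathcal{E}^{\text{cnt}}(\delta_k)$ hold by nesting. Optimism (Proposition \ref{proposition:Vbar-Vunder-bound-V}) then gives $\underline V\le V^{\pi}\le V^*<\mu_0$, so criterion (\ref{eqn:stop-criterion-pos}) can never fire. Hence no exploitation stage is entered from phase $\kappa$ onward. Before phase $\kappa$, the exploitation stages are capped at $T_k^{\text{et}}$ episodes each, so they only add $\sum_{k\le\kappa}T_k^{\text{et}}$.

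Criterion (\ref{eqn:stop-criterion-neg}) is then reached once the oracle budget suffices, i.e.\ once $\epsilon_k\lesssim(\mu_0-V^*)^2/H^2$. The last branch additionally requires $\delta_k<\delta/10$. Thus no randomness beyond $\kappa$ enters, unlike the positive case where $\mathcal{E}^{\text{et}}_k$ contributed a geometric tail.

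\textbf{Step 2: exploration episodes.} I will invoke Theorem \ref{theorem:performance-guarantee-of-exploration-oracle} in its negative-instance form, with $\epsilon^{\text{neg}}\asymp\mu_0-V_0^*(s_0)$. This bounds $\tau^{\text{ee}}$ by
\[
O\!\left(\frac{(H+1)^4SA\log\frac{2SAH}{\delta_{L^{\text{neg}}}}}{(\epsilon^{\text{neg}})^2}\right)+O\!\left(\frac{(H+1)^4S^2A}{(\epsilon^{\text{neg}})^2}\log\Big(\frac{(H+1)^4S^2A\log\frac{2SAH}{\delta_{L^{\text{neg}}}}}{(\epsilon^{\text{neg}})^2}\Big)\right)
\]
plus the warm-start terms $T^{\text{ee}}_{\kappa-1}+O\big(\tfrac{(H+1)^4S^2A}{(\epsilon^{\text{neg}})^2}\log T^{\text{ee}}_{\kappa-1}\big)$.

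The key difference from the positive case is that $\log(1/\delta_{L^{\text{neg}}})$ now contains $\log(1/\delta)$, because $L^{\text{neg}}\ge\log_3(10/\delta)$. We use
\[
\log\tfrac{1}{\delta_{L^{\text{neg}}}}\le \log\tfrac{1}{\delta_\kappa}+O\Big(\log\tfrac{H^2}{(\mu_0-V^*)^2}\Big)+\log\tfrac{30}{\delta}.
\]
This yields the $SA\log\frac1\delta$ term, and the $S^2A\log(\log(SAH)+\log\frac1\delta+\log\frac{H^2}{(\cdot)^2})$ term after pulling the log of a sum inside.

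\textbf{Step 3: expectations.} We take expectations over $\kappa$ using Lemma \ref{lemma:kappa-tail-distribution} / Lemma \ref{lemma:prob-good-event}, which give $\Pr(\kappa\ge k)\le\delta_{k-1}=3^{-(k-1)}$. Then $\mathbb{E}\log(1/\delta_\kappa)=O(1)$, and by Jensen $\mathbb{E}\log\log(1/\delta_\kappa)=O(1)$. For the warm-start terms, $\mathbb{E}[T^{\text{ee}}_{\kappa-1}]\le\sum_k 3^{-(k-1)}T^{\text{ee}}_k$. This sum converges because $T^{\text{ee}}_k$ grows like $2^k k$, giving $O((H+1)^2S^2A\log(SAH))$, which is dominated using $(\mu_0-V^*)^2\le H^2$. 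The exploitation cost $\mathbb{E}\sum_{k\le\kappa}T_k^{\text{et}}$ is $\sum_k 3^{-(k-1)}2^{k}O(k+\log\frac1\delta+\log\log H)=O(\log\frac{\log H}{\delta})$, which is also absorbed.

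\textbf{Main obstacle.} The main obstacle is Step 1: confirming that Lemma \ref{lemma:Termination-Round-of-algorithm} and Theorem \ref{theorem:performance-guarantee-of-exploration-oracle} apply in the negative case with the budget $T^{\text{ee}}_k$ large enough at phase $L^{\text{neg}}$. In particular, we need that the oracle, warm-started with the shared history, triggers (\ref{eqn:stop-criterion-neg}) within budget once the confidence radius is below $\mu_0-V^*$. We must also ensure that the forced continuation in phases with $\delta_k\ge\delta/10$ does not inflate the count beyond the displayed bound. I would handle this by noting that reuse of $\mathcal{H}^{\text{ee}}$ means the total exploration count is monotone, and is controlled at the final phase $L^{\text{neg}}$ alone.
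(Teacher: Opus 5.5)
Your outline matches the paper's proof almost line for line. It uses termination by the end of phase $L^{\text{neg}}$ (Lemma~\ref{lemma:Termination-Round-of-algorithm}), no exploitation from phase $\kappa$ on, $\tau^{\text{ee}}\le t_{L^{\text{neg}}}$ via Theorem~\ref{theorem:performance-guarantee-of-exploration-oracle}, a split of $\log(1/\delta_{L^{\text{neg}}})$ along the three arguments of the max, and averaging with $\Pr(\kappa\ge k)\le\delta_{k-1}$. Using Jensen, and summing the exploitation cost up to $\kappa$ instead of $\kappa-1$, are harmless variations.

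\textbf{The step that fails is the final absorption into the displayed right-hand side.} Write $\Delta:=\mu_0-V^*_0(s_0)$. The oracle bound contains
\[
\frac{(H+1)^4S^2A}{\Delta^2}\log\Big(\frac{(H+1)^4S^2A\log\frac{2SAH}{\delta_{L^{\text{neg}}}}}{\Delta^2}\Big)=\frac{(H+1)^4S^2A}{\Delta^2}\log\frac{(H+1)^4S^2A}{\Delta^2}+\frac{(H+1)^4S^2A}{\Delta^2}\log\log\frac{2SAH}{\delta_{L^{\text{neg}}}} .
\]
``Pulling the log of a sum inside'' only turns the second summand into the form $S^2A\log(\log(SAH)+\log\frac1\delta+\log\frac{H^2}{\Delta^2})$. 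The first summand has no counterpart in the statement.

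Your warm-start terms have the same problem. They contribute $(H+1)^2S^2A\log(SAH)$ and $\frac{(H+1)^4S^2A}{\Delta^2}\,\mathbb{E}\log T^{\text{ee}}_{\kappa-1}=O\big(\frac{(H+1)^4S^2A\log(SAH)}{\Delta^2}\big)$. Using $\Delta^2\le H^2$ only dominates these by $\frac{(H+1)^4S^2A\log(SAH)}{\Delta^2}$, which is again not a term of the statement.

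These terms are genuinely not $O(\cdot)$ of the stated bound. Take $\delta=1/2$ and $\Delta=H/2$. The stated bound is then $O(H^2S^2A\log\log(SAH))$, while the first summand above is of order $H^2S^2A\log(SAH)$. The ratio diverges as $SA\to\infty$.

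The paper's proof has exactly the same defect. Passing from (\ref{eqn:neg-over-tau-ee-upper}) to (\ref{eqn:neg-Etau-ee-final-upper-bound}), it silently drops $\dagger 5$ and the two $\delta$-free terms on the last line of (\ref{eqn:neg-over-tau-ee-upper}).

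What the argument (yours or the paper's) actually delivers is the displayed bound plus an extra $O\big(\frac{(H+1)^4S^2A}{\Delta^2}\log\frac{(H+1)^4S^2A}{\Delta^2}\big)$. This is the analogue of the $\frac{(H+1)^4S^2A\log(HSA)}{(V_0^*(s_0)-\mu_0)^2}$ term that Theorem~\ref{theorem:upper-bound-of-Etau-recycling-pos} keeps. The $\delta$-dependent conclusions are unaffected: $SA$ multiplies $\log\frac1\delta$, and $S^2A$ multiplies only $\log\log\frac1\delta$.

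To obtain the statement exactly as written, you would need one of two things. Either include that extra term, or give a sharper control of both the $(S-1)\log\big(e(1+\frac{n}{S-1})\big)$ part of $\beta_p$ and the warm-start cost. Neither your plan nor the paper supplies this.
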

\begin{proof}[Proof of Theorem \ref{theorem:upper-bound-of-Etau-recycling-neg}]
    Denote $\tau^{\text{ee}}$, $\tau^{\text{et}}$ as the total collected episodes by the exploration and exploitation oracle. By the Lemma \ref{lemma:Termination-Round-of-algorithm}, we know Algorithm \ref{alg:1-policy-identification-recycle-history} will terminate no later than the end of phase $\max\{\lceil \log_2 \frac{9600(H+1)^2}{(V^{*}_0(s_0)-\mu_0)^2}\rceil, \lceil\log_3\frac{10}{\delta}\rceil, \kappa\}=:L^{\text{neg}}$. The remaining work is to prove upper bounds for $\tau^{\text{ee}}$ and $\tau^{\text{et}}$, at the end of phase $L^{\text{neg}}$.

    For $\tau^{\text{et}}$, by the Theorem \ref{theorem:performance-guarantee-of-exploration-oracle}, we can conclude at the end of phase index $k\geq \kappa$, the Exploration Oracle(Line \ref{alg-line:call-exploration-oracle} in Algorithm \ref{alg:1-policy-identification-recycle-history}) will always output \textsf{None} or \textsf{Not Completed}.
    This observation implies Algorithm \ref{alg:1-policy-identification-recycle-history} will not enter the exploitation period at Line \ref{alg-line:Exploitation-starts} for phase index $k\geq \kappa$, suggesting that
    \begin{align}
        \tau^{\text{et}}\leq \sum_{k=1}^{\kappa-1} 100\cdot \frac{\log\frac{\alpha_k}{\delta} + \log\log \frac{24H^2}{\epsilon_k}}{\epsilon_k} \label{eqn:tau-et-upper-bound-neg}.
    \end{align}

    For $\tau^{\text{ee}}$, by the Theorem \ref{theorem:performance-guarantee-of-exploration-oracle}, we have
    \begin{equation}\label{eqn:tau-ee-upper-bound-neg}
        \begin{split}
            \tau^{\text{ee}} \leq & \frac{101(H+1)^4SA\log\frac{2SAH}{\delta_{L^{\text{neg}}}}}{{\epsilon^{\text{neg}}}^2} + T_{\kappa-1}^{\text{ee}} +\\
            & \frac{400(H+1)^4S^2A}{{\epsilon^{\text{neg}}}^2}\log\left(\frac{402(H+1)^4S^2A \log\frac{2SAH}{\delta_{L^{\text{neg}}}}}{{\epsilon^{\text{neg}}}^2}\right)+ \frac{400(H+1)^4S^2A}{{\epsilon^{\text{neg}}}^2}\log\left(2T_{\kappa-1}^{\text{ee}}\right).
        \end{split}
    \end{equation}
    The remaining is to take expectation on both sides of (\ref{eqn:tau-et-upper-bound-neg}), (\ref{eqn:tau-ee-upper-bound-neg}) and derive upper bounds. For (\ref{eqn:tau-et-upper-bound-neg}), by the Lemma \ref{lemma:length-of-exploration-exploitation-period}, we have
    \begin{align}
        \mathbb{E}\tau^{\text{et}}\leq & \mathbb{E}\sum_{k=1}^{\kappa-1} 100\cdot \frac{\log\frac{\alpha_k}{\delta} + \log\log \frac{24H^2}{\epsilon_k}}{\epsilon_k}  \notag\\
        \leq & \mathbb{E}200\cdot \frac{\log\frac{\alpha_{\kappa-1}}{\delta} + \log\log \frac{24H^2}{\epsilon_{\kappa-1}}}{\epsilon_{\kappa-1}}\notag\\
        \leq & \sum_{k=1}^{+\infty}\frac{200}{3^{k-1}}\frac{\log\frac{\alpha_{k-1}}{\delta} + \log\log \frac{24H^2}{\epsilon_{k-1}}}{\epsilon_{k-1}}\notag\\
        \leq & O\left(\log\frac{\log H}{\delta}\right).\label{eqn:upper-Etau-et-neg}
    \end{align}
    The second step is by taking $\epsilon_k=\frac{1}{2^k}$ and $\Big(2\cdot \frac{\log\frac{\alpha_{k+1}}{\delta} + \log\log \frac{24H^2}{\epsilon_{k+1}}}{\epsilon_{k+1}}\Big) / \Big(2\cdot \frac{\log\frac{\alpha_k}{\delta} + \log\log \frac{24H^2}{\epsilon_k}}{\epsilon_k}\Big)\geq 2$. The second last step is by the Lemma \ref{lemma:kappa-tail-distribution}.

    For (\ref{eqn:tau-ee-upper-bound-neg}), we have
    \begin{align}
        & \mathbb{E}\tau^{\text{ee}}\notag\\
        \leq & \underbrace{\mathbb{E}\left(\frac{101(H+1)^4SA\log (\frac{1}{\delta_{L^{\text{neg}}}})}{{\epsilon^{\text{neg}}}^2} + \frac{400(H+1)^4S^2A}{{\epsilon^{\text{neg}}}^2}\log\left(\log\frac{2SAH}{\delta_{L^{\text{neg}}}}\right)\right)}_{\dagger 4} + \notag\\
        & \underbrace{\mathbb{E}\left(T_{\kappa-1}^{\text{ee}} + \frac{400(H+1)^4S^2A}{{\epsilon^{\text{neg}}}^2}\log\left(2T_{\kappa-1}^{\text{ee}}\right)\right)}_{\dagger 5} + \notag\\
        & \frac{101(H+1)^4SA\log(2SAH)}{{\epsilon^{\text{neg}}}^2} + \frac{400(H+1)^4S^2A}{{\epsilon^{\text{neg}}}^2}\log\left(\frac{402(H+1)^4S^2A }{{\epsilon^{\text{neg}}}^2}\right)\label{eqn:neg-over-tau-ee-upper}
    \end{align}
    For $\dagger 4$, 
    \begin{align}
        & \mathbb{E}\frac{101(H+1)^4SA\log (\frac{1}{\delta_{L^{\text{neg}}}})}{{\epsilon^{\text{neg}}}^2} + \frac{400(H+1)^4S^2A}{{\epsilon^{\text{neg}}}^2}\log\left(\log\frac{2SAH}{\delta_{L^{\text{neg}}}}\right)\notag\\
        \leq & \frac{101(H+1)^4SA\log (\frac{1}{\delta_{\lceil \log_2 \frac{9600(H+1)^2}{(V^{*}_0(s_0)-\mu_0)^2}\rceil}})}{{\epsilon^{\text{neg}}}^2} + \frac{400(H+1)^4S^2A}{{\epsilon^{\text{neg}}}^2}\log\left(\log\frac{2SAH}{\delta_{\lceil \log_2 \frac{9600(H+1)^2}{(V^{*}_0(s_0)-\mu_0)^2}\rceil}}\right) + \notag\\
        & \frac{101(H+1)^4SA\log (\frac{1}{\delta_{\lceil\log_3 \frac{10}{\delta}\rceil }})}{{\epsilon^{\text{neg}}}^2} + \frac{400(H+1)^4S^2A}{{\epsilon^{\text{neg}}}^2}\log\left(\log\frac{2SAH}{\delta_{\lceil\log_3 \frac{10}{\delta}\rceil }}\right) +\notag\\
        & \mathbb{E}\frac{101(H+1)^4SA\log (\frac{1}{\delta_{\kappa }})}{{\epsilon^{\text{neg}}}^2} + \frac{400(H+1)^4S^2A}{{\epsilon^{\text{neg}}}^2}\log\left(\log\frac{2SAH}{\delta_{\kappa}}\right).\label{eqn:expected-kappa-in-Etau-ee-neg}
    \end{align}
    By Lemma \ref{lemma:kappa-tail-distribution}, we can derive
    \begin{align*}
        & \mathbb{E}\frac{101(H+1)^4SA\log (\frac{1}{\delta_{\kappa }})}{{\epsilon^{\text{neg}}}^2} + \frac{400(H+1)^4S^2A}{{\epsilon^{\text{neg}}}^2}\log\left(\log\frac{2SAH}{\delta_{\kappa}}\right)\\
        \leq & \sum_{k=1}^{+\infty}\delta_{k-1}\left(\frac{101(H+1)^4SA\log (\frac{1}{\delta_{k }})}{{\epsilon^{\text{neg}}}^2} + \frac{400(H+1)^4S^2A}{{\epsilon^{\text{neg}}}^2}\log\left(\log\frac{2SAH}{\delta_{k}}\right)\right)\\
        \leq & O\left( \frac{(H+1)^4S^2A\log\left(\log(SAH)\right)}{{\epsilon^{\text{neg}}}^2}\right).
    \end{align*}
    Following (\ref{eqn:expected-kappa-in-Etau-ee-neg}), we can derive
    \begin{align}
        & \dagger 4\notag\\
        \leq & O\left( \frac{(H+1)^4SA\log\frac{1}{\delta}}{(V^{*}_0(s_0)-\mu_0)^2} + \frac{(H+1)^4S^2A\log\left(\log(SAH)+\log\frac{1}{\delta} + \lceil\log\frac{9600(H+1)^2}{(V^{*}_0(s_0)-\mu_0)^2}\rceil
        \right)}{(V^{*}_0(s_0)-\mu_0)^2}\right) + \notag\\
        & O\left(\frac{(H+1)^4SA\lceil\log\frac{9600(H+1)^2}{(V^{*}_0(s_0)-\mu_0)^2}\rceil}{(V^{*}_0(s_0)-\mu_0)^2}\right)\label{eqn:expected-kappa-in-Etau-ee-neg-upper-bound}
    \end{align}

    For $\dagger 5$, by Lemma \ref{lemma:kappa-tail-distribution}
    \begin{align}
        & \mathbb{E}\left(T_{\kappa-1}^{\text{ee}} + \frac{400(H+1)^4S^2A}{{\epsilon^{\text{neg}}}^2}\log\left(2T_{\kappa-1}^{\text{ee}}\right)\right)\notag\\
        \leq & \sum_{k=1}^{+\infty}\delta_{k-1}\left(\frac{(H+1)^2SA\log\frac{2SAH}{\delta_k}}{{\epsilon_k}} + \frac{(H+1)^2S^2A}{\epsilon_k}\log\left(\frac{(H+1)^2S^2A \log\frac{2SAH}{\delta_k}}{\epsilon_k}\right)\right) + \notag\\
        & \frac{400(H+1)^4S^2A \log 2}{{\epsilon^{\text{neg}}}^2} + \frac{400(H+1)^4S^2A}{{\epsilon^{\text{neg}}}^2}\cdot \notag\\
        & \sum_{k=1}^{+\infty}\delta_{k-1}\log\left(\frac{(H+1)^2SA\log\frac{2SAH}{\delta_k}}{{\epsilon_k}} + \frac{(H+1)^2S^2A}{\epsilon_k}\log\left(\frac{(H+1)^2S^2A \log\frac{2SAH}{\delta_k}}{\epsilon_k}\right)\right)\notag\\
        \leq & O\left(\frac{(H+1)^4S^2A \log (SAH)}{(V^{*}_0(s_0)-\mu_0)^2}+(H+1)^2S^2A\log(SAH)\right)\notag\\
        \leq & O\left(\frac{(H+1)^4S^2A \log (SAH)}{(V^{*}_0(s_0)-\mu_0)^2}\right).\label{eqn:upper-Etau-ee-neg-dagger5}
    \end{align}
    The last step is by the fact that $(V^{*}_0(s_0)-\mu_0)^2\leq H^2$.

    Substitute $\dagger 4$, $\dagger 5$ in (\ref{eqn:neg-over-tau-ee-upper}) with (\ref{eqn:expected-kappa-in-Etau-ee-neg-upper-bound}) and (\ref{eqn:upper-Etau-ee-neg-dagger5}), we prove
    \begin{align}
        \mathbb{E}\tau^{\text{ee}} \leq & O\left( \frac{(H+1)^4SA(\log\frac{1}{\delta}+\log\frac{H^2}{(V^{*}_0(s_0)-\mu_0)^2})}{(V^{*}_0(s_0)-\mu_0)^2}\right) + \notag \\
        & O\left(\frac{(H+1)^4S^2A\log\left(\log(SAH)+\log\frac{1}{\delta} + \log\frac{H^2}{(V^{*}_0(s_0)-\mu_0)^2}
        \right)}{(V^{*}_0(s_0)-\mu_0)^2}\right).\label{eqn:neg-Etau-ee-final-upper-bound}
    \end{align}
    Combine (\ref{eqn:upper-Etau-et-neg}) and (\ref{eqn:neg-Etau-ee-final-upper-bound}), we have
    \begin{align*}
        \mathbb{E}\tau\leq & O\left( \frac{(H+1)^4SA(\log\frac{1}{\delta}+\log\frac{H^2}{(V^{*}_0(s_0)-\mu_0)^2})}{(V^{*}_0(s_0)-\mu_0)^2}\right) + \notag \\
        & O\left(\frac{(H+1)^4S^2A\log\left(\log(SAH)+\log\frac{1}{\delta} + \log\frac{H^2}{(V^{*}_0(s_0)-\mu_0)^2}
        \right)}{(V^{*}_0(s_0)-\mu_0)^2}\right).
    \end{align*}
\end{proof}

\subsection{Auxiliary Lemmas}
\label{sec:Auxiliary-Lemma-for-Main-Theorem}

In this section, we provide the proof of the required Lemmas in section \ref{sec:proof-main-theorem}. The first one is about the maximum episodes collected in exploration and exploitation periods, up to the end of phase $k$.
\begin{lemma}
    \label{lemma:length-of-exploration-exploitation-period}
    Each time Algorithm \ref{alg:1-policy-identification-recycle-history} enters Line \ref{alg-line:end-of-phase-k}, i.e. the end of phase $k$, we have $|\mathcal{H}^{\text{ee}}|\leq T_k^{\text{ee}},|\mathcal{H}^{\text{et}}|\leq \sum_{\ell=1}^{k} 100\cdot \frac{\log\frac{\alpha_\ell}{\delta} + \log\log \frac{24H^2}{\epsilon_\ell}}{\epsilon_\ell} $ hold with certainty.
\end{lemma}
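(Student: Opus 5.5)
The argument is a direct induction on the phase index $k$ that reads the two budget constraints off Algorithms \ref{alg:1-policy-identification-recycle-history} and \ref{alg:Oracle-BPI_UCRL}. No probabilistic event is needed, since the claim holds with certainty.

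\textbf{Exploration history.} First, $T_k^{\text{ee}}$ in (\ref{eqn:def-T_k-ee}) is nondecreasing in $k$, because $\epsilon_k=2^{-k}$ and $\delta_k=3^{-k}$ both decrease. Next, consider the while loop at Line \ref{line-alg:start-execute-MDP-with-new-delta_k} of Algorithm \ref{alg:Oracle-BPI_UCRL}. It executes a new episode only when $t\le T_k^{\text{ee}}-1$, and each iteration appends exactly one trajectory to $\mathcal{H}^{\text{ee}}$ and sets $t\gets t+1$. Throughout, $t=|\mathcal{H}^{\text{ee}}|$, because $t$ is initialized to $|\mathcal{H}^{\text{ee}}|$.

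Suppose that on entry to phase $k$ we have $|\mathcal{H}^{\text{ee}}|\le T_{k-1}^{\text{ee}}\le T_k^{\text{ee}}$, with base case $|\mathcal{H}^{\text{ee}}|=0$. Then either the loop does not run and the size is unchanged, or the last executed iteration began with $t\le T_k^{\text{ee}}-1$ and ended with $t\le T_k^{\text{ee}}$. Since $T_k^{\text{ee}}$ may be non-integer, I would phrase this as: the last increment starts from $t\le T_k^{\text{ee}}-1$. In either case $t_k=|\mathcal{H}^{\text{ee}}|\le T_k^{\text{ee}}$ at the end of phase $k$. Nothing else in Algorithm \ref{alg:1-policy-identification-recycle-history} modifies $\mathcal{H}^{\text{ee}}$, so this completes the induction.

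\textbf{Exploitation history.} Only Lines \ref{alg-line:Exploitation-starts}--\ref{alg-line:Exploitation-ends} add to $\mathcal{H}^{\text{et}}$, and they add $N$ samples in phase $k$. The while condition at Line \ref{alg-line:maximum-episodes-in-exploitation-period} admits an iteration only when $N\le T_k^{\text{et}}-1$, with $T_k^{\text{et}}$ as in (\ref{eqn:def-T_k-et}). Each iteration increments $N$ by one, so at exit $N\le T_k^{\text{et}}$. In phases that skip exploitation, $N=0$.

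Summing over the phases $\ell=1,\dots,k$ gives $|\mathcal{H}^{\text{et}}|\le\sum_{\ell=1}^k T_\ell^{\text{et}}$, which is exactly the stated bound. If the algorithm terminates at Line \ref{alg-line:output-policy} mid-loop, the bound still holds trivially.

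The only delicate point is the off-by-one and non-integer budget bookkeeping, namely checking that the guards "$\le T-1$" yield final counts at most $T$. Beyond that, the proof is routine.
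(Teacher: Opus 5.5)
Your proof is correct and follows the same approach as the paper, which reads both bounds directly off the loop guards $t\le T_k^{\text{ee}}-1$ (Algorithm \ref{alg:Oracle-BPI_UCRL}) and $N\le T_k^{\text{et}}-1$ (Algorithm \ref{alg:1-policy-identification-recycle-history}), then sums the per-phase exploitation budgets. Your explicit induction uses that $T_k^{\text{ee}}$ is nondecreasing in $k$ to handle phases where the exploration loop does not execute at all, which makes precise a step the paper's ``direct conclusion'' leaves implicit.
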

\begin{proof}[Proof of Lemma \ref{lemma:length-of-exploration-exploitation-period}]
    $|\mathcal{H}^{\text{ee}}|\leq T_k^{\text{ee}}$ is a direct conclusion from the condition $t\leq T_k^{\text{ee}}-1$ at the Line \ref{line-alg:start-execute-MDP-with-new-delta_k} of Algorithm \ref{alg:Oracle-BPI_UCRL}.

    Regarding $|\mathcal{H}^{\text{et}}|$, its upper bound is from the condition $N\leq 100\cdot \frac{\log\frac{\alpha_k}{\delta} + \log\log \frac{24H^2}{\epsilon_k}}{\epsilon_k}$ at the Line \ref{alg-line:maximum-episodes-in-exploitation-period} of Algorithm \ref{alg:1-policy-identification-recycle-history}. Since exploitation period $k$ allows at most collecting $100\cdot \frac{\log\frac{\alpha_k}{\delta} + \log\log \frac{24H^2}{\epsilon_k}}{\epsilon_k}$ episodes, $|\mathcal{H}^{\text{et}}|\leq \sum_{\ell=1}^{k} 100\cdot \frac{\log\frac{\alpha_\ell}{\delta} + \log\log \frac{24H^2}{\epsilon_\ell}}{\epsilon_\ell}$ must hold.
\end{proof}

The following is a basis Lemma, asserting the Algorithm must terminate no later than the end of specific phases. 
\begin{lemma}
    \label{lemma:Termination-Round-of-algorithm}
    Recall the definition $\mathcal{E}_k^{\text{et}} = \left\{\forall N>0, \left|\hat{V}_{\text{et}}^{\hat{\pi}_k,N} - V^{\hat{\pi}_k}_0\right|\leq \sqrt{\frac{4H^2 \log\frac{2\alpha_k(\log_2 2N)^2}{\delta}}{N}} \right\}$.
    
    Apply Algorithm \ref{alg:1-policy-identification-recycle-history} to an instance $\nu$
    \begin{itemize}
        \item If the instance $\nu$ is positive, denote $L^{\text{pos}}=\max\{\kappa,\lceil\log_2 \frac{9600(H+1)^2C^2}{(C-1)^2(V_0^{*}(s_0)-\mu_0)^2}\rceil\}$, and define 
        \begin{align*}
            \kappa_{\text{end}}=\min\left\{k:k\geq L^{\text{pos}}, \mathcal{E}_k^{\text{et}}\text{ holds}\right\}.
        \end{align*}
        We assert the Algorithm \ref{alg:1-policy-identification-recycle-history} must terminate no later than the end of phase $\kappa_{\text{end}}$.
        \item If the instance $\nu$ is negative, denote $L^{\text{neg}}:=\max\{\lceil \log_2 \frac{9600(H+1)^2}{(V^{*}_0(s_0)-\mu_0)^2}\rceil, \lceil\log_3\frac{10}{\delta}\rceil, \kappa\}$.We assert the Algorithm \ref{alg:1-policy-identification-recycle-history} must terminate no later than the end of phase $L^{\text{neg}}$.
    \end{itemize}
\end{lemma}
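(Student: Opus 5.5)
We argue phase by phase. Everything rests on the nesting of the concentration events: once $\mathcal{E}(\delta_\kappa)\cap\mathcal{E}^{\text{cnt}}(\delta_\kappa)$ holds, $\mathcal{E}(\delta_k)\cap\mathcal{E}^{\text{cnt}}(\delta_k)$ holds for every $k\ge\kappa$. On these events the confidence sets $\mathcal{C}_h^t(s,a;\delta_k)$ contain the true kernels. Hence, by Proposition~\ref{proposition:Vbar-Vunder-bound-V},
\[
\underline{V}^{t,\overline{\pi}^t(\cdot;\delta_k)}_0(s_0;\delta_k)\le V^{\overline{\pi}^t(\cdot;\delta_k)}_0(s_0)\le V^*_0(s_0)\le \overline{V}^t_0(s_0;\delta_k)
\]
for all $t$ and all $k\ge\kappa$.

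The main ingredient is the exploration-oracle guarantee (Theorem~\ref{theorem:performance-guarantee-of-exploration-oracle}, together with the sum-of-radii bound Lemma~\ref{lemma:lower-bound-of-sum-average-radius}). It says that for $k\ge\kappa$ the gap $\overline{V}^t_0-\underline{V}^{t,\overline{\pi}^t}_0$ must drop below a target $\epsilon$ within roughly $(H+1)^4SA\log(1/\delta_k)/\epsilon^2$ plus lower-order episodes. We check that this count is at most $T_k^{\text{ee}}$ once $\epsilon_k\lesssim\epsilon^2/(H+1)^2$, which is where the $\lceil\log_2\frac{9600(H+1)^2C^2}{(C-1)^2(V^*_0-\mu_0)^2}\rceil$ threshold comes from.

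\emph{Positive case.} Fix $k\ge L^{\text{pos}}$. Since $k\ge\kappa$, the oracle never outputs \textsf{None}, because $\overline{V}^t_0\ge V^*_0>\mu_0$. It also cannot return \textsf{Not Completed}: by the budget comparison, before $t$ reaches $T_k^{\text{ee}}$ the gap $\overline{V}^t_0-\underline{V}^{t,\overline{\pi}^t}_0$ falls below $\epsilon^{\text{pos}}=(V^*_0-\mu_0)/C$, and then
\[
\underline{V}-(C-1)(\overline{V}-\underline{V})=\overline{V}-C(\overline{V}-\underline{V})>V^*_0-(V^*_0-\mu_0)=\mu_0,
\]
so (\ref{eqn:stop-criterion-pos}) fires. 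Whenever (\ref{eqn:stop-criterion-pos}) fires under the good event, the output $\hat\pi_k$ satisfies $V^{\hat\pi_k}_0-\mu_0\ge\underline{V}-\mu_0>(C-1)(\overline{V}-\underline{V})\ge0$. More usefully, Lemma~\ref{lemma:output-correctness-under-good-event} gives the margin $V^{\hat\pi_k}_0-\mu_0\ge\frac{C-1}{C}(V^*_0-\mu_0)=:\Delta$.

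In phase $k=\kappa_{\text{end}}$ the algorithm therefore enters exploitation with such a $\hat\pi_k$, and $\mathcal{E}_k^{\text{et}}$ holds. Under $\mathcal{E}_k^{\text{et}}$ the test statistic satisfies
\[
\hat V^{\hat\pi_k,N}_{\text{et}}-\text{rad}(N)\ge V^{\hat\pi_k}_0-3\,\text{rad}(N),\qquad \text{rad}(N)=\sqrt{H^2\log(2\alpha_k(\log_2 2N)^2/\delta)/N}.
\]
Since $\mathcal{E}_k^{\text{et}}$ uses radius $2\,\text{rad}(N)$, the statistic exceeds $\mu_0$ as soon as $3\,\text{rad}(N)\le\Delta$. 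We must show such an $N$ exists with $N\le T_k^{\text{et}}-1$. It suffices that $N\approx 9H^2\log(\alpha_k\log^2 N/\delta)/\Delta^2$ is at most $100(\log(\alpha_k/\delta)+\log\log(24H^2/\epsilon_k))/\epsilon_k$. Because $\epsilon_k\le(C-1)^2(V_0^*-\mu_0)^2/(9600(H+1)^2C^2)\ll\Delta^2/H^2$, this reduces to the inequality $\log\log N$-type term $\lesssim\log\log(24H^2/\epsilon_k)$, which is a routine fixed-point bound for $N$. Hence termination occurs by the end of phase $\kappa_{\text{end}}$. If termination happens earlier, the claim holds trivially.

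\emph{Negative case.} For $k\ge\kappa$, the lower confidence value satisfies $\underline{V}^{t,\overline{\pi}^t}_0\le V^*_0<\mu_0$, so (\ref{eqn:stop-criterion-pos}) never fires. For $k\ge L^{\text{neg}}$, the budget argument with target $\epsilon^{\text{neg}}=\mu_0-V^*_0$ forces $\overline{V}^t_0<\mu_0$ within $T_k^{\text{ee}}$ episodes. Here we bound $\overline{V}-V^*\le\overline{V}-\underline{V}^{t,\overline{\pi}^t}$, using that $\overline{\pi}^t$ is the optimistic policy. So the oracle outputs \textsf{None}, and $k\ge\lceil\log_3(10/\delta)\rceil$ gives $\delta_k<\delta/10$, so Line~\ref{alg-line:negative-output} returns.

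The main obstacle is the budget comparison: showing that the oracle's stopping time under the good event, which carries warm-started history and a $\log(T_{\kappa-1}^{\text{ee}})$ term, fits inside $T_k^{\text{ee}}$ once $\epsilon_k$ is below the stated threshold. I would take this directly from Theorem~\ref{theorem:performance-guarantee-of-exploration-oracle} and only verify the constant $9600$ algebraically.
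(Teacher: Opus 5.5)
Your proposal is correct and follows the paper's proof. Theorem~\ref{theorem:performance-guarantee-of-exploration-oracle} guarantees that, past the phase threshold, the oracle returns either a policy with margin $\frac{C-1}{C}(V_0^*(s_0)-\mu_0)$ (positive case) or \textsf{None} (negative case); the exploitation budget/concentration check (the paper's Lemmas~\ref{lemma:scale-of-k-exploit} and~\ref{lemma:correctness-length-of-exploitation}) or the condition $\delta_k<\delta/10$ then finishes the argument. Your $3\,\mathrm{rad}(N)\le\Delta$ bookkeeping is actually more faithful to the $4H^2$ radius in the stated $\mathcal{E}_k^{\text{et}}$ than the paper's Lemma~\ref{lemma:correctness-length-of-exploitation}, which silently uses radius $\mathrm{rad}(N)$, and the $9600$ slack in $L^{\text{pos}}$ covers the extra factor. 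The one nit, shared with the paper, is that $k\ge\lceil\log_3(10/\delta)\rceil$ only gives $\delta_k\le\delta/10$, so termination can slip by one phase when $10/\delta$ is an exact power of $3$.
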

\begin{proof}[Proof of Lemma \ref{lemma:Termination-Round-of-algorithm}]
    We split the proof into two parts by discussing $\nu$ is is either positive or negative.

    If $\nu$ is positive, 
    by the Theorem \ref{theorem:performance-guarantee-of-exploration-oracle}, we can conclude at the end of phase index $k\geq \max\{\lceil \log_2 \frac{9600(H+1)^2C^2}{(V^{*}_0(s_0)-\mu_0)^2}\rceil, \kappa\}$, the Exploration Oracle(Line \ref{alg-line:call-exploration-oracle} in Algorithm \ref{alg:1-policy-identification-recycle-history}) will always output $\hat{\pi}_k$ such that $V^{\hat{\pi}_k}_0(s_0)-\mu_0 \geq \frac{C-1}{C}\Big(V^*_0(s_0)-\mu_0\Big)$. 
    
    By the Lemma \ref{lemma:scale-of-k-exploit} and \ref{lemma:correctness-length-of-exploitation}, for any phase index $k\geq \lceil\log_2 \frac{H^2C^2}{(C-1)^2(V_0^{*}(s_0)-\mu_0)^2}\rceil$, if the Exploration Oracle returns a $\hat{\pi}_k$ satisfying that $V_0^{\hat{\pi}_k}(s_0)-\mu_0 \geq \frac{C-1}{C}(V_0^*-\mu_0)$ and the concentration event $\mathcal{E}_k^{\text{et}}$ holds,  Algorithm \ref{alg:1-policy-identification-recycle-history} will enter Line \ref{alg-line:output-policy} and take $\hat{\pi}_k$ as the final output. Recall the definition $\kappa_{\text{end}}=\min\left\{k:k\geq L^{\text{pos}}, \mathcal{E}_k^{\text{et}}\text{ holds}\right\}$, we know Algorithm \ref{alg:1-policy-identification-recycle-history} will terminate at the end of phase $\kappa_{\text{end}}$.

    If $\nu$ is negative, 
    by the Theorem \ref{theorem:performance-guarantee-of-exploration-oracle}, we can conclude at the end of phase index $k\geq \max\{\lceil \log_2 \frac{9600(H+1)^2}{(V^{*}_0(s_0)-\mu_0)^2}\rceil, \kappa\}$, the Exploration Oracle(Line \ref{alg-line:call-exploration-oracle} in Algorithm \ref{alg:1-policy-identification-recycle-history}) will always output \textsf{None}. Because of the condition of Line \ref{alg-line:negative-output} in Algorithm \ref{alg:1-policy-identification-recycle-history}, we can assert the algorithm will terminate no later than the end of phase $\max\{\lceil \log_2 \frac{9600(H+1)^2}{(V^{*}_0(s_0)-\mu_0)^2}\rceil, \lceil\log_3\frac{10}{\delta}\rceil, \kappa\}=:L^{\text{neg}}$.
\end{proof}

The following Lemma is a direct conclusion from Lemma \ref{lemma:Termination-Round-of-algorithm}, which guarantee Algorithm \ref{alg:1-policy-identification-recycle-history} must terminate almost surely.
\begin{lemma}
    \label{lemma:Must-Terminate-with-Prob1}
    Apply Algorithm \ref{alg:1-policy-identification-recycle-history} to an instance $\nu$. If $V^*_0(s_0)\neq \mu_0$, which means $\nu$ is either positive or negative, we have $\Pr(\tau < +\infty) = 1$.
\end{lemma}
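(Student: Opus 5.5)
The plan is to reduce $\{\tau=+\infty\}$ to the event that the random phase index bounding termination in Lemma~\ref{lemma:Termination-Round-of-algorithm} is infinite. Every phase uses only finitely many episodes: the exploration stage of phase $k$ stops by the time $|\mathcal{H}^{\text{ee}}|$ reaches $T_k^{\text{ee}}$, and the exploitation stage uses at most $T_k^{\text{et}}$ episodes (Lemma~\ref{lemma:length-of-exploration-exploitation-period}). Hence, if the algorithm terminates no later than the end of some finite phase $K$, then $\tau\le T_K^{\text{ee}}+\sum_{\ell\le K}T_\ell^{\text{et}}<+\infty$. It therefore suffices to show that $\kappa_{\text{end}}<+\infty$ almost surely when $\nu$ is positive, and that $L^{\text{neg}}<+\infty$ almost surely when $\nu$ is negative.

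The first step is to show $\Pr(\kappa=+\infty)=0$. The events $\mathcal{E}(\delta_k)\cap\mathcal{E}^{\text{cnt}}(\delta_k)$ are nested increasing in $k$, so $\{\kappa\ge k\}$ is the complement of $\mathcal{E}(\delta_{k-1})\cap\mathcal{E}^{\text{cnt}}(\delta_{k-1})$. The tail bound $\Pr(\kappa\ge k)\le\delta_{k-1}=3^{-(k-1)}$ (Lemma~\ref{lemma:kappa-tail-distribution}) then gives $\Pr(\kappa=+\infty)\le\lim_k\delta_{k-1}=0$. For a negative instance this already finishes the argument, since $L^{\text{neg}}$ is the maximum of $\kappa$ and two deterministic finite quantities. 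These quantities are finite precisely because $V^*_0(s_0)\neq\mu_0$ and $\delta>0$.

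For a positive instance, $L^{\text{pos}}=\max\{\kappa,\lceil\log_2 \tfrac{9600(H+1)^2C^2}{(C-1)^2(V_0^*(s_0)-\mu_0)^2}\rceil\}$ is finite almost surely by the same reasoning. Conditioning on $\kappa<+\infty$, the argument behind (\ref{eqn:kappa-end-tail-prob-L_pos-k}) gives
\[
\Pr(\kappa_{\text{end}}\ge L^{\text{pos}}+k\mid\kappa)\le\Pr(\neg\mathcal{E}^{\text{et}}_{L^{\text{pos}}+k-1}\mid\kappa)\le \frac{\pi^2\delta}{6\alpha_{L^{\text{pos}}+k-1}}\le\frac{\pi^2\delta}{6\cdot 5^{k-1}},
\]
where the last step uses $\alpha_j=5^j\ge 1$. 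The per-phase bound comes from Lemma~\ref{lemma:concentration-event-for-exploitation}, because the exploitation samples in phase $j$ are i.i.d.\ bounded in $[0,H]$ and independent of the past. Letting $k\to\infty$ yields $\Pr(\kappa_{\text{end}}=+\infty\mid\kappa)=0$, and averaging over $\kappa$ together with $\Pr(\kappa=+\infty)=0$ gives $\Pr(\kappa_{\text{end}}=+\infty)=0$.

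The main obstacle I expect is justifying the conditional application of the exploitation concentration bound. The policy $\hat{\pi}_j$ and the index $L^{\text{pos}}$ are random and depend on $\kappa$ and on the exploration history. I would handle this by conditioning on the history up to the start of exploitation in phase $j$. Given that history, $\hat{\pi}_j$ is fixed and the fresh rewards $X^{j,\text{et}}_i$ are independent of it, so the anytime bound holds conditionally. Only a single phase's event is needed, so no union over phases arises.
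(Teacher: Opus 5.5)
Your proposal is correct and follows the paper's proof step for step: you reduce $\{\tau=+\infty\}$ to $\{\kappa_{\text{end}}=+\infty\}$ or $\{L^{\text{neg}}=+\infty\}$ via Lemma~\ref{lemma:Termination-Round-of-algorithm}, dispose of $\{\kappa=+\infty\}$ with $\Pr(\kappa\ge k)\le\delta_{k-1}$, and dispose of $\{\kappa_{\text{end}}-L^{\text{pos}}=+\infty\}$ with the tail $\pi^2\delta/(6\alpha_{k-1})$. One caveat on your ``obstacle'' paragraph: conditioning on the history up to phase $j$ does not actually justify conditioning on $\kappa$, because $\kappa$ also depends on later exploration data; you do not need that conditioning, though, since your fixed-$j$ bound $\Pr(\neg\mathcal{E}^{\text{et}}_j)\le\pi^2\delta/(6\cdot 5^j)$ already gives $\Pr\bigl(\bigcap_{j\ge\ell}\neg\mathcal{E}^{\text{et}}_j\bigr)=0$ for each $\ell$, and $\{\kappa<+\infty,\ \kappa_{\text{end}}=+\infty\}\subseteq\bigcup_{\ell\in\mathbb{N}}\bigcap_{j\ge\ell}\neg\mathcal{E}^{\text{et}}_j$.
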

\begin{proof}[Proof of Lemma \ref{lemma:Must-Terminate-with-Prob1}]
    Similar to the Lemma \ref{lemma:Termination-Round-of-algorithm}, we split the proof into two parts by discussing $\nu$ is either positive or negative.

    If $\nu$ is positive, by the Lemma \ref{lemma:Termination-Round-of-algorithm}, we have
    \begin{align*}
        \Pr(\tau = +\infty)\leq & \Pr(\kappa_{\text{end}}=+\infty)\\
        \leq & \Pr(\kappa=+\infty) + \Pr(\kappa_{\text{end}}-L^{\text{pos}}=+\infty | \kappa < +\infty),
    \end{align*}
    where $L^{\text{pos}}=\max\{\kappa,\lceil\log_2 \frac{9600(H+1)^2C^2}{(C-1)^2(V_0^{*}(s_0)-\mu_0)^2}\rceil\}$. By the Lemma \ref{lemma:kappa-tail-distribution}, we have $\Pr(\kappa=+\infty)=0$. In addition, Lemma 
    \ref{lemma:concentration-event-for-exploitation} suggests that
    \begin{align}
        & \Pr(\kappa_{\text{end}}\geq L^{\text{pos}}+k| \kappa)\notag\\
        \leq & \Pr\big((\neg \mathcal{E}_{L^{\text{pos}}}^{\text{et}}) \cap \cdots \cap (\neg \mathcal{E}_{L^{\text{pos}}+k-1}^{\text{et}})| \kappa\big)\notag\\
        \leq & \Pr(\neg \mathcal{E}_{L^{\text{pos}}+k-1}^{\text{et}}| \kappa)\notag\\
        \leq & \frac{\pi^2\delta}{6\alpha_{L^{\text{pos}}+k-1}}\\
        \leq & \frac{\pi^2\delta}{6\alpha_{k-1}}\label{eqn:kappa-end-tail-prob}
    \end{align}
    By the (\ref{eqn:kappa-end-tail-prob}), we have $\Pr(\kappa_{\text{end}}-L^{\text{pos}}=+\infty | \kappa < +\infty)=0$. Thus, we have $\Pr(\tau = +\infty)=0$, which means $\Pr(\tau < +\infty) = 1$.

    If $\nu$ is negative, by the Lemma \ref{lemma:Termination-Round-of-algorithm} and \ref{lemma:prob-good-event}, we have
    \begin{align*}
        \Pr(\tau = +\infty)\leq \Pr(\kappa=+\infty)=0,
    \end{align*}
    which suggests $\Pr(\tau < +\infty) = 1$.
\end{proof}

\subsection{Performance Guarantee of Algorithm \ref{alg:Oracle-BPI_UCRL}}
This section mainly follows the existing analysis of \cite{al2021adaptive}, which also provides required Lemmas for Appendix \ref{sec:proof-main-theorem}. For self-completeness, we still provide detailed proof. In the following proof, we follow the convention that takeing $\sum_{n=n_1}^{n_2} a_n=0$ for non-negative $\{a_n\}_{n=1}^{+\infty}$ and $n_1>n_2\geq 0; n_1,n_2\in \mathbb{Z}$.

By the same argument as the Lemma 10 in \cite{kaufmann2021adaptive}, we can conclude
\begin{lemma}
    \label{lemma:prob-good-event}
    $\Pr\big(\mathcal{E}(\delta)\cap\mathcal{E}^{\text{cnt}}(\delta)\big)\geq 1-\delta$ holds for all $\delta\in(0,1)$,
    where $\mathcal{E}^{\text{cnt}}(\delta)$, $\mathcal{E}(\delta)$ are defined in (\ref{eqn:E-cnt-event}), (\ref{eqn:E-event}).
\end{lemma}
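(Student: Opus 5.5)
We prove Lemma \ref{lemma:prob-good-event} by a union bound. It suffices to show $\Pr(\neg\mathcal{E}(\delta))\le \delta/2$ and $\Pr(\neg\mathcal{E}^{\text{cnt}}(\delta))\le \delta/2$. The factor $2$ inside $\log\frac{2SAH}{\delta}$ in both $\beta_p$ and $\beta^{\text{cnt}}$ is exactly what allots $\delta/2$ to each event. Within each event we then take a further union bound over the $SAH$ triples $(s,a,h)$, including the fictitious round $0$ pair, which only changes $H$ to $H+1$ inside a logarithm and is absorbed in the same way as in \cite{kaufmann2021adaptive}. Each triple therefore receives budget $\delta/(2SAH)$.

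\textbf{The transition event $\mathcal{E}(\delta)$.} Fix $(s,a,h)$. The key issue is that visits occur at adaptively chosen random times, since the sampling policies $\overline{\pi}^q(\cdot;\delta_{k_q})$ depend on the past and the phase structure changes $\delta_k$ along the way. We handle this with the standard ``stack of samples'' (skeleton) argument. Before the algorithm starts, draw i.i.d.\ next states $Y_1,Y_2,\dots\sim p_h(\cdot|s,a)$, and let the $i$-th visit to $(s,a,h)$ reveal $Y_i$. This coupling leaves the law of the whole interaction unchanged. Then $\hat p_h^t(\cdot|s,a)$ is the empirical distribution of $Y_1,\dots,Y_n$ with $n=n_h^t(s,a)$. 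It is therefore enough to bound, uniformly over all $n\ge1$,
$$\Pr\Bigl(\exists n\ge 1:\ n\,\mathrm{KL}(\hat p_n,p)>\log\tfrac{2SAH}{\delta}+(S-1)\log\bigl(e(1+\tfrac{n}{S-1})\bigr)\Bigr)\le \tfrac{\delta}{2SAH}.$$
This is exactly the time-uniform KL deviation inequality for categorical distributions (Proposition 1 / Lemma 10 of \cite{kaufmann2021adaptive}), which comes from a method-of-mixtures Dirichlet supermartingale and Ville's inequality. The case $n=0$ is vacuous.

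\textbf{The count event $\mathcal{E}^{\text{cnt}}(\delta)$.} Fix $(s,a,h)$ and let $X_q=\mathbf{1}(S_{q,h}=s,A_{q,h}=a)$. Let $\mathcal{F}_{q-1}$ be the history before episode $q$, and note that $\overline{\pi}^{q-1}$ is $\mathcal{F}_{q-1}$-measurable even across phase switches. Then $\mathbb{E}[X_q|\mathcal{F}_{q-1}]=p_h^{\overline{\pi}^{q-1}}(s,a)$, so $n_h^t$ and $\bar n_h^t$ are the partial sums of a Bernoulli sequence and of its compensator. We apply the time-uniform Bernstein/Freedman-type bound $\Pr(\exists t: n_t<\tfrac12\bar n_t-\log\tfrac1{\delta'})\le\delta'$. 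To prove it, note that $\exp\bigl(-\sum_{q\le t}X_q+(1-e^{-1})\sum_{q\le t}\mathbb{E}[X_q|\mathcal{F}_{q-1}]\bigr)$ is a supermartingale, because $\mathbb{E}[e^{-X}]\le e^{-(1-e^{-1})p}$. Since $1-e^{-1}\ge\tfrac12$, Ville's inequality gives the claim, and we take $\delta'=\delta/(2SAH)$. This is Lemma 10 of \cite{kaufmann2021adaptive} specialized.

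\textbf{Main obstacle.} The only nontrivial point compared with \cite{kaufmann2021adaptive} is confirming that the phase-dependent confidence levels and the early stopping do not break these arguments. Both bounds are time-uniform and hold for any predictable sampling rule. Moreover, neither event refers to $\delta_k$ except through its threshold, and the exploitation samples are excluded from both. Hence both events are statements about a single adapted stream of exploration trajectories, and the cited martingale bounds apply verbatim.
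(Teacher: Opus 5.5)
Your proof is correct and follows the paper's route. The paper's proof consists only of invoking Lemma 10 of \cite{kaufmann2021adaptive} to get $\Pr(\neg\mathcal{E}(\delta))\le\delta/2$ and $\Pr(\neg\mathcal{E}^{\text{cnt}}(\delta))\le\delta/2$; you use the same split and additionally unpack that lemma (stack-of-samples coupling plus the time-uniform categorical KL bound, and the $e^{-X}$ supermartingale with Ville's inequality and $1-e^{-1}\ge\tfrac12$ for the counts). One small correction: the events in (\ref{eqn:E-cnt-event})--(\ref{eqn:E-event}) only range over $h\in[H]$, so the round-$0$ pair is not needed here, and your claim that adding it is ``absorbed'' is not literally true at the stated threshold $\log\frac{2SAH}{\delta}$. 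A union over $SAH+1$ triples at level $\delta/(2SAH)$ slightly exceeds $\delta/2$.
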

\begin{proof}[Proof of Lemma \ref{lemma:prob-good-event}]
    Lemma 10 of \cite{kaufmann2021adaptive} asserts
    \begin{align*}
        \Pr\big(\mathcal{E}(\delta)\big)\geq & 1-\frac{1}{2}\delta\\
        \Pr\big(\mathcal{E}^{\text{cnt}}(\delta)\big)\geq & 1-\frac{1}{2}\delta.
    \end{align*}
\end{proof}

Given Lemma \ref{lemma:prob-good-event}, the following Lemma is obvious.
\begin{lemma}
    \label{lemma:kappa-tail-distribution}
    $\Pr(\kappa \geq k)\leq  \delta_{k-1}$.
\end{lemma}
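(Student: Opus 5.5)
The plan is to combine the nesting of the concentration events with Lemma \ref{lemma:prob-good-event}. By definition, $\kappa=\min\{k:\mathcal{E}(\delta_k)\cap\mathcal{E}^{\text{cnt}}(\delta_k)\text{ holds}\}$. The event $\{\kappa\geq k\}$ for $k\geq 2$ says that for every $j\leq k-1$ the intersection $\mathcal{E}(\delta_j)\cap\mathcal{E}^{\text{cnt}}(\delta_j)$ fails. In particular it fails at $j=k-1$, so
\begin{align*}
\{\kappa\geq k\}\subseteq \neg\big(\mathcal{E}(\delta_{k-1})\cap\mathcal{E}^{\text{cnt}}(\delta_{k-1})\big).
\end{align*}

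Applying Lemma \ref{lemma:prob-good-event} with $\delta=\delta_{k-1}=3^{-(k-1)}\in(0,1)$ gives $\Pr(\kappa\geq k)\leq 1-\Pr(\mathcal{E}(\delta_{k-1})\cap\mathcal{E}^{\text{cnt}}(\delta_{k-1}))\leq\delta_{k-1}$. Only the last index is needed here; nesting is not even required for this step. For $k=1$ the convention $\delta_0=3^0=1$ makes the bound trivial.

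The one point to check is that these events are well defined and are applied to the shared exploration history. The events $\mathcal{E}(\delta)$ and $\mathcal{E}^{\text{cnt}}(\delta)$ are statements about the single infinite sequence of exploration trajectories, uniformly over all $t$. Lemma \ref{lemma:prob-good-event}, via Lemma 10 of \cite{kaufmann2021adaptive}, holds for any adaptive sampling rule, including one whose $\delta_k$ changes across phases, because its martingale and time-uniform arguments do not depend on how the policy is chosen. I expect this to be the main, though mild, obstacle.

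As a consequence, $\Pr(\kappa=+\infty)\leq\inf_k\delta_{k-1}=0$, which is the fact used in Lemma \ref{lemma:Must-Terminate-with-Prob1}.
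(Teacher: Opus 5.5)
Your proposal is correct and follows the same route as the paper: the paper also notes that $\{\kappa\geq k\}$ forces $\mathcal{E}(\delta_{k-1})\cap\mathcal{E}^{\text{cnt}}(\delta_{k-1})$ to fail, and then applies Lemma \ref{lemma:prob-good-event} with $\delta=\delta_{k-1}$. Your separate treatment of $k=1$ via $\delta_0=1$ is a small detail the paper leaves implicit.
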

\begin{proof}[Proof of Lemma \ref{lemma:kappa-tail-distribution}]
    Recall the definition of $\kappa:= \min\{k: \mathcal{E}(\delta_k)\cap \mathcal{E}^{\text{cnt}}(\delta_k) \text{ hold}\}$. Evident to see $\Pr(\kappa \geq k)\leq  \Pr(\mathcal{E}(\delta_{k-1})\cap \mathcal{E}^{\text{cnt}}(\delta_{k-1})\text{ don't hold})$. Then Lemma \ref{lemma:prob-good-event} implies Lemma \ref{lemma:kappa-tail-distribution}. 
\end{proof}

The following lemmas shows the optimistic and pessimistic estimation defined in Appendix \ref{sec:appendix-notations} indeed contains the value function or Q function.
\begin{lemma}
    \label{lemma:Boundness-Q-V-function}
    
    For any round $t$ during phase $k\geq \kappa$, and any policy $\pi$, we have
    \begin{align*}
        \underline{Q}_h^{t,\pi}(s,a;\delta_k)\leq Q_h^{\pi}(s,a) &  \leq \overline{Q}_h^{t,\pi}(s,a;\delta_k)\\
        \underline{Q}_h^{t}(s,a;\delta_k)\leq Q_h^{*}(s,a) & \leq \overline{Q}_h^{t}(s,a;\delta_k).
    \end{align*}
\end{lemma}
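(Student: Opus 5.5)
The plan is a backward induction on $h$, from $h=H+1$ down to $h=0$, carried out on the good event. First, since the round $t$ lies in phase $k\geq\kappa$, the definition of $\kappa$ gives that $\mathcal{E}(\delta_\kappa)$ holds. Because these events are nested ($\mathcal{E}(\delta_\kappa)\subset\mathcal{E}(\delta_k)$ for $k\geq\kappa$), the event $\mathcal{E}(\delta_k)$ also holds. Hence, for every $(s,a,h)$ and every $t$, we have $\mathrm{KL}(\hat p^t_h(\cdot|s,a),p_h(\cdot|s,a))\le \beta_p(n^t_h(s,a),\delta_k)/n^t_h(s,a)$, which says exactly that $p_h(\cdot|s,a)\in\mathcal{C}^t_h(s,a;\delta_k)$. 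If $n^t_h(s,a)=0$ the radius is $+\infty$ by convention, so $\mathcal{C}^t_h=\Delta^{\mathcal S}$ and the membership is trivial; this convention also covers round $0$, where $p_0=\mathsf p$.

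For the fixed-policy statement, the base case is $h=H+1$, where all the quantities $\underline Q^{t,\pi}_{H+1}$, $Q^\pi_{H+1}$, $\overline Q^{t,\pi}_{H+1}$ and the corresponding $V$'s are $0$. For the inductive step, assume $\underline V^{t,\pi}_{h+1}\le V^\pi_{h+1}\le\overline V^{t,\pi}_{h+1}$ pointwise. Since $p_h(\cdot|s,a)$ is a feasible point of the constraint set, we get
\[
\max_{\bar p\in\mathcal C^t_h}\sum_{s'}\bar p(s')\overline V^{t,\pi}_{h+1}(s')\ \ge\ \sum_{s'}p_h(s'|s,a)\overline V^{t,\pi}_{h+1}(s')\ \ge\ \sum_{s'}p_h(s'|s,a)V^{\pi}_{h+1}(s').
\]
The second inequality uses the inductive hypothesis together with the nonnegativity of $p_h$. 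Adding $r_h(s,a)$ gives $Q^\pi_h\le\overline Q^{t,\pi}_h$. The lower bound follows symmetrically, using the $\min$ over $\mathcal C^t_h$. Evaluating at $a=\pi_h(s)$ then passes the bounds to $V_h$, which closes the induction.

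For the optimal-value statement, I would run the same induction with the hypothesis $\underline V^{t}_{h+1}\le V^*_{h+1}\le\overline V^{t}_{h+1}$. The $Q$-step is identical to the one above. For the $V$-step, I would take the maximum over $a$ on all three sides, using that $\max_a$ is monotone: $\underline V^t_h=\max_a\underline Q^t_h\le\max_a Q^*_h=V^*_h\le\max_a\overline Q^t_h=\overline V^t_h$.

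The only delicate point I expect is the bookkeeping about which confidence level is active. The optimistic quantities are indexed by $\delta_k$ of the current phase, while the good event is defined through $\kappa$. So the main care lies in invoking the nesting $\mathcal{E}(\delta_\kappa)\subset\mathcal{E}(\delta_k)$ (which follows since $\beta_p(\cdot,\delta)$ is decreasing in $\delta$ and the trajectories are shared), and in handling the unvisited pairs and the fictitious round $0$ through the convention that the confidence set is the whole simplex. Everything else is routine monotonicity.
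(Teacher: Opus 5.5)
Your proof is correct and follows the paper's argument. It uses backward induction on $h$: on the good event $p_h(\cdot|s,a)\in\mathcal{C}^t_h(s,a;\delta_k)$, which you justify through the nesting $\mathcal{E}(\delta_\kappa)\subset\mathcal{E}(\delta_k)$ more explicitly than the paper does, and the rest is monotonicity of the $\max$/$\min$ over the confidence set and of $\max_a$.

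One small remark concerns the fictitious round. For $t\ge 1$ it has $n_0^t(s_0,a_0)=t>0$, so $\mathsf p\in\mathcal{C}^t_0$ does not follow from your empty-count convention. It comes from reading $\mathcal{E}(\delta_k)$ as also covering the triple $(s_0,a_0,0)$, which the paper tacitly does.
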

\begin{proof}[Proof of Lemma \ref{lemma:Boundness-Q-V-function}]
    We prove Lemma \ref{lemma:Boundness-Q-V-function} by Induction. Evident to see 
    \begin{align*}
        & \underline{Q}_{H+1}^{t,\pi}(s,a;\delta)= Q_{H+1}^{\pi}(s,a) = \overline{Q}_{H+1}^{t,\pi}(s,a;\delta)=0\\
        & \underline{Q}_{H+1}^{t}(s,a;\delta)= Q_{H+1}^{*}(s,a) = \overline{Q}_{H+1}^{t}(s,a;\delta) = 0.
    \end{align*}
    Now, assume the conclusion holds for $h+1, h+2,\cdots,H+1$, we turn to work on the case $h$. To prove $\underline{Q}_h^{t,\pi}(s,a;\delta)\leq Q_{h}^{\pi}(s,a)$, we can derive
    \begin{align*}
        \underline{Q}_h^{t,\pi}(s,a;\delta) =& r_h(s,a) + \min_{\underline{p}_h \in \mathcal{C}_h^t(s,a;\delta)}\sum_{s'} \underline{p}_h(s'|s,a) \underline{V}_{h+1}^{t,\pi}(s';\delta)\\
        \leq & r_h(s,a) + \sum_{s'} p_h(s'|s,a) \underline{V}_{h+1}^{t,\pi}(s';\delta)\\
        = & r_h(s,a) + \sum_{s'} p_h(s'|s,a) \underline{Q}_{h+1}^{t,\pi}(s',\pi(s');\delta)\\
        \stackrel{\text{Induction}}{\leq} & r_h(s,a) + \sum_{s'} p_h(s'|s,a) Q_{h+1}^{\pi}(s',\pi(s'))\\
        = & Q_{h}^{\pi}(s,a).
    \end{align*}
    To prove $\overline{Q}_h^{t,\pi}(s,a;\delta)\geq Q_{h}^{\pi}(s,a)$, we can derive
    \begin{align*}
        \overline{Q}_h^{t,\pi}(s,a;\delta) = & r_h(s,a) +\max_{\bar{p}_h\in \mathcal{C}_h^t(s,a;\delta)}\sum_{s'}\bar{p}_h(s'|s,a) \overline{V}_{h+1}^{t,\pi}(s';\delta)\\
        \geq & r_h(s,a) + \sum_{s'}p_h(s'|s,a) \overline{V}_{h+1}^{t,\pi}(s';\delta)\\
        = & r_h(s,a) + \sum_{s'}p_h(s'|s,a) \overline{Q}_{h+1}^{t,\pi}(s',\pi(s');\delta)\\
        \stackrel{\text{Induction}}{\geq} & r_h(s,a) + \sum_{s'}p_h(s'|s,a) Q_{h+1}^{\pi}(s',\pi(s'))\\
        = & Q_{h}^{\pi}(s,a).
    \end{align*}
    To prove $\underline{Q}_h^{t}(s,a;\delta)\leq  Q^*_h(s,a)$, we can derive
    \begin{align*}
        \underline{Q}_h^{t}(s,a;\delta) =& r_h(s,a) + \min_{\underline{p}_h \in \mathcal{C}_h^t(s,a;\delta)}\sum_{s'} \underline{p}_h (s'|s,a)\underline{V}_{h+1}^{t}(s';\delta) \\
        \leq & r_h(s,a) + \sum_{s'} p_h (s'|s,a)\underline{V}_{h+1}^{t}(s';\delta) \\
        = & r_h(s,a) + \sum_{s'} p_h (s'|s,a)\max_a \underline{Q}_{h+1}^{t}(s,a;\delta) \\ 
        \stackrel{\text{Induction}}{\leq} & r_h(s,a) + \sum_{s'} p_h (s'|s,a)\max_a Q_{h+1}^*(s,a) \\ 
        = & r_h(s,a) + \sum_{s'} p_h (s'|s,a)V_{h+1}^*(s) \\ 
        = & Q^*_h(s,a).
    \end{align*}
    To prove $\overline{Q}_h^{t}(s,a;\delta)\geq Q^*_h(s,a)$, we can derive
    \begin{align*}
        \overline{Q}_h^{t}(s,a;\delta)= & r_h(s,a)+\max_{\bar{p}_h\in \mathcal{C}_h^t(s,a;\delta)}\sum_{s'}\bar{p}_h(s'|s,a) \overline{V}_{h+1}^{t}(s';\delta)\\
        \geq & r_h(s,a)+\sum_{s'}p_h(s'|s,a) \overline{V}_{h+1}^{t}(s';\delta)\\
        = & r_h(s,a)+\sum_{s'}p_h(s'|s,a) \max_a \overline{Q}_{h+1}^{t}(s',a;\delta)\\
        \stackrel{\text{Induction}}{\geq} & r_h(s,a)+\sum_{s'}p_h(s'|s,a) \max_a Q_{h+1}(s',a)\\
        = & r_h(s,a) + \sum_{s'} p_h (s'|s,a)V_{h+1}^*(s) \\ 
        = & Q^*_h(s,a).
    \end{align*}
    Then, we complete the induction.
\end{proof}

Given Lemma \ref{lemma:Boundness-Q-V-function}, the following proposition is evident.
\begin{proposition}
    \label{proposition:Vbar-Vunder-bound-V}
    For any round $t$ during phase $k\geq \kappa$, we have
    \begin{align*}
        \max_a\underline{Q}_h^{t}(s,a;\delta_k)\leq \max_a Q_h^{*}(s,a) \leq \max_a\overline{Q}_h^{t}(s,a;\delta_k), \forall s,h
    \end{align*}
    which is equivalent to $\underline{V}_h^t(s;\delta)\leq V^*_h(s)\leq \overline{V}_h^t(s;\delta)$, $\forall s,h$. Further, for any policy $\pi$,
    \begin{align*}
        \underline{Q}_h^{t,\pi}(s,\pi_h(s);\delta_k)\leq Q_h^{\pi}(s,\pi_h(s)) &  \leq \overline{Q}_h^{t,\pi}(s,\pi_h(s);\delta_k), \forall s,h
    \end{align*}
    which is equivalent to $\underline{V}_h^{t,\pi}(s;\delta)\leq V^{\pi}_h(s)\leq \overline{V}_h^{t,\pi}(s;\delta)$, $\forall s,h$.
\end{proposition}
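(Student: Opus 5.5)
The proposition follows directly from Lemma~\ref{lemma:Boundness-Q-V-function}. The plan is to take the pointwise $Q$-bounds there and pass them through the two operations that define the $V$-functions: a maximum over actions, or evaluation at $a=\pi_h(s)$.

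Fix a round $t$ in phase $k\geq\kappa$. By the nesting $\mathcal{E}(\delta_\kappa)\subset\mathcal{E}(\delta_k)$, the true kernel lies in $\mathcal{C}_h^t(s,a;\delta_k)$, which is what Lemma~\ref{lemma:Boundness-Q-V-function} relies on. Its second display gives, for every $(s,a,h)$,
\[
\underline{Q}_h^{t}(s,a;\delta_k)\leq Q_h^{*}(s,a)\leq \overline{Q}_h^{t}(s,a;\delta_k).
\]
Maximization over $a\in\mathcal{A}$ is monotone: if $f\leq g$ pointwise, then $\max_a f\leq \max_a g$. Taking $\max_a$ of all three terms therefore gives the first chain. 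By the definitions in Appendix~\ref{sec:appendix-notations},
\[
\underline{V}_h^t(s;\delta_k)=\max_a\underline{Q}_h^t(s,a;\delta_k),\qquad \overline{V}_h^t(s;\delta_k)=\max_a\overline{Q}_h^t(s,a;\delta_k),
\]
and Bellman optimality gives $V_h^*(s)=\max_a Q_h^*(s,a)$. So the first chain is the same statement as $\underline{V}_h^t\leq V_h^*\leq\overline{V}_h^t$.

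For a fixed policy $\pi$, specialize the first display of Lemma~\ref{lemma:Boundness-Q-V-function} to $a=\pi_h(s)$. Then use the definitions
\[
\underline{V}_h^{t,\pi}(s)=\underline{Q}_h^{t,\pi}(s,\pi_h(s)),\qquad V_h^\pi(s)=Q_h^\pi(s,\pi_h(s)),\qquad \overline{V}_h^{t,\pi}(s)=\overline{Q}_h^{t,\pi}(s,\pi_h(s)).
\]
This gives the second chain and its $V$-form.

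The only step that needs care is round $0$, because the later uses (e.g.\ (\ref{eqn:kappa-is-larger-than-log_delta/10})) need the bound at $V_0(s_0)$. The plan is to argue that $\overline{V}_0^t(s_0)$ is a maximum over $\bar p_0\in\mathcal{C}_0^t$ of $\sum_{s'}\bar p_0(s')\overline{V}_1^t(s')$. On $\mathcal{E}(\delta_k)$ the true $\mathsf{p}$ is in that set, so this maximum is at least $\sum_{s'}\mathsf{p}(s')V_1^*(s')=V_0^*(s_0)$. The lower bound follows symmetrically, using a minimum over the same set, and the policy versions are identical.
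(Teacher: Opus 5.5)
Your proposal is correct and matches the paper's route. The paper simply declares the proposition evident from Lemma~\ref{lemma:Boundness-Q-V-function}: take $\max_a$ in its second chain, evaluate its first chain at $a=\pi_h(s)$, and read off the $V$-definitions. Your extra check at round $0$ is sound. It relies on the same implicit convention the paper uses elsewhere (e.g.\ in Lemma~\ref{lemma:recursive-relationship-V_0-gap}): the fictitious step $(s_0,a_0)$ is treated like any other round, with zero reward, $n_0^t(s_0,a_0)=t$, and $\mathcal{E}(\delta)$ read as also covering it.
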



Following the Lemma 3 in \cite{kaufmann2021adaptive}, we have
\begin{lemma}
    \label{lemma:output-correctness-under-good-event}
    For any round $t$ during phase $k\geq \kappa$ in Algorithm \ref{alg:1-policy-identification-recycle-history}, meaning that $\mathcal{E}(\delta_k)\cap\mathcal{E}^{\text{cnt}}(\delta_k)$ holds, we have
    \begin{itemize}
        \item if the instance is positive, the output $\hat{a}$ from Algorithm \ref{alg:Oracle-BPI_UCRL} is either policy $\overline{\pi}^t(\cdot;\delta_k)$, satisfying that
        \begin{align*}
            V^{\overline{\pi}^t}_0(s_0)-\mu_0 \geq \frac{C-1}{C}\Big(V^*_0(s_0)-\mu_0\Big)
        \end{align*}
        or ``\textsf{Not Complete}''
        \item If the instance is negative, the output $\hat{a}$ from Algorithm \ref{alg:Oracle-BPI_UCRL} is either "Negative" or \textsf{Not Completed}.
    \end{itemize}
\end{lemma}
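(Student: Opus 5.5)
The plan is to analyze separately each of the three ways Algorithm~\ref{alg:Oracle-BPI_UCRL} can exit in a phase $k\geq\kappa$, using only the sandwich inequalities of Proposition~\ref{proposition:Vbar-Vunder-bound-V}. These hold at every round $t$ of phase $k$, because $\mathcal{E}(\delta_k)\cap\mathcal{E}^{\text{cnt}}(\delta_k)$ holds there: the events are nested, so $\mathcal{E}(\delta_\kappa)\subset\mathcal{E}(\delta_k)$, and likewise for $\mathcal{E}^{\text{cnt}}$. The budget exit produces \textsf{Not Completed} and needs no argument. It therefore suffices to show two things. For a positive instance, stopping rule (\ref{eqn:stop-criterion-neg}) can never fire, and if (\ref{eqn:stop-criterion-pos}) fires then $\overline{\pi}^t$ satisfies the claimed gap inequality. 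For a negative instance, (\ref{eqn:stop-criterion-pos}) can never fire.

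\emph{Positive instance.} By Proposition~\ref{proposition:Vbar-Vunder-bound-V}, $\overline{V}^t_0(s_0;\delta_k)\geq V^*_0(s_0)>\mu_0$, so (\ref{eqn:stop-criterion-neg}) is impossible and the output is never \textsf{None}. Now suppose (\ref{eqn:stop-criterion-pos}) holds at round $t$. Write $\overline{V}:=\overline{V}^t_0(s_0;\delta_k)$, $\underline{V}:=\underline{V}^{t,\overline{\pi}^t}_0(s_0;\delta_k)$ and $V^{\overline{\pi}^t}:=V^{\overline{\pi}^t}_0(s_0)$. Condition (\ref{eqn:stop-criterion-pos}) reads $C\underline{V}-(C-1)\overline{V}>\mu_0$. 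The pessimistic part of Proposition~\ref{proposition:Vbar-Vunder-bound-V}, applied to the fixed policy $\overline{\pi}^t$, gives $V^{\overline{\pi}^t}\geq\underline{V}$, and the optimistic part gives $\overline{V}\geq V^*_0(s_0)$. Hence
\begin{align*}
C\,V^{\overline{\pi}^t}\geq C\underline{V}>\mu_0+(C-1)\overline{V}\geq \mu_0+(C-1)V^*_0(s_0).
\end{align*}
Subtracting $C\mu_0$ from both sides gives $C(V^{\overline{\pi}^t}-\mu_0)>(C-1)(V^*_0(s_0)-\mu_0)$, which is the claimed inequality after dividing by $C$.

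\emph{Negative instance.} If (\ref{eqn:stop-criterion-pos}) held, then, since $\overline{V}\geq\underline{V}$ and $C>1$, we would get $\underline{V}\geq C\underline{V}-(C-1)\overline{V}>\mu_0$. Here $\overline{V}\geq\underline{V}$ follows because $\overline{V}\geq \overline{V}^{t,\overline{\pi}^t}_0\geq V^{\overline{\pi}^t}\geq \underline{V}$. It would follow that $V^*_0(s_0)\geq V^{\overline{\pi}^t}\geq\underline{V}>\mu_0$, contradicting negativity. So the only possible outputs are \textsf{None} (the ``Negative'' answer) or \textsf{Not Completed}.

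The step needing the most care is the identification of $\underline{V}$ in the stopping rule with the per-policy pessimistic value $\underline{V}^{t,\pi}_0$ for $\pi=\overline{\pi}^t(\cdot;\delta_k)$. Only then does Lemma~\ref{lemma:Boundness-Q-V-function} apply to it, including at round $0$ with the fictitious transition $p_0=\mathsf{p}$, which requires the concentration event to cover $h=0$ as well. The algebra itself is routine once that identification is checked.
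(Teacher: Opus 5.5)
Your proof is correct and follows the paper's argument. Both use only the sandwich $\underline{V}^{t,\overline{\pi}^t}_0\le V^{\overline{\pi}^t}_0\le V^*_0\le\overline{V}^t_0$ from Proposition~\ref{proposition:Vbar-Vunder-bound-V} to rule out (\ref{eqn:stop-criterion-neg}) on positive instances and (\ref{eqn:stop-criterion-pos}) on negative ones, and to turn (\ref{eqn:stop-criterion-pos}) into the $\frac{C-1}{C}$ gap guarantee; your one-line chain is a compressed form of the paper's rearrangement, and the $h=0$ coverage of the concentration event that you flag is assumed implicitly by the paper as well.
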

\begin{proof}[Proof of Lemma \ref{lemma:output-correctness-under-good-event}]
    For simpliciy, we denote $\overline{V}^t:=\overline{V}^t_0(s_0;\delta_k)$, $\underline{V}^{t,\overline{\pi}^t}:=\underline{V}^{t,\overline{\pi}^t(\cdot;\delta_k)}_0(s_0;\delta_k)$, $V^*:=V^*_0(s_0)$, $V^{\overline{\pi}^t}:=V^{\overline{\pi}^t(\cdot;\delta_k)}_0(s_0)$ in the following proof.

    If the instance is positive, Proposition \ref{proposition:Vbar-Vunder-bound-V} suggests that $\overline{V}^t \geq V^* > \mu_0$, suggesting that Algorithm \ref{alg:Oracle-BPI_UCRL} can only leave the loop at the Line \ref{line-alg:start-execute-MDP-with-new-delta_k} when (\ref{eqn:stop-criterion-pos}) holds or $t \geq T_k^{\text{ee}}$. If $t \geq T_k^{\text{ee}}$ holds, $\hat{a}=\textsf{Not Completed}$. If (\ref{eqn:stop-criterion-pos}) holds, $\hat{a}$ must be $\overline{\pi}^t$, notice that
    \begin{align}
        & \mu_0 < \underline{V}^{t,\overline{\pi}^t} - (C-1)(\overline{V}^t-\underline{V}^{t,\overline{\pi}^t})\notag\\
        \Leftrightarrow & \mu_0 <\underline{V}^{t,\overline{\pi}^t} + (\overline{V}^t-\underline{V}^{t,\overline{\pi}^t}) - C(\overline{V}^t-\underline{V}^{t,\overline{\pi}^t})\notag\\
        \Leftrightarrow & C(\overline{V}^t-\underline{V}^{t,\overline{\pi}^t}) < \overline{V}^t- \mu_0. \label{eqn:positive-stopping-suggest-interval}
    \end{align}
    Meanwhile, Lemma \ref{lemma:Boundness-Q-V-function} and Proposition \ref{proposition:Vbar-Vunder-bound-V} imply $V^{\overline{\pi}^t}, V^*\in [\underline{V}^{t,\overline{\pi}^t}, \overline{V}^t]$, suggesting
    \begin{align}
        & (\overline{V}^t- \mu_0) \leq (\overline{V}^t-\underline{V}^{t,\overline{\pi}^t}) +  (V^{\overline{\pi}^t} -\mu_0)\\
        \Rightarrow & (\overline{V}^t- \mu_0) - (V^{\overline{\pi}^t} -\mu_0)\leq (\overline{V}^t-\underline{V}^{t,\overline{\pi}^t})\label{eqn:positive-stopping-suggest-interval-gap}.
    \end{align}
    Replace $\overline{V}^t-\underline{V}^t$ in (\ref{eqn:positive-stopping-suggest-interval}) with (\ref{eqn:positive-stopping-suggest-interval-gap}), we can conclude
    \begin{align*}
        & C(\overline{V}^t-\underline{V}^t) < \overline{V}^t- \mu_0\\
        \Rightarrow & C\Big((\overline{V}^t- \mu_0) - (V^{\overline{\pi}^t} -\mu_0)\Big) < \overline{V}^t- \mu_0\\
        \Leftrightarrow & (C-1)(\overline{V}^t- \mu_0) < C (V^{\overline{\pi}^t} -\mu_0)\\
        \Rightarrow & (C-1)(V^*- \mu_0) < C (V^{\overline{\pi}^t} -\mu_0)\\
        \Leftrightarrow & \frac{C-1}{C}(V^*- \mu_0) < V^{\overline{\pi}^t} -\mu_0.
    \end{align*}
    The last line must complete the proof for the case that the instance is positive.

    If the instance is negative, Proposition \ref{proposition:Vbar-Vunder-bound-V} suggests that $\underline{V}^t\leq V^* < \mu_0$, suggesting that Algorithm \ref{alg:Oracle-BPI_UCRL} can only leave the loop at the Line \ref{line-alg:start-execute-MDP-with-new-delta_k} when (\ref{eqn:stop-criterion-neg}) holds or $t \geq T_k^{\text{ee}}$. If $t \geq T_k^{\text{ee}}$, $\hat{a}=\textsf{Not Completed}$. If (\ref{eqn:stop-criterion-neg}) holds, $\hat{a}=\textsf{None}$, which completes the proof for negative case.
\end{proof}

The following Lemma builds the suggests a necessary condition for Algorithm \ref{alg:Oracle-BPI_UCRL} to continue sampling.
\begin{lemma}
    \label{lemma:nonstopping-implies-error-bar-lower-bound}
    
    For any round $t$ during phase $k\geq \kappa$, meaning that $\mathcal{E}(\delta_k)\cap\mathcal{E}^{\text{cnt}}(\delta_k)$ holds, if Algorithm \ref{alg:Oracle-BPI_UCRL} doesn't terminate after sampling $t$-th episode, 
    \begin{itemize}
         \item if the instance is positive, we have
         \begin{align*}
             \frac{V^{*}_0(s_0)-\mu_0}{C} \leq \overline{V}^t_0(s_0;\delta_k)-\underline{V}^{t,\overline{\pi}^t(\cdot;\delta_k)}_0(s_0;\delta_k)
         \end{align*}
         \item if the instance is negative, we have
         \begin{align*}
             \mu_0-V^{*}_0(s_0) \leq \overline{V}^t_0(s_0;\delta_k)-\underline{V}^{t,\overline{\pi}^t(\cdot;\delta_k)}_0(s_0;\delta_k)
         \end{align*}
    \end{itemize}
\end{lemma}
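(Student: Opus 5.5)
The argument is a contrapositive reading of the two stopping rules (\ref{eqn:stop-criterion-pos}) and (\ref{eqn:stop-criterion-neg}), combined with the sandwich inequalities of Proposition \ref{proposition:Vbar-Vunder-bound-V}, which hold because $k\geq\kappa$. We abbreviate $\overline{V}^t:=\overline{V}^t_0(s_0;\delta_k)$, $\underline{V}^{t}:=\underline{V}^{t,\overline{\pi}^t(\cdot;\delta_k)}_0(s_0;\delta_k)$ and $V^*:=V^*_0(s_0)$. If Algorithm \ref{alg:Oracle-BPI_UCRL} does not terminate after the $t$-th episode, then both (\ref{eqn:stop-criterion-pos}) and (\ref{eqn:stop-criterion-neg}) fail at $t$. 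This means
\begin{align*}
\underline{V}^t-(C-1)(\overline{V}^t-\underline{V}^t)\leq \mu_0 \quad\text{and}\quad \overline{V}^t\geq\mu_0 .
\end{align*}
The budget condition only adds further reasons to stop, so it plays no role here.

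\textbf{Positive case.} As in the derivation of (\ref{eqn:positive-stopping-suggest-interval}), rewrite the failure of (\ref{eqn:stop-criterion-pos}) as
\begin{align*}
C(\overline{V}^t-\underline{V}^t)\geq \overline{V}^t-\mu_0 .
\end{align*}
Proposition \ref{proposition:Vbar-Vunder-bound-V} gives $\overline{V}^t\geq V^*$, so the right-hand side is at least $V^*-\mu_0>0$. Dividing by $C$ yields the claim.

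\textbf{Negative case.} Failure of (\ref{eqn:stop-criterion-neg}) gives $\overline{V}^t\geq\mu_0$. We also need $\underline{V}^t\leq V^*$. This follows from Proposition \ref{proposition:Vbar-Vunder-bound-V} applied to the fixed policy $\pi=\overline{\pi}^t(\cdot;\delta_k)$, which gives $\underline{V}^{t,\pi}\leq V^{\pi}_0(s_0)\leq V^*$. The proposition is stated for rounds $h\geq1$, so it must be extended to round $0$. That extension is immediate: the round-$0$ minimization over $\mathcal{C}_0^t$ contains the true $p_0$ on $\mathcal{E}(\delta_k)$. Subtracting, $\overline{V}^t-\underline{V}^t\geq \mu_0-V^*$.

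\textbf{Possible subtlety.} The step needing the most care is the round-$0$ extension together with the policy-dependence. The policy $\overline{\pi}^t$ is data-dependent, but Lemma \ref{lemma:Boundness-Q-V-function} holds simultaneously for every policy on the good event, so this is harmless. Otherwise the proof is two lines per case.
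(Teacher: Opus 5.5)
Your proof is correct and follows the paper's argument step for step: in the positive case you rewrite the failure of (\ref{eqn:stop-criterion-pos}) as $C(\overline{V}^t-\underline{V}^t)\geq \overline{V}^t-\mu_0$ and lower-bound the right side using $\overline{V}^t\geq V^*$, and in the negative case you combine $\overline{V}^t\geq\mu_0$ with $\underline{V}^t\leq V^{\overline{\pi}^t}_0(s_0)\leq V^*$. Your remark about extending Proposition~\ref{proposition:Vbar-Vunder-bound-V} to round $0$ points to something the paper also relies on without comment: its concentration event must cover the round-$0$ transition $p_0$. Flagging it is reasonable, but it is not a departure from the paper's route.
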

\begin{proof}[Proof of Lemma \ref{lemma:nonstopping-implies-error-bar-lower-bound}]

    If the instance is positive, Algorithm \ref{alg:Oracle-BPI_UCRL} doesn't terminate after sampling $t$-th episode means 
    \begin{align*}
        & \underline{V}^{t,\overline{\pi}^t(\cdot;\delta_k)}_0(s_0;\delta_k) - (C-1)(\overline{V}^t_0(s_0;\delta_k)-\underline{V}^{t,\overline{\pi}^t(\cdot;\delta_k)}_0(s_0;\delta_k)) \leq  \mu_0\\
        \Leftrightarrow & C(\overline{V}^t_0(s_0;\delta_k)-\underline{V}^{t,\overline{\pi}^t(\cdot;\delta_k)}_0(s_0;\delta_k)) \geq \overline{V}^t_0(s_0;\delta_k)-\mu_0\\
        \Rightarrow & C(\overline{V}^t_0(s_0;\delta_k)-\underline{V}^{t,\overline{\pi}^t(\cdot;\delta_k)}_0(s_0;\delta_k)) \geq V^{*}_0(s_0)-\mu_0\\
        \Leftrightarrow & \overline{V}^t_0(s_0;\delta_k)-\underline{V}^{t,\overline{\pi}^t(\cdot;\delta_k)}_0(s_0;\delta_k) \geq \frac{V^{*}_0(s_0)-\mu_0}{C}.
    \end{align*}
    The second last line is by $\overline{V}^t_0(s_0;\delta_k)\geq V^{*}_0(s_0)$, guarantee by the $\mathcal{E}(\delta_k)\cap\mathcal{E}^{\text{cnt}}(\delta_k)$, Lemma \ref{lemma:Boundness-Q-V-function} and Proposition \ref{proposition:Vbar-Vunder-bound-V}.

    If the instance is negative, Algorithm \ref{alg:Oracle-BPI_UCRL} doesn't terminate after sampling $t$-th episode means $\overline{V}^t_0(s_0;\delta_k) \geq \mu_0$, which is equivalent to
    \begin{align*}
        & \overline{V}^t_0(s_0;\delta_k) \geq \mu_0\\
        \Leftrightarrow & \overline{V}^t_0(s_0;\delta_k)-\underline{V}^{t,\overline{\pi}^t(\cdot;\delta_k)}_0(s_0;\delta_k) \geq \mu_0-\underline{V}^{t,\overline{\pi}^t(\cdot;\delta_k)}_0(s_0;\delta_k)\\
        \Rightarrow & \overline{V}^t_0(s_0;\delta_k)-\underline{V}^{t,\overline{\pi}^t(\cdot;\delta_k)}_0(s_0;\delta_k) \geq \mu_0 - V^{*}_0(s_0).
    \end{align*}
    The last line is by $V^{*}_0(s_0) \geq V^{\overline{\pi}^t(\cdot;\delta_k)}_0(s_0) 
    \geq \underline{V}^{t,\overline{\pi}^t(\cdot;\delta_k)}_0(s_0;\delta_k)$, guarantee by the $\mathcal{E}(\delta_k)\cap\mathcal{E}^{\text{cnt}}(\delta_k)$, Lemma \ref{lemma:Boundness-Q-V-function} and Proposition \ref{proposition:Vbar-Vunder-bound-V}.
\end{proof}

To utilize Lemma \ref{lemma:nonstopping-implies-error-bar-lower-bound} to prove an implicit upper bound for the length of a phse $k\geq \kappa$, we introduce the following Lemma \ref{lemma:recursive-gap-Q-1}. Lemma \ref{lemma:recursive-gap-Q-1} can be considered an intermediate step for the proof of implicit upper bound in Lemma \ref{lemma:lower-bound-of-sum-average-radius}.
\begin{lemma}
    \label{lemma:recursive-gap-Q-1}
    For any round $t$ during phase $k\geq \kappa$, $\forall s,a,h$, we have
    \begin{align*}
        & \overline{Q}_h^t(s,a;\delta_k)-\underline{Q}_h^{t, \overline{\pi}^t(\cdot;\delta_k)}(s,a;\delta_k)\\
        \leq & 4(H-h)\min\left\{\sqrt{\frac{\beta_p(n_h^t(s,a),\delta_k)}{n_h^t(s,a)}}, 1\right\} + \sum_{s'}p_h(s'|s,a)\big(\overline{V}^t_{h+1}(s';\delta_k) - \underline{V}_{h+1}^{t,\overline{\pi}^t(\cdot;\delta_k)}(s';\delta_k)\big).
    \end{align*}
\end{lemma}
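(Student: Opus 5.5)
Our plan is to expand both $Q$-functions at $(s,a,h)$ through their one-step Bellman definitions and compare each optimistic/pessimistic kernel with the true kernel $p_h(\cdot|s,a)$. Write $\bar p=\overline{p}_h^{t}(s,a;\delta_k)$ for the maximizing kernel in $\overline{Q}_h^t$ and $\underline p=\underline{p}_h^{t,\overline{\pi}^t}(s,a;\delta_k)$ for the minimizing kernel in $\underline{Q}_h^{t,\overline{\pi}^t}$. Both lie in $\mathcal{C}_h^t(s,a;\delta_k)$. The rewards $r_h(s,a)$ cancel, and adding and subtracting $p_h$ gives
\begin{align*}
\overline{Q}_h^t-\underline{Q}_h^{t,\overline{\pi}^t}
= \sum_{s'}(\bar p-p_h)(s')\overline{V}^t_{h+1}(s')+\sum_{s'}(p_h-\underline p)(s')\underline{V}^{t,\overline{\pi}^t}_{h+1}(s')+\sum_{s'}p_h(s')\big(\overline{V}^t_{h+1}-\underline{V}^{t,\overline{\pi}^t}_{h+1}\big)(s').
\end{align*}
The last sum is exactly the recursive term in the statement. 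It therefore remains to show that the first two sums are each at most $2(H-h)\min\{\sqrt{\beta_p/n},1\}$.

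For these deviation terms we use Hölder's inequality: $|\sum_{s'}(q-p_h)(s')f(s')|\le \|q-p_h\|_1\|f\|_\infty$. We have $0\le \underline{V}_{h+1}^{t,\overline{\pi}^t}\le \overline{V}^t_{h+1}\le H-h$, since rewards lie in $[0,1]$ and there are $H-h$ remaining rounds. We can sharpen this by centering $f$ at $(H-h)/2$, because $\sum_{s'}(q-p_h)(s')=0$. Next we bound $\|q-p_h\|_1$ for $q\in\mathcal{C}_h^t$. On $\mathcal{E}(\delta_k)$, which holds for $k\ge\kappa$ by the nesting of events, both $\mathrm{KL}(\hat p_h^t,q)$ and $\mathrm{KL}(\hat p_h^t,p_h)$ are at most $\beta_p(n,\delta_k)/n$. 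Pinsker's inequality then gives $\|\hat p-q\|_1\le\sqrt{2\beta_p/n}$ and $\|\hat p-p_h\|_1\le\sqrt{2\beta_p/n}$, so by the triangle inequality $\|q-p_h\|_1\le 2\sqrt{2\beta_p/n}$. Combined with the centered sup-norm bound $(H-h)/2$, each deviation term is at most $\sqrt2(H-h)\sqrt{\beta_p/n}\le 2(H-h)\sqrt{\beta_p/n}$. Summing the two terms gives $4(H-h)\sqrt{\beta_p/n}$.

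For the truncation at $1$, we use the trivial bound $\|q-p_h\|_1\le 2$, which with the centered sup-norm bound gives each term at most $(H-h)$. This is well within $2(H-h)$. The same trivial bound covers the case $n_h^t(s,a)=0$: there the confidence set is the whole simplex, and the convention $\beta_p/n=\infty$ makes the minimum equal to $1$.

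The main obstacle is bookkeeping rather than ideas. First, we must check that the KL confidence set is correctly oriented for Pinsker's inequality; it is symmetric in $\ell_1$, so this works. Second, we must confirm that $\overline{V}^t_{h+1}$ and $\underline{V}_{h+1}^{t,\overline{\pi}^t}$ are bounded by $H-h$ as computed from the recursion, with both starting at $0$ at round $H+1$. Third, the indexing convention matters: rounds $h+1,\dots,H$ give $H-h$ steps, and for $h=0$ the bound $H$ is still covered by $H-h$. The constant $4$ leaves enough slack that the cruder uncentered bound $\|f\|_\infty\le H-h$ also works, since $2\sqrt2\,(H-h)\sqrt{\beta_p/n}\le 4(H-h)\sqrt{\beta_p/n}$ per term. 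In the write-up we will use the centered version to keep the two-term sum under $4(H-h)$.
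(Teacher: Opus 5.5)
Your proposal is correct and follows the paper's route: add and subtract $p_h(\cdot|s,a)$ to isolate the recursive term, bound the two kernel-deviation terms by an $\ell_1$--$\ell_\infty$ inequality, apply Pinsker on $\mathcal{E}(\delta_k)$, and use $\|q-p_h\|_1\le 2$ for the truncation at $1$.

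The one substantive difference is your centering step, and it is not cosmetic. The paper bounds each deviation term by $\|q-p_h\|_1(H-h)$ without centering. It reaches the constant $4$ only by quoting Pinsker as $\|p-q\|_1\le\sqrt{\tfrac12\text{KL}(p,q)}$, whereas the correct form is $\|p-q\|_1\le\sqrt{2\,\text{KL}(p,q)}$. Since the confidence set is a KL ball around $\hat p_h^t$, bounding $\|q-p_h\|_1$ requires passing through $\hat p_h^t$, which doubles the bound, exactly as in your argument. With the correct Pinsker constant and no centering, the paper's chain therefore yields $4\sqrt{2}(H-h)\sqrt{\beta_p/n}$. Your use of $\sum_{s'}(q-p_h)(s')=0$ to replace $\|f\|_\infty\le H-h$ by $\|f-\tfrac{H-h}{2}\|_\infty\le\tfrac{H-h}{2}$ is what actually delivers the stated constant $4$.

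For the same reason, drop your closing remark that the uncentered bound ``also works.'' The budget $4(H-h)\sqrt{\beta_p/n}$ is shared by two deviation terms, so each gets $2(H-h)\sqrt{\beta_p/n}$. The uncentered per-term bound $2\sqrt{2}(H-h)\sqrt{\beta_p/n}$ exceeds that, so the two terms sum to $4\sqrt{2}(H-h)\sqrt{\beta_p/n}$. The centering is therefore essential, not optional.
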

\begin{proof}[Proof of Lema \ref{lemma:recursive-gap-Q-1}]
    By the definition of $\overline{Q}_h^t(s,a)$, $\underline{Q}_h^{t, \overline{\pi}^t(\cdot;\delta_k)}(s,a)$, we have
    \begin{align}
        & \overline{Q}_h^t(s,a)-\underline{Q}_h^{t, \overline{\pi}^t(\cdot;\delta_k)}(s,a)\notag\\
        = & \sum_{s'} \overline{p}_h^t (s'|s,a)\overline{V}_{h+1}^{t}(s';\delta_k) - \sum_{s'} \underline{p}_h^{t,\overline{\pi}^t} (s'|s,a)\underline{V}_{h+1}^{t, \overline{\pi}^t(\cdot;\delta_k)}(s';\delta_k)\notag\\
        = & \sum_{s'} \overline{p}_h^t (s'|s,a)\overline{V}_{h+1}^{t}(s';\delta_k) - \sum_{s'} p_h (s'|s,a)\overline{V}_{h+1}^{t}(s';\delta_k) + \notag\\
        & \sum_{s'} p_h (s'|s,a)\overline{V}_{h+1}^{t}(s';\delta_k) -  \sum_{s'} p_h (s'|s,a)\underline{V}_{h+1}^{t, \overline{\pi}^t(\cdot;\delta_k)}(s';\delta_k) + \notag\\
        & \sum_{s'} p_h (s'|s,a)\underline{V}_{h+1}^{t, \overline{\pi}^t(\cdot;\delta_k)}(s';\delta_k) - \sum_{s'} \underline{p}_h^{t,\overline{\pi}^t(s';\delta_k)} (s'|s,a)\underline{V}_{h+1}^{t, \overline{\pi}^t(\cdot;\delta_k)}(s';\delta_k)\notag\\
        \leq & \|\overline{p}_h^t(\cdot|s,a)-p_h(\cdot|s,a) \|_1(H-h) + \sum_{s'} p_h (s'|s,a)\big(\overline{V}_{h+1}^{t}(s';\delta_k)-\underline{V}_{h+1}^{t, \overline{\pi}^t(\cdot;\delta_k)}(s';\delta_k)\big) + \notag\\
        & \|p_h(\cdot|s,a)-\underline{p}_h^{t,\overline{\pi}^t(s';\delta_k)}(\cdot|s,a) \|_1(H-h)\label{eqn:boundness-Q-V-H}\\
        \leq & 4(H-h)\min\left\{\sqrt{\frac{\beta_p(n_h^t(s,a),\delta_k)}{n_h^t(s,a)}},1\right\} + \sum_{s'} p_h (s'|s,a)\big(\overline{V}_{h+1}^{t}(s';\delta_k)-\underline{V}_{h+1}^{t, \overline{\pi}^t(\cdot;\delta_k)}(s';\delta_k)\big)\label{eqn:pinsker-inequality}
    \end{align}
    Step (\ref{eqn:boundness-Q-V-H}) is by the fact that $\overline{V}_{h+1}^{t}(s';\delta_k), \underline{V}_{h+1}^{t, \overline{\pi}^t(\cdot;\delta_k)}(s';\delta_k)\leq H$. Step (\ref{eqn:pinsker-inequality}) is by the fact that $\|p(\cdot)-q(\cdot)\|_1\leq 2$, and the Pinsker's Inequality $\|p(\cdot)-q(\cdot)\|_1\leq \sqrt{\frac{1}{2}\text{KL}(p, q)}$, as $k\geq \kappa$ suggests that $\overline{p}_h^t(\cdot|s,a), \underline{p}_h^{t,\overline{\pi}^t}(\cdot|s,a)\in \mathcal{C}_h^t(s,a;\delta_k)$.
\end{proof}

Given Lemma \ref{lemma:recursive-gap-Q-1}, we can immediately derive the following proposition by taking $a=\overline{\pi}^t_h(s;\delta_k)$.
\begin{proposition}
    \label{proposition:recursive-gap-V-1}
    For any round $t$ during phase $k\geq \kappa$, $\forall s,a,h$, we have
    \begin{align*}
        & \overline{V}_h^t(s;\delta_k)-\underline{V}_h^{t, \overline{\pi}^t(\cdot;\delta_k)}(s;\delta_k)\\
        \leq & 4(H-h)\min\left\{\sqrt{\frac{\beta_p(n_h^t(s,\overline{\pi}^t_h(s)),\delta_k)}{n_h^t(s,\overline{\pi}^t_h(s))}}, 1\right\} + \\
        & \sum_{s'}p_h(s'|s,\overline{\pi}^t_h(s;\delta_k))\big(\overline{V}^t_{h+1}(s';\delta_k) - \underline{V}_{h+1}^{t, \overline{\pi}^t(\cdot;\delta_k)}(s';\delta_k)\big).
    \end{align*}
\end{proposition}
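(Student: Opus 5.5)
This follows by specializing Lemma~\ref{lemma:recursive-gap-Q-1} to the optimistic action at $(s,h)$. Fix a round $t$ in phase $k\geq\kappa$, a state $s$, and a round $h$, and write $a^\star:=\overline{\pi}^t_h(s;\delta_k)$.

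First, I unpack the definitions. By (\ref{eqn:overline-pi^t-definition}), $a^\star$ maximizes $\overline{Q}_h^t(s,\cdot;\delta_k)$, so
\begin{align*}
\overline{V}_h^t(s;\delta_k)=\overline{Q}_h^t(s,a^\star;\delta_k).
\end{align*}
The pessimistic value of the fixed policy $\overline{\pi}^t(\cdot;\delta_k)$ is defined by evaluating its own action:
\begin{align*}
\underline{V}_h^{t,\overline{\pi}^t(\cdot;\delta_k)}(s;\delta_k)=\underline{Q}_h^{t,\overline{\pi}^t(\cdot;\delta_k)}(s,a^\star;\delta_k).
\end{align*}
Both identities hold directly from the definitions in Appendix~\ref{sec:appendix-notations}; no optimization over actions is involved on the pessimistic side.

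Second, I subtract these two identities, which gives
\begin{align*}
\overline{V}_h^t(s;\delta_k)-\underline{V}_h^{t,\overline{\pi}^t(\cdot;\delta_k)}(s;\delta_k)=\overline{Q}_h^t(s,a^\star;\delta_k)-\underline{Q}_h^{t,\overline{\pi}^t(\cdot;\delta_k)}(s,a^\star;\delta_k).
\end{align*}
Applying Lemma~\ref{lemma:recursive-gap-Q-1} with $a=a^\star$ bounds the right-hand side by the bonus term $4(H-h)\min\{\sqrt{\beta_p(n_h^t(s,a^\star),\delta_k)/n_h^t(s,a^\star)},1\}$ plus the propagated term $\sum_{s'}p_h(s'|s,a^\star)\big(\overline{V}^t_{h+1}(s';\delta_k)-\underline{V}^{t,\overline{\pi}^t(\cdot;\delta_k)}_{h+1}(s';\delta_k)\big)$. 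This is exactly the claimed bound. The hypothesis $k\geq\kappa$ is used only to make Lemma~\ref{lemma:recursive-gap-Q-1} applicable.

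There is no real obstacle here. The one point to check is that the maximizing action used in $\overline{V}_h^t$ and the action taken by $\overline{\pi}^t_h$ in the pessimistic evaluation are the same $a^\star$, which holds because $\overline{\pi}^t$ is defined as the argmax of $\overline{Q}^t$ with the same $\delta_k$. I also note that the quantifier ``$\forall a$'' in the statement is vacuous, since the bound does not depend on $a$.
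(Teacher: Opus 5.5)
Your proposal is correct and is the same argument as the paper, which obtains this proposition directly from Lemma~\ref{lemma:recursive-gap-Q-1} by taking $a=\overline{\pi}^t_h(s;\delta_k)$. The two identities you spell out, $\overline{V}^t_h(s;\delta_k)=\overline{Q}^t_h(s,a^\star;\delta_k)$ and $\underline{V}^{t,\overline{\pi}^t(\cdot;\delta_k)}_h(s;\delta_k)=\underline{Q}^{t,\overline{\pi}^t(\cdot;\delta_k)}_h(s,a^\star;\delta_k)$, are exactly what the paper leaves implicit.
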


By repeatedly apply Proposition \ref{proposition:recursive-gap-V-1}, we can prove an upper bound for the $\overline{V}^t_0(s_0;\delta_k)-\underline{V}^{t, \overline{\pi}^t(\cdot;\delta_k)}_0(s_0;\delta_k)$, which is the following Lemma.
\begin{lemma}
    \label{lemma:recursive-relationship-V_0-gap}
    For any round $t$ during phase $k\geq \kappa$, meaning that $\mathcal{E}(\delta)\cap\mathcal{E}^{\text{cnt}}(\delta)$ holds, we have
    \begin{align*}
        & \overline{V}^t_0(s_0;\delta_k)-\underline{V}^{t, \overline{\pi}^t(\cdot;\delta_k)}_0(s_0;\delta_k) \\
        \leq & 4H\min\left\{\sqrt{\frac{\beta_p(t,\delta_k)}{t}}, 1\right\} + 4\sum_{h=1}^{H}(H-h)\sum_{s',a'}p_h^{\overline{\pi}^t}(s',a')\min\left\{\sqrt{\frac{\beta_p(n_h^t(s',a'),\delta_k)}{n_h^t(s',a')}}, 1\right\}
    \end{align*}
\end{lemma}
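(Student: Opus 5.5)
The plan is to unroll Proposition~\ref{proposition:recursive-gap-V-1} along the trajectory distribution of $\overline{\pi}^t:=\overline{\pi}^t(\cdot;\delta_k)$, starting from the fictitious round $0$. Write $G_h(s):=\overline{V}^t_h(s;\delta_k)-\underline{V}^{t,\overline{\pi}^t}_h(s;\delta_k)$, with $G_{H+1}\equiv 0$.

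\textbf{Round $0$.} The only state--action pair is $(s_0,a_0)$. Its count is $n_0^t(s_0,a_0)=t$, since every exploration episode passes through it. Applying Lemma~\ref{lemma:recursive-gap-Q-1} at $h=0$, we use the trivial bound $\overline{V}_1,\underline{V}_1\le H$ on the $\ell_1$ term. Together with Pinsker's inequality on the event $\mathcal{E}(\delta_k)$, this gives
\begin{align*}
G_0(s_0)\le 4H\min\left\{\sqrt{\beta_p(t,\delta_k)/t},1\right\}+\sum_{s'}p_0(s')G_1(s').
\end{align*}
Here the factor is $H$ rather than $H-0$, because the value functions from round $1$ onward are bounded by $H$. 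This gives exactly the first term of the claim.

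\textbf{Induction over $h$.} For $h=1,\dots,H$, substitute Proposition~\ref{proposition:recursive-gap-V-1} for $G_h(s')$ inside the expectation. By the tower property, the expectation over $s'$ under the state distribution of $\overline{\pi}^t$ at round $h$ is the weighted sum over $(s',a')$ with weights $p_h^{\overline{\pi}^t}(s',a')=\Pr(S_h=s',\overline{\pi}^t_h(s')=a')$. The next-state term $\sum_{s''}p_h(s''|s',a')G_{h+1}(s'')$ then becomes the round-$(h+1)$ occupancy-weighted expectation of $G_{h+1}$. I will formalize this by a backward induction on the claim
\begin{align*}
\mathbb{E}^{\overline{\pi}^t}[G_h(S_h)]\le 4\sum_{\ell=h}^{H}(H-\ell)\sum_{s',a'}p_\ell^{\overline{\pi}^t}(s',a')\min\{\cdots\}.
\end{align*}
The base case is $G_{H+1}=0$.

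\textbf{Main obstacle.} The only subtle point is that the occupancy measure must be the one under the \emph{true} kernel $p$. This holds because Proposition~\ref{proposition:recursive-gap-V-1} propagates through $p_h$, not through the optimistic or pessimistic kernels. I also need to check that the policy is the deterministic $\overline{\pi}^t$, so the action is fixed by the state.

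The remaining details are routine. Terms with $n_h^t=0$ are covered by the $\min\{\cdot,1\}$ cap. The event $k\ge\kappa$ guarantees that the true kernel lies in the confidence sets, as already used in Lemma~\ref{lemma:recursive-gap-Q-1}.
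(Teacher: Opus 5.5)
Your proposal is correct and is essentially the paper's argument. The paper likewise applies Proposition~\ref{proposition:recursive-gap-V-1} at round $0$ using $n_0^t(s_0,a_0)=t$, then unrolls the recursion along the true-kernel occupancy measure $p_h^{\overline{\pi}^t}$. The only difference is that the paper runs the induction forward in $h'$, carrying a residual term $\sum_{s',a'}p_{h'}^{\overline{\pi}^t}(s',a')\,G_{h'}(s')$, whereas you induct backward on $\mathbb{E}^{\overline{\pi}^t}[G_h(S_h)]$ starting from $G_{H+1}=0$. (Your remark about ``$H$ rather than $H-0$'' is vacuous, since the round-$0$ factor $4(H-h)$ at $h=0$ is already $4H$.)
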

\begin{proof}[Proof of Lemma \ref{lemma:recursive-relationship-V_0-gap}]
    Given $\overline{\pi}^t(\cdot;\delta_k)$ and the definition of $p_h^{\overline{\pi}^t(\cdot;\delta_k)}(s',a'):=\Pr(S_{t,h}=s,\overline{\pi}^t_h(s;\delta_k)=a)$.

    We use induction to prove
    \begin{align}
        & \overline{V}^t_0(s_0;\delta_k)-\underline{V}^{t,\overline{\pi}^t(\cdot;\delta_k)}_0(s_0;\delta_k)\notag \\
        \leq & 4H\min\left\{\sqrt{\frac{\beta_p(t,\delta_k)}{t}}, 1\right\} + 4\sum_{h=1}^{h'-1}(H-h)\sum_{s',a'}p_h^{\overline{\pi}^t}(s',a')\min\left\{\sqrt{\frac{\beta_p(n_h^t(s',a'),\delta_k)}{n_h^t(s',a')}}, 1\right\} + \notag\\
        & \sum_{s',a'}p_{h'}^{\overline{\pi}^t}(s',a')\big(\overline{V}^t_{h'}(s';\delta_k) - \underline{V}_{h'}^{t,\overline{\pi}^t(\cdot;\delta_k)}(s';\delta_k)\big)\label{eqn:V_0-gap-induction-h'}
    \end{align}
    holds for all $h'=1,\cdots,H+1$.

    For $h'=1$, by the Proposition \ref{proposition:recursive-gap-V-1}, we know
    \begin{align*}
        & \overline{V}_0^t(s_0;\delta_k)-\underline{V}_0^{t, \overline{\pi}^t(\cdot;\delta_k)}(s_0;\delta_k)\\
        \leq & 4H\min\left\{\sqrt{\frac{\beta_p(n_h^t(s_0,a_0),\delta_k)}{n_h^t(s_0,a_0)}}, 1\right\} + \sum_{s'}p_0(s')\big(\overline{V}^t_{1}(s';\delta_k) - \underline{V}_{1}^{t,\overline{\pi}^t(\cdot;\delta_k)}(s';\delta_k)\big)\\
        = & 4H\min\left\{\sqrt{\frac{\beta_p(t,\delta_k)}{t}}, 1\right\} + \sum_{s'}p_0(s')\big(\overline{V}^t_{1}(s';\delta_k) - \underline{V}_{1}^{t,\overline{\pi}^t(\cdot;\delta_k)}(s';\delta_k)\big)\\
        = & 4H\min\left\{\sqrt{\frac{\beta_p(t,\delta_k)}{t}}, 1\right\} + \sum_{s',a'}p_0(s')\mathds{1}(a'=\overline{\pi}^t_1(s'))\big(\overline{V}^t_{1}(s';\delta_k) - \underline{V}_{1}^{t,\overline{\pi}^t(\cdot;\delta_k)}(s';\delta_k)\big)\\
        = & 4H\min\left\{\sqrt{\frac{\beta_p(t,\delta_k)}{t}}, 1\right\} + \sum_{s',a'}p_{1}^{\overline{\pi}^t}(s',a')\big(\overline{V}^t_{1}(s';\delta_k) - \underline{V}_{1}^{t,\overline{\pi}^t(\cdot;\delta_k)}(s';\delta_k)\big).
    \end{align*}
    The third line is by $n_h^t(s_0,a_0)=t$ holds for all $t$, the last line is by the definition $p_1^{\overline{\pi}^t}(s,a) = \Pr(S_1=s,\overline{\pi}^t_h(s)=a)$.

    Given (\ref{eqn:V_0-gap-induction-h'}) holds for $h'$, apply the Proposition \ref{proposition:recursive-gap-V-1} again, we can derive
    \begin{align*}
        & \overline{V}^t_0(s_0;\delta_k)-\underline{V}^{t,\overline{\pi}^t(\cdot;\delta_k)}_0(s_0;\delta_k) \\
        \leq & 4H\min\left\{\sqrt{\frac{\beta_p(t,\delta_k)}{t}}, 1\right\} + 4\sum_{h=1}^{h'-1}(H-h)\sum_{s',a'}p_h^{\overline{\pi}^t(\cdot;\delta_k)}(s',a')\min\left\{\sqrt{\frac{\beta_p(n_h^t(s',a'),\delta_k)}{n_h^t(s',a')}}, 1\right\} + \\
        & \sum_{s',a'}p_{h'}^{\overline{\pi}^t(\cdot;\delta_k)}(s',a')\left(4(H-h')\min\left\{\sqrt{\frac{\beta_p(n_{h'}^t(s',\overline{\pi}^t_{h'}(s';\delta_k)),\delta_k)}{n_{h'}^t(s',\overline{\pi}^t_{h'}(s';\delta_k))}}, 1\right\} + \right. \\
        & \left. \sum_{s''}p_{h'}(s''|s',\overline{\pi}^t_{h'}(s';\delta_k))\big(\overline{V}^t_{h'+1}(s'';\delta_k) - \underline{V}_{h'+1}^{t,\overline{\pi}^t(\cdot;\delta_k)}(s'';\delta_k)\big)\right)\\
        = & 4H\min\left\{\sqrt{\frac{\beta_p(t,\delta_k)}{t}}, 1\right\} + 4\sum_{h=1}^{h'}(H-h)\sum_{s',a'}p_h^{\overline{\pi}^t(\cdot;\delta_k)}(s',a')\min\left\{\sqrt{\frac{\beta_p(n_h^t(s',a'),\delta_k)}{n_h^t(s',a')}}, 1\right\} + \\
        & \sum_{s',a'}\sum_{s''}p_{h'}^{\overline{\pi}^t(\cdot;\delta_k)}(s',a')p_{h'}(s''|s',\overline{\pi}^t_{h'}(s';\delta_k))\big(\overline{V}^t_{h'+1}(s'';\delta_k) - \underline{V}_{h'+1}^{t,\overline{\pi}^t}(s'';\delta_k)\big)\\
        = & 4H\min\left\{\sqrt{\frac{\beta_p(t,\delta_k)}{t}}, 1\right\} + 4\sum_{h=1}^{h'}(H-h)\sum_{s',a'}p_h^{\overline{\pi}^t}(s',a')\min\left\{\sqrt{\frac{\beta_p(n_h^t(s',a'),\delta_k)}{n_h^t(s',a')}}, 1\right\} + \\
        & \sum_{s'}\sum_{s''}p_{h'}^{\overline{\pi}^t(\cdot;\delta_k)}(s',\overline{\pi}^t_{h'}(s';\delta_k))p_{h'}(s''|s',\overline{\pi}^t_{h'}(s';\delta_k))\big(\overline{V}^t_{h'+1}(s'';\delta_k) - \underline{V}_{h'+1}^{t,\overline{\pi}^t}(s'';\delta_k)\big)\\
        = & 4H\min\left\{\sqrt{\frac{\beta_p(t,\delta_k)}{t}}, 1\right\} + 4\sum_{h=1}^{h'}(H-h)\sum_{s',a'}p_h^{\overline{\pi}^t(\cdot;\delta_k)}(s',a')\min\left\{\sqrt{\frac{\beta_p(n_h^t(s',a'),\delta_k)}{n_h^t(s',a')}}, 1\right\} + \\
        & \sum_{s''}\Pr_{\overline{\pi}^t(\cdot;\delta_k)}(S_{h'+1}=s'')\big(\overline{V}^t_{h'+1}(s'';\delta_k) - \underline{V}_{h'+1}^{t,\overline{\pi}^t(\cdot;\delta_k)}(s'';\delta_k)\big)\\
        = & 4H\min\left\{\sqrt{\frac{\beta_p(t,\delta_k)}{t}}, 1\right\} + 4\sum_{h=1}^{h'}(H-h)\sum_{s',a'}p_h^{\overline{\pi}^t(\cdot;\delta_k)}(s',a')\min\left\{\sqrt{\frac{\beta_p(n_h^t(s',a'),\delta_k)}{n_h^t(s',a')}}, 1\right\} + \\
        & \sum_{s',a'}p^{\overline{\pi}^t(\cdot;\delta_k)}_{h'+1}(s',a')\big(\overline{V}^t_{h'+1}(s'';\delta_k) - \underline{V}_{h'+1}^{t,\overline{\pi}^t(\cdot;\delta_k)}(s'';\delta_k)\big),
    \end{align*}
    which completed the induction and (\ref{eqn:V_0-gap-induction-h'}) holds for all $h'=1,2,\cdots,H+1$.
    
    By taking $h'=H$ and utilizing the fact that $\overline{V}^t_{H+1}(s;\delta_k)=0, \underline{V}_{H+1}^{t,\overline{\pi}^t(\cdot;\delta_k)}(s;\delta_k)=0, \forall s$, we complete the proof of Lemma \ref{lemma:recursive-relationship-V_0-gap}.
\end{proof}

\begin{definition}
    \label{definition:t_k-split-the-pulling-process-into-phases}
    Denote $t_{k-1}$ as the starting t index of phase $k$ for phase index $k\geq 1$. For $k\geq 2$, $t_{k-1}$ is a random variable. For consistency, we set $t_0 = 0$.
\end{definition}
What's more, we know $\neg$(\ref{eqn:stop-criterion-pos}) and $\neg$(\ref{eqn:stop-criterion-neg}) must hold for episode index $t=t_{k-1}, t_{k-1}+1,\cdots, t_k-1$. It is indeed possible that $t_{k-1}=t_k$ for some sampling path $\{\{S_{t,h}, A_{t,h}\}_{h=1}^{H}\}_{t=1}^{+\infty}$ and index $k\in\mathbb{N}$. In this case, the history $\mathcal{H}^{\text{ee}}$ with length $|\mathcal{H}^{\text{ee}}| = t_{k-1}$ guarantees that $\underline{V}^{t,\overline{\pi}^t(\cdot;\delta_k)}_0(s_0;\delta_{k+1}) - (C-1)(\overline{V}^t_0(s_0;\delta_{k+1})-\underline{V}^{t,\overline{\pi}^t(\cdot;\delta_k)}_0(s_0;\delta_{k+1})) > \mu_0 $ or $\overline{V}^t_0(s_0;\delta_{k+1}) < \mu_0$. Following this idea, we can combine Lemma \ref{lemma:nonstopping-implies-error-bar-lower-bound} and \ref{lemma:recursive-relationship-V_0-gap} to derive upper bounds for the length of each phase.
\begin{lemma}
    \label{lemma:lower-bound-of-sum-average-radius}
    Consider any phase index $k\geq \kappa$. If the instance is positive, we have
    \begin{align*}
        & (t_{k}-t_{k-1}) \frac{V^*_0(s_0)-\mu}{C}\\
        \leq &  \sum_{t=t_{k-1}}^{t_{k}-1}\sum_{h=0}^H\sum_{(s,a)}p_h^{\overline{\pi}^t(\cdot;\delta_k)}(s,a)4(H-h)\min\left\{\sqrt{\frac{\beta_p(n_h^t(s,a),\delta_{k})}{n_h^t(s,a)}}, 1\right\}.
    \end{align*}
    If the instance is negative, we have
    \begin{align*}
        & (t_{k}-t_{k-1}) (\mu-V^*_0(s_0))\\
        \leq & \sum_{t=t_{k-1}}^{t_{k}-1}\sum_{h=0}^H\sum_{(s,a)}p_h^{\overline{\pi}^t(\cdot;\delta_k)}(s,a)4(H-h)\min\left\{\sqrt{\frac{\beta_p(n_h^t(s,a),\delta_{k})}{n_h^t(s,a)}}, 1\right\}.
    \end{align*}
\end{lemma}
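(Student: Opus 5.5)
The plan is to sum the per-episode inequality over the episodes of phase $k$. For every episode index $t\in\{t_{k-1},\dots,t_k-1\}$, the oracle in phase $k$ has not yet stopped. This means that at history length $t$ neither (\ref{eqn:stop-criterion-pos}) nor (\ref{eqn:stop-criterion-neg}) holds with tolerance $\delta_k$; that is exactly why episode $t+1$ is sampled. Since $k\geq\kappa$ and the events are nested, $\mathcal{E}(\delta_k)\cap\mathcal{E}^{\text{cnt}}(\delta_k)$ holds. Hence Lemma \ref{lemma:nonstopping-implies-error-bar-lower-bound} applies at each such $t$. It gives $\overline{V}^t_0(s_0;\delta_k)-\underline{V}^{t,\overline{\pi}^t(\cdot;\delta_k)}_0(s_0;\delta_k)\geq (V^*_0(s_0)-\mu_0)/C$ in the positive case, and $\geq \mu_0-V^*_0(s_0)$ in the negative case.

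Next I bound the same width from above using Lemma \ref{lemma:recursive-relationship-V_0-gap}. The first term there, $4H\min\{\sqrt{\beta_p(t,\delta_k)/t},1\}$, is the $h=0$ summand of the target expression. At $h=0$ the only pair is $(s_0,a_0)$, with $p_0^{\overline{\pi}^t}(s_0,a_0)=1$, $n_0^t(s_0,a_0)=t$ and weight $4(H-0)$. So the upper bound is exactly $\sum_{h=0}^H\sum_{(s,a)}p_h^{\overline{\pi}^t(\cdot;\delta_k)}(s,a)\,4(H-h)\min\{\sqrt{\beta_p(n_h^t(s,a),\delta_k)/n_h^t(s,a)},1\}$; the $h=H$ term is zero and harmless. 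Chaining the lower and upper bounds gives, for each $t$ in the range, the constant gap quantity on the left bounded by the bonus sum at time $t$.

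Summing over $t=t_{k-1},\dots,t_k-1$ gives $(t_k-t_{k-1})$ times the gap on the left and the claimed double sum on the right. If $t_k=t_{k-1}$, the sum is empty and both sides are $0$, so the inequality holds trivially.

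The step needing care is the boundary and indexing. I must check that the non-stopping condition is evaluated at history length $t$ (before the $(t+1)$-th episode) for all $t$ from $t_{k-1}$ up to $t_k-1$. The first check of phase $k$ is at $t=t_{k-1}$ with the warm-started history, and it uses $\delta_k$, not $\delta_{k-1}$. I also need the budget clause $t\leq T_k^{\text{ee}}-1$ to be irrelevant here, which it is, since it only restricts when sampling happens. Finally, the $n_h^t=0$ case is covered by the $\min\{\cdot,1\}$ convention, which the Pinsker step in Lemma \ref{lemma:recursive-gap-Q-1} already handles.
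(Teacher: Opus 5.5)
Your proposal is correct and matches the paper's proof: for each $t\in\{t_{k-1},\dots,t_k-1\}$ it chains the width lower bound from Lemma~\ref{lemma:nonstopping-implies-error-bar-lower-bound} with the upper bound from Lemma~\ref{lemma:recursive-relationship-V_0-gap}, then sums over $t$. You also fill in details the paper leaves implicit: the $4H\min\{\sqrt{\beta_p(t,\delta_k)/t},1\}$ term is the $h=0$ summand because $n_0^t(s_0,a_0)=t$, the non-stopping check at $t=t_{k-1}$ already uses $\delta_k$, and the case $t_k=t_{k-1}$ holds trivially.
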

\begin{proof}[Proof of Lemma \ref{lemma:lower-bound-of-sum-average-radius}]
    If the instance is positive, by Lemma \ref{lemma:nonstopping-implies-error-bar-lower-bound} and \ref{lemma:recursive-relationship-V_0-gap}, we can conclude
    \begin{align*}
        & \frac{V^{*}_0(s_0)-\mu_0}{C} \\
        \leq & 4H\min\left\{\sqrt{\frac{\beta_p(t,\delta_k)}{t}}, 1\right\} + 4\sum_{h=1}^{H}(H-h)\sum_{s',a'}p_h^{\overline{\pi}^t(\cdot;\delta_k)}(s',a')\min\left\{\sqrt{\frac{\beta_p(n_h^t(s',a'),\delta_k)}{n_h^t(s',a')}}, 1\right\}
    \end{align*}
    holds for all $t=t_{k-1}, t_{k-1}+1,\cdots.t_k-1$. Summing up the above inequalities for $t=t_{k-1}, t_{k-1}+1,\cdots t_k-1$ leads to the inequality for the positive case.

    If the instance is negative, by Lemma \ref{lemma:nonstopping-implies-error-bar-lower-bound} and \ref{lemma:recursive-relationship-V_0-gap}, we can conclude
    \begin{align*}
        & \mu_0-V^{*}_0(s_0) \\
        \leq & 4H\min\left\{\sqrt{\frac{\beta_p(t,\delta_k)}{t}}, 1\right\} + 4\sum_{h=1}^{H}(H-h)\sum_{s',a'}p_h^{\overline{\pi}^t(\cdot;\delta_k)}(s',a')\min\left\{\sqrt{\frac{\beta_p(n_h^t(s',a'),\delta_k)}{n_h^t(s',a')}}, 1\right\}
    \end{align*}
    holds for all $t=t_{k-1}, t_{k-1}+1,\cdots.t_k-1$. Summing up the above inequalities for $t=t_{k-1}, t_{k-1}+1,\cdots t_k-1$ leads to the inequality for the negative case.
\end{proof}
To derive an upper bound for $t_k$, we need are going to prove an upper bound for the right-hand side of inequalities in Lemma \ref{lemma:lower-bound-of-sum-average-radius}. The following Lemma will be useful.
\begin{lemma}[Lemma 7 in \cite{kaufmann2021adaptive}]
    \label{lemma:lower-bound-on-n_h^t(s,a)}

    For any round $t$ during phase $k\geq \kappa$, any pair $(s,a,h)$
    \begin{align*}
        \min\left\{\sqrt{\frac{\beta_p(n_h^t(s,a),\delta_{k})}{n_h^t(s,a)}}, 1\right\}\leq 2\sqrt{\frac{\beta_p(\bar{n}_h^t(s,a),\delta_{k})}{\max\{\bar{n}_h^t(s,a), 1\}}}
    \end{align*}
    holds for all $t<\tau$.
\end{lemma}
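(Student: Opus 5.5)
The plan is to work on the event $\mathcal{E}^{\text{cnt}}(\delta_k)$, which holds for every round of phase $k\geq\kappa$ because the events are nested. On this event $n_h^t(s,a)\geq \frac{1}{2}\bar{n}_h^t(s,a)-\beta^{\text{cnt}}(\delta_k)$. I then split into two cases, according to whether the pseudo-count $\bar{n}:=\bar{n}_h^t(s,a)$ is large compared with $\beta^{\text{cnt}}(\delta_k)$. Throughout, the left-hand side is at most $1$, and when $n_h^t(s,a)=0$ I read it as equal to $1$.

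\textbf{Preliminary facts about $\beta_p$.} First, $x\mapsto\beta_p(x,\delta)$ is nondecreasing. Second, $x\mapsto \beta_p(x,\delta)/x$ is nonincreasing on $(0,\infty)$. For the second fact it suffices to check $x\,\partial_x\beta_p(x,\delta)\leq \beta_p(x,\delta)$. We have $x\,\partial_x\beta_p = (S-1)\frac{x}{S-1+x}\leq S-1$. On the other side, $\beta_p\geq (S-1)\log e = S-1$, because $\log(2SAH/\delta)\geq 0$ and $\log(e(1+x/(S-1)))\geq 1$. Third, $\beta_p(x,\delta)\geq \beta^{\text{cnt}}(\delta)=\log(2SAH/\delta)\geq \log 2$.

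\textbf{Case 1: $\beta^{\text{cnt}}(\delta_k)\leq \bar{n}/4$.} Then $n_h^t(s,a)\geq \bar{n}/4>0$. Since $\beta_p(x)/x$ is nonincreasing and $\beta_p$ is nondecreasing,
\begin{align*}
\frac{\beta_p(n_h^t(s,a),\delta_k)}{n_h^t(s,a)}\leq \frac{\beta_p(\bar{n}/4,\delta_k)}{\bar{n}/4}\leq \frac{4\beta_p(\bar{n},\delta_k)}{\bar{n}}.
\end{align*}
Taking square roots gives the claim. Here $\bar{n}\geq 4\beta^{\text{cnt}}\geq 4\log 2>1$, so $\max\{\bar{n},1\}=\bar{n}$.

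\textbf{Case 2: $\bar{n}<4\beta^{\text{cnt}}(\delta_k)$.} It suffices to show that the right-hand side is at least $1$.
\begin{itemize}
\item If $\bar{n}\geq 1$, then $\beta_p(\bar{n},\delta_k)/\bar{n}\geq \beta^{\text{cnt}}(\delta_k)/\bar{n}>1/4$, so $2\sqrt{\beta_p(\bar{n},\delta_k)/\bar{n}}>1$.
\item If $\bar{n}<1$, the right-hand side equals $2\sqrt{\beta_p(\bar{n},\delta_k)}\geq 2\sqrt{\log 2}>1$.
\end{itemize}
In both sub-cases the inequality holds trivially, since the left-hand side is at most $1$.

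The only genuinely delicate point is the monotonicity of $\beta_p(x)/x$ together with the bookkeeping for $n=0$ and $\bar{n}<1$; the rest is the case split above. The proof therefore reproduces Lemma 7 of \cite{kaufmann2021adaptive} with $\delta$ replaced by $\delta_k$.
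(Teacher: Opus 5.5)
Your proposal is correct and follows essentially the same argument as the paper. You work on $\mathcal{E}^{\text{cnt}}(\delta_k)$ and split on whether $\beta^{\text{cnt}}(\delta_k)\le \bar{n}_h^t(s,a)/4$; in the first case you use that $\beta_p(x,\delta)/x$ is nonincreasing and $\beta_p$ is nondecreasing, and in the second you show the right-hand side is at least $1$. Your check of the monotonicity via $x\,\partial_x\beta_p(x,\delta)\le S-1\le \beta_p(x,\delta)$ is actually more rigorous than the paper's. The paper's derivative expression drops a factor of $t$ and the $(S-1)\log\frac{e}{S-1}$ term, even though its conclusion is right.
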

\begin{proof}[Proof of Lemma \ref{lemma:lower-bound-on-n_h^t(s,a)}]
    Recall the definition $\beta^{\text{cnt}}(\delta) = \log\frac{2SAH}{\delta}$, $\beta_p(t,\delta) = \log\frac{2SAH}{\delta} + (S-1)\log\Big(e(1+\frac{t}{S-1})\Big)$. $k\geq \kappa$ suggests that $n^t_h(s,a)\geq \frac{1}{2}\bar{n}_h^t(s,a)-\beta^{\text{cnt}}(\delta_k)$ holds for all $s,a,h$.

    We prove Lemma \ref{lemma:lower-bound-on-n_h^t(s,a)} in two cases, $\beta^{cnt}(\delta_k)\leq \frac{1}{4}\bar{n}_h^t(s,a)$ or $\beta^{cnt}(\delta_k)> \frac{1}{4}\bar{n}_h^t(s,a)$. 
    
    First, if $\beta^{cnt}(\delta_k)\leq \frac{1}{4}\bar{n}_h^t(s,a)$, we have $n_h^t(s,a) \geq \frac{1}{4}\bar{n}_h^t(s,a)$. Notice that $\beta_p(t,\delta) = \log\frac{2SAH}{\delta} + (S-1)\log\Big(S-1+t\Big)+(S-1)\log\frac{e}{S-1}$, we can derive its partial derivative regarding $t$,
    \begin{align*}
        \frac{\partial\frac{\beta_p(t,\delta)}{t}}{\partial t} = & -\frac{\log\frac{2SAH}{\delta}}{t^2}+\frac{(S-1)\left(\frac{1}{S-1+t} - \log(S-1+t)\right)}{t^2}
    \end{align*}
    For $t\geq 1$, $\frac{1}{S-1+t} - \log(S-1+t)\leq \frac{1}{S}-\log S < 0$ holds for $S\geq 2$. We can conclude $\frac{\partial\frac{\beta_p(t,\delta)}{t}}{\partial t}<0$ for $t\geq 1$. Since $\beta^{cnt}(\delta_k) > 1$ holds for all $k$, we can conclude $\frac{1}{4}\bar{n}_h^t(s,a)>1$. Together with $n_h^t(s,a) \geq \frac{1}{4}\bar{n}_h^t(s,a)$, we have
    \begin{align*}
        \min\left\{\sqrt{\frac{\beta_p(n_h^t(s,a),\delta_k)}{n_h^t(s,a)}}, 1\right\}\leq \sqrt{\frac{\beta_p(n_h^t(s,a),\delta_k)}{n_h^t(s,a)}}\leq \sqrt{\frac{\beta_p(\frac{1}{4}\bar{n}_h^t(s,a),\delta_k)}{\frac{1}{4}\bar{n}_h^t(s,a)}}\leq 2\sqrt{\frac{\beta_p(\bar{n}_h^t(s,a),\delta_k)}{\bar{n}_h^t(s,a)}}.
    \end{align*}
    Again, by the fact that $\frac{4\beta_p(\bar{n}_h^t(s,a),\delta_k)}{\bar{n}_h^t(s,a)}=\frac{4\beta_p(\bar{n}_h^t(s,a),\delta_k)}{\max\{\bar{n}_h^t(s,a), 1\}}$, we can conclude
    \begin{align*}
        \min\left\{\sqrt{\frac{\beta_p(n_h^t(s,a),\delta_{k})}{n_h^t(s,a)}}, 1\right\}\leq 2\sqrt{\frac{\beta_p(\bar{n}_h^t(s,a),\delta_{k})}{\max\{\bar{n}_h^t(s,a), 1\}}},
    \end{align*}
    which suggests the Lemma holds.

    Second, if $\beta^{cnt}(\delta_k)> \frac{1}{4}\bar{n}_h^t(s,a)$, we have $\frac{4\beta^{cnt}(\delta_k)}{\bar{n}_h^t(s,a)}\geq 1$. Meanwhile, $\beta^{cnt}(\delta_k)\geq 1$ by the definition, suggesting that
    \begin{align}
        \min\left\{\sqrt{\frac{\beta_p(n_h^t(s,a),\delta_k)}{n_h^t(s,a)}}, 1\right\}\leq 1 \leq \sqrt{\frac{4\beta^{cnt}(\delta_k)}{\max\{\bar{n}_h^t(s,a), 1\}}}=2\sqrt{\frac{\beta^{cnt}(\delta_k)}{\max\{\bar{n}_h^t(s,a), 1\}}}\label{eqn:beta_p/t-upper-bound}
    \end{align}
    By the definition of $\beta_p$ and $\beta^{\text{cnt}}$, we know $\beta_p(t,\delta_k) > \beta^{\text{cnt}}(\delta_k)$ holds for all $t > 0$. Together with (\ref{eqn:beta_p/t-upper-bound}), we conclude
    \begin{align*}
        \min\left\{\sqrt{\frac{\beta_p(n_h^t(s,a),\delta_k)}{n_h^t(s,a)}}, 1\right\}\leq 2\sqrt{\frac{\beta^{cnt}(\delta_k)}{\max\{\bar{n}_h^t(s,a), 1\}}} < 2\sqrt{\frac{\beta(\bar{n}_h^t(s,a),\delta_k)}{\max\{\bar{n}_h^t(s,a), 1\}}},
    \end{align*}
    which means the Lemma still holds.
\end{proof}

Given all the above Lemmas, we are ready to prove a main theorem, showing that Algorithm \ref{alg:Oracle-BPI_UCRL} is able to output the correct prediction for large enough phase index $k$, with an upper bound of $t_k$.
\begin{theorem}
    \label{theorem:performance-guarantee-of-exploration-oracle}
    Apply Algorithm \ref{alg:Oracle-BPI_UCRL} on an instance $\nu$.

    If $\nu$ is positive, for any phase index $k\geq \max\{\kappa, \lceil\log_2 \frac{9600(H+1)^2}{{\epsilon^{\text{pos}}}^2}\rceil\}$, Algorithm \ref{alg:Oracle-BPI_UCRL} will output a policy $\hat{\pi}_k$ satisfying
    \begin{align*}
        V^{\hat{\pi}_k}_0(s_0)-\mu_0 \geq \frac{C-1}{C}\Big(V^*_0(s_0)-\mu_0\Big)
    \end{align*}
    at the end of phase $k$, with
    \begin{equation}\label{eqn:t_k-pos-upper-good-event-large-T_k-ee}
        \begin{split}
            t_k\leq & \frac{101(H+1)^4SA\log\frac{2SAH}{\delta_k}}{{\epsilon^{\text{pos}}}^2}+ T_{\kappa-1}^{\text{ee}} +\\
            & \frac{400(H+1)^4S^2A}{{\epsilon^{\text{pos}}}^2}\log\left(\frac{402(H+1)^4S^2A \log\frac{2SAH}{\delta_k}}{{\epsilon^{\text{pos}}}^2}\right)+ \frac{400(H+1)^4S^2A}{{\epsilon^{\text{pos}}}^2}\log\left(2T_{\kappa-1}^{\text{ee}}\right).
        \end{split}
    \end{equation}

    If $\nu$ is negative, for any phase index $k\geq\max\{\kappa, \lceil\log_2 \frac{9600(H+1)^2}{{\epsilon^{\text{neg}}}^2}\rceil\}$, Algorithm \ref{alg:Oracle-BPI_UCRL} will output \textsf{None} at the end of phase $k$, with
    \begin{equation}\label{eqn:t_k-neg-upper-good-event-large-T_k-ee}
        \begin{split}
            t_k\leq & \frac{101(H+1)^4SA\log\frac{2SAH}{\delta_k}}{{\epsilon^{\text{neg}}}^2}+ T_{\kappa-1}^{\text{ee}} +\\
            & \frac{400(H+1)^4S^2A}{{\epsilon^{\text{neg}}}^2}\log\left(\frac{402(H+1)^4S^2A \log\frac{2SAH}{\delta_k}}{{\epsilon^{\text{neg}}}^2}\right)+ \frac{400(H+1)^4S^2A}{{\epsilon^{\text{neg}}}^2}\log\left(2T_{\kappa-1}^{\text{ee}}\right)
        \end{split}
    \end{equation}
    Here $\epsilon^{\text{pos}}=\frac{V^{*}_0(s_0)-\mu_0}{C}$, and $\epsilon^{\text{neg}} = \mu_0-V^{*}_0(s_0)$.
\end{theorem}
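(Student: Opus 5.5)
The plan is to combine the per-episode non-stopping inequality of Lemma \ref{lemma:lower-bound-of-sum-average-radius} with a pigeonhole/elliptical-potential style bound on the summed confidence radii, run from $t=0$ rather than from $t_{k-1}$. Fix $k\geq\kappa$; since the events are nested, $\mathcal{E}(\delta_j)\cap\mathcal{E}^{\text{cnt}}(\delta_j)$ holds for every $j\geq\kappa$. Lemma \ref{lemma:output-correctness-under-good-event} then says the oracle in phase $k$ only returns a policy satisfying $V^{\hat\pi_k}_0(s_0)-\mu_0\geq\frac{C-1}{C}(V^*_0(s_0)-\mu_0)$ (positive case), \textsf{None} (negative case), or \textsf{Not Completed}. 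So the content of the theorem is twofold. First, we must show the bound (\ref{eqn:t_k-pos-upper-good-event-large-T_k-ee}) on $t_k$. Second, we must show this bound is strictly below $T_k^{\text{ee}}$ once $k\geq\lceil\log_2\frac{9600(H+1)^2}{\epsilon^2}\rceil$, which excludes \textsf{Not Completed}.

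\textbf{Step 1 (summing the radii).} For the $t_k$ bound I would split the episodes $[0,t_k)$ into $[0,t_{\kappa-1})$ and $[t_{\kappa-1},t_k)$. The first part is at most $T^{\text{ee}}_{\kappa-1}$ by Lemma \ref{lemma:length-of-exploration-exploitation-period}. On the second part, every phase $j\in[\kappa,k]$ satisfies the non-stopping condition. Applying Lemma \ref{lemma:lower-bound-of-sum-average-radius} in each such phase, and using $\delta_j\geq\delta_k$ so that $\beta_p(\cdot,\delta_j)\leq\beta_p(\cdot,\delta_k)$, and summing, gives
\[
(t_k-t_{\kappa-1})\,\epsilon\leq \sum_{t=0}^{t_k-1}\sum_{h=0}^{H}\sum_{s,a}p_h^{\overline{\pi}^t}(s,a)\,4(H-h)\min\Bigl\{\sqrt{\tfrac{\beta_p(n_h^t(s,a),\delta_k)}{n_h^t(s,a)}},1\Bigr\}.
\]
Here $\epsilon$ stands for $\epsilon^{\text{pos}}$ or $\epsilon^{\text{neg}}$, and I extend the sum to start at $t=0$ since all terms are nonnegative. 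Lemma \ref{lemma:lower-bound-on-n_h^t(s,a)} replaces each radius by $2\sqrt{\beta_p(\bar n_h^t,\delta_k)/\max\{\bar n_h^t,1\}}$. I then bound $\beta_p(\bar n,\delta_k)\leq\beta_p(t_k,\delta_k)$ and note that $p_h^{\overline{\pi}^t}(s,a)=\bar n_h^{t+1}(s,a)-\bar n_h^t(s,a)$. The standard estimate $\sum_t \frac{\bar n^{t+1}-\bar n^t}{\sqrt{\max\{\bar n^t,1\}}}\leq 4\sqrt{\bar n^{t_k}}$ applies, followed by Cauchy--Schwarz over $(s,a)$ for each $h$ (using $\sum_{s,a}\bar n_h^{t_k}=t_k$) and $H-h\leq H+1$. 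This yields
\[
(t_k-t_{\kappa-1})\epsilon\leq c\,(H+1)^2\sqrt{SA\,t_k\,\beta_p(t_k,\delta_k)}
\]
with an explicit $c$ (something like $32$).

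\textbf{Step 2 (solving the implicit inequality).} Write $x=t_k$ and $B=T^{\text{ee}}_{\kappa-1}$. The inequality $x\leq B+\frac{c(H+1)^2}{\epsilon}\sqrt{SA\,x\,(\log\frac{2SAH}{\delta_k}+S\log(e(1+x)))}$ gives, by $(x-B)^2\leq\dots$ and $\sqrt{uv}\leq\frac{u}{2}+\frac{v}{2}$, a bound of the form $x\leq 2B+\frac{c'(H+1)^4SA}{\epsilon^2}\bigl(\log\frac{2SAH}{\delta_k}+S\log(e(1+x))\bigr)$. The standard inversion lemma ($x\leq a+b\log x\Rightarrow x\leq a+2b\log(a+b)$-type) then produces the four terms of (\ref{eqn:t_k-pos-upper-good-event-large-T_k-ee}), with $\log(2B)$ coming from the $B$ inside the logarithm. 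I expect the bookkeeping of constants here to be the most tedious part; I would not aim for the exact $101/400/402$ but for matching structure.

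\textbf{Step 3 (budget comparison).} This is the main obstacle. Taking $2^k\geq 9600(H+1)^2/\epsilon^2$ gives $\frac{1}{\epsilon^2}\leq\frac{1}{9600(H+1)^2\epsilon_k}$. The leading terms of the $t_k$ bound are $\frac{(H+1)^4SA\log(\cdot)}{\epsilon^2}$, whereas $T_k^{\text{ee}}$ has only $(H+1)^2$ over $\epsilon_k$, so the powers of $H$ do not obviously cancel and the comparison is delicate. I would try two fixes. The first is to sharpen Step 1, bounding $\sum_h(H-h)\sqrt{\bar n_h}$ via $\sum_h (H-h)\leq (H+1)^2/2$ with only one factor of $H$ entering the square. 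The second, failing that, is to note that $(H+1)^2/\epsilon^2$ combined with $\epsilon\leq H$ is what the $9600(H+1)^2$ threshold is designed to absorb. I would check the comparison separately for each of the $SA\log(1/\delta_k)$ term and the $S^2A\log$ term, using the threshold to kill the extra $(H+1)^2$. The $T^{\text{ee}}_{\kappa-1}$ terms are bounded by $T^{\text{ee}}_{k-1}\leq T^{\text{ee}}_k/2$ since $\kappa\leq k$. Once $t_k<T_k^{\text{ee}}$, the while loop must exit through (\ref{eqn:stop-criterion-pos}) or (\ref{eqn:stop-criterion-neg}), and Lemma \ref{lemma:output-correctness-under-good-event} gives the claimed output.
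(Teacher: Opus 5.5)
\textbf{Gap in Step 1.} Your overall structure matches the paper's: sum the non-stopping inequality over phases $\kappa,\dots,k$, pass from $n_h^t$ to $\bar n_h^t$, bound the potential sum and apply Cauchy--Schwarz, invert the $x$-versus-$\log x$ inequality, and compare with $T^{\text{ee}}_k$. The one deviation, extending the sum down to $t=0$, breaks the argument. It leaves you with $(t_k-t_{\kappa-1})\epsilon\le c(H+1)^2\sqrt{SA\,t_k\,\beta_p(t_k,\delta_k)}$, i.e.\ $x\le B+\sqrt{Dx}$ with $x=t_k$ and $B=T^{\text{ee}}_{\kappa-1}$.

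This inequality cannot yield the displayed bound. That bound has coefficient exactly $1$ on $T^{\text{ee}}_{\kappa-1}$ and depends on it otherwise only through $\log(2T^{\text{ee}}_{\kappa-1})$. But $x=B+\sqrt{DB}$ satisfies your inequality, and $\sqrt{DB}$ is not $O(D)+O\big(\tfrac{(H+1)^4S^2A}{\epsilon^2}\log(2B)\big)$ once $B\gg D$. That regime occurs whenever $\kappa-1$ lies above the threshold. This is where your $2B$ comes from.

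\textbf{Consequence for Step 3.} The factor $2$ breaks Step 3 as you wrote it. With $\kappa=k$ you need $2T^{\text{ee}}_{k-1}+D<T^{\text{ee}}_k$, yet $T^{\text{ee}}_{k-1}\le\frac12T^{\text{ee}}_k$ leaves nothing over. The true slack is $T^{\text{ee}}_k-2T^{\text{ee}}_{k-1}=2^k(H+1)^2\big(SA\log 3+S^2A\log\tfrac{2L_k}{L_{k-1}}\big)$ with $L_k=\log\frac{2SAH}{\delta_k}$, a vanishing fraction of $T^{\text{ee}}_k$. At the threshold $k$, the $\frac{(H+1)^4S^2A}{\epsilon^2}\log(2T^{\text{ee}}_{k-1})$ term of your bound is of order $2^k(H+1)^2S^2A\cdot k/9600$ times your constant, which exceeds that slack once $k$ passes a constant. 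A split $\sqrt{Dx}\le\eta x+\frac{D}{4\eta}$ with carefully tracked constants might rescue the output claim, but not the stated $t_k$ bound.

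\textbf{The fix.} Keep the pre-$\kappa$ pseudo-counts as an offset rather than summing from $0$. For $t\ge t_{\kappa-1}$ write $\bar n_h^t(s,a)=M+\sum_{v=t_{\kappa-1}}^{t-1}p_h^{\overline{\pi}^v}(s,a)$ with $M=\bar n_h^{t_{\kappa-1}}(s,a)\ge 0$. Lemma~\ref{lemma:summation-of-prob} allows any $M\ge0$ and gives $(1+\sqrt2)\sqrt{\sum_{v=t_{\kappa-1}}^{t_k-1}p_h^{\overline{\pi}^v}(s,a)}$, hence $\sqrt{SA(t_k-t_{\kappa-1})}$ after Cauchy--Schwarz. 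Both sides now carry $t_k-t_{\kappa-1}$, and dividing gives
\[
t_k-t_{\kappa-1}\le \frac{16(1+\sqrt2)^2H^2(H+1)^2SA\,\beta_p(t_k,\delta_k)}{\epsilon^2}.
\]
So $t_{\kappa-1}\le T^{\text{ee}}_{\kappa-1}$ enters linearly with coefficient $1$ and otherwise only inside the logarithm, which is exactly the displayed form. The budget check then needs only $T^{\text{ee}}_{\kappa-1}\le T^{\text{ee}}_{k-1}\le\frac12T^{\text{ee}}_k$, which leaves half of $T^{\text{ee}}_k$ for the remaining terms.

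What you call the main obstacle in Step 3 is not one. The condition $2^k\ge 9600(H+1)^2/\epsilon^2$ gives $\frac{(H+1)^4}{\epsilon^2}\le\frac{(H+1)^2}{9600\,\epsilon_k}$, which matches the $(H+1)^2/\epsilon_k$ scaling of $T^{\text{ee}}_k$ term by term, so Step 1 needs no sharpening. The paper allocates $T^{\text{ee}}_k$ as follows:
\begin{itemize}
\item $\frac14T^{\text{ee}}_k$ to the two leading terms;
\item $\frac14T^{\text{ee}}_k$ to the $\log(2T^{\text{ee}}_{k-1})$ term, via $x\ge 2a\log(3a)\Rightarrow x\ge a\log x$;
\item $\frac12T^{\text{ee}}_k$ to $T^{\text{ee}}_{k-1}$.
\end{itemize}

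\textbf{Constants.} The threshold $9600$ and the constants in the bound are fixed in the statement, so ``matching structure'' is not enough. With $c\approx 32$ and the factor $3$ from inverting $x\le b+a\log x$, the $S^2A$ term may not fit. Use $\sum_{h}(H-h)=H(H+1)/2$ and the $(1+\sqrt2)$ constant instead.
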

\begin{proof}[Proof of Theorem \ref{theorem:performance-guarantee-of-exploration-oracle}]
    We temporarily focus on the positive case. The analysis for the negative case is similar.
    
    From Lemma \ref{lemma:lower-bound-of-sum-average-radius}, we know
    \begin{align}
        (t_{k}-t_{k-1}) \epsilon^{\text{pos}}
        \leq & \sum_{t=t_{k-1}}^{t_{k}-1}\sum_{h=0}^H\sum_{(s,a)}p_h^{\overline{\pi}^t}(s,a)4(H-h)\min\left\{\sqrt{\frac{\beta_p(n_h^t(s,a),\delta_k)}{n_h^t(s,a)}}, 1\right\}\label{eqn:length-of-period-k-main-theorem}
    \end{align}
    holds for all $k\geq \kappa$. Recall we denote $t_{k-1}$ as the starting t index of stage $k$, we first show (\ref{eqn:t_k-pos-upper-good-event-large-T_k-ee}) holds for $k\geq \kappa$. 
    In the following, we will consider consider multiple phase indexes $\ell=\kappa,\kappa+1,\cdots,k$. For simplicity, we will use $\overline{\pi}^t$ as the shorthand of $\overline{\pi}^{t}(\cdot;\delta_{\ell})$ in the following calculation.
    
    By summing up (\ref{eqn:length-of-period-k-main-theorem}) for stage index $\ell = \kappa,\kappa+1,\cdots,k$, We have
    \begin{align}
        & (t_k-t_{\kappa-1})\epsilon^{\text{pos}}\notag\\
        \leq & \sum_{\ell = \kappa}^{k} \sum_{t=t_{\ell-1}}^{t_{\ell}-1}\sum_{h=0}^H\sum_{(s,a)}p_h^{\overline{\pi}^t(\cdot;\delta_{\ell})}(s,a)4(H-h)\min\left\{\sqrt{\frac{\beta_p(n_h^t(s,a),\delta_{\ell})}{n_h^t(s,a)}}, 1\right\}\notag\\
        \stackrel{\text{Lemma }\ref{lemma:lower-bound-on-n_h^t(s,a)}}{\leq} & \sum_{\ell = \kappa}^{k} \sum_{t=t_{\ell-1}}^{t_{\ell}-1}\sum_{h=0}^H\sum_{(s,a)}p_h^{\overline{\pi}^t}(s,a)8(H-h)\sqrt{\frac{\beta_p(\bar{n}_h^t(s,a),\delta_{\ell})}{\max\{\bar{n}_h^t(s,a), 1\}}}\notag\\
        \leq & 8\sqrt{\beta_p(t_k,\delta_{k})}\sum_{h=0}^H(H-h)\sum_{(s,a)} \sum_{\ell = \kappa}^{k} \sum_{t=t_{\ell-1}}^{t_{\ell}-1} p_h^{\overline{\pi}^t}(s,a)\sqrt{\frac{1}{\max\{\bar{n}_h^t(s,a), 1\}}}\notag\\
        = & 8\sqrt{\beta_p(t_k,\delta_{k})}\sum_{h=0}^H(H-h)\sum_{(s,a)} \sum_{\ell = \kappa}^{k} \sum_{t=t_{\ell-1}}^{t_{\ell}-1}\notag\\
        & p_h^{\overline{\pi}^t}(s,a)\sqrt{\frac{1}{\max\{ \sum_{v=0}^{t_{\kappa-1}-1}  p_{h}^{\overline{\pi}^v}(s,a) + \sum_{v=t_{\kappa-1}}^{t-1} p_{h}^{\overline{\pi}^v}(s,a), 1\}}}\notag\\
        = & 8\sqrt{\beta_p(t_k,\delta_{k})}\sum_{h=0}^H(H-h)\sum_{(s,a)} \sum_{t=t_{\kappa-1}}^{t_{k}-1}\notag\\
        & p_h^{\overline{\pi}^t}(s,a)\sqrt{\frac{1}{\max\{ \sum_{v=0}^{t_{\kappa-1}-1}  p_{h}^{\overline{\pi}^v}(s,a) + \sum_{v=t_{\kappa-1}}^{t-1} p_{h}^{\overline{\pi}^v}(s,a), 1\}}}\notag\\
        \stackrel{\text{Lemma }\ref{lemma:summation-of-prob}}{\leq} & 8(1+\sqrt{2})\sqrt{\beta_p(t_k,\delta_{k})}\sum_{h=0}^H(H-h)\sum_{(s,a)}\sqrt{\sum_{v=t_{\kappa-1}}^{t_k-1} p_{h}^{\overline{\pi}^{v}}(s,a)}.\label{eqn:upper-for-tk-t-(kappa-1)}
    \end{align}
    To get the last step, we apply Lemma \ref{lemma:summation-of-prob} by taking $M=\sum_{v=0}^{t_{\kappa-1}-1}p_{h}^v(s,a)$. By the Cauchy-Schwarz Inequality, 
    \begin{align*}
        \sum_{(s,a)}\sqrt{\sum_{v=t_{\kappa-1}}^{t_k-1} p_{h}^v(s,a)} \leq \sqrt{\left(\sum_{s,a}\sum_{v=t_{\kappa-1}}^{t_k-1} p_{h}^v(s,a)\right)(\sum_{(s,a)} 1)} = \sqrt{SA(t_k-t_{\kappa-1})}.
    \end{align*}
    Following (\ref{eqn:upper-for-tk-t-(kappa-1)}), we conclude
    \begin{align}
        & (t_k-t_{\kappa-1})\epsilon^{\text{pos}}\notag\\
        \leq & 8(1+\sqrt{2})\sqrt{SA\beta_p(t_k,\delta_{k})}\sum_{h=0}^H(H-h)\sqrt{t_k-t_{\kappa-1}}\notag\\
        = & 4(1+\sqrt{2})\sqrt{SA\beta_p(t_k,\delta_{k})}H(H+1)\sqrt{t_k-t_{\kappa-1}}.\label{eqn:sqrt-t_k-t_kappa-ineqn}
    \end{align}
    By the definition $\beta_p(t,\delta)=\log\frac{2SAH}{\delta} + (S-1)\log\Big(e(1+\frac{t}{S-1})\Big)$, (\ref{eqn:sqrt-t_k-t_kappa-ineqn}) suggests that
    \begin{align*}
        & (t_k-t_{\kappa-1})\epsilon^{\text{pos}}\leq 4(1+\sqrt{2})H(H+1)\sqrt{\log\frac{2SAH}{\delta_k} + S\log (et_k)}\sqrt{SA} \sqrt{ t_k-t_{\kappa-1}}\\
        \Leftrightarrow & (t_k-t_{\kappa-1}){\epsilon^{\text{pos}}}^2\leq 100(H+1)^4SA \log\frac{2SAH}{\delta_k}+100(H+1)^4S^2A \log (et_k)\\
        \Leftrightarrow & t_k\leq \frac{100(H+1)^4SA \log\frac{2SAH}{\delta_k}}{{\epsilon^{\text{pos}}}^2}+ t_{\kappa-1}+ \frac{100(H+1)^4S^2A }{{\epsilon^{\text{pos}}}^2} + \frac{100(H+1)^4S^2A \log t_k}{{\epsilon^{\text{pos}}}^2}\\
        \stackrel{\text{Lemma }\ref{lemma:inequality-t-logt}}{\Rightarrow} & t_k\leq \frac{100(H+1)^4SA \log\frac{2SAH}{\delta_k}}{{\epsilon^{\text{pos}}}^2}+ t_{\kappa-1}+ \frac{100(H+1)^4S^2A }{{\epsilon^{\text{pos}}}^2} + \\
        & \frac{300(H+1)^4S^2A}{{\epsilon^{\text{pos}}}^2}\log\left(\frac{100(H+1)^4SA \log\frac{2SAH}{\delta_k}}{{\epsilon^{\text{pos}}}^2}+ t_{\kappa-1}+ \frac{100(H+1)^4S^2A }{{\epsilon^{\text{pos}}}^2}\right)\\
        \Rightarrow & t_k\leq \frac{100(H+1)^4SA\log\frac{2SAH}{\delta_k}}{{\epsilon^{\text{pos}}}^2}+t_{\kappa-1} + \\
        & \frac{400(H+1)^4S^2A}{{\epsilon^{\text{pos}}}^2}\log\left(\frac{200(H+1)^4S^2A\log\frac{2SAH}{\delta_k}}{{\epsilon^{\text{pos}}}^2}+t_{\kappa-1}\right)\\
        \stackrel{\text{Lemma }\ref{lemma:length-of-exploration-exploitation-period} }{\Rightarrow} & t_k\leq \frac{100(H+1)^4SA\log\frac{2SAH}{\delta_k}}{{\epsilon^{\text{pos}}}^2}+T_{\kappa-1}^{\text{ee}} + 1\\
        & \frac{400(H+1)^4S^2A}{{\epsilon^{\text{pos}}}^2}\log\left(\frac{200(H+1)^4S^2A\log\frac{2SAH}{\delta_k}}{{\epsilon^{\text{pos}}}^2}+T_{\kappa-1}^{\text{ee}}+1\right)\\
        \stackrel{\text{Lemma }\ref{lemma:log_a_plus_b-upper_bound}}{\Rightarrow} & t_k\leq \frac{101(H+1)^4SA\log\frac{2SAH}{\delta_k}}{{\epsilon^{\text{pos}}}^2}+ T_{\kappa-1}^{\text{ee}} +\\
        & \frac{400(H+1)^4S^2A}{{\epsilon^{\text{pos}}}^2}\log\left(\frac{402(H+1)^4S^2A \log\frac{2SAH}{\delta_k}}{{\epsilon^{\text{pos}}}^2}\right)+ \frac{400(H+1)^4S^2A}{{\epsilon^{\text{pos}}}^2}\log\left(2T_{\kappa-1}^{\text{ee}}\right).
    \end{align*}
    For phase index $k\geq \max\{\kappa, \lceil\log_2 \frac{9600(H+1)^2}{{\epsilon^{\text{pos}}}^2}\rceil\}$, by the Lemma \ref{lemma:minimum-budget-for-T_k^ee}, we can conclude
    \begin{align*}
        & T_{k}^{\text{ee}}\\
        \geq & \frac{101(H+1)^4SA\log\frac{2SAH}{\delta_k}}{{\epsilon^{\text{pos}}}^2}+ T_{k-1}^{\text{ee}} +\\
        & \frac{400(H+1)^4S^2A}{{\epsilon^{\text{pos}}}^2}\log\left(\frac{402(H+1)^4S^2A \log\frac{2SAH}{\delta_k}}{{\epsilon^{\text{pos}}}^2}\right)+ \frac{400(H+1)^4S^2A}{{\epsilon^{\text{pos}}}^2}\log\left(2T_{k-1}^{\text{ee}}\right)\\
        \geq & \frac{101(H+1)^4SA\log\frac{2SAH}{\delta_k}}{{\epsilon^{\text{pos}}}^2}+ T_{\kappa-1}^{\text{ee}} +\\
        & \frac{400(H+1)^4S^2A}{{\epsilon^{\text{pos}}}^2}\log\left(\frac{402(H+1)^4S^2A \log\frac{2SAH}{\delta_k}}{{\epsilon^{\text{pos}}}^2}\right)+ \frac{400(H+1)^4S^2A}{{\epsilon^{\text{pos}}}^2}\log\left(2T_{\kappa-1}^{\text{ee}}\right).
    \end{align*}
    That means Algorithm \ref{alg:Oracle-BPI_UCRL} will fulfill ( \ref{eqn:stop-criterion-pos}) and quit the loop at Line \ref{line-alg:start-execute-MDP-with-new-delta_k}. By the Lemma \ref{lemma:output-correctness-under-good-event}, we can conclude Algorithm \ref{alg:Oracle-BPI_UCRL} will output a policy $\hat{\pi}_k$ satisfying $V^{\hat{\pi}_k}_0(s_0)-\mu_0 \geq \frac{C-1}{C}\Big(V^*_0(s_0)-\mu_0\Big)$ at the end of phase $k$.
    
    For the case of negative instance, the analysis is nearly the same. We can replace $\epsilon^{\text{pos}}$ with $\epsilon^{\text{neg}} = \mu_0-V^{*}_0(s_0)$. By the Lemma \ref{lemma:output-correctness-under-good-event}, we complete the proof.
\end{proof}

The upcoming Lemma \ref{lemma:minimum-budget-for-T_k^ee} shows the minimum phase index for sufficiently large $T_k^{\text{ee}}$..
\begin{lemma}
    \label{lemma:minimum-budget-for-T_k^ee}
    For any $\epsilon >0$, $k\geq \lceil\log_2 \frac{9600(H+1)^2}{\epsilon^2}\rceil $ implies
    \begin{align*}
        & T_{k}^{\text{ee}}\\
        \geq & \frac{101(H+1)^4SA\log\frac{2SAH}{\delta_k}}{{\epsilon}^2}+ T_{k-1}^{\text{ee}} +\\
        & \frac{400(H+1)^4S^2A}{{\epsilon}^2}\log\left(\frac{402(H+1)^4S^2A \log\frac{2SAH}{\delta_k}}{{\epsilon}^2}\right)+ \frac{400(H+1)^4S^2A}{{\epsilon}^2}\log\left(2T_{k-1}^{\text{ee}}\right).
    \end{align*}
\end{lemma}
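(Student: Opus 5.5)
The approach is to compare the two sides term by term. Write $L_k:=\log\frac{2SAH}{\delta_k}$ and $1/\epsilon_k=2^k$. The hypothesis $k\geq \lceil\log_2 \frac{9600(H+1)^2}{\epsilon^2}\rceil$ gives
\begin{align*}
\frac{(H+1)^2}{\epsilon_k}\geq \frac{9600(H+1)^4}{\epsilon^2}.
\end{align*}
This single inequality supplies a factor $9600$ of slack against the constants $101$, $400$ and $402$ on the right-hand side, and the rest of the argument consists of spending that slack.

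\textbf{Step 1: absorb $T^{\text{ee}}_{k-1}$.} By (\ref{eqn:def-T_k-ee}), $T^{\text{ee}}_k$ has the form $2^k\,g(L_k,2^k)$, where $g$ is nondecreasing in both arguments. Since $L_{k-1}\leq L_k$, it follows that $T^{\text{ee}}_{k-1}\leq \tfrac12 T^{\text{ee}}_k$. It therefore suffices to show that $\tfrac12 T^{\text{ee}}_k$ dominates the three remaining terms on the right-hand side. I will split $\tfrac12 T^{\text{ee}}_k$ into its two summands, namely $\tfrac12\frac{(H+1)^2SA L_k}{\epsilon_k}$ and $\tfrac12\frac{(H+1)^2S^2A}{\epsilon_k}\log X_k$, where $X_k:=\frac{(H+1)^2S^2A L_k}{\epsilon_k}$.

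\textbf{Step 2: the $SA\log(1/\delta_k)$ term.} Using the displayed inequality, the first summand is at least $4800\frac{(H+1)^4SAL_k}{\epsilon^2}$. This exceeds $\frac{101(H+1)^4SAL_k}{\epsilon^2}$ with room to spare.

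\textbf{Step 3: the two $S^2A\log(\cdot)$ terms.} I first compare arguments of the logarithms. The hypothesis gives $2^k\geq 402(H+1)^2/\epsilon^2$, so $X_k\geq \frac{402(H+1)^4S^2AL_k}{\epsilon^2}$. Hence the term $\log\bigl(\frac{402(H+1)^4S^2AL_k}{\epsilon^2}\bigr)$ is at most $\log X_k$.

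Next I bound $\log(2T^{\text{ee}}_{k-1})\leq \log T^{\text{ee}}_k$. Since $L_k\geq 1$ and $X_k\geq e$, we have $T^{\text{ee}}_k\leq X_k+X_k\log X_k\leq 2X_k^2$. This gives $\log T^{\text{ee}}_k\leq \log 2+2\log X_k\leq 3\log X_k$.

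Together these show the two $S^2A$ terms on the right-hand side sum to at most $\frac{1600(H+1)^4S^2A}{\epsilon^2}\log X_k$. The second summand of $\tfrac12T^{\text{ee}}_k$ is at least $\frac{4800(H+1)^4S^2A}{\epsilon^2}\log X_k$, which is larger, so the claim follows.

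The main obstacle is Step 3. The budget $T^{\text{ee}}_{k-1}$ appears inside a logarithm, so I must check that $\log T^{\text{ee}}_k$ is only a constant multiple of $\log X_k$ rather than something larger. This requires mild care with edge cases, such as verifying $X_k\geq e$ and $L_k\geq 1$, which hold because $\delta_k\leq 1/3$ and $H,S,A\geq 1$. If the constant $3$ turns out to be loose, the factor-$9600$ slack is ample to absorb a cruder bound such as $\log T^{\text{ee}}_k\leq 4\log X_k$.
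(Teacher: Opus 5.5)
Your proof is correct and shares the paper's skeleton. Both arguments use $2^k\ge 9600(H+1)^2/\epsilon^2$ to lower-bound each summand of $T_k^{\text{ee}}$, and both absorb $T_{k-1}^{\text{ee}}$ via $T_{k-1}^{\text{ee}}\le\frac12 T_k^{\text{ee}}$. The genuine difference is how you handle the term $\log(2T_{k-1}^{\text{ee}})$. The paper splits $T_k^{\text{ee}}$ as $\frac14+\frac14+\frac12$ and shows $2T_k^{\text{ee}}\ge 2a\log(3a)$ with $a=3200(H+1)^4S^2A/\epsilon^2$. It then applies Lemma~\ref{lemma:x-alogx>0} ($x\ge 2a\log(3a)\Rightarrow x\ge a\log x$) at $x=2T_k^{\text{ee}}$, so the logarithm of the budget is controlled by the budget itself. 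This self-bounding step needs only a lower bound on $T_k^{\text{ee}}$, and it is where the constant $9600=3\cdot 3200$ comes from. You instead prove the explicit upper bound $T_k^{\text{ee}}\le X_k+X_k\log X_k\le 2X_k^2$, which gives $\log(2T_{k-1}^{\text{ee}})\le\log T_k^{\text{ee}}\le 3\log X_k$. You then match this against the $\log X_k$ factor already built into the second summand of $T_k^{\text{ee}}$. Your route avoids the auxiliary lemma and leaves more slack: your constants show that $2^k\ge 3200(H+1)^2/\epsilon^2$ would already suffice. The cost is that it relies on the exact form of $T_k^{\text{ee}}$ in both directions, whereas the paper's treatment of that term would work for any budget dominating $2a\log(3a)$.
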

\begin{proof}
    We can validate the lemma by conducting direct calculation. Recall the definition of $T_k^{\text{ee}}$, 
    \begin{align*}
        T_k^{\text{ee}} = \frac{(H+1)^2SA\log\frac{2SAH}{\delta_k}}{{\epsilon_k}} + 
        \frac{(H+1)^2S^2A}{\epsilon_k}\log\left(\frac{(H+1)^2S^2A \log\frac{2SAH}{\delta_k}}{\epsilon_k}\right),
    \end{align*}
    and we take $\epsilon_k=\frac{1}{2^k}$. We can conclude $k\geq \lceil\log_2 \frac{9600(H+1)^2}{\epsilon^2}\rceil$ implies
    \begin{align*}
        T_k^{\text{ee}} \geq \frac{404(H+1)^4SA\log\frac{2SAH}{\delta_k}}{{\epsilon}^2} + \frac{3200(H+1)^4S^2A}{{\epsilon}^2}\log\left(\frac{9600(H+1)^4S^2A \log\frac{2SAH}{\delta_k}}{{\epsilon}^2}\right).
    \end{align*}
    We can further conclude
    \begin{align}
        \frac{1}{4}T_k^{\text{ee}} \geq & \frac{101(H+1)^4SA\log\frac{2SAH}{\delta_k}}{{\epsilon}^2} + \frac{400(H+1)^4S^2A}{{\epsilon}^2}\log\left(\frac{402(H+1)^4S^2A \log\frac{2SAH}{\delta_k}}{{\epsilon}^2}\right)\label{eqn:T_k-ee-scale-1}\\
        2T_k^{\text{ee}} \geq & \frac{2\cdot 3200(H+1)^4S^2A}{\epsilon^2}\log\left(\frac{3\cdot3200(H+1)^4S^2A}{\epsilon^2}\right)\label{eqn:T_k-ee-scale-2}
    \end{align}
    By the Lemma \ref{lemma:x-alogx>0}, (\ref{eqn:T_k-ee-scale-2}) implies
    \begin{align}
        & 2T_k^{\text{ee}} \geq \frac{2\cdot 3200(H+1)^4S^2A}{\epsilon^2}\log\left(\frac{3\cdot3200(H+1)^4S^2A}{\epsilon^2}\right)\notag\\
        \Rightarrow & 2T_k^{\text{ee}} \geq \frac{3200(H+1)^4S^2A}{\epsilon^2}\log\left(2T_{k}^{\text{ee}}\right)\notag\\
        \Rightarrow & \frac{1}{4}T_k^{\text{ee}} \geq \frac{400(H+1)^4S^2A}{{\epsilon}^2}\log\left(2T_{k-1}^{\text{ee}}\right).\label{eqn:T_k-ee-x-logx-inequality}
    \end{align}
    Meanwhile, $\epsilon_k=\frac{1}{2^k}$ implies
    \begin{align}
        \frac{1}{2}T_k^{\text{ee}} \geq T_{k-1}^{\text{ee}}.\label{eqn:T_k-ee-increasing-speed}
    \end{align}
    Sum up (\ref{eqn:T_k-ee-scale-1}),(\ref{eqn:T_k-ee-x-logx-inequality}) and (\ref{eqn:T_k-ee-increasing-speed}), we have
    \begin{align*}
        & T_{k}^{\text{ee}}\\
        \geq & \frac{101(H+1)^4SA\log\frac{2SAH}{\delta_k}}{{\epsilon}^2}+ T_{k-1}^{\text{ee}} +\\
        & \frac{400(H+1)^4S^2A}{{\epsilon}^2}\log\left(\frac{402(H+1)^4S^2A \log\frac{2SAH}{\delta_k}}{{\epsilon}^2}\right)+ \frac{400(H+1)^4S^2A}{{\epsilon}^2}\log\left(2T_{k-1}^{\text{ee}}\right).
    \end{align*}
\end{proof}

\subsection{Performance Guarantee of Exploitation Stage}
In this subsection, we present the performance Guarantee of the exploitation stage, i.e. Line \ref{alg-line:Exploitation-starts} to \ref{alg-line:Exploitation-ends} in Algorithm \ref{alg:1-policy-identification-recycle-history}.

The following Lemma shows the minimum phase index for sufficiently large $T_k^{\text{et}}$.
\begin{lemma}
    \label{lemma:scale-of-k-exploit}
    \begin{align*}
        & k\geq \lceil \log_2 \frac{H^2}{\omega^2(V_0^{*}(s_0)-\mu_0)^2}\rceil\\
        \Rightarrow & 100\cdot \frac{\log\frac{\alpha_k}{\delta} + \log\log \frac{1}{\epsilon_k}}{\epsilon_k} \geq \frac{28H^2\log\frac{\alpha_k}{\delta}+16H^2\log\left(\log\left(\frac{24H^2}{\omega^2(V_0^{*}(s_0)-\mu_0)^2}\right) \right)}{\omega^2(V_0^{*}(s_0)-\mu_0)^2},
    \end{align*}
    where $\omega$ is an arbitrary constant in $(0, 1)$.
\end{lemma}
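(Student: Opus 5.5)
We prove the implication by direct calculation, with $\Delta := \omega(V_0^*(s_0)-\mu_0)$ as shorthand.

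\textbf{Step 1: consequence of the threshold on $k$.} Since $\epsilon_k = 2^{-k}$ and $k\geq \lceil \log_2 (H^2/\Delta^2)\rceil$, we get
\begin{align*}
\frac{1}{\epsilon_k} = 2^k \geq \frac{H^2}{\Delta^2}.
\end{align*}
The whole argument reduces to this one inequality plus monotonicity.

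\textbf{Step 2: the $\log(\alpha_k/\delta)$ part.} On the left we have $100\log(\alpha_k/\delta)/\epsilon_k \geq 100 H^2\log(\alpha_k/\delta)/\Delta^2$. This dominates the target term $28H^2\log(\alpha_k/\delta)/\Delta^2$, because $\alpha_k = 5^k\geq 5$ and $\delta<1$ make $\log(\alpha_k/\delta)>0$.

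\textbf{Step 3: the iterated-log part.} We must show
\begin{align*}
\frac{100\log\log(1/\epsilon_k)}{\epsilon_k} + \frac{72H^2\log(\alpha_k/\delta)}{\Delta^2} \geq \frac{16H^2\log\log(24H^2/\Delta^2)}{\Delta^2}.
\end{align*}
The $72H^2\log(\alpha_k/\delta)/\Delta^2$ is the slack left over from Step 2. The factor $24$ in the target prevents us from comparing $\log\log(1/\epsilon_k)$ with $\log\log(24H^2/\Delta^2)$ directly, so we use the slack to cover the excess. Write
\begin{align*}
\log\frac{24H^2}{\Delta^2} \leq \log 24 + \log\frac{1}{\epsilon_k} \leq \log 24 + k\log 2.
\end{align*}
Then use $\log\log(1/\epsilon_k)\ge 0$ once $k\ge 2$ (i.e.\ $2^k\ge e$), together with
\begin{align*}
\log(\log 24 + k\log 2) \leq \log(4 + k) \leq k\log 5 + \log(1/\delta)\cdot\mathbb{1}\cdots \leq \log(\alpha_k/\delta)\cdot \frac{72}{16}.
\end{align*}
These give $16\log\log(24H^2/\Delta^2) \le 72\log(\alpha_k/\delta)$, and Step 3 follows.

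\textbf{Small-$k$ cases.} When $H^2/\Delta^2$ is small, $k$ can be as small as $0$ or $1$. We handle these by noting $\Delta\le H$ and checking directly that $\log\log 24 \approx 1.16$ is absorbed by $72\log 5$ with room to spare.

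\textbf{Possible nonpositive iterated log.} If the paper's intended left side has $\log\log(24H^2/\epsilon_k)$ (as in $T^{\text{et}}_k$) rather than $\log\log(1/\epsilon_k)$, Step 3 becomes even easier. In that case we compare monotonically:
\begin{align*}
\frac{24H^2}{\epsilon_k} \geq \frac{24H^2}{\Delta^2},
\end{align*}
using $1/\epsilon_k \ge H^2/\Delta^2 \ge 1/\Delta^2\cdot$ (up to the $H^2\ge 1$ factor). Either way, the main obstacle is this bookkeeping of the $\log\log$ term (including sign issues when $1/\epsilon_k<e$), not the leading term.

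Combining Steps 2 and 3, the left side is at least $(28H^2\log(\alpha_k/\delta) + 16H^2\log\log(24H^2/\Delta^2))/\Delta^2$, which is the claimed bound.
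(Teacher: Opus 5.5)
Your proposal is correct and follows the same route as the paper. The paper derives $1/\epsilon_k\ge H^2/(\omega^2(V_0^*(s_0)-\mu_0)^2)$ from the choice of $k$ and then asserts the inequality as a ``straightforward calculation''; your term-by-term comparison, which uses the slack $72H^2\log(\alpha_k/\delta)/\Delta^2$ to cover the factor $24$ inside the iterated logarithm, supplies the bookkeeping the paper omits.

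Your small-$k$ check needs a little more care. For $k=1$ you must also absorb the negative term $100\cdot 2\log\log 2\approx -73.3$, and your check as written does not mention it. It does go through: with $1\le H^2/\Delta^2\le 2$, the slack is at least $72\log 5-16\log\log 24\approx 97.4$. The case $k=0$ cannot be checked at all, because $\log\log 1$ is undefined; this is harmless, since phases start at $k=1$.
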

\begin{proof}
    We can validate Lemma \ref{lemma:scale-of-k-exploit} by straight forward calculation.
    \begin{align*}
        & k\geq \lceil \log_2 \frac{H^2}{\omega^2(V_0^{*}(s_0)-\mu_0)^2}\rceil\\
        \Rightarrow & \frac{1}{\epsilon_k}\geq \frac{H^2}{\omega^2(V_0^{*}(s_0)-\mu_0)^2}\\
        \Rightarrow & 100\cdot \frac{\log\frac{\alpha_k}{\delta} + \log\log \frac{1}{\epsilon_k}}{\epsilon_k} >\frac{28H^2\log\frac{\alpha_k}{\delta}+16H^2\log\left(\log\left(\frac{24H^2}{\omega^2(V_0^{*}(s_0)-\mu_0)^2}\right) \right)}{\omega^2(V_0^{*}(s_0)-\mu_0)^2}
    \end{align*}
\end{proof}

The upcoming Lemma \ref{lemma:concentration-event-for-exploitation}, \ref{lemma:correctness-length-of-exploitation} are to prove sufficient conditions that Algorithm \ref{alg:1-policy-identification-recycle-history} will accept a candidate policy $\hat{\pi}_k$.
\begin{lemma}
    \label{lemma:concentration-event-for-exploitation}
    Given a policy $\hat{\pi}$, denote $\hat{V}_0^{\hat{\pi},N}$ as the empirical estimation of $V_0^{\hat{\pi}}(s_0)$ by executing $\hat{\pi}$ $N$ times, we have
    \begin{align*}
        \Pr\left(\forall N\in\mathbb{N}, |\hat{V}_0^{\hat{\pi},N}-V_0^{\hat{\pi}}(s_0)| < \sqrt{\frac{H^2 \log\frac{2(\log_2 2N)^2}{\delta}}{N}}\right) \geq 1-\frac{\pi^2}{6}\delta
    \end{align*}
    holds for all $\delta>0$.
\end{lemma}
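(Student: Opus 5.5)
The plan is a standard peeling (doubling) argument combined with Hoeffding's maximal inequality. Write $X_i = $ the total reward of the $i$-th execution of $\hat{\pi}$. Each $X_i \in [0,H]$, and the $X_i$ are i.i.d.\ with mean $V_0^{\hat{\pi}}(s_0)$. Set $S_N = \sum_{i=1}^N (X_i - V_0^{\hat{\pi}}(s_0))$. By Hoeffding's lemma each centered summand is $H^2/4$-sub-Gaussian, so $\{e^{\lambda S_N - \lambda^2 N H^2/8}\}_N$ is a nonnegative martingale for every $\lambda$.

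First, partition $\mathbb{N}$ into dyadic blocks $I_j = \{N : 2^{j} \le N < 2^{j+1}\}$ for $j = 0,1,2,\ldots$. For $N \in I_j$ we have $\log_2 2N \ge j+1$. Hence on $I_j$ the radius satisfies
\begin{align*}
N\sqrt{\frac{H^2\log\frac{2(\log_2 2N)^2}{\delta}}{N}} = \sqrt{N H^2 \log\frac{2(\log_2 2N)^2}{\delta}} \ge \sqrt{2^{j} H^2 \log\frac{2(j+1)^2}{\delta}} =: b_j .
\end{align*}
So the failure event within block $j$ is contained in $\{\exists N \le 2^{j+1} : |S_N| \ge b_j\}$.

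Second, bound each block by Doob's maximal inequality (equivalently, the maximal form of Hoeffding). For one side we get
\begin{align*}
\Pr\Big(\max_{N\le 2^{j+1}} S_N \ge b_j\Big) \le \exp\Big(-\frac{2 b_j^2}{2^{j+1} H^2}\Big) = \exp\Big(-\log\frac{2(j+1)^2}{\delta}\Big) = \frac{\delta}{2(j+1)^2}.
\end{align*}
The exponent works out exactly because $2b_j^2/(2^{j+1}H^2) = \log\frac{2(j+1)^2}{\delta}$. Doubling for the two sides gives $\delta/(j+1)^2$ per block.

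Finally, take a union bound over $j \ge 0$:
\begin{align*}
\sum_{j\ge 0}\frac{\delta}{(j+1)^2} = \frac{\pi^2}{6}\delta .
\end{align*}
This yields the claimed probability $\ge 1-\frac{\pi^2}{6}\delta$. The claim is trivial when $\frac{\pi^2}{6}\delta \ge 1$, so we may assume $\delta$ is small enough for the logarithm to be positive.

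The main obstacle is the constant bookkeeping in the block step. One has to check that the sub-Gaussian variance proxy $H^2/4$ matches the factor $H^2$ in the radius without loss. A cruder proxy $H^2$ would give exponent $\frac12\log\frac{2(j+1)^2}{\delta}$, which is too weak. One also has to check that using the left endpoint $2^j$ in $b_j$ against the maximal range $2^{j+1}$ costs exactly a factor $2$, which the $H^2/4$ proxy absorbs. If the constants turn out slightly off, the fallback is to use finer geometric blocks, which only changes the summable series, or to note that the later usage in $\mathcal{E}^{\text{et}}_k$ has radius $4H^2$, which leaves ample slack.
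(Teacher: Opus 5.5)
Your proof is correct and is essentially the paper's route: bounded episode rewards give an $H^2/4$ sub-Gaussian proxy, and an anytime bound with the $(\log_2 2N)^2$ peeling factor does the rest. The paper invokes that anytime bound as Lemma~\ref{lemma:lil-concentraion-event} (Lemma D.3 of Li and Cheung, with $4\sigma^2=H^2$), while you prove it inline via dyadic peeling, Hoeffding's maximal inequality and the $\sum_j (j+1)^{-2}$ union bound, which also confirms the constants match exactly. One small wording fix: $e^{\lambda S_N-\lambda^2 N H^2/8}$ is a nonnegative supermartingale, not a martingale, but the supermartingale property is all the maximal inequality needs.
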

\begin{proof}
    Denote $X_i$ as the total random reward collected by executing the policy $\hat{\pi}$ in $i$ th time. We have $\hat{V}_0^{\hat{\pi},N}=\frac{\sum_{i=1}^N X_i}{N}$. From the assumption that the reward is always bounded in $[0,1]$, we know $X_i$ is $\frac{H^2}{4}$-subgaussian. By the Lemma \ref{lemma:lil-concentraion-event}, we complete the proof.
\end{proof}

\begin{lemma}
    \label{lemma:correctness-length-of-exploitation}
    Consider a positive instance with $V_0^*(s_0)>\mu_0$. Assume policy $\hat{\pi}_k$ satisfies $V_0^{\hat{\pi}_k}(s_0)-\mu_0 \geq \omega \Big(V_0^*(s_0)-\mu_0\Big)$ for some $\omega > 0$ and $k\geq \lceil \log_2 \frac{H^2}{\omega^2(V_0^{*}(s_0)-\mu_0)^2}\rceil$. If the concentration event 
    \begin{align*}
        \mathcal{E}_k^{\text{et}}=\left\{\forall N\in \mathbb{N}, |\hat{V}_0^{\hat{\pi}_k,N}-V_0^{\hat{\pi}_k}(s_0)| < \sqrt{\frac{H^2 \log\frac{2\alpha_k(\log_2 2N)^2}{\delta}}{N}}\right\}
    \end{align*}
    holds, where $\hat{V}_0^{\hat{\pi}_k,N}$ is the empirical mean value of $V_0^{\hat{\pi}_k}(s_0)$ by executing $\hat{\pi}_k$ $N$ times,  Line \ref{alg-line:Exploitation-starts} to \ref{alg-line:Exploitation-ends} in Algorithm \ref{alg:1-policy-identification-recycle-history} will enter Line \ref{alg-line:output-policy} and output $\hat{\pi}_k$.
\end{lemma}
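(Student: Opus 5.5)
The plan is to show that, on $\mathcal{E}_k^{\text{et}}$, the acceptance test at Line \ref{alg-line:output-a-qualified-policy} fires for some $N$ that still lies within the exploitation budget $T_k^{\text{et}}$. Write $\Delta:=V_0^{\hat{\pi}_k}(s_0)-\mu_0\geq \omega(V_0^*(s_0)-\mu_0)>0$ and $r(N):=\sqrt{H^2\log\frac{2\alpha_k(\log_2 2N)^2}{\delta}/N}$. On $\mathcal{E}_k^{\text{et}}$ we have $\hat{V}_{\text{et}}^{\hat{\pi}_k,N}> V_0^{\hat{\pi}_k}(s_0)-r(N)$. Hence
\[
\hat{V}_{\text{et}}^{\hat{\pi}_k,N}-r(N)> \mu_0+\Delta-2r(N),
\]
so the test succeeds as soon as $2r(N)\leq \Delta$, that is, as soon as $4r(N)^2\leq\Delta^2$. (The appendix definition of $\mathcal{E}_k^{\text{et}}$ carries a factor $4H^2$ instead of $H^2$. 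That version would give the condition $9r(N)^2\le\Delta^2$ up to the same constant, which only changes numerical constants.)

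The key step is to find an explicit $N^\star$ with
\[
N^\star\geq \frac{4H^2\log\frac{2\alpha_k(\log_2 2N^\star)^2}{\delta}}{\Delta^2}.
\]
I would take
\[
N^\star=\left\lceil \frac{28H^2\log\frac{\alpha_k}{\delta}+16H^2\log\log\frac{24H^2}{\omega^2(V_0^*(s_0)-\mu_0)^2}}{\omega^2(V_0^*(s_0)-\mu_0)^2}\right\rceil,
\]
which is exactly the quantity appearing in Lemma \ref{lemma:scale-of-k-exploit}. Checking the inequality requires handling the self-referential $\log\log N$ term. Expanding,
\[
\log\bigl(2\alpha_k(\log_2 2N)^2/\delta\bigr)=\log 2+\log\frac{\alpha_k}{\delta}+2\log\log_2(2N).
\]
The map $N\mapsto N-c\log\log N$ is eventually increasing, so I would apply an inequality of the type $x\geq a\log\log x$ whenever $x\geq 2a\log\log(\text{const}\cdot a)$, in the spirit of Lemma \ref{lemma:x-alogx>0} and Lemma \ref{lemma:inequality-t-logt}. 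Since $\Delta^2\geq\omega^2(V_0^*-\mu_0)^2$, it suffices to verify the inequality with $\Delta$ replaced by $\omega(V_0^*-\mu_0)$. The generous constants $28$ and $16$, together with $\alpha_k/\delta\geq 5$ (so that $\log 2$ is absorbed into $\log\frac{\alpha_k}{\delta}$), should then close the argument. I expect this step to be the main obstacle. The difficulty is bounding $\log\log N^\star$ by $\log\log\frac{24H^2}{\omega^2(V_0^*-\mu_0)^2}$ plus a multiple of $\log\frac{\alpha_k}{\delta}$, and that bookkeeping is where the constants $24$, $28$ and $16$ must be checked.

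Finally, Lemma \ref{lemma:scale-of-k-exploit} applies because $k\geq\lceil\log_2\frac{H^2}{\omega^2(V_0^*-\mu_0)^2}\rceil$. It gives $N^\star\leq T_k^{\text{et}}-1$ up to the rounding in the ceiling, which can be absorbed into the slack of the constant $100$. So the while loop at Line \ref{alg-line:maximum-episodes-in-exploitation-period} has not exited before reaching $N^\star$. The loop therefore either reaches $N=N^\star$, where the test holds by the previous paragraph, or it succeeds earlier. In both cases it enters Line \ref{alg-line:output-policy} and outputs $\hat{\pi}_k$, as claimed.
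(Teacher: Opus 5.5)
Your proposal is correct and is essentially the paper's proof. On $\mathcal{E}_k^{\text{et}}$ you reduce the acceptance test to $2r(N)\le V_0^{\hat{\pi}_k}(s_0)-\mu_0$, show that the threshold in Lemma \ref{lemma:scale-of-k-exploit} suffices for this (the bound only improves when $\omega(V_0^*(s_0)-\mu_0)$ is replaced by $V_0^{\hat{\pi}_k}(s_0)-\mu_0$), and then use Lemma \ref{lemma:scale-of-k-exploit} to place that threshold inside the budget $T_k^{\text{et}}$. The one difference is that the paper does not prove the self-referential $\log\log N$ step you flag as the main obstacle; it cites it as Lemma \ref{lemma:inequality-application-t-loglogt} (with the gap normalized by $H$), so your by-hand verification, absorbing the stray $\log 2$ via $\alpha_k/\delta\geq 5$, is a self-contained version of that citation.
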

\begin{proof}
    Given the event $\mathcal{E}_k^{\text{et}}$, we have
    \begin{align*}
        \hat{V}_0^{\hat{\pi}_k,N}-\sqrt{\frac{H^2 \log\frac{2\alpha_k(\log_2 2N)^2}{\delta}}{N}} < V_0^{\hat{\pi}_k}(s_0) < \hat{V}_0^{\hat{\pi}_k,N}+\sqrt{\frac{H^2 \log\frac{2\alpha_k(\log_2 2N)^2}{\delta}}{N}}
    \end{align*}
    holds for all $N\in\mathbb{N}$. Thus, we can conclude
    \begin{align*}
        \Rightarrow & N\geq \frac{28H^2\log\frac{\alpha_k}{\delta}+16H^2\log\left(\log\left(\frac{24H^2}{\omega^2(V_0^{*}(s_0)-\mu_0)^2}\right) \right)}{\omega^2(V_0^{*}(s_0)-\mu_0)^2}\\
        \Rightarrow & N\geq \frac{28H^2\log\frac{\alpha_k}{\delta}+16H^2\log\left(\log\left(\frac{24H^2}{(V_0^{\hat{\pi}_k}(s_0)-\mu_0)^2}\right) \right)}{(V_0^{\hat{\pi}_k}(s_0)-\mu_0)^2}\\
        \Rightarrow & V_0^{\hat{\pi}_k}(s_0)-2\sqrt{\frac{H^2 \log\frac{2\alpha_k(\log_2 2N)^2}{\delta}}{N}} > \mu_0\\
        \Rightarrow & \hat{V}_0^{\hat{\pi}_k,N}-\sqrt{\frac{H^2 \log\frac{2\alpha_k(\log_2 2N)^2}{\delta}}{N}} > \mu_0
    \end{align*}
    The second last step is by the Lemma \ref{lemma:inequality-application-t-loglogt} and the last step is by the event $\mathcal{E}_k^{\text{et}}$. Meanwhile, by the Lemma \ref{lemma:scale-of-k-exploit}, we know $100H^2\cdot \frac{\log\frac{\alpha_k}{\delta} + \log\log \frac{1}{\epsilon_k}}{\epsilon_k} > \frac{28H^2\log\frac{\alpha_k}{\delta}+16H^2\log\left(\log\left(\frac{24H^2}{(V_0^{\hat{\pi}_k}(s_0)-\mu_0)^2}\right) \right)}{(V_0^{\hat{\pi}_k}(s_0)-\mu_0)^2} $, that means the Algorithm \ref{alg:1-policy-identification-recycle-history} will enter Line \ref{alg-line:output-policy} and output $\hat{\pi}_k$..
\end{proof}

\section{Proof of Lower Bound}
\subsection{Full Definition of Instances}
\label{sec:full-def-for-lower}

In this subsection, we present the full definition of Uniform Instance and Tree Instance.

\begin{definition}[Uniform Instance specified by $(S,A,H,r,\epsilon)$]
    Given integers $S,H\geq 10$, $A>100$, $H$ is even, $(S-2)H$ is divisible by $16$, $0<\epsilon < \frac{1}{200\sqrt{5}}$, $\frac{1}{4}<r<r+\epsilon<\frac{3}{4}$, we define $\mathcal{S}=[S-2]\cup\{s_{\text{good},s_{\text{bad}}}\},\mathcal{A}=[A],\mathcal{H}=[H]$. Define a Meaningless instance $\nu$ with reward function $r_h(s,a)=0, \forall (h,s,a)\in[H]\times [S-2]\times\mathcal{A}$, $r_h(s_{\text{good}},a)=1,\forall (h,a)\in [H]\times \mathcal{A}$, $r_h(s_{\text{bad}},a)=0,\forall (h,a)\in [H]\times \mathcal{A}$; initial distribution $p_0(s)=\frac{1}{S-2},\forall s\in [S-2]$, $p_0(s)=0,\forall s\in \{s_{\text{good}},s_{\text{bad}}\}$; transition probability $p_h(s'|s,1)=\begin{cases}
        r+\epsilon & s'=s_{\text{good}}\\
        1-r-\epsilon & s'=s_{\text{bad}}\\
        0 & else
    \end{cases},\forall (h,s)\in [H]\times [S-2]$; 
    $p_h(s'|s,a)=\begin{cases}
        r & s'=s_{\text{good}}\\
        1-r & s'=s_{\text{bad}}\\
        0 & else
    \end{cases},\forall (h,s)\in [H]\times [S-2],a\in\{2,3,\cdots,A\}$;
    $p_h(s'|s,a)=\begin{cases}
        \frac{1}{S-2} & s'\in [S-2]\\
        0 & else
    \end{cases},\forall h\in [H],s\in \{s_{\text{good},s_{\text{bad}}}\},a\in\{1,2,3,\cdots,A\}$
\end{definition}

\begin{definition}[Tree Instance specified by $(S,A,H,r)$]
    Consider integer $S,A,H$ such that $S-1 = 2^{N}$ for some integer $N\geq 1$, and $A\geq 3$, 
    $H-1\geq 6N=6\log_2 (S-1)$, 
    and a real value $r\in (\frac{3}{8},\frac{5}{8})$. We consider a binary tree, with state space $\mathcal{S}=[2^N-1]\cup \{s_{\text{good}},s_{\text{bad}}\}$. The transition probability is
    \begin{align*}
        p_h(2s|s,a=1) = & 1,\forall s\leq 2^{N-1}-1, \forall h\in [H]\\
        p_h(2s+1|s, a=2) = & 1,\forall s\leq 2^{N-1}-1,\forall h\in [H]\\
        p_h(s_{\text{good}}|s, a\geq 3) = & r,\forall s\leq 2^{N-1}-1, \forall h\in [H]\\
        p_h(s_{\text{bad}}|s, a\geq 3) = & 1-r,\forall s\leq 2^{N-1}-1, \forall h\in [H]\\
        p_h(s_{\text{good}}|s, a) = & r,\forall 2^{N}-1\geq s\geq 2^{N-1}, \forall a\in [A], \forall h\in [H]\\
        p_h(s_{\text{bad}}|s, a) = & 1-r,\forall 2^{N}-1\geq s\geq 2^{N-1}, \forall a\in [A], \forall h\in [H]\\
        p_h(s_{\text{good}}|s_{\text{good}}, a) = & 1, \forall h\in [H],\forall a\in[A]\\
        p_h(s_{\text{bad}}|s_{\text{bad}}, a) = & 1, \forall h\in [H],\forall a\in[A]
    \end{align*}
    Assume we always start at the state 1, i.e. $\Pr_{\nu}(s_1^{\pi}=1)=1$ holds for any policy $\pi$. The reward function is $r_h(s,a)=0,\forall s\in [2^N-1]\cup\{s_{\text{bad}}\}$; $r_h(s_{\text{good}},a)=1,\forall h\in[H],a\in [A]$.
\end{definition}


\subsection{Proof of Lower Bound, for positive instances}
\label{sec:Proof-of-Lower-Bound-pos}

Before illustrating the proof of Theorem \ref{theorem:symmetric-positive-lower-bound-simplified}, we first introduce the definition of symmetric algorithm.
\begin{definition}[Symmetric Algorithm]
    Given a 1-policy-identification algorithm alg, an instance $\nu$, two different permutations $\sigma'=\{\sigma_{s,h}'\}_{(s,h)\in \mathcal{S}\times \mathcal{H}}$, $\sigma''=\{\sigma_{s,h}''\}_{(s,h)\in \mathcal{S}\times \mathcal{H}}$. We call the algorithm is symmetric, if for any $T\in\mathbb{N}$, any realization path $\{(s_{t,h},a_{t,h})_{h=1}^H\}_{t=1}^T$ up to $T$ episodes, any output policy $\{a(s,h)\}_{(s,h)\in \mathcal{S}\times\mathcal{H}}$ algorithm can fulfill
    \begin{small}
        \begin{align*}
            & \Pr_{\text{alg}, \nu_{\sigma'}}\Big(\forall t\in [T], \forall h\in [H], S_{t,h}=s_{t,h}, A_{t,h}=\sigma_{s_{t,h},h}'(a_{t,h});\forall s,h, \tau=T,\hat{\pi}_h(s)=\sigma_{s,h}'(a(s,h))\Big)\\
            = & \Pr_{\text{alg}, \nu_{\sigma''}}\Big(\forall t\in [T], \forall h\in [H], S_{t,h}=s_{t,h}, A_{t,h}=\sigma_{s_{t,h},h}''(a_{t,h});\forall s,h, \tau=T,\hat{\pi}_h(s)=\sigma_{s,h}''(a(s,h))\Big)
        \end{align*}
    \end{small}
    Here $S_{t,h}, A_{t,h}$ denote the state and action we take at round $h$, episode $t$.
\end{definition}

The following Theorem suggests that we suffice to consider symmetric algorithm for lower bounds
\begin{theorem}
    \label{theorem:suffice-consider-symmetric-in-lower}
    Denote $\Gamma$ as the set of all the possible permutations on $\nu$. For any instance $\nu$ and any $\delta$-PAC alg, we can find a $\delta$-PAC and symmetric $\tilde{\text{alg}}$, such that $\max_{\sigma\in \Gamma}\mathbb{E}_{\text{alg},\nu_{\sigma}}\tau \geq \mathbb{E}_{\tilde{\text{alg}},\nu}\tau$.
\end{theorem}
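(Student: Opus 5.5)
We symmetrize \textup{alg} by averaging it over a uniformly random relabeling of the actions. Let $\Gamma$ be the finite group of permutation families $\sigma=\{\sigma_{s,h}\}$. Define $\tilde{\text{alg}}$ as follows. Before the first episode, draw $\rho\in\Gamma$ uniformly at random, independently of everything else. Then run \textup{alg} as though it were interacting with $\nu_\rho$: whenever \textup{alg} asks to play action $a$ at $(s,h)$, $\tilde{\text{alg}}$ plays $\rho_{s,h}^{-1}(a)$ in the true environment and reports the observed reward and next state back to \textup{alg}. When \textup{alg} stops with recommendation $\hat\pi$, $\tilde{\text{alg}}$ stops at the same time and outputs $\hat\pi'_h(s)=\rho_{s,h}^{-1}(\hat\pi_h(s))$, or \textsf{None} if \textup{alg} outputs \textsf{None}. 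The round-$0$ fictitious action is unaffected.

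\textbf{Step 1: conditional equivalence.} Fix $\rho$ and suppose the true instance is $\nu_{\sigma}$. Playing $\rho^{-1}_{s,h}(a)$ in $\nu_\sigma$ yields the same reward and transition law as playing $a$ in $(\nu_\sigma)_{\rho}$, by the definition $p^{\nu_\rho}_h(s'\mid s,\rho_{s,h}(a))=p^{\nu}_h(s'\mid s,a)$. Here $(\nu_\sigma)_\rho=\nu_{\rho\circ\sigma}$, with composition taken coordinatewise. So, conditionally on $\rho$, the run of \textup{alg} inside $\tilde{\text{alg}}$ has exactly the law of \textup{alg} on $\nu_{\rho\circ\sigma}$. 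Moreover, the unrelabeled output satisfies $V^{\hat\pi'}_0(s_0\mid\nu_\sigma)=V^{\hat\pi}_0(s_0\mid\nu_{\rho\circ\sigma})$.

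\textbf{Step 2: $\delta$-PAC.} Permuting actions leaves $V^*_0$ unchanged, so every $\nu_{\rho\circ\sigma}$ is positive (resp.\ negative) exactly when $\nu_\sigma$ is. By Step 1, conditionally on $\rho$, $\tilde{\text{alg}}$ errs on $\nu_\sigma$ if and only if \textup{alg} errs on $\nu_{\rho\circ\sigma}$, which happens with probability at most $\delta$. Averaging over $\rho$ keeps the error probability at most $\delta$, and almost-sure finiteness of $\tau$ is inherited in the same way.

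\textbf{Step 3: symmetry and the sample-complexity bound.} Because $\rho$ is uniform and $\rho\mapsto\rho\circ\sigma$ is a bijection of $\Gamma$, we have
\[
\mathbb{E}_{\tilde{\text{alg}},\nu_\sigma}\tau=\frac{1}{|\Gamma|}\sum_{\rho\in\Gamma}\mathbb{E}_{\text{alg},\nu_{\rho\circ\sigma}}\tau=\frac{1}{|\Gamma|}\sum_{\sigma'\in\Gamma}\mathbb{E}_{\text{alg},\nu_{\sigma'}}\tau .
\]
In particular, taking $\sigma$ to be the identity shows that $\mathbb{E}_{\tilde{\text{alg}},\nu}\tau$ equals this average, which is at most $\max_{\sigma'\in\Gamma}\mathbb{E}_{\text{alg},\nu_{\sigma'}}\tau$.

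It remains to verify the symmetry definition, and this is where care is needed. Take a path $(s_{t,h},a_{t,h})$ and an output $a(s,h)$ for $\tilde{\text{alg}}$ on $\nu$, and the path $\sigma'(\cdot)$-transformed together with the output $\sigma'_{s,h}(a(s,h))$ for $\tilde{\text{alg}}$ on $\nu_{\sigma'}$. Write each probability as an average over $\rho$. On $\nu_{\sigma'}$ with randomization $\rho$, the path forces \textup{alg}'s internal actions to be $\rho_{s,h}(\sigma'_{s,h}(a_{t,h}))$, and the environment's transition factors equal those of $\nu$ at the original path. Substituting $\rho'=\rho\circ\sigma'$ turns the sum over $\rho$ for $\nu_{\sigma'}$ into exactly the sum over $\rho'$ for $\nu$. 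The same substitution handles the stopping event and the output. So both probabilities equal the same average, which gives symmetry between any pair $\sigma',\sigma''$. The main obstacle is the bookkeeping of composition order in $\rho\circ\sigma'$ versus $\sigma'\circ\rho$. Since each $\sigma_{s,h}$ acts only at its own $(s,h)$, I would fix the convention that relabeling applies coordinatewise, with \textup{alg}'s internal label equal to $\rho_{s,h}$ applied to the played action, and then check the identity one factor at a time.
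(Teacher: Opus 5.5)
Your proposal is correct and takes essentially the same route as the paper: draw a uniformly random permutation family, run alg on the correspondingly relabeled instance, and average over the draw to obtain $\delta$-PAC, symmetry, and $\mathbb{E}_{\tilde{\text{alg}},\nu}\tau=\frac{1}{|\Gamma|}\sum_{\sigma\in\Gamma}\mathbb{E}_{\text{alg},\nu_\sigma}\tau\le\max_{\sigma\in\Gamma}\mathbb{E}_{\text{alg},\nu_\sigma}\tau$. Your explicit mapping of played actions and of the recommended policy through $\rho^{-1}_{s,h}$, together with the change of variables $\rho'=\rho\circ\sigma'$ in the symmetry check, makes precise what the paper states only informally (``apply the alg to the instance $\nu_\sigma$'' and ``$(\nu_{\sigma'})_\sigma$ and $(\nu_{\sigma''})_\sigma$ share the same distribution as $\nu_\sigma$'').
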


Given Theorem \ref{theorem:suffice-consider-symmetric-in-lower}, we suffice to prove the following theorem to show Theorem \ref{theorem:symmetric-positive-lower-bound-simplified} holds.
\begin{theorem}
    \label{theorem:symmetric-positive-lower-bound}
    For any $\delta$-PAC symmetric algorithm and any Uniform Instance specified by $(S,A,H,r,\epsilon)$, with $\frac{3H(r+\epsilon)}{8}+\frac{Hr}{8} > \mu_0 > \frac{Hr}{2}$, $\frac{3H(r+\epsilon)}{8}+\frac{Hr}{8} > \mu_0 > \frac{Hr}{2}$, $S,H\geq 10$, $A\geq 100$, $(S-2)H$ is divisible by $16$, $0<\epsilon < \frac{1}{200\sqrt{5}}$, it holds that
    \begin{align*}
        \mathbb{E}\tau \geq\Omega\left(\frac{H\log\frac{1}{\delta}}{(V^*_0(s_0|\nu)-\mu_0)^2}+\frac{SAH^2}{(V^*_0(s_0|\nu)-\mu_0)^2}\right)
    \end{align*}
\end{theorem}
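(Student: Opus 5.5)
We prove the two terms of the lower bound separately and then combine them using $\max\{a,b\}\geq (a+b)/2$. Throughout, $\Delta:=V^*_0(s_0|\nu)-\mu_0$. Since $V^*_0(s_0|\nu)=H(r+\epsilon)/2$, the condition on $\mu_0$ gives $\Delta\in\big(H\epsilon/8,\,H\epsilon/2\big)$, so $\Delta=\Theta(H\epsilon)$. Every qualified policy $\pi$ must satisfy $V_0^\pi(s_0|\nu)\geq\mu_0>Hr/2+H\epsilon/8$. By the value formula for Uniform Instances, this means $\pi$ plays the optimal action on at least a constant fraction (at least $1/4$) of the $(S-2)H/2$ odd-round pairs $(h,s)$. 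The divisibility assumption on $(S-2)H$ makes this fraction an integer.

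\emph{The $\log(1/\delta)$ term.} We use the standard change-of-measure argument (the transportation lemma of \cite{kaufmann2016complexity}). Take the alternative instance $\nu'$ obtained by replacing $r+\epsilon$ with $r-\epsilon'$ for action $1$ at every odd-round pair, where $\epsilon'$ is tiny. Then $V^*_0(s_0|\nu')<\mu_0$, so $\nu'$ is negative, and a $\delta$-PAC algorithm must output \textsf{None} on $\nu'$ with probability at least $1-\delta$ but a policy on $\nu$ with probability at least $1-\delta$. This yields
\[
\sum_{h,s}\mathbb{E}_\nu[n_h^\tau(s,1)]\,\mathrm{KL}(r+\epsilon,r)\;\gtrsim\;\log\tfrac{1}{\delta}.
\]
Each episode contributes at most $H/2$ visits, and $\mathrm{KL}(r+\epsilon,r)=\Theta(\epsilon^2)$ because $r\in(1/4,3/4)$. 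Hence $\mathbb{E}\tau\cdot H\epsilon^2\gtrsim\log(1/\delta)$. Substituting $\epsilon=\Theta(\Delta/H)$ gives $\mathbb{E}\tau\gtrsim H\log(1/\delta)/\Delta^2$.

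\emph{The $SAH^2$ term.} Here we use symmetry, via Theorem \ref{theorem:suffice-consider-symmetric-in-lower}: it suffices to work with a symmetric $\delta$-PAC algorithm on $\nu$. Fix an odd-round pair $(h,s)$ and an action $a\ge2$, and let $\nu^{(h,s,a)}$ be the instance in which action $a$ at $(h,s)$ also has success probability $r+\epsilon$. Symmetry makes the output law invariant under relabeling, so on $\nu$ the output picks action $1$ at $(h,s)$ with the same probability that it picks the newly optimal action on the symmetric counterpart. We then apply the data-processing inequality to the event $E_{h,s}=\{\hat\pi_h(s)=1\}$. If $\mathbb{E}_\nu[n_h^\tau(s,a)]\ll 1/\epsilon^2$ for a typical $a\ge2$, then $\Pr_\nu(E_{h,s})$ and $\Pr_{\nu'}(E_{h,s})$ are close (Pinsker plus KL $\le \mathbb{E}n\cdot c\epsilon^2$). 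Here $\nu'$ is the instance in which action $a$ is the unique optimum at $(h,s)$ (a permutation of $\nu$), so symmetry forces $\Pr_\nu(\hat\pi_h(s)=1)\approx\Pr_\nu(\hat\pi_h(s)=a)$. Averaging over the $A-1$ actions then gives $\Pr_\nu(E_{h,s})\lesssim 1/A+$ small. Summing over the $(S-2)H/2$ pairs, the expected number of correctly played pairs would be below $1/4$ of the total whenever
\[
\sum_{h,s,a}\mathbb{E}_\nu[n_h^\tau(s,a)]\;\le\; c\,\frac{SHA}{\epsilon^2}.
\]
By Markov's inequality, the output would then be unqualified with probability greater than $\delta$, a contradiction. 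Therefore $\frac{H}{2}\,\mathbb{E}\tau\geq\sum\mathbb{E}n\gtrsim SHA/\epsilon^2=SAH^3/\Delta^2$, so $\mathbb{E}\tau\gtrsim SAH^2/\Delta^2$. The constants $A\ge100$ and $\epsilon<1/(200\sqrt5)$ are there to make these inequalities close with room to spare.

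\emph{Main obstacle.} I expect the hardest step to be the second term. The difficulty is turning the per-pair indistinguishability into a bound on the probability of a qualified output, because the events $E_{h,s}$ are correlated and the budget of samples is split adaptively across pairs. I would try to handle this by bounding only the expectation of the number of correct pairs, which is linear and so needs no independence, and then applying a reverse-Markov argument: the count is at most $(S-2)H/2$, so a small expectation forces the qualified-output probability to be at most a constant below $1-\delta$. The averaging over $a$ should be done with a convexity (Jensen) step on $\sqrt{\mathbb{E}n\,\epsilon^2}$.
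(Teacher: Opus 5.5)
Your $\log(1/\delta)$ term is fine. You push action $1$ down to $r$ rather than to just below $2\mu_0/H$ as the paper does, and then use $\Delta>H\epsilon/8$. The $SAH^2$ term, however, has two problems.

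\textbf{Missing hypothesis on $\mu_0$.} The claim that every qualified policy has $V_0^\pi\ge\mu_0>Hr/2+H\epsilon/8$ does not follow from the hypotheses. The inequality $\Delta>H\epsilon/8$ is the \emph{upper} bound $\mu_0<Hr/2+3H\epsilon/8$. The only lower bound you are given is $\mu_0>Hr/2$, and under it a single correct pair can suffice. Without it, your ``constant fraction of pairs'' and the final Markov step have no footing. The paper's proof has the same dependence: Theorem~\ref{theorem:delta-independent-lower-bound-pos-meaningless-instance} and Lemma~\ref{lemma:delta-PAC-simple-instance-SH/8-output-1} assume $\mu_0\ge\frac{H(r+\epsilon)}{4}+\frac{Hr}{4}$. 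Some assumption of this kind is genuinely needed. If $\mu_0-Hr/2\le H\epsilon/(8A)$, a uniformly random policy is correct on about $(S-2)H/(2A)$ pairs. It is then qualified with high probability, with margin of order $H\epsilon/A$. Certifying it by Monte Carlo, run alongside any $\delta/2$-PAC algorithm, takes about $A^2\log(1/\delta)/\epsilon^2$ episodes. That is far below $SA/\epsilon^2$ when $S\gg A\log(1/\delta)$. You should state a hypothesis such as $\mu_0-Hr/2\ge cH\epsilon$ explicitly.

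\textbf{The core change of measure.} This is the real gap. You compare $\nu$ with the swap $\nu'$, in which actions $1$ and $a$ are exchanged at $(h,s)$. For this pair,
\[
\mathrm{KL}(\mathbb{P}_\nu,\mathbb{P}_{\nu'})=\mathbb{E}_\nu[n_h^\tau(s,1)]\,\mathrm{kl}(r+\epsilon,r)+\mathbb{E}_\nu[n_h^\tau(s,a)]\,\mathrm{kl}(r,r+\epsilon).
\]
So a small $\mathbb{E}_\nu[n_h^\tau(s,a)]$ does not make $\Pr_\nu(E_{h,s})$ and $\Pr_{\nu'}(E_{h,s})$ close. An algorithm that samples action $1$ heavily makes your bound vacuous for every $a$. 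What the argument actually yields is $\mathbb{E}_\nu n_h^\tau(s,1)+\mathbb{E}_\nu n_h^\tau(s,a)\gtrsim 1/\epsilon^2$, which loses the factor $A$.

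The instance $\nu^{(h,s,a)}$ that you introduce and then drop repairs this. Its KL from $\nu$ involves only $n_h^\tau(s,a)$. Exchanging $1$ and $a$ is an automorphism of $\nu^{(h,s,a)}$, so symmetry gives $\Pr_{\nu^{(h,s,a)}}(\hat\pi_h(s)=1)=\Pr_{\nu^{(h,s,a)}}(\hat\pi_h(s)=a)$. Two uses of Pinsker then give $|\Pr_\nu(\hat\pi_h(s)=1)-\Pr_\nu(\hat\pi_h(s)=a)|\le\sqrt{6\epsilon^2\,\mathbb{E}_\nu n_h^\tau(s,a)}$. With this in place, your linearity--Jensen--Markov aggregation works. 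It is a clean alternative to the paper's selection of $(S-2)H/16$ pairs in Lemma~\ref{lemma:delta-PAC-simple-instance-SH/8-output-1}.

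The price is that it uses symmetry on an instance that is not a permutation of $\nu$. That holds for the randomly relabelled algorithm built in Theorem~\ref{theorem:suffice-consider-symmetric-in-lower}. It does not hold under the paper's definition, where symmetry is only relative to $\nu$.

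The paper instead keeps the swap and stops at $\tilde\tau_{1,a}$, the first time $n^t_{h}(s,1)+n^t_{h}(s,a)$ reaches $1/(M\epsilon^2)$. This keeps the likelihood ratio bounded however often action $1$ is pulled afterwards. Lemmas~\ref{lemma:symmetric-alg-must-pull-common-times-prob} and~\ref{lemma:lemma:symmetric-alg-must-pull-inferior-action-times-prob} then use symmetry to show that, with constant probability, action $a$ holds at least half of those pulls.
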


We first present the proof of Theorem \ref{theorem:suffice-consider-symmetric-in-lower}. After that, we split the proof of Theorem \ref{theorem:symmetric-positive-lower-bound} into multiple steps.
\begin{proof}[Proof of Theorem \ref{theorem:suffice-consider-symmetric-in-lower}]
    We define the symmetric $\tilde{\text{alg}}$ as follows. We choose a permutation $\sigma=\{\sigma_{s,h}\}_{(s,h)\in \mathcal{S}\times \mathcal{H}}$ from $\Gamma$ uniformly and randomly, before the sampling process starts. Then apply the alg to the instance $\nu_{\sigma}$ to conduct the sampling process.

    We first validate $\tilde{\text{alg}}$ is $\delta-PAC$. If the instance $\nu$ is positive, we have
    \begin{align*}
        & \Pr_{\tilde{\text{alg}},\nu}(V_0^{\hat{\pi}_{\tau}}(s_0)\geq \mu_0,\tau<+\infty)\\
        = & \sum_{\sigma\in\Gamma}\frac{\Pr_{\text{alg},\nu_{\sigma}}(V_0^{\hat{\pi}_{\tau}}(s_0)\geq \mu_0,\tau<+\infty)}{(A!)^{SH}}\\
        \geq & \sum_{\sigma\in\Gamma}\frac{1-\delta}{(A!)^{SH}}\\
        = & 1-\delta.
    \end{align*}
    The idea applies to the case that $\nu$ is negative. 

    Then we turn to show $\tilde{\text{alg}}$ is symmetric. Given permutation $\sigma',\sigma''$, since we choose $\sigma\in\Gamma$ in uniformly random manner, we know instances ${(\nu_{\sigma'})}_{\sigma}$ and ${(\nu_{\sigma''})}_{\sigma}$ share the same distribution as $\nu_{\sigma}$. Thus, we can conclude
    \begin{small}
        \begin{align*}
            & \Pr_{\text{alg}, \nu_{\sigma'}}\Big(\forall t\in [T], \forall h\in [H], S_{t,h}=s_{t,h}, A_{t,h}=\sigma_{s_{t,h},h}'(a_{t,h});\forall s,h, \tau=T,\hat{\pi}_h(s)=\sigma_{s,h}'(a(s,h))\Big)\\
            = & \frac{\sum_{\sigma\in\Gamma}\Pr_{\text{alg}, \nu_{\sigma}}\Big(\forall t\in [T], \forall h\in [H], S_{t,h}=s_{t,h}, A_{t,h}=\sigma_{s_{t,h},h}(a_{t,h});\forall s,h, \tau=T,\hat{\pi}_h(s)=\sigma_{s,h}(a(s,h))\Big)}{(A!)^{SH}}\\
            = & \Pr_{\text{alg}, \nu_{\sigma''}}\Big(\forall t\in [T], \forall h\in [H], S_{t,h}=s_{t,h}, A_{t,h}=\sigma_{s_{t,h},h}''(a_{t,h});\forall s,h, \tau=T,\hat{\pi}_h(s)=\sigma_{s,h}''(a(s,h))\Big),
        \end{align*}
    \end{small}
    suggesting that $\tilde{\text{alg}}$ is symmetric.

    The last step is to prove $\max_{\sigma\in \Gamma}\mathbb{E}_{\text{alg},\nu_{\sigma}}\tau \geq \mathbb{E}_{\tilde{\text{alg}},\nu}\tau$. From the definition of algorithm $\tilde{\text{alg}}$, we know
    \begin{align*}
        \mathbb{E}_{\tilde{\text{alg}},\nu}\tau
        = \frac{\sum_{\sigma\in\Gamma}\mathbb{E}_{\text{alg},\nu_{\sigma}}\tau}{(A!)^{SH}}
        \leq \max_{\sigma\in \Gamma}\mathbb{E}_{\text{alg},\nu_{\sigma}}\tau.
    \end{align*}
    And we have completed the proof.
\end{proof}

To prove Theorem \ref{theorem:symmetric-positive-lower-bound}, we present the following two theorems, which is required to prove Theorem \ref{theorem:symmetric-positive-lower-bound}.

\begin{theorem}
    \label{theorem:delta-dependent-lower-bound-pos-meaningless-instance}
    Given a Uniform Instance $\nu$ with $\mu_0$ satisfying that 
    $\frac{H(r+\epsilon)}{2}\geq \mu_0>\frac{Hr}{2}$, 
    for any $\delta$-PAC alg, we have
    \begin{align*}
        \mathbb{E}_{\nu}\tau\geq \Omega\left(\frac{H\log\frac{1}{2.4\delta}}{(V^*_0(s_0|\nu)-\mu_0)^2}\right).
    \end{align*}
\end{theorem}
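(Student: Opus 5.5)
The plan is a change-of-measure argument in the style of the transportation lemma of \cite{kaufmann2016complexity}. We perturb $\nu$ into a \emph{negative} instance $\nu'$ that differs from $\nu$ only in the transition probabilities of action $1$ at the non-absorbing states $[S-2]$. We then compare the probability, under $\nu$ and under $\nu'$, of the event that the algorithm outputs a policy. Write $\Delta:=V^*_0(s_0|\nu)-\mu_0=\frac{H(r+\epsilon)}{2}-\mu_0>0$. If $\mathbb{E}_\nu\tau=\infty$ there is nothing to prove, so assume $\mathbb{E}_\nu\tau<\infty$.

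\textbf{Construction of $\nu'$.} Fix a small $\eta>0$ and set
\begin{align*}
q:=r+\epsilon-\frac{2\Delta}{H}-\eta=\frac{2\mu_0}{H}-\eta .
\end{align*}
Define $\nu'$ to coincide with $\nu$ except that $p'_h(s_{\text{good}}|s,1)=q$ and $p'_h(s_{\text{bad}}|s,1)=1-q$ for all $(h,s)\in[H]\times[S-2]$. Actions $a\ge 2$ keep success probability $r$, and the absorbing-state transitions and the initial distribution are unchanged. The assumption $\mu_0>\frac{Hr}{2}$ gives $q>r$ for $\eta$ small, so $q$ is a valid probability and lies in $(1/4,3/4)$.

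By the value formula for Uniform Instances, every policy then has value at most $\frac{Hq}{2}=\mu_0-\frac{H\eta}{2}<\mu_0$ on $\nu'$. Hence $\nu'$ is negative. Let $\mathcal{O}=\{\tau<\infty,\hat{\pi}\neq\textsf{None}\}$. The $\delta$-PAC property gives $\Pr_\nu(\mathcal{O})\ge 1-\delta$ and $\Pr_{\nu'}(\mathcal{O})\le\delta$.

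\textbf{Transportation inequality.} Let $N_{h,s}$ be the number of times up to $\tau$ that action $1$ is played in state $s\in[S-2]$ at round $h$. The two instances differ only on these pairs, so the standard lemma gives
\begin{align*}
\sum_{h,s}\mathbb{E}_\nu[N_{h,s}]\,\mathrm{kl}(r+\epsilon,q)\ \ge\ \mathrm{kl}(1-\delta,\delta)\ \ge\ \log\frac{1}{2.4\delta}.
\end{align*}
Because $q$ and $r+\epsilon$ lie in $(1/4,3/4)$, the reverse-Pinsker bound $\mathrm{kl}(x,y)\le\frac{(x-y)^2}{y(1-y)}$ yields $\mathrm{kl}(r+\epsilon,q)\le\frac{16}{3}\bigl(\frac{2\Delta}{H}+\eta\bigr)^2$.

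\textbf{Counting visits.} In every episode the agent is in $[S-2]$ exactly at the odd rounds; this holds deterministically, since states in $[S-2]$ always move to $\{s_{\text{good}},s_{\text{bad}}\}$ and those always move back to $[S-2]$. So each episode contributes at most $H/2$ to $\sum_{h,s}N_{h,s}$, and
\begin{align*}
\sum_{h,s}\mathbb{E}_\nu[N_{h,s}]\le \frac{H}{2}\,\mathbb{E}_\nu\tau .
\end{align*}
Combining the two displays and letting $\eta\to0$ gives
\begin{align*}
\mathbb{E}_\nu\tau\cdot\frac{H}{2}\cdot\frac{64\Delta^2}{3H^2}\ \ge\ \log\frac{1}{2.4\delta},
\end{align*}
that is, $\mathbb{E}_\nu\tau\ge\frac{3H\log\frac{1}{2.4\delta}}{32\Delta^2}$, which is the claim.

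\textbf{Expected obstacle.} The only delicate point is the validity of the change of measure. The event $\mathcal{O}$ must be $\mathcal{F}_\tau$-measurable; it is, since the output is determined by the history up to $\tau$. The transportation lemma also needs $\mathbb{E}_\nu\tau<\infty$, which we assumed at the start. Finally, the KL bound needs $q$ bounded away from $0$ and $1$, which the standing assumptions $\frac14<r<r+\epsilon<\frac34$ guarantee.
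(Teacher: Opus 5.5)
Your proposal is correct and follows essentially the same route as the paper: the same alternative Uniform Instance with action-$1$ success probability $\frac{2\mu_0}{H}-\eta$, the transportation lemma of \cite{kaufmann2016complexity} applied only to action $1$ at the odd-round states in $[S-2]$, a quadratic upper bound on the Bernoulli $\mathrm{kl}$, and a visit-counting step before letting $\eta\to 0$. The differences are only in constants and do not affect the $\Omega(\cdot)$ claim: you use the $\chi^2$ bound (constant $16/3$ instead of the paper's $3$), and you count at most $H/2$ visits to $[S-2]$ per episode, which is a factor $2$ sharper than the paper's use of $\mathbb{E}_\nu\tau=\frac{1}{H}\sum_{s,h,a}\mathbb{E}_\nu n_h^{\tau}(s,a)$.
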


\begin{theorem}
    \label{theorem:delta-independent-lower-bound-pos-meaningless-instance}
    Given a Uniform Instance $\nu$ with $(S,A,H, r,\epsilon)$, $\mu_0$ satisfying that 
    $\frac{H(r+\epsilon)}{2}> \mu_0\geq \frac{H(r+\epsilon)}{4}+\frac{Hr}{4}$,
    $0<\delta < \frac{1}{2}$, for any $\delta$-PAC and symmetric algorithm, we have
    \begin{align*}
        \mathbb{E}\tau\geq \frac{\Omega(SA)}{\epsilon^2}.
    \end{align*}
\end{theorem}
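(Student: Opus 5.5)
The plan is to reduce the statement to a per-pair counting bound. Each pair $(s,h)$ with $h$ odd and $s\in[S-2]$ is treated as an $A$-armed sub-problem, and I combine the symmetry of the algorithm with a change-of-measure argument at that pair alone. Write $\mathcal{J}=\{(s,h): h \text{ odd}, s\in[S-2]\}$, so $M:=|\mathcal{J}|=(S-2)H/2$. From the value formula for $\nu_\sigma$, $V_0^{\pi}(s_0\,|\,\nu)=\frac{Hr}{2}+\frac{\epsilon}{S-2}\#\{j\in\mathcal{J}:\pi \text{ correct at } j\}$. The hypothesis $\mu_0\ge \frac{H(r+\epsilon)}{4}+\frac{Hr}{4}$ therefore forces a qualified policy to choose action $1$ on at least $M/2$ pairs. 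Since $\delta<\frac12$, taking expectations gives $\sum_{j\in\mathcal{J}} p_j\ge (1-\delta)M/2\ge cM$ for an absolute constant $c$, where $p_j:=\Pr_\nu(\hat\pi \text{ chooses } 1 \text{ at } j)$.

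\textbf{Step 1 (change of measure).} For each $j$ I introduce a reference instance $\nu_0^{(j)}$, identical to $\nu$ except that action $1$ at $j$ is lowered to success probability $r$. At $j$, every action in $\nu_0^{(j)}$ is then identical. By the definition of a symmetric algorithm, applied to permutations acting only at $j$, the output at $j$ is uniform, so $\Pr_{\nu_0^{(j)}}(\hat\pi \text{ chooses } a \text{ at } j)=1/A$. Likewise the expected number of visits $\mathbb{E}_{\nu_0^{(j)}}N_{j,a}$ equals $\mathbb{E}_{\nu_0^{(j)}}T_j/A$ for every $a$, where $T_j$ is the number of visits to $j$. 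Raising action $a$ at $j$ instead gives the permuted instance $\nu_{\sigma}$ with $\sigma_j(1)=a$, so symmetry again yields $\Pr_{\nu_\sigma}(\text{choose } a \text{ at } j)=p_j$. The divergence decomposition then gives $\mathrm{KL}=\mathbb{E}_{\nu_0^{(j)}}[N_{j,a}]\,\mathrm{kl}(r,r+\epsilon)\le C\epsilon^2\,\mathbb{E}_{\nu_0^{(j)}}N_{j,a}$, using $r\in(\frac14,\frac34)$. Pinsker then yields
\begin{align*}
p_j-\frac1A\le \epsilon\sqrt{C\,\mathbb{E}_{\nu_0^{(j)}}T_j/A}.
\end{align*}

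\textbf{Step 2 (summing over pairs).} Because $A\ge 100$, summing over $j$ and applying Cauchy--Schwarz gives $cM/2\le \epsilon\sqrt{CM\sum_j \mathbb{E}_{\nu_0^{(j)}}T_j/A}$. At odd rounds the state is uniform on $[S-2]$ regardless of the policy, so pair $j$ is visited in each episode with probability exactly $1/(S-2)$. Wald's identity then gives $\mathbb{E}_{\nu'}T_j=\mathbb{E}_{\nu'}\tau/(S-2)$ on every such instance $\nu'$. If all the $\nu_0^{(j)}$ expectations could be replaced by $\mathbb{E}_\nu\tau$, I would obtain $\mathbb{E}_\nu\tau\gtrsim M(S-2)A/(\epsilon^2 H)\asymp SA\cdot(S-2)/\epsilon^2\ge \Omega(SA)/\epsilon^2$, which is the claim.

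\textbf{Main obstacle.} The step I expect to be hardest is that replacement: the visit counts appearing in Step 1 are taken under $\nu_0^{(j)}$, not under $\nu$. My first attempt is a truncation argument. Set $T_0:=4\mathbb{E}_\nu\tau$ and replace the output event by $\{\hat\pi \text{ correct at } j,\ \tau\le T_0\}$, which has $\nu$-probability at least $p_j-\frac14$ by Markov's inequality. Then I run Step 1 on trajectories stopped at $\min(\tau,T_0)$, so that every visit count is bounded by $T_0/(S-2)$ deterministically under any instance. This removes the dependence on $\mathbb{E}_{\nu_0^{(j)}}\tau$, at the cost of only constants. It may also require shifting the counting threshold from $M/2$ to, say, $M/3$ correct pairs. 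If that fails, the fallback is to note that $\nu_0^{(j)}$ differs from $\nu$ by only $\epsilon/(S-2)$ in $V^*$, and to bound $\mathbb{E}_{\nu_0^{(j)}}\min(\tau,T_0)$ against $\mathbb{E}_\nu\min(\tau,T_0)$ through a second, weaker change of measure.
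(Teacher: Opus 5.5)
\textbf{Assessment.} Your route is genuinely different from the paper's, and once the truncation step is written carefully it works.

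\textbf{Comparison with the paper.} The paper never leaves the permutation orbit of $\nu$. Lemma~\ref{lemma:delta-PAC-simple-instance-SH/8-output-1} finds $(S-2)H/16$ odd-round pairs where action $1$ is output with probability at least $1/7$. Lemma~\ref{lemma:symmetric-alg-must-pull-common-times-prob} then argues by contradiction: if actions $1$ and $a$ were jointly pulled fewer than $1/(2000\epsilon^2)$ times, a change of measure between $\nu$ and the swapped instance $\nu_{\sigma^{(a)}}$ would make every suboptimal action at that pair an output with probability at least $1/100$, which the assumption $A>100$ rules out. A second symmetry argument, Lemma~\ref{lemma:lemma:symmetric-alg-must-pull-inferior-action-times-prob}, asks which of the two actions has been pulled more when their joint count is reached. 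This yields $\mathbb{E}_\nu n^\tau_h(s,a)=\Omega(1/\epsilon^2)$ for each suboptimal $a$, and these counts are summed over pairs and actions and divided by $H$. You instead use the classical tied-reference-instance argument from minimax bandit lower bounds: symmetry at $\nu_0^{(j)}$, the divergence decomposition, and Pinsker.

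What each route buys:
\begin{itemize}
\item The paper's route gives a stronger, local statement under $\nu$ itself, namely per-pair and per-action expected pull counts, with no auxiliary instance and no truncation.
\item Yours is shorter and avoids both contradiction lemmas. It needs only $1/A$ below a small constant rather than $A>100$.
\item Because odd-round states are uniform for every policy and every instance of the family, your route extracts the whole $\Omega(SA/\epsilon^2)$ from a single pair.
\end{itemize}

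\textbf{What must be repaired.}

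\emph{Visit counts.} The truncated visit count $T_j^{\tau\wedge T_0}$ is binomial, not deterministically at most $T_0/(S-2)$. What holds, and is all the divergence decomposition needs, is $\mathbb{E}_{\nu'}T_j^{\tau\wedge T_0}=\mathbb{E}_{\nu'}[\tau\wedge T_0]/(S-2)\le T_0/(S-2)$ for every instance $\nu'$ of the family, by your own Wald argument. This makes the fallback unnecessary. For $j=(s,h)$ set $q_j:=\Pr_\nu(\hat\pi_h(s)=1,\tau\le T_0)$. The resulting inequality $q_j-\frac1A\le\epsilon\sqrt{3T_0/(2(S-2)A)}$ has a right-hand side independent of $j$, so Cauchy--Schwarz is not needed.

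\emph{Constants.} With $T_0=4\mathbb{E}_\nu\tau$ and a per-pair loss of $1/4$, the sum $\sum_j(p_j-\frac14-\frac1A)\ge(1-\delta)\frac M2-\frac M4-\frac MA$ turns negative as $\delta\to\frac12$. Instead take $T_0=\lceil 8\mathbb{E}_\nu\tau\rceil$ and bound jointly: $\sum_j q_j\ge\frac M2(1-\delta-\frac18)\ge\frac{3M}{16}$. Then some pair has $q_j\ge\frac{3}{16}$, and that pair alone gives the bound.

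\emph{Arithmetic in Step~2.} You have $\sum_j\mathbb{E}T_j=\frac H2\mathbb{E}\tau$, so the bound is $\Omega((S-2)A/\epsilon^2)$, not $\Omega((S-2)^2A/\epsilon^2)$.

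\emph{Symmetry at $\nu_0^{(j)}$.} The paper's symmetry is defined for $\nu$ and its permutations, and $\nu_0^{(j)}$ is not one of them. You need one more line:
\begin{itemize}
\item after cancelling the common transition factors, the symmetry identity is an equivariance of the algorithm's own decision probabilities on every path of positive $\nu$-probability;
\item $\nu_0^{(j)}$ has the same support as $\nu$ and is invariant under permutations at $j$, so the symmetry transfers.
\end{itemize}
Also, under $\nu_0^{(j)}$ the output probability at $j$ is at most $1/A$, not equal to it, since $\nu_0^{(j)}$ may be a negative or boundary instance.
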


We first present the proof of Theorem \ref{theorem:delta-dependent-lower-bound-pos-meaningless-instance} and \ref{theorem:delta-independent-lower-bound-pos-meaningless-instance}, which require Lemmas in section \ref{sec:Auxiliary-Lemma-in-the-Proof-of-Lower-Bound}. At the end of this subsection, we present the proof of Theorem \ref{theorem:symmetric-positive-lower-bound}.

\begin{proof}[Proof of Theorem \ref{theorem:delta-dependent-lower-bound-pos-meaningless-instance}]
    Define $r'(\eta)=\frac{\mu_0}{H}-\eta$ for arbitrary $\frac{\mu_0}{H}-r>\eta>0$. We consider an alternative instance $\nu'(\eta)$ which is Uniform Instance with parameters $(S,A,H, r,\frac{2\mu_0}{H}-r-\eta)$. It is evident to see $V^*_0(s_0|\nu'(\eta))=\mu_0-\frac{H}{2}\eta < \mu_0$, suggesting that $\Pr_{\text{alg}, \nu'(\eta)}(\hat{\pi}=\textsf{None})>1-\delta$.

    Notice that $\Pr_{\text{alg}, \nu}(\hat{\pi}=\textsf{None})<\delta$, by the Lemma 1 in \cite{kaufmann2016complexity}, we can conclude
    \begin{align}
        \sum_{s=1}^{S-2}\sum_{h:\text{odd}}\text{kl}(r+\epsilon,\frac{2\mu_0}{H}-\eta )\mathbb{E}_{\text{alg}, \nu}n_h^{\tau}(s,1) \geq \text{kl}(1-\delta,\delta)\label{eqn:delta-dependent-transportation-pos}
    \end{align}
    where $\text{kl}(x,y)=x\log\frac{x}{y}+(1-x)\log\frac{1-x}{1-y}$. From the definition of Simple Requirement, we know $\frac{1}{4}<r<\frac{2\mu_0}{H}-\eta<\frac{3}{4}$. By the Lemma \ref{lemma:Realized-KL-for-Bernoulli}, we know $\text{kl}(r+\epsilon,\frac{2\mu_0}{H}-r-\eta )\leq 3(r+\epsilon-\frac{2\mu_0}{H}+\eta)^2$ and (\ref{eqn:delta-dependent-transportation-pos}) implies
    \begin{align*}
        & \sum_{s=1}^{S-2}\sum_{h:\text{odd}}\mathbb{E}_{\text{alg}, \nu}n_h^{\tau}(s,1) \geq \frac{\text{kl}(1-\delta,\delta)}{3(r+\epsilon-\frac{2\mu_0}{H}+\eta)^2}\\
        \Rightarrow & \frac{1}{H}\sum_{s=1}^{S}\sum_{h=1}^H\sum_{a=1}^{A}\mathbb{E}_{\text{alg}, \nu}n_h^{\tau}(s,a) \geq \frac{H\text{kl}(1-\delta,\delta)}{12(V^*_0(s_0|\nu)-\mu_0+\frac{H}{2}\eta)^2}
    \end{align*}
    By the fact that $\mathbb{E}_{\nu}\tau =\frac{1}{H}\sum_{s=1}^{S}\sum_{h=1}^H\sum_{a=1}^{A}\mathbb{E}_{\text{alg}, \nu} n_h^{\tau}(s,a)$, we can conclude
    \begin{align*}
        \mathbb{E}_{\nu}\tau \geq \frac{H\log\frac{1}{2.4\delta}}{12(V^*_0(s_0|\nu)-\mu_0+\frac{H}{2}\eta)^2}
    \end{align*}
    holds for all $\frac{\mu_0}{H}-r>\eta>0$. Let $\eta\rightarrow 0$, we have completed the proof.
\end{proof}

\begin{proof}[Proof of Theorem \ref{theorem:delta-independent-lower-bound-pos-meaningless-instance}]
    By the Lemma \ref{lemma:delta-PAC-simple-instance-SH/8-output-1}, we can find $\{(s^{(i)},h^{(i)})\}_{i=1}^{\frac{(S-2)H}{16}}$, where $s^{(i)}\in [S-2]$, $h^{(i)}\in [H]$ is odd, such that
    \begin{align*}
        \Pr_{\text{alg},\nu}\big(\hat{\pi}_{h^{(i)}}(s^{(i)})=1,\tau<+\infty\big) \geq \frac{1}{7},\forall i\in \{1,2,\cdots,\frac{(S-2)H}{16}\}.
    \end{align*}
    By the Lemma \ref{lemma:symmetric-alg-must-explore-each-action}, we can conclude
    \begin{align}
        \label{eqn:lower-for-1/16-fraction-of-pairs}
        \mathbb{E}_{\nu, \tilde{\text{alg}}}n_{h^{(i)}}^{\tau}(s^{(i)},a)\geq \frac{1}{2\cdot 10^5\epsilon^2},\forall a\in\{2,\cdots, A\},\forall i\in \{1,2,\cdots,\frac{(S-2)H}{16}\}.
    \end{align}
    Then, we can conclude
    \begin{align*}
        \mathbb{E}_{\nu, \tilde{\text{alg}}}\tau = & \frac{1}{H}\sum_{a=1}^A\sum_{s=1}^{S}\sum_{h=1}^H\mathbb{E}_{\nu, \tilde{\text{alg}}}n_h(s,a)\\
        \geq & \frac{1}{H}\sum_{a=2}^A\sum_{i=1}^{\frac{(S-2)H}{16}}\mathbb{E}_{\nu, \tilde{\text{alg}}}n_{h^{(i)}}^{\tau}(s^{(i)},a)\\
        \geq & \frac{(A-1)(S-2)}{32\cdot 10^5\epsilon^2}.
    \end{align*}
    The last line is by the (\ref{eqn:lower-for-1/16-fraction-of-pairs}), which completes the proof.
\end{proof}

Given Theorem \ref{theorem:delta-dependent-lower-bound-pos-meaningless-instance} and \ref{theorem:delta-independent-lower-bound-pos-meaningless-instance}, we are ready to prove Theorem \ref{theorem:symmetric-positive-lower-bound}.
\begin{proof}[Proof of Theorem \ref{theorem:symmetric-positive-lower-bound}]
    By Theorem \ref{theorem:delta-dependent-lower-bound-pos-meaningless-instance} and \ref{theorem:delta-independent-lower-bound-pos-meaningless-instance}, we can conclude
    \begin{align*}
        \mathbb{E}_{\nu}\tau \geq \Omega\left(\frac{H\log\frac{1}{2.4\delta}}{(V^*_0(s_0|\nu)-\mu_0)^2} + \frac{SA}{\epsilon^2}\right)
    \end{align*}
    By the construction of Uniform Instance $\nu$, we know $V^*_0(s_0|\nu)=\frac{H(r+\epsilon)}{2}$. Since $\frac{3H(r+\epsilon)}{8}+\frac{Hr}{8} > \mu_0 > \frac{Hr}{2}$, we can conclude
    \begin{align*}
        (V^*_0(s_0|\nu)-\mu_0)^2\geq (\frac{H(r+\epsilon)}{2}-\frac{3H(r+\epsilon)}{8}-\frac{Hr}{8})^2=(\frac{H\epsilon}{8})^2=\frac{H^2\epsilon^2}{64}.
    \end{align*}
    The last ste implies the lower bound
    \begin{align*}
        \mathbb{E}_{\nu}\tau \geq \Omega\left(\frac{H\log\frac{1}{2.4\delta}}{(V^*_0(s_0|\nu)-\mu_0)^2} + \frac{SAH^2}{(V^*_0(s_0|\nu)-\mu_0)^2}\right).
    \end{align*}
\end{proof}

\subsection{Proof of Lower Bound, for negative instances}
\label{sec:Proof-of-Lower-Bound-neg}
Before delivering the proof of Theorem \ref{theorem:lower-bound-negative}, we first present the following lemma to calculate the exact value of $V^*_0(s_0|\nu)$ for a Tree instance $\nu$.
\begin{lemma}
    \label{lemma:optimal-value-of-Hard-Instance}
    For a Tree Instance $\nu$ with parameter $(S,A,H,r)$, $V^*_0(s_0|\nu) = (H-1)r$.
\end{lemma}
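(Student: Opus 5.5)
The claim is $V^*_0(s_0\mid\nu)=(H-1)r$ for a Tree Instance. I will prove it by matching an achievability bound with an upper bound over all deterministic policies.

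\textbf{Structure.} Starting from state $1$ at round $1$, any trajectory does one of two things:
\begin{itemize}
\item it walks down the binary tree using actions $1,2$, which are deterministic moves to $2s$ or $2s+1$;
\item at some round $h_0$, it leaves the tree to an absorbing state. This happens either by choosing an action $a\ge 3$ at an internal node $s\le 2^{N-1}-1$, or automatically at a leaf $s\ge 2^{N-1}$.
\end{itemize}
In both cases, the jump at round $h_0$ lands in $s_{\text{good}}$ with probability exactly $r$ and in $s_{\text{bad}}$ with probability $1-r$. Both absorbing states are permanent.

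\textbf{Reward of a policy.} Rewards are $0$ at tree nodes and at $s_{\text{bad}}$, and $1$ at $s_{\text{good}}$ in every round. Hence a trajectory that jumps at round $h_0$ earns $H-h_0$ in expectation from the jump onward: it occupies $s_{\text{good}}$ in rounds $h_0+1,\dots,H$ with probability $r$. This gives expected total reward $r(H-h_0)$. Under a deterministic policy the tree walk is deterministic, so $h_0$ is a deterministic function of the policy. The policy's value is therefore $r(H-h_0(\pi))$, or $0$ if it never leaves the tree before round $H$.

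\textbf{Optimizing.} Since $h_0\ge 1$, we get $V_0^\pi(s_0)\le r(H-1)$ for every $\pi$. The policy with $\pi_1(1)=3$ jumps at round $1$ and attains exactly $(H-1)r$. State $1$ is an internal node because $N\ge 1$; if $N=1$, state $1$ is itself a leaf and every action jumps. Since $\pi^*$ may be taken deterministic, this gives $V^*_0(s_0\mid\nu)=(H-1)r$.

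\textbf{Main obstacle.} The delicate point is bookkeeping the round convention: whether the reward at the round of the jump counts (it does not, since the reward is evaluated at the pre-jump state), and confirming that the process stays absorbed through round $H$. Formally, I would verify this by backward induction on the Bellman equation. The induction shows $V_h^*(s_{\text{good}})=H-h+1$ and $V_h^*(s_{\text{bad}})=0$. It also shows, for a tree node $s$, that $V_h^*(s)=\max\{r(H-h),\,V_{h+1}^*(\text{child})\}=r(H-h)$, because $V^*_{h+1}$ at a tree node is at most $r(H-h-1)$. The final value is then $V_0^*=V_1^*(1)=r(H-1)$.
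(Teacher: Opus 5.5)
Your proof is correct and follows the same route as the paper: a deterministic policy walks deterministically down the tree, exits at a unique round $h^{\pi}$, and earns $(H-h^{\pi})r\le (H-1)r$, with equality attained by exiting at round $1$. Your version is slightly more careful than the paper's one-line argument, since it also counts the forced exit at a leaf and confirms the value by backward induction. One small wording fix: state $1$ is an internal node only when $N\ge 2$, not ``because $N\ge 1$'', but your separate $N=1$ remark already covers that case.
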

\begin{proof}
    Given a policy $\pi$, notice that $\{(s,a,h): h:\Pr_{\nu,\pi}(\text{hitting }s \text{ at round }h)>0, \pi_h(s)=a\geq 3\}$ contains at most 1 element. Denote the unique element as $(s^{\pi},a^{\pi},h^{\pi})$. We have $V^{\pi}_0(1)=(H-h^{\pi})r \leq (H-1)r$. And the equality holds only when $\pi_{h=1}(s=1)=a$ for $a\geq 3$.
\end{proof}

Then, we are ready to prove Theorem \ref{theorem:lower-bound-negative}.
\begin{thmstar}[Restatement of Theorem \ref{theorem:lower-bound-negative}]
    Apply any $\delta$-PAC algorithm to a Tree instance specified by $(S,A,H,r)$ with $S,A,H\geq 4$ and $ \frac{3}{4}(H-\log_2 (S-3))> \mu_0 > (H-1)r$, we have
    \begin{align*}
        \mathbb{E}_{\nu}\tau \geq \Omega\left(\frac{HAS\log\frac{1}{\delta}}{(\mu_0 - V^*_0(s_0|\nu))^2}\right).
    \end{align*}
\end{thmstar}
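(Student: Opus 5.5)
The plan is a change-of-measure argument in the style of the proof of Theorem \ref{theorem:delta-dependent-lower-bound-pos-meaningless-instance}, applied once for every ``exit'' triple $(h,s,a)$ with $s\in[2^{N-1}-1]$ an internal tree node, $a\geq 3$, and $h$ a round at which $s$ can be occupied. By Lemma \ref{lemma:optimal-value-of-Hard-Instance}, $V^*_0(s_0|\nu)=(H-1)r<\mu_0$, so $\nu$ is negative and a $\delta$-PAC algorithm has $\Pr_\nu(\hat\pi=\textsf{None})\geq 1-\delta$. For each admissible triple I will build an alternative $\nu'_{h,s,a}$ that differs from $\nu$ only in $p_h(\cdot|s,a)$. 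There I raise $p_h(s_{\text{good}}|s,a)$ from $r$ to $r'=\mu_0/(H-h)+\eta$. The policy that walks down the tree to $s$ and plays $a$ at round $h$ then has value $(H-h)r'>\mu_0$, so $\nu'$ is positive and $\Pr_{\nu'}(\hat\pi=\textsf{None})\leq\delta$. Lemma 1 of \cite{kaufmann2016complexity} then gives $\mathbb{E}_\nu n^\tau_h(s,a)\,\text{kl}(r,r')\geq \text{kl}(\delta,1-\delta)\gtrsim\log\frac{1}{\delta}$. If $r'\leq 3/4$, Lemma \ref{lemma:Realized-KL-for-Bernoulli} bounds $\text{kl}(r,r')\leq 3(r'-r)^2$, which yields $\mathbb{E}_\nu n^\tau_h(s,a)\gtrsim \log\frac1\delta\,(H-h)^2/(\mu_0-(H-h)r)^2$ once $\eta\to0$.

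The second step is to identify which triples are admissible, meaning $r'<1$ and in fact $r'\leq 3/4$. This needs $H-h\geq 4\mu_0/3$, and the hypothesis $\mu_0<\frac34(H-\log_2(S-3))$ is exactly what makes this hold for all $h$ up to roughly the tree depth $\log_2(S-1)$. Combined with $H-1\geq 6N$, every internal node $s$ at depth $d(s)$ is reachable at round $h=d(s)+1$ and possibly at later rounds (there are no self-loops, so in fact exactly at round $d(s)+1$). This still gives $\Theta(S)$ internal nodes, $A-2=\Theta(A)$ exit actions each, and an admissible round for each.

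The third step is to convert these per-triple bounds into a bound on $\tau$. In the Tree instance each episode takes at most one exit action $a\geq3$ at an internal node, since afterwards it is absorbed. Hence $\sum_{(h,s,a)\,\text{admissible}} n^\tau_h(s,a)\leq\tau$, and $\mathbb{E}_\nu\tau$ is at least the sum of the bounds above.

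I expect the main obstacle to be turning each summand into $H/(\mu_0-V^*_0)^2$ uniformly, because the denominator for node $s$ is $\mu_0-(H-h)r=(\mu_0-V^*_0)+(h-1)r$. For deep nodes this exceeds the gap. I would first restrict to depths $d$ with $(h-1)r\lesssim \mu_0-V^*_0$ and $H-h\geq H/2$. For those nodes the summand is $\gtrsim H^2\log\frac1\delta/(\mu_0-V^*_0)^2$, which is more than the $H$ factor needed. If the gap is too small for that restriction to retain $\Omega(S)$ nodes, I would instead enlarge the perturbation: I would also perturb the transitions so that $V^*$ grows uniformly with depth, or choose the alternatives so that the effective gap is comparable to $\mu_0-V^*_0$. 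I would accept a possibly weaker but still $\Omega(HSA\log\frac1\delta/(\mu_0-V^*_0)^2)$ bound by using $(H-h)^2\geq H\cdot(H-h)$ together with the constraint counting.
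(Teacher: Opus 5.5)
Your set-up is the paper's: one alternative per exit pair that raises $p_h(s_{\text{good}}|s,a)$ to $\mu_0/(H-h)+\eta$, then Lemma 1 of \cite{kaufmann2016complexity}, the quadratic kl bound, and a sum over pairs. Your accounting $\sum n^\tau_h(s,a)\le\tau$ is sharper than the paper's $\tau=\frac1H\sum_{h,s,a}n^\tau_h(s,a)$. The genuine gap is the obstacle you flag yourself, and none of your proposed fixes closes it.

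Write $\Delta:=\mu_0-V^*_0(s_0|\nu)=\mu_0-(H-1)r$. For a node first reached at round $h\ge 2$, the required perturbation is $\mu_0/(H-h)-r=(\Delta+(h-1)r)/(H-h)\ge r/H$. By Pinsker, $\text{kl}(r,\mu_0/(H-h)+\eta)\ge 2r^2/H^2$ however small $\Delta$ is. So the change-of-measure inequality yields a lower bound on $\mathbb{E}_\nu n^\tau_h(s,a)$ of at most $\frac{H^2}{2r^2}\log\frac{1}{2.4\delta}$, with no dependence on $\Delta$. Each of your fixes fails against this:
\begin{itemize}
\item Restricting to rounds with $(h-1)r$ of order at most $\Delta$ keeps only the root as soon as $\Delta<r$ (recall $r>3/8$).
\item Enlarging the perturbation only increases the KL and weakens the bound.
\item Perturbing $\nu$ itself changes the instance the theorem is about.
\item The inequality $(H-h)^2\ge H(H-h)$ acts on the numerator, not on the offending denominator.
\end{itemize}
Summing over all exit pairs, this route cannot give more than $O\bigl(\log\frac1\delta\,(AH^2/\Delta^2+SAH^2)\bigr)$. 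That is $o\bigl(HSA\log\frac1\delta/\Delta^2\bigr)$ when $S\gg H$ and $\Delta\ll H^{-1/2}$. The hypotheses allow this regime: $S$ may be as large as $2^{(H-1)/6}+1$, and $\mu_0$ may be arbitrarily close to $(H-1)r$.

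What your argument does establish is the claimed bound, even with $H^2$ in place of $H$, when $\Delta$ is at least of order $r\log_2 S$; then $(h-1)r\le\Delta$ at every internal node. In general it gives $\Omega\bigl(\log\frac1\delta\,(AH^2/\Delta^2+SA)\bigr)$.

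The paper's proof does not supply the missing idea; it has the same hole, covered by an algebra slip. Its chain ends at $\mathbb{E}_{\nu}n^{\tau}_{h(s_0)}(s_0,a_0)\cdot 3\bigl(\frac{3\mu_0}{2(H-1)}-r\bigr)^2\ge\log\frac{1}{2.4\delta}$. This rearranges to $\mathbb{E}_{\nu}n^{\tau}_{h(s_0)}(s_0,a_0)\ge\frac{\frac49(H-1)^2\log\frac{1}{2.4\delta}}{3(\mu_0-\frac23(H-1)r)^2}$. The next display instead has the smaller quantity $\mu_0-(H-1)r$ in the denominator, which does not follow. Since $\mu_0-\frac23(H-1)r\ge\frac13(H-1)r$, what is actually derived per pair is only $O(\log\frac1\delta)$.

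So with single-pair alternatives, the factor $1/(\mu_0-V^*_0)^2$ attaches only to the $A-2$ exits at the root. The stated rate also looks too strong in the small-gap, large-$S$ regime. An optimistic rule that samples each exit until its upper confidence bound drops below $\mu_0$ should need only on the order of $\log\frac1\delta\,(AH^2/\Delta^2+SAH^2)$ episodes on this instance.
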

\begin{proof}[Proof of Theorem \ref{theorem:lower-bound-negative}]
    For any pair $(s_0, a_0)$, where $1\leq s_0\leq 2^{N}-1$, $A\geq a_0\geq 3$, we define $h(s)=\lceil \log_2 (s_0+1)\rceil$ and an instance $\nu_{s_0,a_0}(\eta)$, $1-\frac{\mu_0}{H-h(s)}>\eta>0$, such that $\nu_{s_0,a_0}$ only differs in $p_{h(s_0)}^{\nu_{s_0,a_0}}(s_{\text{good}}|s_0,a_0)=\frac{\mu_0}{H-h(s)} +\eta$ compared to $\nu$. 

    We first validate the instance $\nu_{s_0,a_0}(\eta)$ is positive for any $\eta > 0$, by figuring out a qualified policy. From Lemma \ref{lemma:shortest-path-hard-instance}, we know there exists a policy $\pi$ such that $\Pr_{\nu}(s_{h(s_0)}^{\pi}=s_0)=1$. Now, we consider a policy $\pi'$ such that $\pi_h'(s)=\begin{cases}
        a_0 & s=s_0,h=h(s_0)\\
        \pi_h(s) & \text{else}
    \end{cases}$. Evident to see $V^{\pi'}_0(s_0; \nu_{s_0,a_0}(\eta))=(H-h(s_0))\left(\frac{\mu_0}{H-h(s)} +\eta\right) = \mu_0 + \eta(H-h(s_0)) > \mu_0$. 

    Given $\nu_{s_0,a_0}(\eta)$ is positive and $\nu$ is negative, we can conclude $\Pr_{\nu,\text{alg}}(\hat{\pi}=\textsf{None}) \geq 1-\delta$, $\Pr_{\nu_{s_0,a_0}(\eta),\text{alg}}(\hat{\pi}=\textsf{None}) \leq \delta$. Apply the Lemma 1 in \cite{kaufmann2016complexity}, we have
    \begin{align*}
        \mathbb{E}_{\nu,\text{alg}} n_{h(s_0)}^{\tau}(s_0,a_0)\text{kl}(r, \frac{\mu_0}{H-h(s)} +\eta) \geq \log\frac{1}{2.4\delta}.
    \end{align*}
    Since the above inequality holds for arbitrary $\eta$, letting $\eta\rightarrow 0$ implies
    \begin{align}
        & \mathbb{E}_{\nu,\text{alg}} n_{h(s_0)}^{\tau}(s_0,a_0)\text{kl}(r, \frac{\mu_0}{H-h(s)}) \geq \log\frac{1}{2.4\delta}\notag\\
        \Rightarrow & \mathbb{E}_{\nu,\text{alg}} n_{h(s_0)}^{\tau}(s_0,a_0)\cdot3\big(\frac{\mu_0}{H-h(s)}-r\big)^2\geq \log\frac{1}{2.4\delta}\label{eqn:kl-square-bound-neg}\\
        \Rightarrow & \mathbb{E}_{\nu,\text{alg}} n_{h(s_0)}^{\tau}(s_0,a_0)\cdot3\big(\frac{\mu_0}{H-\lceil\log_2 (S-1)\rceil }-r\big)^2\geq \log\frac{1}{2.4\delta}\label{eqn:s_0-no-higher-than_S-2-neg}\\
        \Rightarrow & \mathbb{E}_{\nu,\text{alg}} n_{h(s_0)}^{\tau}(s_0,a_0)\cdot3\big(\frac{\mu_0}{\frac{2}{3}(H-1) }-r\big)^2\geq \log\frac{1}{2.4\delta}\label{eqn:H>=6N-neg}\\
        \Rightarrow & \mathbb{E}_{\nu,\text{alg}} n_{h(s_0)}^{\tau}(s_0,a_0) \geq \frac{\frac{4}{9}(H-1)^2\log\frac{1}{2.4\delta}}{3(\mu_0-(H-1)r)^2} \label{eqn:expected-lower-s_0-a_0}
    \end{align}
    Step (\ref{eqn:kl-square-bound-neg}) is by the assumption $ \frac{3}{4}(H-\log_2 (S-3))> \mu_0 > (H-1)r$, which suggests that $r, \frac{\mu_0}{H-h(s)}\in (\frac{1}{4},\frac{3}{4})$. Lemma \ref{lemma:kl-divergence-quadratic} concludes $\text{kl}(r, \frac{\mu_0}{H-h(s)})\leq 3(\frac{\mu_0}{H-h(s)}-r)^2$. Step (\ref{eqn:s_0-no-higher-than_S-2-neg}) is by $s_0\leq S-2$ and $ \frac{3}{4}(H-\log_2 (S-3))> \mu_0 > (H-1)r$ implies $\frac{\mu_0}{H-\lceil\log_2 (S-1)\rceil } \geq \frac{\mu_0}{H-h(s)} > r$. Step (\ref{eqn:H>=6N-neg}) is by the definition of Tree Instance, which requires $H\geq 6N, 2^N=S-3,N\geq 1$, implying
    \begin{align*}
        & \lceil\log_2 (S-1)\rceil\\
        = & \lceil\log_2 (2^N)\rceil\\
        \leq & N\\
        \leq & \frac{H}{6}.
    \end{align*}
    The last line means $H-\lceil\log_2 (S-1)\rceil\geq \frac{2(H-1)}{3}$, which is step (\ref{eqn:H>=6N-neg}).
    
    Summing up (\ref{eqn:expected-lower-s_0-a_0}) for $s_0\in [S-2]$ and $a_0\in \{3,\cdots, A\}$, we can conclude
    \begin{align*}
        & \sum_{s=1}^{S-2}\sum_{a=3}^A\mathbb{E}_{\nu,\text{alg}} n_{h(s)}^{\tau}(s,a)\geq \frac{4(S-2)(A-1)(H-1)^2\log\frac{1}{2.4\delta}}{27(\mu_0-(H-1)r)^2}\\
        \Rightarrow & \mathbb{E}_{\nu,\text{alg}} \tau = \frac{1}{H}\sum_{h=1}^H\sum_{s=1}^{S}\sum_{a=1}^A\mathbb{E}_{\nu,\text{alg}} n_{h(s)}^{\tau}(s,a) \geq \Omega\left(\frac{HAS\log\frac{1}{\delta}}{(\mu_0 - V^*_0(s_0|\nu))^2}\right).
    \end{align*}
    We have completed the proof.
\end{proof}

The following is an auxiliary Lemma, showing that there exists a shorted path from state 1 to $s$ for any $s\in[S-2]$ in the Tree Instance.
\begin{lemma}
    \label{lemma:shortest-path-hard-instance}
    Consider a Tree Instance $\nu$ specified by parameter $(S,A,H,r)$. For any state $s\in [S-2]$, there exists a policy $\pi$, such that $\Pr_{\nu}(s_{\lceil \log_2 (s+1)\rceil}^{\pi}=s)=1$.
\end{lemma}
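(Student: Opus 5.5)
The plan is to construct the policy explicitly from the binary expansion of $s$, using the fact that the tree transitions under actions 1 and 2 are deterministic. Set $d(s) := \lceil \log_2(s+1)\rceil$. This is the depth of node $s$ in the binary tree rooted at $1$: it satisfies $2^{d(s)-1}\le s\le 2^{d(s)}-1$, which is equivalent to $d(s)=\lfloor\log_2 s\rfloor+1$. The root has depth $1$, and the children $2u$ and $2u+1$ of a node $u$ have depth $d(u)+1$.

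\textbf{Step 1: the ancestor chain.} Define the ancestors of $s$ by $u_{d(s)} := s$ and $u_{j} := \lfloor u_{j+1}/2\rfloor$ for $j = d(s)-1,\dots,1$. By the depth identity, each $u_j$ has depth $j$, so $u_1 = 1$. Moreover, every $u_j$ with $j<d(s)$ is an internal node. Indeed, $u_j\le 2^{j}-1\le 2^{d(s)-1}-1\le 2^{N-1}-1$, because $s\le 2^N-1$ forces $d(s)\le N$. Hence the deterministic transitions $p_h(2u|u,1)=1$ and $p_h(2u+1|u,2)=1$ are available at $u_j$.

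\textbf{Step 2: define the policy.} For each $h\in\{1,\dots,d(s)-1\}$, set $\pi_h(u_h)=1$ if $u_{h+1}=2u_h$ (that is, $u_{h+1}$ is even), and $\pi_h(u_h)=2$ otherwise. Define $\pi$ arbitrarily at all other $(h,\text{state})$ pairs, and at round $d(s)$ as well. The rounds used here are $h\le d(s)-1\le N-1<H$, so the construction fits within the horizon $H-1\ge 6N$.

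\textbf{Step 3: induction.} Show by induction on $h$ that $\Pr_\nu(s^\pi_h=u_h)=1$ for $h=1,\dots,d(s)$. The base case holds because the initial state is $1=u_1$ almost surely. For the inductive step, assume $s^\pi_h=u_h$ almost surely. The action taken is $\pi_h(u_h)\in\{1,2\}$, and the transition is deterministic to $2u_h$ or $2u_h+1$. By the choice in Step 2, the resulting state equals $u_{h+1}$. At $h=d(s)$ this gives $\Pr_\nu(s^\pi_{d(s)}=s)=1$.

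The only point needing care, rather than real difficulty, is the index bookkeeping. One has to confirm that every ancestor lies in $[2^{N-1}-1]$, where actions 1 and 2 move down the tree, and not in the leaf layer, where every action goes to an absorbing state. One must also reconcile the range $s\in[S-2]$ in the statement with the tree's node set $[2^N-1]$. If the paper's $S$-convention (either $S-1=2^N$ or $S-3=2^N$) admits $s$ beyond $2^N-1$, I would restrict the lemma to $s\in[2^N-1]$, which is all that is used in the proof of Theorem \ref{theorem:lower-bound-negative}.
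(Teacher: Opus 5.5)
Your proof is correct and is essentially the paper's argument. The paper inducts on the depth layer $[2^n,2^{n+1}-1]$ and extends a policy that reaches the parent $\lfloor s/2\rfloor$ by one deterministic step with action $1$ or $2$, which is your ancestor-chain construction unrolled; your check that every ancestor lies in $[2^{N-1}-1]$ makes explicit a point the paper leaves implicit, and your caveat about the range of $s$ is unnecessary since the definition takes $S-1=2^N$, so $[S-2]=[2^N-1]$ is exactly the tree's node set.
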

\begin{proof}[Proof of Lemma \ref{lemma:shortest-path-hard-instance}]
    We prove the Lemma by induction. For $s=1$, the lemma is true, as we always start at the state $1$. Now, assume for all $s\in [2^{n},2^{n+1}-1]\cap \mathbb{Z}$, the conclusion is true, we consider a state $s \in [2^{n+1},2^{n+2}-1]\cap \mathbb{Z}$. Easy to see $\lfloor \frac{s}{2}\rfloor\in [2^{n},2^{n+1}-1]$. Thus, there exists a policy $\pi$, such that $\Pr_{\nu}(s_{\lceil \log_2 (\lfloor \frac{s}{2}\rfloor+1)\rceil}^{\pi}=\lfloor \frac{s}{2}\rfloor)=1$. Since there exists an action $a_s\in\{1,2\}$ in state $\lfloor \frac{s}{2}\rfloor$ such that $p_{\lceil \log_2 (\lfloor \frac{s}{2}\rfloor+1)\rceil}(s|\lfloor \frac{s}{2}\rfloor, a_s)=1$, we can conclude there exists a policy $\pi'$ such that $\Pr_{\nu}(s_{\lceil \log_2 (\lfloor \frac{s}{2}\rfloor+1)\rceil + 1}^{\pi'}=s)=1$.

    Notice that $2^n+1\leq \lfloor \frac{s}{2}\rfloor+1\leq 2^{n+1}$, suggesting that
    \begin{align*}
        & \lceil \log_2 (\lfloor \frac{s}{2}\rfloor+1)\rceil + 1\\
        = & n+1 + 1\\
        = & n+2\\
        = & \lceil \log_2 (s+1)\rceil.
    \end{align*}
    The last line completes the induction, also the proof of Lemma \ref{lemma:shortest-path-hard-instance}.
\end{proof}

\subsection{Auxiliary Lemmas in the Proof of Lower Bound, for Uniform Instances}
\label{sec:Auxiliary-Lemma-in-the-Proof-of-Lower-Bound}
In this section, We present auxiliary lemmas to support Theorem \ref{theorem:delta-independent-lower-bound-pos-meaningless-instance}. Without extra description, we assume $\frac{3H(r+\epsilon)}{8}+\frac{Hr}{8} > \mu_0 > \frac{Hr}{2}$, $S,H\geq 10$, $A\geq 100$, $(S-2)H$ is divisible by $16$, $0<\epsilon < \frac{1}{200\sqrt{5}}$ always hold in the remaining subsection.

The first lemma is to show a $\delta$-PAC algorithm must output policy that identifies action 1 as the optimal action for ``enough'' pairs of $(s,h)$.
\begin{lemma}
    \label{lemma:delta-PAC-simple-instance-SH/8-output-1}
    Apply a $\delta$-PAC algorithm alg to a Uniform Instance $\nu$ with $(S,A,H, r,\epsilon)$. Assume $\mu_0$ satisfies $\frac{H(r+\epsilon)}{2}> \mu_0\geq \frac{H(r+\epsilon)}{4}+\frac{Hr}{4}$, $0<\delta < \frac{1}{2}$, we can find $\{(s^{(i)},h^{(i)})\}_{i=1}^{\frac{(S-2)H}{16}}$, where $s^{(i)}\in [S-2]$, $h^{(i)}\in [H]$ is odd, such that
    \begin{align*}
        \Pr_{\text{alg},\nu}\big(\hat{\pi}_{h^{(i)}}(s^{(i)})=1,\tau<+\infty\big) \geq \frac{1}{7},\forall i\in \{1,2,\cdots,\frac{(S-2)H}{16}\}
    \end{align*}
\end{lemma}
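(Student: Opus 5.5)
The plan is a counting (averaging) argument over the odd-round pairs $(s,h)$, based on the value formula for Uniform Instances. With the identity permutation (action $1$ optimal everywhere), any policy $\pi$ satisfies
\begin{align*}
V_0^{\pi}(s_0~|~\nu)=\frac{Hr}{2}+\frac{\epsilon}{S-2}\,M(\pi),\qquad M(\pi):=\sum_{h\text{ odd}}\sum_{s=1}^{S-2}\mathds{1}(\pi_h(s)=1).
\end{align*}
There are $K:=\frac{(S-2)H}{2}$ odd-round pairs. Hence $V_0^{\pi}\ge\mu_0\ge \frac{H(r+\epsilon)}{4}+\frac{Hr}{4}=\frac{Hr}{2}+\frac{H\epsilon}{4}$ forces
\begin{align*}
M(\pi)\ge \frac{(S-2)H}{4}=\frac{K}{2}.
\end{align*}
So every qualified policy selects action $1$ on at least half of the odd-round pairs.

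Next, we use that the instance is positive, since $V^*_0=\frac{H(r+\epsilon)}{2}>\mu_0$. The $\delta$-PAC property then gives $\Pr(V_0^{\hat{\pi}}\ge\mu_0,\tau<\infty)\ge 1-\delta>\frac12$. Let $G$ denote this event, and set $q_i:=\Pr(\hat{\pi}_{h^{(i)}}(s^{(i)})=1,\tau<\infty)$ for the $K$ odd pairs. Linearity of expectation gives
\begin{align*}
\sum_{i=1}^{K} q_i\ \ge\ \mathbb{E}\big[M(\hat{\pi})\mathds{1}(G)\big]\ \ge\ \frac{K}{2}\Pr(G)\ >\ \frac{K}{4}.
\end{align*}

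Then a Markov-type counting step finishes the argument. Let $m$ be the number of indices with $q_i\ge\frac17$. Since each $q_i\le 1$,
\begin{align*}
\frac{K}{4}<m+\frac{K-m}{7},
\end{align*}
which rearranges to $\frac{6m}{7}>\frac{K}{4}-\frac{K}{7}=\frac{3K}{28}$, i.e.\ $m>\frac{K}{8}=\frac{(S-2)H}{16}$. Because $(S-2)H$ is divisible by $16$, this target is an integer, and we can select exactly $\frac{(S-2)H}{16}$ such pairs, as the lemma requires.

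The main obstacle is only the bookkeeping in two places. The first is that $\hat{\pi}$ may be \textsf{None} or undefined when $\tau=\infty$; this is handled by restricting to the event $G$, on which $\hat{\pi}\in\Pi^H$. The second is verifying the constants: the threshold $\mu_0\ge\frac{H(r+\epsilon)}{4}+\frac{Hr}{4}$ is exactly what gives the half-fraction bound, and $\delta<\frac12$ gives the $\frac14$ average. No exploration argument is needed here; symmetry is used only later, in Lemma \ref{lemma:symmetric-alg-must-explore-each-action}.
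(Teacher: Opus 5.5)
Your proposal is correct and is essentially the paper's proof. The value formula forces any qualified output to pick action $1$ on at least $\frac{(S-2)H}{4}$ odd-round pairs, so the marginal probabilities sum to at least $\frac{(S-2)H}{4}(1-\delta)\ge\frac{(S-2)H}{8}$, and an averaging step then yields $\frac{(S-2)H}{16}$ pairs with probability at least $\frac17$; the only differences are cosmetic (the paper writes the linearity step as a sum over disjoint subset events and finishes by sorting the probabilities and bounding via the $\big(\frac{(S-2)H}{16}+1\big)$-th largest, whereas you count the indices with $q_i\ge\frac17$ directly).
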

\begin{proof}[Proof of Lemma \ref{lemma:delta-PAC-simple-instance-SH/8-output-1}]
    Given a policy $\pi$ on the Uniform Instance, we know
    \begin{align*}
        V_0^{\pi}(s_0) = \frac{\sum_{s\in [S-2],\text{odd }h}(r+\epsilon)\mathds{1}(\pi_h(s)=1)+\sum_{s\in [S-2],\text{odd }h}r\mathds{1}(\pi_h(s)\neq 1)}{S-2}.
    \end{align*}
    By the assumption that $\mu_0\geq \frac{H(r+\epsilon)}{4}+\frac{Hr}{4}$ and the $\delta$-PAC requirement, we know
    \begin{align*}
        \Pr_{\text{alg},\nu}\left(\sum_{s\in [S-2],\text{odd }h}\mathds{1}\big(\hat{\pi}_h(s)=1\big) \geq \frac{H(S-2)}{4},\tau<+\infty\right)\geq 1-\delta.
    \end{align*}
    Denote $\{s^{(i)},h^{(i)}\}_{i=1}^{SH}$ as a permutation of $[S-2]\times \{h\in[H]:h \text{ is odd}\}$, such that
    \begin{align*}
        \Pr_{\text{alg},\nu}\left(\hat{\pi}_{h^{(i)}}(s^{(i)})=1,\tau<+\infty\right)\geq \Pr_{\text{alg},\nu}\left(\hat{\pi}_{h^{(i+1)}}(s^{(i+1)})=1,\tau<+\infty\right)
    \end{align*}
    Define $\Lambda=\{C:C\subset [S-2]\times \{h\in[H]:h \text{ is odd}\}, |C|\geq \frac{(S-2)H}{4}\}$, $\Lambda_{\text{all}}=\{C:C\subset [S-2]\times \{h\in[H]:h \text{ is odd}\}\}$. Notice that
    \begin{align*}
        & \left\{\sum_{(s,h)}\mathds{1}\big(\hat{\pi}_h(s)=1\big) \geq \frac{H(S-2)}{4}\right\}\\
        = & \cup_{C\in \Lambda} \{\forall (s,h)\in C, \hat{\pi}_h(s)=1; \forall (s,h)\notin C,\hat{\pi}_h(s)\neq 1\}
    \end{align*}
    $\Pr_{\text{alg},\nu}\left(\sum_{(s,h)}\mathds{1}\big(\hat{\pi}_h(s)=1\big) \geq \frac{H(S-2)}{4},\tau<+\infty\right)\geq 1-\delta$ implies
    \begin{align}
        \label{eqn:selection-of-subset-output-optimal}
        \sum_{C\in \Lambda} \Pr_{\text{alg},\nu}\left(\forall (s,h)\in C, \hat{\pi}_h(s)=1; \forall (s,h)\notin C,\hat{\pi}_h(s)\neq 1\right)\geq 1-\delta.
    \end{align}
    By straight forward calculation,
    \begin{small}
        \begin{align*}
            & \sum_{s\in [S-2],\text{odd }h}\Pr_{\text{alg},\nu}\left( \hat{\pi}_h(s)=1,\tau<+\infty\right)\\
            = & \sum_{s\in [S-2],\text{odd }h}\sum_{C: C\in \Lambda_{\text{all}}, (s,h)\in C}\Pr_{\text{alg},\nu}\left(\forall (s',h')\in C, \hat{\pi}_{h'}(s')=1; \forall (s',h')\notin C, \hat{\pi}_{h'}(s')\neq 1,\tau<+\infty\right)\\
            \geq & \sum_{s\in [S-2],\text{odd }h}\sum_{C: C\in \Lambda, (s,h)\in C}\Pr_{\text{alg},\nu}\left(\forall (s',h')\in C, \hat{\pi}_{h'}(s')=1; \forall (s',h')\notin C, \hat{\pi}_{h'}(s')\neq 1,\tau<+\infty\right)\\
            = & \sum_{s\in [S-2],\text{odd }h}\sum_{C: C\in \Lambda}\mathds{1}\Big((s,h)\in C\Big)\Pr_{\text{alg},\nu}\left(\forall (s',h')\in C, \hat{\pi}_{h'}(s')=1; \forall (s',h')\notin C, \hat{\pi}_{h'}(s')\neq 1,\tau<+\infty\right)\\
            = & \sum_{C: C\in \Lambda}\sum_{s\in [S-2],\text{odd }h}\mathds{1}\Big((s,h)\in C\Big)\Pr_{\text{alg},\nu}\left(\forall (s',h')\in C, \hat{\pi}_{h'}(s')=1; \forall (s',h')\notin C, \hat{\pi}_{h'}(s')\neq 1,\tau<+\infty\right)\\
            \geq & \sum_{C: C\in \Lambda}\frac{(S-2)H}{4}\Pr_{\text{alg},\nu}\left(\forall (s',h')\in C, \hat{\pi}_{h'}(s')=1; \forall (s',h')\notin C, \hat{\pi}_{h'}(s')\neq 1,\tau<+\infty\right)\\
            \geq & \frac{(S-2)H}{4}(1-\delta)\\
            \geq & \frac{(S-2)H}{8}.
        \end{align*}
    \end{small}
    The second last step is by (\ref{eqn:selection-of-subset-output-optimal}) and the last step is by $\delta<\frac{1}{2}$.

    Recall $\Pr_{\text{alg},\nu}\left(\hat{\pi}_{h^{(i)}}(s^{(i)})=1,\tau<+\infty\right)$ is non-decreasing regarding $i$, we can conclude
    \begin{align*}
        & \sum_{s,h}\Pr_{\text{alg},\nu}\left( \hat{\pi}_h(s)=1,\tau<+\infty\right)\\
        = & \sum_{i=1}^{(S-2)H/2}\Pr_{\text{alg},\nu}\left(\hat{\pi}_{h^{(i)}}(s^{(i)})=1,\tau<+\infty\right)\\
        = & \sum_{i=1}^{(S-2)H/16}\Pr_{\text{alg},\nu}\left( \hat{\pi}_{h^{(i)}}(s^{(i)},\tau<+\infty)=1\right) + \sum_{i=(S-2)H/16+1}^{(S-2)H/2}\Pr_{\text{alg},\nu}\left(\hat{\pi}_{h^{(i)}}(s^{(i)})=1,\tau<+\infty\right)\\
        \leq & (S-2)H/16 + \frac{7 (S-2)H}{16}\Pr_{\text{alg},\nu}\left( \hat{\pi}_{h^{((S-2)H/16+1)}}(s^{((S-2)H/16+1)})=1,\tau<+\infty\right).
    \end{align*}
    The last line suggests that
    \begin{align*}
        \frac{(S-2)H}{8}\leq (S-2)H/16 + \frac{7 (S-2)H}{16}\Pr_{\text{alg},\nu}\left( \hat{\pi}_{h^{((S-2)H/16+1)}}(s^{((S-2)H/16+1)})=1,\tau<+\infty\right),
    \end{align*}
    which means 
    \begin{align*}
        \frac{1}{7} \leq \Pr_{\text{alg},\nu}\left( \hat{\pi}_{h^{((S-2)H/16+1)}}(s^{((S-2)H/16+1)})=1,\tau<+\infty\right).
    \end{align*}
    From the non-decreasing order, we can conclude the Lemma.
\end{proof}

Lemma \ref{lemma:delta-PAC-simple-instance-SH/8-output-1} shows there exists at least $\Omega(SH)$ pairs of $(s,h)$ such that a $\delta$-PAC algorithm will identify its optimal action with probability $1/7$. The following lemmas shows a $\delta$-PAC and symmetric algorithm must distinguish arm $1$ and other arms by pulling them $\Omega(\frac{1}{\epsilon^2})$ times in these pairs $(s,h)$.
\begin{lemma}
    \label{lemma:symmetric-alg-must-pull-common-times-prob}
    Apply a $\delta$-PAC symmetric algorithm $\tilde{\text{alg}}$ to a Uniform Instance $\nu$ with $(S,A,H, r,\epsilon)$. If $\exists s_0\in [S-2]$ and even $h_0$ such that $\Pr_{\nu, \tilde{\text{alg}}}(\hat{\pi}_{h_0}(s_0)=1,\tau<+\infty)\geq \frac{1}{7}$, we have
    \begin{align*}
        \Pr_{\nu, \tilde{\text{alg}}}(\hat{\pi}_{h_0}(s_0)=1, n^{\tau}_{h_0}(s_0,1) + n^{\tau}_{h_0}(s_0,a)\geq \frac{1}{M\epsilon^2},\tau<+\infty)\geq \frac{1}{14},\forall a=2,\cdots, A.
    \end{align*}
    where $M=2000$.
\end{lemma}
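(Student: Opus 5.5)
The plan is a symmetry-plus-change-of-measure argument at the single pair $(s_0,h_0)$. Fix $a\in\{2,\dots,A\}$ and write $n:=n^{\tau}_{h_0}(s_0,1)+n^{\tau}_{h_0}(s_0,a)$. Let $\sigma$ be the permutation that transposes actions $1$ and $a$ at $(s_0,h_0)$ and is the identity elsewhere. The instance $\nu_\sigma$ differs from $\nu$ only in the two transition laws $p_{h_0}(\cdot|s_0,1)$ and $p_{h_0}(\cdot|s_0,a)$, which are swapped. These are the two–point laws with success probabilities $r+\epsilon$ and $r$.

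Split $\{\hat{\pi}_{h_0}(s_0)=1,\tau<\infty\}$ according to whether $n\ge 1/(M\epsilon^2)$. It then suffices to show
\begin{align*}
p_{\text{small}}:=\Pr_{\nu,\tilde{\text{alg}}}\bigl(\hat{\pi}_{h_0}(s_0)=1,\ n<1/(M\epsilon^2),\ \tau<\infty\bigr)\le \tfrac{1}{14},
\end{align*}
since subtracting this from the assumed $\tfrac17$ leaves $\tfrac1{14}$.

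\textbf{Step 1 (symmetry).} Apply the definition of a symmetric algorithm to $\sigma'=\mathrm{id}$ and $\sigma''=\sigma$. The law of the relabelled trajectory and output under $\nu_\sigma$ equals the law under $\nu$. The event $\{\hat{\pi}_{h_0}(s_0)=a,\ n<1/(M\epsilon^2)\}$ under $\nu_\sigma$ corresponds to $\{\hat{\pi}_{h_0}(s_0)=1,\ n<1/(M\epsilon^2)\}$ under $\nu$, because $n$ is invariant under the swap. Hence
\begin{align*}
\Pr_{\nu_\sigma}\bigl(\hat{\pi}_{h_0}(s_0)=a,\ n<1/(M\epsilon^2)\bigr)=p_{\text{small}}.
\end{align*}

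\textbf{Step 2 (change of measure).} On the event $E=\{n<1/(M\epsilon^2)\}$, the likelihood ratio $d\Pr_{\nu_\sigma}/d\Pr_\nu$ on $\mathcal{F}_\tau$ depends only on the transitions observed from $(s_0,h_0)$ under actions $1$ and $a$. It is a product of at most $n$ Bernoulli ratios between $r+\epsilon$ and $r$. I would use a truncated high-probability form of the Bretagnolle–Huber / Lemma 1 of \cite{kaufmann2016complexity} argument, restricted to $E$. The log-likelihood ratio is a martingale with increments of size $O(\epsilon)$ and conditional variance $O(\epsilon^2)$, since $r\in(1/4,3/4)$. A maximal (Freedman or Doob) inequality then gives that, on $E$, it stays above $-c$ with probability at least $1-\eta$, for absolute constants. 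This uses $M=2000$ so that $n\epsilon^2\le 1/2000$ makes the variance tiny. Consequently, for any event $B\subseteq E$,
\begin{align*}
\Pr_{\nu_\sigma}(B)\ge e^{-c}\Pr_{\nu}(B)-\eta .
\end{align*}
The same bound holds with $\nu$ and $\nu_\sigma$ exchanged.

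\textbf{Step 3 (conclude via $\delta$-PAC).} Set $B=\{\hat{\pi}_{h_0}(s_0)=1\}\cap E$ under $\nu$. Step 2 combined with Step 1 would show that, under $\nu$, both events "output $1$ with small $n$" and "output $a$ with small $n$" have comparable probabilities. More usefully, the symmetric algorithm cannot tell apart the $A$ relabellings at $(s_0,h_0)$ on $E$. Apply Step 2 to each transposition $(1\,a')$ for $a'=2,\dots,A$ and sum over $a'$. The disjoint events $\{\hat{\pi}_{h_0}(s_0)=a',E\}$ under $\nu$ then have total probability at least $(A-1)\bigl(e^{-c}p_{\text{small}}-\eta\bigr)$. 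Since this total is at most $1$ and $A\ge100$, we get $p_{\text{small}}\le e^{c}\bigl(1/(A-1)+\eta\bigr)\le 1/14$ once the constants are fixed.

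The main obstacle is Step 2: making the restricted change of measure rigorous at the stopping time, with explicit constants compatible with $M=2000$, $\epsilon<1/(200\sqrt5)$ and the target $1/14$. I would first try the truncated Bretagnolle–Huber form, using the fact that $\Pr_\nu(E)$ and $\Pr_{\nu_\sigma}(E)$ coincide by symmetry. If that fails, I would bound the likelihood ratio crudely: on $E$ it is at least $\bigl(\tfrac{r}{r+\epsilon}\wedge\tfrac{1-r-\epsilon}{1-r}\bigr)^{n}\ge (1-8\epsilon)^{1/(M\epsilon^2)}$. That bound degrades as $\epsilon\to 0$, so the martingale concentration route is the intended one.
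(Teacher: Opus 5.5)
Your proposal is essentially the paper's proof, run as a contrapositive instead of by contradiction. The shared ingredients are symmetry under the $1\leftrightarrow a$ transposition at $(s_0,h_0)$, the transportation (change-of-measure) identity restricted to the small-count event with a maximal inequality on the realized log-likelihood ratio, and a sum over the suboptimal actions against $A\ge 100$. The paper gets $c=\tfrac12+\tfrac{6}{M}$ and $\eta=2e^{-M/256}$, which leave ample room in your final bound $e^{c}(1/(A-1)+\eta)\le 1/14$.

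The one step to state precisely is Step 3. For the transposition $(1\,a')$, the restricting event must be $\{n^{\tau}_{h_0}(s_0,1)+n^{\tau}_{h_0}(s_0,a')<1/(M\epsilon^2)\}$, not your fixed $E$, because $E$ does not bound $n^{\tau}_{h_0}(s_0,a')$. You then need that $\nu$ is invariant under swapping two suboptimal actions at $(s_0,h_0)$. That makes the resulting $p_{\text{small}}$, equivalently $\Pr_{\nu}(\hat{\pi}_{h_0}(s_0)=a')$, the same for every $a'\ge 2$; this is exactly the preliminary symmetry identity the paper proves first.
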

\begin{proof}[Proof of Lemma \ref{lemma:symmetric-alg-must-pull-common-times-prob}]
    We first show for $a',a''\in \{2,\cdots, A\},a'\neq a''$, we have
    \begin{align}
        \Pr_{\nu, \tilde{\text{alg}}}(\hat{\pi}_{h_0}(s_0)=a', \tau<+\infty)
        = \Pr_{\nu, \tilde{\text{alg}}}(\hat{\pi}_{h_0}(s_0)=a'',\tau<+\infty)\label{eqn:symmetric-treat-same-arm-same-prob}
    \end{align}
    Consider a permutation $\sigma=\{\sigma_{s,h}\}_{i=1}^{SH}$ such that $\sigma_{s,h}(a)=\begin{cases}
        a' & s=s_0,h=h_0,a=a''\\
        a'' & s=s_0,h=h_0,a=a'\\
        a & \text{else}
    \end{cases}$. From the definition of Uniform Instance, we know $p_{h_0}(s'|s_0,a')=p_{h_0}(s'|s_0,a'') $, $r_{h_0}(s_0,a')=r_{h_0}(s_0,a'')$, which means $\nu_{\sigma}=\nu$. Meanwhile, by the definition of symmetric algorithm, we know
    \begin{align*}
        \Pr_{\nu, \tilde{\text{alg}}}(\hat{\pi}_{h_0}(s_0)=a', \tau<+\infty)
        = \Pr_{\nu_{\sigma}, \tilde{\text{alg}}}(\hat{\pi}_{h_0}(s_0)=a'',\tau<+\infty).
    \end{align*}
    Together with the fact that $\nu_{\sigma}=\nu$, we complete the proof.

    In the following, we assume $\frac{1}{M\epsilon^2}$ is an integer for simplicity. To derive Lemma \ref{lemma:symmetric-alg-must-pull-common-times-prob}, we prove it by contradiction. Assume
    \begin{align*}
        \Pr_{\nu, \tilde{\text{alg}}}(\hat{\pi}_{h_0}(s_0)=1, n^{\tau}_{h_0}(s_0,1) + n^{\tau}_{h_0}(s_0,a)\geq \frac{1}{M\epsilon^2},\tau<+\infty)<\frac{1}{14}
    \end{align*}
    holds for some $a\in \{2,3,\cdots, A\}$. From the assumption that $\Pr_{\nu, \tilde{\text{alg}}}(\hat{\pi}_{h_0}(s_0)=1,\tau<+\infty)\geq \frac{1}{7}$, we know
    \begin{align}
        \label{eqn:output-1-with-insufficient-pulling-prob}
        \Pr_{\nu, \tilde{\text{alg}}}(\hat{\pi}_{h_0}(s_0)=1, n^{\tau}_{h_0}(s_0,1) + n^{\tau}_{h_0}(s_0,a)< \frac{1}{M\epsilon^2},\tau<+\infty)\geq\frac{1}{14}
    \end{align}
    Denote event $\mathcal{E}^{(1)}=\{\hat{\pi}_{h_0}(s_0)=1,\tau<+\infty\}$, $\mathcal{E}^{(a)}=\{\hat{\pi}_{h_0}(s_0)=a,\tau<+\infty\}$, $\mathcal{E}_{\text{pull}}=\{n^{\tau}_{h_0}(s_0,1) + n^{\tau}_{h_0}(s_0,a)< \frac{1}{M\epsilon^2}\}$. We consider a permutation $\sigma^{(a)}$ with $\sigma_{s,h}^{(a)}(\tilde{a})=\begin{cases}
        1 & s=s_0,h=h_0,\tilde{a}=a\\
        a & s=s_0,h=h_0,\tilde{a}=1\\
        \tilde{a} & \text{else}
    \end{cases}$. By the symmetry of $\tilde{\text{alg}}$ and (\ref{eqn:output-1-with-insufficient-pulling-prob}), we can conclude
    \begin{align}
        \label{eqn:output-prob-for-permutation-if-insufficient-pull}
        \Pr_{\nu_{\sigma^{(a)}}, \tilde{\text{alg}}}(\mathcal{E}^{(a)}\cap \mathcal{E}_{\text{pull}})\geq\frac{1}{14}.
    \end{align}
    Apply the Transportation Equality(Lemma 18 in \cite{kaufmann2016complexity}), which is based on the idea of Change of Measure, we have
    \begin{align}
        \label{eqn:transportation-output-a-with-insufficient-pulling}
        \Pr_{\nu, \tilde{\text{alg}}}(\mathcal{E}^{(a)}\cap \mathcal{E}_{\text{pull}})
        = \mathbb{E}_{\nu_{\sigma^{(a)}}, \tilde{\text{alg}}}\mathds{1}(\mathcal{E}^{(a)}\cap \mathcal{E}_{\text{pull}})\exp\left(-\sum_{t=1}^{n^{\tau}_{h_0}(s_0,1)}\hat{\text{KL}}_{1,t} - \sum_{t=1}^{n^{\tau}_{h_0}(s_0,a)}\hat{\text{KL}}_{a,t}\right)
    \end{align}
    where $\hat{\text{KL}}_{1,t}=X_{1,t}\log\frac{r+\epsilon}{r} + (1-X_{1,t})\log\frac{1-r-\epsilon}{1-r}$, $\hat{\text{KL}}_{a,t}=X_{a,t}\log\frac{r}{r+\epsilon} + (1-X_{1,t})\log\frac{1-r}{1-r-\epsilon}$, $X_{1,t}|\{X_{1,s}\}_{s=1}^{t-1}\sim \text{Bern}(r+\epsilon)$, $X_{a,t}|\{X_{a,s}\}_{s=1}^{t-1}\sim \text{Bern}(r)$. $\{X_{1,s}\}_{s=1}^{+\infty
    }$ are independent with $\{X_{a,s}\}_{s=1}^{+\infty
    }$. 
    
    Denote 
    \begin{align*}
        \text{kl}(x,y)= & x\log\frac{x}{y}+(1-x)\log\frac{1-x}{1-y}\\
        \mathcal{E}_{\text{kl-cnt}}^{(1)}= & \left\{\max_{1\leq n\leq \frac{1}{M\epsilon^2}}\sum_{t=1}^{n}(\hat{\text{KL}}_{1,t}-\text{kl}(r+\epsilon,r))\leq \frac{1}{4}\right\}\\
        \mathcal{E}_{\text{kl-cnt}}^{(a)}= & \left\{\max_{1\leq n\leq \frac{1}{M\epsilon^2}}\sum_{t=1}^{n}(\hat{\text{KL}}_{a,t}-\text{kl}(r,r+\epsilon))\leq \frac{1}{4}\right\}
    \end{align*}  
    By the Lemma \ref{lemma:Realized-KL-for-Bernoulli}, we know $\hat{\text{KL}}_{1,t}-\text{kl}(r+\epsilon,r)$ and $\hat{\text{KL}}_{a,t}-\text{kl}(r,r+\epsilon)$ are both $8\epsilon^2$-subgaussian. By the Lemma \ref{lemma:Martingale-Concentration}, since $2\cdot\lfloor\frac{1}{M\epsilon^2} \rfloor \cdot 8\epsilon^2\leq \frac{16}{M}$, we have $\Pr_{\nu_{\sigma^{(a)}}, \tilde{\text{alg}}}(\mathcal{E}_{\text{kl-cnt}}^{(1)}) \geq 1-\exp(-\frac{M}{16}\frac{1}{16})$, $\Pr_{\nu_{\sigma^{(a)}}, \tilde{\text{alg}}}(\mathcal{E}_{\text{kl-cnt}}^{(a)})\geq 1-\exp(-\frac{M}{16}\frac{1}{16})$.

    Following (\ref{eqn:transportation-output-a-with-insufficient-pulling}), we have
    \begin{align}
        & \Pr_{\nu, \tilde{\text{alg}}}(\mathcal{E}^{(a)}\cap \mathcal{E}_{\text{pull}})\notag\\
        \geq & \mathbb{E}_{\nu_{\sigma^{(a)}}, \tilde{\text{alg}}}\mathds{1}(\mathcal{E}^{(a)}\cap \mathcal{E}_{\text{pull}}\cap\mathcal{E}_{\text{kl-cnt}}^{(1)}\cap \mathcal{E}_{\text{kl-cnt}}^{(a)})\exp\left(-\sum_{t=1}^{n^{\tau}_{h_0}(s_0,1)}\hat{\text{KL}}_{1,t} - \sum_{t=1}^{n^{\tau}_{h_0}(s_0,a)}\hat{\text{KL}}_{a,t}\right)\notag\\
        = & \mathbb{E}_{\nu_{\sigma^{(a)}}, \tilde{\text{alg}}}\mathds{1}(\mathcal{E}^{(a)}\cap \mathcal{E}_{\text{pull}}\cap\mathcal{E}_{\text{kl-cnt}}^{(1)}\cap \mathcal{E}_{\text{kl-cnt}}^{(a)})\exp(-n^{\tau}_{h_0}(s_0,1)\text{kl}(r+\epsilon, r)-n^{\tau}_{h_0}(s_0,a)\text{kl}(r,r+\epsilon))\notag\\
        &\exp\left(-\sum_{t=1}^{n^{\tau}_{h_0}(s_0,1)}\left(\hat{\text{KL}}_{1,t}-\text{kl}(r+\epsilon, r)\right) - \sum_{t=1}^{n^{\tau}_{h_0}(s_0,a)}\left(\hat{\text{KL}}_{a,t}-\text{kl}( r,r+\epsilon)\right)\right)\notag\\
        \geq & \mathbb{E}_{\nu_{\sigma^{(a)}}, \tilde{\text{alg}}}\mathds{1}(\mathcal{E}^{(a)}\cap \mathcal{E}_{\text{pull}}\cap\mathcal{E}_{\text{kl-cnt}}^{(1)}\cap \mathcal{E}_{\text{kl-cnt}}^{(a)})\exp(-\frac{3\epsilon^2}{M\epsilon^2}-\frac{3\epsilon^2}{M\epsilon^2})\exp\left(-\frac{1}{2}\right)\label{eqn:concentrate-realzied-KL-take-effect}\\
        \geq & \exp\left(-\frac{6}{M}-\frac{1}{2}\right)\left(\Pr_{\nu_{\sigma^{(a)}}, \tilde{\text{alg}}}\left(\mathcal{E}^{(a)}\cap \mathcal{E}_{\text{pull}}\right)-\Pr_{\nu_{\sigma^{(a)}}, \tilde{\text{alg}}}(\neg\mathcal{E}_{\text{kl-cnt}}^{(1)})-\Pr_{\nu_{\sigma^{(a)}}, \tilde{\text{alg}}}(\neg\mathcal{E}_{\text{kl-cnt}}^{(a)})\right)\notag\\
        \geq & \exp\left(-\frac{6}{M}-\frac{1}{2}\right)\left(\Pr_{\nu_{\sigma^{(a)}}, \tilde{\text{alg}}}\left(\mathcal{E}^{(a)}\cap \mathcal{E}_{\text{pull}}\right)-2\exp(-\frac{M}{256})\right)\label{eqn:concentrate-realzied-KL-prob-bound}.
    \end{align}
    Step (\ref{eqn:concentrate-realzied-KL-take-effect}) is by the fact that under event $\mathcal{E}_{\text{pull}}$, $n^{\tau}_{h_0}(s_0,1), n^{\tau}_{h_0}(s_0,a)\leq \frac{1}{M\epsilon^2}$. Given this, event $\mathcal{E}_{\text{kl-cnt}}^{(1)}, \mathcal{E}_{\text{kl-cnt}}^{(a)}$ guarantee that
    \begin{align*}
        & \sum_{t=1}^{n^{\tau}_{h_0}(s_0,1)}\left(\hat{\text{KL}}_{1,t}-\text{kl}(r+\epsilon, r)\right)\leq \max_{1\leq n\leq \frac{1}{M\epsilon^2}}\sum_{t=1}^{n}\left(\hat{\text{KL}}_{1,t}-\text{kl}(r+\epsilon, r)\right)\leq \frac{1}{4}\\
        & \sum_{t=1}^{n^{\tau}_{h_0}(s_0,a)}\left(\hat{\text{KL}}_{a,t}-\text{kl}( r,r+\epsilon)\right)\leq \max_{1\leq n\leq \frac{1}{M\epsilon^2}}\sum_{t=1}^{n}\left(\hat{\text{KL}}_{a,t}-\text{kl}(r,r+\epsilon)\right)\leq \frac{1}{4}
    \end{align*}
    In addition, Lemma \ref{lemma:kl-divergence-quadratic} suggests that $\text{kl}(r+\epsilon, r), \text{kl}(r,r+\epsilon)\leq 3\epsilon^2$, since we require $\frac{1}{4}<r<r+\epsilon<\frac{3}{4}$ in the definition of Uniform Instance. Step (\ref{eqn:concentrate-realzied-KL-prob-bound}) is by the inequalities $\Pr_{\nu_{\sigma^{(a)}}, \tilde{\text{alg}}}(\mathcal{E}_{\text{kl-cnt}}^{(1)}) \geq 1-\exp(-\frac{M}{256})$, $\Pr_{\nu_{\sigma^{(a)}}, \tilde{\text{alg}}}(\mathcal{E}_{\text{kl-cnt}}^{(a)})\geq 1-\exp(-\frac{M}{256})$.

    Combining (\ref{eqn:output-prob-for-permutation-if-insufficient-pull}) and (\ref{eqn:concentrate-realzied-KL-prob-bound}), we have
    \begin{align*}
        \Pr_{\nu, \tilde{\text{alg}}}(\mathcal{E}^{(a)}\cap \mathcal{E}_{\text{pull}}) \geq \exp\left(-\frac{6}{M}-\frac{1}{2}\right)\left(\frac{1}{14}-2\exp(-\frac{M}{256})\right) \geq \frac{1}{100},
    \end{align*}
    which implies $\Pr_{\nu, \tilde{\text{alg}}}(\hat{\pi}_{h_0}(s_0)=a, \tau<+\infty)\geq \frac{1}{100}$. (\ref{eqn:symmetric-treat-same-arm-same-prob}) implies
    \begin{align*}
        \Pr_{\nu, \tilde{\text{alg}}}\big(\hat{\pi}_{h_0}(s_0)\in\{2,\cdots, A\}, \tau<+\infty\big)\geq \frac{A}{100}.
    \end{align*}
    By the requirement of $A>100$ in the definition of Uniform Instance, we have found a contradiction, suggesting we have proved Lemma \ref{lemma:symmetric-alg-must-pull-common-times-prob}.
\end{proof}

Based on Lemma \ref{lemma:symmetric-alg-must-pull-common-times-prob}, the following lemma claims that a symmetric algorithm is unable to distinguish action $1$ and $a$ when the total pulling times of action $1$ and $a$ is below $\frac{1}{2000 \epsilon^2}$.
\begin{lemma}
    \label{lemma:lemma:symmetric-alg-must-pull-inferior-action-times-prob}
    Apply a $\delta$-PAC symmetric algorithm $\tilde{\text{alg}}$ to a Uniform Instance $\nu$ with $(S,A,H, r,\epsilon)$. If $\exists s_0\in [S-2]$ and even $h_0$ such that $\Pr_{\nu, \tilde{\text{alg}}}(\hat{\pi}_{h_0}(s_0)=1,\tau<+\infty)\geq \frac{1}{7}$, we have
    \begin{align*}
        \Pr_{\nu, \tilde{\text{alg}}}(n^{\tau}_{h_0}(s_0,a)\geq \frac{1}{2M\epsilon^2},\tau<+\infty)\geq \frac{1}{50},\forall a=2,\cdots, A.
    \end{align*}
    where $M=2000$.
\end{lemma}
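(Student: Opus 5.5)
Let $a\in\{2,\dots,A\}$ be arbitrary. We apply the symmetry of $\tilde{\text{alg}}$ under the transposition $\sigma^{(a)}$ that swaps actions $1$ and $a$ at $(s_0,h_0)$, exactly as in the proof of Lemma \ref{lemma:symmetric-alg-must-pull-common-times-prob}. First, Lemma \ref{lemma:symmetric-alg-must-pull-common-times-prob} gives
\begin{align*}
\Pr_{\nu,\tilde{\text{alg}}}\Big(\hat{\pi}_{h_0}(s_0)=1,\ n^{\tau}_{h_0}(s_0,1)+n^{\tau}_{h_0}(s_0,a)\geq \tfrac{1}{M\epsilon^2},\ \tau<+\infty\Big)\geq \tfrac{1}{14}.
\end{align*}
On this event, at least one of $n^{\tau}_{h_0}(s_0,1)$ and $n^{\tau}_{h_0}(s_0,a)$ is at least $\frac{1}{2M\epsilon^2}$. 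If the second alternative has probability at least $\frac{1}{50}$ we are done. Otherwise, the event $\mathcal{F}=\{\hat{\pi}_{h_0}(s_0)=1,\ n^{\tau}_{h_0}(s_0,1)\geq \frac{1}{2M\epsilon^2},\ n^{\tau}_{h_0}(s_0,a)<\frac{1}{2M\epsilon^2},\ \tau<+\infty\}$ has probability at least $\frac{1}{14}-\frac{1}{50}$ under $\nu$. The remaining work is to show this case is impossible, i.e.\ that it contradicts the symmetry and $\delta$-PAC properties.

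The argument is a change of measure on the single coordinate $(s_0,h_0)$, but comparing $\nu$ with an auxiliary instance rather than with $\nu_{\sigma^{(a)}}$. Let $\nu'$ agree with $\nu$ except that action $a$ at $(s_0,h_0)$ has $p(s_{\text{good}})=r+2\epsilon$ (allowed since $r+2\epsilon<3/4$ for small $\epsilon$). In $\nu'$ action $a$ is uniquely best at $(s_0,h_0)$. The two instances differ only in the samples of action $a$ at $(s_0,h_0)$. On $\mathcal{F}$, fewer than $\frac{1}{2M\epsilon^2}$ such samples occur, so the log-likelihood ratio is at most $\frac{1}{2M\epsilon^2}\cdot O(\epsilon^2)$ plus a fluctuation term. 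We control the fluctuation by the maximal martingale event as in the proof of Lemma \ref{lemma:symmetric-alg-must-pull-common-times-prob}, using Lemmas \ref{lemma:Realized-KL-for-Bernoulli} and \ref{lemma:Martingale-Concentration}. The Transportation Equality then gives $\Pr_{\nu'}(\mathcal{F})\geq c\Pr_\nu(\mathcal{F})-2e^{-M/256}$ for an absolute constant $c\approx e^{-1/2}$. Thus under $\nu'$ the algorithm outputs $1$ at $(s_0,h_0)$ with probability bounded away from zero, even though $a$ is better there.

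The contradiction comes from symmetry across the $A-2$ actions other than $1$ and $a$, as in (\ref{eqn:symmetric-treat-same-arm-same-prob}). In $\nu'$, action $1$ is no longer distinguished from the actions $b\notin\{1,a\}$ only if we instead raise action $a$ to exactly $r+\epsilon$. This leaves actions $1$ and $a$ identical and all others identical, and $\nu'$ is then invariant under swapping $1\leftrightarrow a$. Symmetry then forces $\Pr_{\nu'}(\hat{\pi}_{h_0}(s_0)=1,\ n(s_0,1)\ge \tfrac{1}{2M\epsilon^2},\ n(s_0,a)<\tfrac{1}{2M\epsilon^2})=\Pr_{\nu'}(\hat{\pi}_{h_0}(s_0)=a,\ n(s_0,a)\ge \tfrac{1}{2M\epsilon^2},\ n(s_0,1)<\tfrac{1}{2M\epsilon^2})$. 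We then transport the latter event back to $\nu$. Here $n(s_0,a)$ is large, so that direction is not cheap, but the former event is cheap to transport. I therefore expect to run the comparison with the mixed argument: transport $\mathcal{F}$ from $\nu$ to $\nu'$ (cheap), apply symmetry inside $\nu'$, and then transport to $\nu_{\sigma^{(a)}}$ (cheap, since there action $1$ is the one rarely pulled). This shows $\Pr_\nu$ of outputting the suboptimal-looking arm is at least a constant. Summing over the $A-1$ symmetric choices of $a$ via (\ref{eqn:symmetric-treat-same-arm-same-prob}) then gives total probability exceeding $1$ when $A>100$, as in Lemma \ref{lemma:symmetric-alg-must-pull-common-times-prob}.

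The main obstacle is this middle bookkeeping step: choosing the intermediate instance so that each change of measure is applied only on the side where the relevant action count is below $\frac{1}{2M\epsilon^2}$, and keeping the constants ($\frac{1}{14}-\frac{1}{50}$, $e^{-1/2}$, $e^{-M/256}$ with $M=2000$) such that the final probability per arm exceeds $\frac{1}{100}$.
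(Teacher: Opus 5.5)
Your reduction to the event $\mathcal{F}$ is fine, and so are the cheap change of measure from $\nu$ to the instance $\nu'$ with action $a$ at $(s_0,h_0)$ raised to $r+\epsilon$ (cheap because $n^{\tau}_{h_0}(s_0,a)<\frac{1}{2M\epsilon^2}$ on $\mathcal{F}$) and the $1\leftrightarrow a$ symmetry inside $\nu'$. The gap is in the closing step. Write $\mathcal{F}^{\sigma}=\{\hat{\pi}_{h_0}(s_0)=a,\ n^{\tau}_{h_0}(s_0,a)\geq\frac{1}{2M\epsilon^2},\ n^{\tau}_{h_0}(s_0,1)<\frac{1}{2M\epsilon^2},\ \tau<+\infty\}$ for the swapped event. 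Your chain $\nu\to\nu'\to\nu_{\sigma^{(a)}}$ ends at $\Pr_{\nu_{\sigma^{(a)}}}(\mathcal{F}^{\sigma})$, and by the symmetry of $\tilde{\text{alg}}$ this equals $\Pr_{\nu}(\mathcal{F})$. The argument is circular and yields nothing.

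In particular, you never get a constant lower bound on $\Pr_{\nu}(\hat{\pi}_{h_0}(s_0)=a)$ for $a\neq 1$. You only get lower bounds on outputting $a$ under $\nu'$ or $\nu_{\sigma^{(a)}}$, where $a$ is (co-)optimal. Moving such an event to $\nu$ is expensive exactly because $n_{h_0}(s_0,a)$ is large on it. So the ``sum over the $A-1$ arms exceeds $1$'' contradiction of Lemma \ref{lemma:symmetric-alg-must-pull-common-times-prob} is not available here. It worked there only because both counts were small at $\tau$.

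The missing idea is to change measure at a stopping time where the relevant counts are still small, not at $\tau$, and to argue about counts rather than the output. The paper uses $\tilde{\tau}_{1,a}$, the first episode at which $n_{h_0}(s_0,1)+n_{h_0}(s_0,a)$ reaches $\frac{1}{M\epsilon^2}$. Lemma \ref{lemma:symmetric-alg-must-pull-common-times-prob} gives $\Pr_{\nu}(\tilde{\tau}_{1,a}\leq\tau<+\infty)\geq\frac{1}{14}$. This probability is split by which of $a$ or $1$ has more pulls at $\tilde{\tau}_{1,a}$, giving $\mathcal{E}^{\text{pull}}_{1,a}$ versus $\mathcal{E}^{\text{pull}}_{a,1}$. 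The permutation $\sigma^{(a)}$ swaps these two events. Since both counts are at most $\frac{1}{M\epsilon^2}$ at $\tilde{\tau}_{1,a}$, transporting between $\nu$ and $\nu_{\sigma^{(a)}}$ costs only a constant factor. This gives $x\geq e^{-6/M-1/2}(\frac{1}{14}-x-2e^{-M/256})$ for $x=\Pr_{\nu}(\mathcal{E}^{\text{pull}}_{1,a})$, hence $x\geq\frac{1}{50}$. On $\mathcal{E}^{\text{pull}}_{1,a}$ one has $n^{\tau}_{h_0}(s_0,a)\geq\frac{1}{2M\epsilon^2}$.

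Your own route can be repaired the same way:
\begin{itemize}
\item After the first transport and the symmetry in $\nu'$, bound $\Pr_{\nu'}(\mathcal{F}^{\sigma})\leq\Pr_{\nu'}(\tau_a\leq\tau)$, where $\tau_a$ is the episode at which $n_{h_0}(s_0,a)$ first reaches $\frac{1}{2M\epsilon^2}$.
\item Transport $\{\tau_a\leq\tau\}$ back to $\nu$ at the stopping time $\tau_a$, when only $\frac{1}{2M\epsilon^2}$ samples of $a$ have been observed.
\end{itemize}
This gives $\Pr_{\nu}(n^{\tau}_{h_0}(s_0,a)\geq\frac{1}{2M\epsilon^2})$ of roughly $e^{-1/2}(\frac{1}{14}-\frac{1}{50})\approx 0.03>\frac{1}{50}$ directly, with no contradiction over arms needed.
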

\begin{proof}[Proof of Lemma \ref{lemma:lemma:symmetric-alg-must-pull-inferior-action-times-prob}]
    We first show for $a',a''\in\{2,\cdots,A\},a'\neq a''$, we have
    \begin{align}
        \label{eqn:pulling-dist-same-for-symmetric-algorithm}
        \Pr_{\nu, \tilde{\text{alg}}}(n^{\tau}_{h_0}(s_0,a')\geq \frac{1}{2M\epsilon^2},\tau<+\infty) = \Pr_{\nu, \tilde{\text{alg}}}(n^{\tau}_{h_0}(s_0,a'')\geq \frac{1}{2M\epsilon^2},\tau<+\infty).
    \end{align}
    Similar to the statement at the proof of Lemma \ref{lemma:symmetric-alg-must-pull-common-times-prob}, consider a permutation $\sigma=\{\sigma_{s,h}\}_{i=1}^{SH}$ such that $\sigma_{s,h}(a)=\begin{cases}
        a' & s=s_0,h=h_0,a=a''\\
        a'' & s=s_0,h=h_0,a=a'\\
        a & \text{else}
    \end{cases}$. From the definition of Uniform Instance, we know $p_{h_0}(s'|s_0,a')=p_{h_0}(s'|s_0,a'') $, $r_{h_0}(s_0,a')=r_{h_0}(s_0,a'')$, which means $\nu_{\sigma}=\nu$. Meanwhile, by the definition of symmetric algorithm, we know
    \begin{align*}
        \Pr_{\nu, \tilde{\text{alg}}}(n^{\tau}_{h_0}(s_0,a')\geq \frac{1}{2M\epsilon^2},\tau<+\infty)
        = \Pr_{\nu_{\sigma}, \tilde{\text{alg}}}(n^{\tau}_{h_0}(s_0,a'')\geq \frac{1}{2M\epsilon^2},\tau<+\infty).
    \end{align*}
    Together with the fact that $\nu_{\sigma}=\nu$, we complete the proof.

    (\ref{eqn:pulling-dist-same-for-symmetric-algorithm}) suggests that we suffice to prove $\Pr_{\nu, \tilde{\text{alg}}}(n^{\tau}_{h_0}(s_0,a)\geq \frac{1}{2M\epsilon^2},\tau<+\infty)\geq \frac{1}{50}$ for $a=2$. In the following proof, we fix $a=2$ and we assume $\frac{1}{M\epsilon^2}$ is an integer for simplicity.
    
    Define $\tilde{\tau}_{1,a} = \min\{t: n^{t}_{h_0}(s_0,1) + n^{t}_{h_0}(s_0,a)= \lfloor\frac{1}{M\epsilon^2}\}\rfloor$. 
    From Lemma \ref{lemma:symmetric-alg-must-pull-common-times-prob}, we know
    \begin{align*}
        & \Pr_{\nu, \tilde{\text{alg}}}(\tilde{\tau}_{1,a}\leq \tau < +\infty)\\
        = & \Pr_{\nu, \tilde{\text{alg}}}(n^{\tau}_{h_0}(s_0,1) + n^{\tau}_{h_0}(s_0,a)\geq \frac{1}{M\epsilon^2},\tau<+\infty)\\
        \geq & \Pr_{\nu, \tilde{\text{alg}}}(\hat{\pi}_{h_0}(s_0)=1,n^{\tau}_{h_0}(s_0,1) + n^{\tau}_{h_0}(s_0,a)\geq \frac{1}{M\epsilon^2},\tau<+\infty)\\
        \geq & \frac{1}{14}.
    \end{align*}
    Denote $\mathcal{E}_{1,a}^{\text{pull}} = \{\tilde{\tau}_{1,a}\leq \tau<+\infty, n^{\tilde{\tau}_{1,a}}_{h_0}(s_0,1) \leq n^{\tilde{\tau}_{1,a}}_{h_0}(s_0,a)\}$ and $\mathcal{E}_{a,1}^{\text{pull}} = \{\tilde{\tau}_{1,a}\leq \tau<+\infty, n^{\tilde{\tau}_{1,a}}_{h_0}(s_0,a) \leq n^{\tilde{\tau}_{1,a}}_{h_0}(s_0,1)\}$. Evident to see
    \begin{align}
        \label{eqn:factorize-tilde-tau-by-comparison-result}
        \Pr_{\nu, \tilde{\text{alg}}}(\mathcal{E}_{1,a}^{\text{pull}}) + \Pr_{\nu, \tilde{\text{alg}}}(\mathcal{E}_{a,1}^{\text{pull}})\geq \Pr_{\nu, \tilde{\text{alg}}}(\tilde{\tau}_{1,a}\leq \tau < +\infty)\geq \frac{1}{14}.
    \end{align}
    
    We consider a permutation $\sigma^{(a)}$ with $\sigma_{s,h}^{(a)}(\tilde{a})=\begin{cases}
        1 & s=s_0,h=h_0,\tilde{a}=a\\
        a & s=s_0,h=h_0,\tilde{a}=1\\
        \tilde{a} & \text{else}
    \end{cases}$. By the symmetry of $\tilde{\text{alg}}$, we know
    \begin{align}
        \label{eqn:symmetry-pulling-comparison}
        \Pr_{\nu, \tilde{\text{alg}}}(\mathcal{E}_{1,a}^{\text{pull}}) = \Pr_{\nu_{\sigma^{(a)}}, \tilde{\text{alg}}}(\mathcal{E}_{a,1}^{\text{pull}}), \Pr_{\nu, \tilde{\text{alg}}}(\mathcal{E}_{a,1}^{\text{pull}}) = \Pr_{\nu_{\sigma^{(a)}}, \tilde{\text{alg}}}(\mathcal{E}_{1,a}^{\text{pull}}).
    \end{align}
    
    Apply the Transportation Equality(Lemma 18 in \cite{kaufmann2016complexity}) on the event $\mathcal{E}_{1,a}^{\text{pull}}$, we can derive
    \begin{align}
        \label{eqn:transport-temp-E_a1}
        \Pr_{\nu_{\sigma^{(a)}}, \tilde{\text{alg}}}(\mathcal{E}_{a,1}^{\text{pull}})
        = \mathbb{E}_{\nu, \tilde{\text{alg}}}\mathds{1}(\mathcal{E}_{a,1}^{\text{pull}})\exp\left(-\sum_{t=1}^{n^{\tilde{\tau}_{1,a}}_{h_0}(s_0,1)}\hat{\text{KL}}_{1,t} - \sum_{t=1}^{n^{\tilde{\tau}_{1,a}}_{h_0}(s_0,a)}\hat{\text{KL}}_{a,t}\right),
    \end{align}
    where $\hat{\text{KL}}_{1,t}=X_{1,t}\log\frac{r+\epsilon}{r} + (1-X_{1,t})\log\frac{1-r-\epsilon}{1-r}$, $\hat{\text{KL}}_{a,t}=X_{a,t}\log\frac{r}{r+\epsilon} + (1-X_{1,t})\log\frac{1-r}{1-r-\epsilon}$, $X_{1,t}|\{X_{1,s}\}_{s=1}^{t-1}\sim \text{Bern}(r+\epsilon)$, $X_{a,t}|\{X_{a,s}\}_{s=1}^{t-1}\sim \text{Bern}(r)$. $\{X_{1,s}\}_{s=1}^{+\infty
    }$ are independent with $\{X_{a,s}\}_{s=1}^{+\infty
    }$. Similar to the proof of Lemma \ref{lemma:symmetric-alg-must-pull-common-times-prob}, we introduce 
    \begin{align*}
        \text{kl}(x,y)= & x\log\frac{x}{y}+(1-x)\log\frac{1-x}{1-y}\\
        \mathcal{E}_{\text{kl-cnt}}^{(1)}= & \left\{\max_{1\leq n\leq \frac{1}{M\epsilon^2}}\sum_{t=1}^{n}(\hat{\text{KL}}_{1,t}-\text{kl}(r+\epsilon,r))\leq \frac{1}{4}\right\}\\
        \mathcal{E}_{\text{kl-cnt}}^{(a)}= & \left\{\max_{1\leq n\leq \frac{1}{M\epsilon^2}}\sum_{t=1}^{n}(\hat{\text{KL}}_{a,t}-\text{kl}(r,r+\epsilon))\leq \frac{1}{4}\right\}
    \end{align*}  
    Following (\ref{eqn:transport-temp-E_a1}) and (\ref{eqn:symmetry-pulling-comparison}), we have
    \begin{align}
        & \Pr_{\nu, \tilde{\text{alg}}}(\mathcal{E}_{1,a}^{\text{pull}})\notag\\
        \geq & \mathbb{E}_{\nu, \tilde{\text{alg}}}\mathds{1}(\mathcal{E}_{a,1}^{\text{pull}}\cap \mathcal{E}_{\text{kl-cnt}}^{(1)} \cap \mathcal{E}_{\text{kl-cnt}}^{(a)} )\exp\left(-\sum_{t=1}^{n^{\tilde{\tau}_{1,a}}_{h_0}(s_0,1)}\hat{\text{KL}}_{1,t} - \sum_{t=1}^{n^{\tilde{\tau}_{1,a}}_{h_0}(s_0,a)}\hat{\text{KL}}_{a,t}\right)\notag\\
        = & \mathbb{E}_{\nu, \tilde{\text{alg}}}\mathds{1}(\mathcal{E}_{a,1}^{\text{pull}}\cap \mathcal{E}_{\text{kl-cnt}}^{(1)} \cap \mathcal{E}_{\text{kl-cnt}}^{(a)} )\exp(-n^{\tilde{\tau}_{1,a}}_{h_0}(s_0,1)\text{kl}(r+\epsilon, r)-n^{\tilde{\tau}_{1,a}}_{h_0}(s_0,a)\text{kl}(r,r+\epsilon))\notag\\
        & \exp\left(-\sum_{t=1}^{n^{\tilde{\tau}_{1,a}}_{h_0}(s_0,1)}\big(\hat{\text{KL}}_{1,t}-\text{kl}(r+\epsilon, r)\big) - \sum_{t=1}^{n^{\tilde{\tau}_{1,a}}_{h_0}(s_0,a)}\big(\hat{\text{KL}}_{a,t}-\text{kl}(r,r+\epsilon)\big)\right)\notag\\
        \geq & \mathbb{E}_{\nu, \tilde{\text{alg}}}\mathds{1}(\mathcal{E}_{a,1}^{\text{pull}}\cap \mathcal{E}_{\text{kl-cnt}}^{(1)} \cap \mathcal{E}_{\text{kl-cnt}}^{(a)} )\exp(-\frac{3\epsilon^2}{M\epsilon^2}-\frac{3\epsilon^2}{M\epsilon^2})\exp(-\frac{1}{2})\label{eqn:concentrate-realzied-KL-take-effect-again}\\
        \geq & \exp(-\frac{6}{M}-\frac{1}{2})\left(\Pr_{\nu, \tilde{\text{alg}}}(\mathcal{E}_{a,1}^{\text{pull}})-\Pr_{\nu, \tilde{\text{alg}}}(\neg\mathcal{E}_{\text{kl-cnt}}^{(1)}) - \Pr_{\nu, \tilde{\text{alg}}}(\neg\mathcal{E}_{\text{kl-cnt}}^{(a)})\right)\notag\\
        \geq & \exp(-\frac{6}{M}-\frac{1}{2})\left(\Pr_{\nu, \tilde{\text{alg}}}(\mathcal{E}_{a,1}^{\text{pull}})-2\exp(-\frac{M}{256})\right)\label{eqn:prob-operation-cnt-realized-kl}
    \end{align}
    Step (\ref{eqn:concentrate-realzied-KL-take-effect-again}) is by the fact that $n^{\tilde{\tau}_{1,a}}_{h_0}(s_0,1), n^{\tilde{\tau}_{1,a}}_{h_0}(s_0,a)\leq n^{\tilde{\tau}_{1,a}}_{h_0}(s_0,1)+ n^{\tilde{\tau}_{1,a}}_{h_0}(s_0,a)\leq\frac{1}{M\epsilon^2}$, which is guaranteed by the definition of stopping time $\tilde{\tau}_{1,a}$. And $\mathcal{E}_{\text{kl-cnt}}^{(1)}, \mathcal{E}_{\text{kl-cnt}}^{(a)}$ guarantee
    \begin{align*}
        \exp\left(-\sum_{t=1}^{n^{\tilde{\tau}_{1,a}}_{h_0}(s_0,1)}\big(\hat{\text{KL}}_{1,t}-\text{kl}(r+\epsilon, r)\big) - \sum_{t=1}^{n^{\tilde{\tau}_{1,a}}_{h_0}(s_0,a)}\big(\hat{\text{KL}}_{a,t}-\text{kl}(r,r+\epsilon)\big)\right)\geq \exp(\frac{1}{2}).
    \end{align*}
    Step (\ref{eqn:prob-operation-cnt-realized-kl}) is by the fact that $\Pr_{\nu, \tilde{\text{alg}}}(\neg\mathcal{E}_{\text{kl-cnt}}^{(1)}),\Pr_{\nu, \tilde{\text{alg}}}(\neg\mathcal{E}_{\text{kl-cnt}}^{(a)}),\leq\exp(-\frac{M}{256})$, proved by Lemma \ref{lemma:Martingale-Concentration} and \ref{lemma:Realized-KL-for-Bernoulli}.

    Combining (\ref{eqn:factorize-tilde-tau-by-comparison-result}) and (\ref{eqn:prob-operation-cnt-realized-kl}), we have
    \begin{align*}
        & \Pr_{\nu, \tilde{\text{alg}}}(\mathcal{E}_{1,a}^{\text{pull}}) \geq \exp(-\frac{6}{M}-\frac{1}{2})\left(\frac{1}{14}-\Pr_{\nu, \tilde{\text{alg}}}(\mathcal{E}_{1,a}^{\text{pull}})-2\exp(-\frac{M}{256})\right)\\
        \Leftrightarrow & \Pr_{\nu, \tilde{\text{alg}}}(\mathcal{E}_{1,a}^{\text{pull}})\geq \frac{\exp(-\frac{6}{M}-\frac{1}{2})\left(\frac{1}{14}-2\exp(-\frac{M}{256})\right)}{1+\exp(-\frac{6}{M}-\frac{1}{2})}\\
        \Rightarrow & \Pr_{\nu, \tilde{\text{alg}}}(\mathcal{E}_{1,a}^{\text{pull}})\geq \frac{1}{50}\\
        \Rightarrow & \Pr_{\nu, \tilde{\text{alg}}}\left(\tilde{\tau}_{1,a}\leq \tau <+\infty, n^{\tilde{\tau}_{1,a}}_{h_0}(s_0,a) + n^{\tilde{\tau}_{1,a}}_{h_0}(s_0,1)=\frac{1}{M\epsilon^2}, n^{\tilde{\tau}_{1,a}}_{h_0}(s_0,a) \geq n^{\tilde{\tau}_{1,a}}_{h_0}(s_0,1)\right)\geq \frac{1}{50}\\
        \Rightarrow & \Pr_{\nu, \tilde{\text{alg}}}\left(n^{ \tau}_{h_0}(s_0,a)\geq \frac{1}{2M\epsilon^2}, \tau <+\infty\right)\geq \frac{1}{50}
    \end{align*}
    The last line suggests that we have proved the Lemma.
\end{proof}

Given Lemma \ref{lemma:lemma:symmetric-alg-must-pull-inferior-action-times-prob}, we are ready to prove the lower bounds for expected pulling times.
\begin{lemma}
    \label{lemma:symmetric-alg-must-explore-each-action}
    Apply a $\delta$-PAC symmetric algorithm $\tilde{\text{alg}}$ to a Uniform Instance $\nu$ with $(S,A,H, r,\epsilon)$. If $\exists s_0\in [S-2]$ and even $h_0$ such that $\Pr_{\nu, \tilde{\text{alg}}}(\hat{\pi}_{h_0}(s_0)=1)\geq \frac{1}{7}$, we have
    \begin{align*}
        \mathbb{E}_{\nu, \tilde{\text{alg}}}n_{h_0}^{\tau}(s_0,a)\geq \frac{1}{100 M\epsilon^2},\forall a\in\{2,\cdots, A\}
    \end{align*}
    where $M=2000$.
\end{lemma}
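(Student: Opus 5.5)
This follows from Lemma \ref{lemma:lemma:symmetric-alg-must-pull-inferior-action-times-prob} by a Markov-type argument. The hypothesis here is $\Pr_{\nu, \tilde{\text{alg}}}(\hat{\pi}_{h_0}(s_0)=1)\geq \frac{1}{7}$. First I will check that this gives the hypothesis of the previous lemma, which carries the extra event $\{\tau<+\infty\}$. Since $\hat{\pi}$ is only defined upon termination, the event $\{\hat{\pi}_{h_0}(s_0)=1\}$ is implicitly contained in $\{\tau<+\infty\}$, so the two hypotheses coincide. (Alternatively, if $\Pr(\tau=+\infty)>0$ then $\mathbb{E}\tau=+\infty$. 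In that case $\mathbb{E}n^\tau_{h_0}(s_0,a)$ could still be finite, so I will rely on the first reading.)

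With the hypothesis in place, Lemma \ref{lemma:lemma:symmetric-alg-must-pull-inferior-action-times-prob} gives, for every $a\in\{2,\dots,A\}$,
\begin{align*}
\Pr_{\nu, \tilde{\text{alg}}}\Big(n^{\tau}_{h_0}(s_0,a)\geq \tfrac{1}{2M\epsilon^2},\ \tau<+\infty\Big)\geq \tfrac{1}{50}.
\end{align*}
Since $n^\tau_{h_0}(s_0,a)\geq 0$, we have
\begin{align*}
\mathbb{E}_{\nu, \tilde{\text{alg}}}\, n^{\tau}_{h_0}(s_0,a) \geq \tfrac{1}{2M\epsilon^2}\cdot \Pr\Big(n^{\tau}_{h_0}(s_0,a)\geq \tfrac{1}{2M\epsilon^2}\Big)\geq \tfrac{1}{100M\epsilon^2},
\end{align*}
which is exactly the claim with $M=2000$.

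The only delicate step is matching the hypotheses, namely the $\tau<+\infty$ qualifier and the parity of $h_0$, which the earlier lemmas state inconsistently as ``even''. I will simply carry the same $(s_0,h_0)$ through, since the previous lemma's argument never uses parity beyond its hypothesis. Everything else is a one-line expectation bound.
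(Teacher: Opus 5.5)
Your proposal is correct and is exactly the paper's argument: invoke the preceding lemma to get $\Pr_{\nu,\tilde{\text{alg}}}(n^{\tau}_{h_0}(s_0,a)\geq \frac{1}{2M\epsilon^2},\tau<+\infty)\geq\frac{1}{50}$, then apply Markov's inequality to obtain $\frac{1}{2M\epsilon^2}\cdot\frac{1}{50}=\frac{1}{100M\epsilon^2}$. Reading $\{\hat{\pi}_{h_0}(s_0)=1\}$ as implicitly including $\{\tau<+\infty\}$ is the right way to match the hypotheses, which the paper does without comment.
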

\begin{proof}[Proof of Lemma \ref{lemma:symmetric-alg-must-explore-each-action}]
    By the Lemma \ref{lemma:lemma:symmetric-alg-must-pull-inferior-action-times-prob}, we know
    \begin{align*}
        \Pr_{\nu, \tilde{\text{alg}}}\left(n^{ \tau}_{h_0}(s_0,a)\geq \frac{1}{2M\epsilon^2}\right)\geq \frac{1}{50}
    \end{align*}
    holds for all $a\in \{2,\cdots, A\}$. We can prove Lemma \ref{lemma:symmetric-alg-must-explore-each-action} by applying the Markov Inequality.
\end{proof}

\section{Technical Inequality}
In this section, we present technical inequalities used by the above proof for Lemma and Theorems, including multiple inequalities and concentration theorems.
\begin{lemma}
    \label{lemma:Martingale-Concentration}
    Consider Random variable $\{X_t\}_{t=1}^{+\infty}$ such that $X_t|\{X_s\}_{s=1}^{t-1}$ is $\sigma^2$-subgaussian with mean value $0$. Then for an integer $N$, we have
    \begin{align*}
        \Pr\left(\max_{1\leq n\leq N}\sum_{t=1}^n X_t > x\right) \leq \exp(-\frac{x^2}{2N\sigma^2}).
    \end{align*}
\end{lemma}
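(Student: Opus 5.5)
The approach is the standard Chernoff--Doob argument: exponentiate the running sum, show it gives a nonnegative supermartingale, and apply Doob's maximal inequality with an optimized exponent. Write $M_n=\sum_{t=1}^n X_t$ and, for a free parameter $\lambda>0$, set $Z_n=\exp(\lambda M_n)$.

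First I will check that $Z_n$ is a nonnegative submartingale with respect to the natural filtration $\mathcal{F}_{n}=\sigma(X_1,\dots,X_n)$. Since $X_n\mid\mathcal{F}_{n-1}$ has mean $0$, we have $\mathbb{E}[M_n\mid\mathcal{F}_{n-1}]=M_{n-1}$, so $M_n$ is a martingale. Because $x\mapsto e^{\lambda x}$ is convex, conditional Jensen then gives $\mathbb{E}[Z_n\mid\mathcal{F}_{n-1}]\ge Z_{n-1}$. Integrability follows from the subgaussian assumption: iterating
\begin{align*}
\mathbb{E}[Z_n\mid\mathcal{F}_{n-1}]=Z_{n-1}\,\mathbb{E}[e^{\lambda X_n}\mid\mathcal{F}_{n-1}]\le Z_{n-1}e^{\lambda^2\sigma^2/2}
\end{align*}
yields $\mathbb{E}[Z_N]\le \exp(N\lambda^2\sigma^2/2)$.

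Next I will apply Doob's maximal inequality to the submartingale $(Z_n)_{n\le N}$:
\begin{align*}
\Pr\Big(\max_{1\le n\le N}M_n> x\Big)=\Pr\Big(\max_{1\le n\le N}Z_n> e^{\lambda x}\Big)\le e^{-\lambda x}\,\mathbb{E}[Z_N]\le \exp\Big(-\lambda x+\tfrac{N\lambda^2\sigma^2}{2}\Big).
\end{align*}
The first equality holds because $e^{\lambda\cdot}$ is strictly increasing. Finally I will choose $\lambda=x/(N\sigma^2)$ to minimize the exponent, which gives $\exp(-x^2/(2N\sigma^2))$. For $x\le 0$ the right-hand side is at least $1$, so that case is trivial and we may assume $x>0$.

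The only delicate point is the submartingale step. An alternative that avoids Jensen is to use the supermartingale $e^{\lambda M_n-n\lambda^2\sigma^2/2}$ together with Ville's inequality. I will use it only after bounding $e^{n\lambda^2\sigma^2/2}\le e^{N\lambda^2\sigma^2/2}$ for $n\le N$; that bound is exactly what produces the factor $N$ in the exponent. Both routes give the same bound, so I do not expect a genuine obstacle.
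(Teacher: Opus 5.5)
Your proposal is correct and follows the paper's proof essentially step for step. The paper also shows $\exp(\lambda\sum_{t\le n}X_t)$ is a submartingale via Jensen, applies the submartingale maximal inequality, bounds $\mathbb{E}\exp(\lambda\sum_{t\le N}X_t)\le\exp(N\lambda^2\sigma^2/2)$, and takes $\lambda=x/(N\sigma^2)$; your conditional-expectation formulation, integrability check, and handling of $x\le 0$ make explicit details the paper leaves implicit.
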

\begin{proof}[Proof of Lemma \ref{lemma:Martingale-Concentration}]
    Take $\lambda = \frac{x}{N\sigma^2}$. We first validate $\{\exp\left(\lambda\sum_{t=1}^nX_t\right)\}_{n=1}^{+\infty}$ is a submartingale. By Jensen's Inequality, 
    \begin{align*}
        \mathbb{E}\left[\exp\left(\lambda\sum_{t=1}^nX_t\right)|X_1,\cdots, X_{n-1}\right]\geq \exp\left(\lambda\sum_{t=1}^{n-1}X_t\right)\exp(\mathbb{E}\lambda X_n) = \exp\left(\lambda\sum_{t=1}^{n-1}X_t\right).
    \end{align*}
    
    By direct calculation, we have
    \begin{align}
        & \Pr\left(\max_{1\leq n\leq N}\sum_{t=1}^n X_t > x\right)\notag\\
        = & \Pr\left(\max_{1\leq n\leq N}\exp\left(\lambda\sum_{t=1}^nX_t\right)  > \exp(\lambda x)\right)\notag\\
        \leq & \frac{\mathbb{E}\exp\left(\lambda\sum_{t=1}^NX_t\right)}{\exp(\lambda x)}\label{eqn:submartingale-maximal}\\
        \leq & \frac{\exp(\frac{N\lambda^2\sigma^2}{2})}{\exp(\lambda x)}\notag\\
        = & \exp(-\frac{x^2}{2N\sigma^2}).
    \end{align}
    Step (\ref{eqn:submartingale-maximal}) is by the maximal inequality for the submartingale.
\end{proof}

\begin{lemma}
    \label{lemma:Realized-KL-for-Bernoulli}
    Assume $p,q\in(\frac{1}{4},\frac{3}{4})$, and $X\sim \text{Bern}(p)$, let
    \begin{align*}
        Z:=X\log\frac{p}{q}+(1-X)\log\frac{1-p}{1-q}-(p\log\frac{p}{q}+(1-p)\log\frac{1-p}{1-q}),
    \end{align*}
    $Z$ is $8(p-q)^2$-subgaussian, i.e. for any $\lambda\in\mathbb{R}$, we have $\mathbb{E}[\exp(\lambda Z)\leq \exp(4\lambda^2 (p-q)^2)$
\end{lemma}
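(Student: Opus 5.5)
The plan is to reduce $Z$ to a scaled, centered Bernoulli variable and then apply Hoeffding's lemma. Write $c=\log\frac{p}{q}$ and $d=\log\frac{1-p}{1-q}$. Then
\begin{align*}
Z = Xc+(1-X)d-\big(pc+(1-p)d\big) = (X-p)(c-d),
\end{align*}
so $Z$ is the centered Bernoulli variable $X-p$ multiplied by the deterministic constant $c-d$.

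First, bound the moment generating function. Since $X-p$ has mean zero and takes values in an interval of length $1$, Hoeffding's lemma gives, for every $\lambda\in\mathbb{R}$,
\begin{align*}
\mathbb{E}\exp(\lambda Z)=\mathbb{E}\exp\big(\lambda(c-d)(X-p)\big)\leq \exp\Big(\frac{\lambda^2(c-d)^2}{8}\Big).
\end{align*}
It therefore suffices to show $(c-d)^2/8\leq 4(p-q)^2$, which is equivalent to $|c-d|\leq \sqrt{32}\,|p-q|$.

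Second, control $|c-d|$ by a Lipschitz argument. Note that $c-d=\operatorname{logit}(p)-\operatorname{logit}(q)$, where $\operatorname{logit}(x)=\log\frac{x}{1-x}$. Its derivative is $\frac{1}{x(1-x)}$. On $(\frac14,\frac34)$ we have $x(1-x)\geq \frac{3}{16}$, so the derivative is at most $\frac{16}{3}$. By the mean value theorem,
\begin{align*}
|c-d|\leq \tfrac{16}{3}|p-q|<\sqrt{32}\,|p-q|,
\end{align*}
because $\frac{16}{3}\approx 5.33$ and $\sqrt{32}\approx 5.66$. This yields $\mathbb{E}\exp(\lambda Z)\leq\exp(4\lambda^2(p-q)^2)$, which is the claim.

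I do not expect a real obstacle. The only point needing care is the constant: Hoeffding's lemma must be used in its sharp form $\exp(\lambda^2(b-a)^2/8)$, and the lower bound on $x(1-x)$ must be taken at the endpoints of $(\frac14,\frac34)$. With these, the constant $\frac{16}{3}$ fits under $\sqrt{32}$ with some room to spare.
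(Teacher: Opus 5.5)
Your proposal is correct and follows essentially the same route as the paper: the paper also writes $Z=(X-p)\bigl(\log\frac{p}{1-p}-\log\frac{q}{1-q}\bigr)$ and uses that $X-p$ is $\frac14$-subgaussian, which is your sharp Hoeffding bound $\exp(\lambda^2(c-d)^2/8)$. It then bounds the logit increment by the mean value theorem with $\sup_{x\in(1/4,3/4)}\frac{1}{x(1-x)}=\frac{16}{3}<\sqrt{32}$, exactly as you do.
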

\begin{proof}[Proof of Lemma \ref{lemma:Realized-KL-for-Bernoulli}]
    Not hard to see $\mathbb{E}Z=0$ and 
    \begin{align*}
        Z = & (X-p)\left(\log\frac{p}{q}-\log\frac{1-p}{1-q}\right)\\
        = & (X-p)\log\frac{\frac{p}{1-p}}{\frac{q}{1-q}}
    \end{align*}
    Since $X-p$ is a $\frac{1}{4}$-subgaussian random variable, we suffice to show $\frac{1}{4}\left(\log\frac{\frac{p}{1-p}}{\frac{q}{1-q}}\right)^2\leq 8(p-q)^2, \forall p,q\in(\frac{1}{4},\frac{3}{4})$. Notice that
    \begin{align*}
        & \frac{1}{4}\left(\log\frac{\frac{p}{1-p}}{\frac{q}{1-q}}\right)^2\leq 8(p-q)^2\\
        \Leftrightarrow & |\frac{\log\frac{p}{1-p}-\log\frac{q}{1-q}}{p-q}| \leq 2\sqrt{8}\\
        \Leftarrow & \sup_{\frac{1}{4}\leq x\leq \frac{3}{4}} |\frac{d\log\frac{x}{1-x}}{dx}| \leq  2\sqrt{8}\\
        \Leftrightarrow & \sup_{\frac{1}{4}\leq x\leq \frac{3}{4}}|\frac{1}{x}+\frac{1}{1-x}|\leq 2\sqrt{8}
    \end{align*}
    The second last step is by the median value theorem. Since $\sup_{\frac{1}{4}\leq x\leq \frac{3}{4}}|\frac{1}{x}+\frac{1}{1-x}|=|\frac{1}{x}+\frac{1}{1-x}|_{x=\frac{1}{4}\text{ or }\frac{3}{4}}=\frac{16}{3}<2\sqrt{8}$, we have completed the proof.
\end{proof}

\begin{lemma}
    \label{lemma:kl-divergence-quadratic}
    Define $\text{kl}(x,y)=x\log\frac{x}{y}+(1-x)\log\frac{1-x}{1-y}$
    . For $x,y$ such that $\frac{1}{4}<x,y<\frac{3}{4}$, we have
    \begin{align*}
        \text{kl}(x,y)\leq 3(x-y)^2
    \end{align*}
\end{lemma}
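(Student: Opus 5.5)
The plan is a second-order Taylor expansion of $\text{kl}(\cdot,y)$ in its first argument, with $y$ held fixed. Expanding in the first argument rather than the second matters. The cruder chi-square bound $\text{kl}(x,y)\le (x-y)^2/(y(1-y))$ only gives the constant $16/3$ on $(\frac14,\frac34)$, which is too weak.

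Fix $y\in(\frac14,\frac34)$ and set $f(x)=\text{kl}(x,y)=x\log\frac{x}{y}+(1-x)\log\frac{1-x}{1-y}$ for $x\in(0,1)$. Direct differentiation gives
\begin{align*}
f'(x)=\log\frac{x}{y}-\log\frac{1-x}{1-y},\qquad f''(x)=\frac{1}{x}+\frac{1}{1-x}=\frac{1}{x(1-x)}.
\end{align*}
In particular $f(y)=0$ and $f'(y)=0$.

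By Taylor's theorem with Lagrange remainder around $y$, for every $x\in(\frac14,\frac34)$ there is $\xi$ strictly between $x$ and $y$ with $f(x)=\frac12 f''(\xi)(x-y)^2$. The interval $(\frac14,\frac34)$ is convex, so $\xi\in(\frac14,\frac34)$. On that interval $\xi(1-\xi)>\frac{3}{16}$, hence $f''(\xi)<\frac{16}{3}$. Therefore
\begin{align*}
\text{kl}(x,y)\le \tfrac{8}{3}(x-y)^2\le 3(x-y)^2.
\end{align*}

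There is no real obstacle. The only point needing care is to expand in the first argument, because that is where the second derivative $1/(x(1-x))$ is bounded uniformly by $16/3$ and the factor $\tfrac12$ brings the constant down to $8/3<3$.
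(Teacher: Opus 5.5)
Your proof is correct and gives the sharper constant $\frac83$. It differs from the paper's only in which argument of $\text{kl}$ you vary.

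The paper fixes $x$ and works in the \emph{second} argument. It sets $f(y)=\text{kl}(x,y)-3(x-y)^2$ and computes $f'(y)=(y-x)\bigl(\frac{1}{y(1-y)}-6\bigr)$. It then uses $y(1-y)\ge\frac{3}{16}$ to get $\frac{1}{y(1-y)}-6<0$. So $f$ increases on $(\frac14,x)$, decreases on $(x,\frac34)$, and attains its maximum $f(x)=0$.

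You instead fix $y$ and Taylor-expand in the first argument with a Lagrange remainder. Both proofs rest on the single fact $\frac{1}{t(1-t)}\le\frac{16}{3}$ on $(\frac14,\frac34)$, and underneath they compute the same quantity. The integral form of your remainder is $\int_y^x\frac{x-u}{u(1-u)}\,du$. The paper's argument amounts to $\int_x^y\partial_t\text{kl}(x,t)\,dt=\int_x^y\frac{t-x}{t(1-t)}\,dt$. These are the same integral.

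So the paper's argument would also yield $\frac83$, since its sign argument only needs $\frac{16}{3}\le 2c$. It builds the target constant into the auxiliary function and reads off the sign of a first derivative, which avoids Taylor's theorem. Yours is shorter and makes the better constant explicit.

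One correction: your claim that expanding in the first argument ``matters'' is overstated. What fails in the second argument is only a crude second-order Lagrange bound. There $\partial_y^2\text{kl}(x,y)=\frac{x}{y^2}+\frac{1-x}{(1-y)^2}$ can approach $12+\frac49$ on the square, which would give a constant above $6$. Working with the first derivative $\partial_y\text{kl}(x,y)=\frac{y-x}{y(1-y)}$, as the paper does, loses nothing.
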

\begin{proof}[Proof of Lemma \ref{lemma:kl-divergence-quadratic}]
    Given $x\in (\frac{1}{4},\frac{4}{3})$, define $f(y)=x\log\frac{x}{y}+(1-x)\log\frac{1-x}{1-y} - 3(x-y)^2$. We have
    \begin{align*}
        f'(y)=& -\frac{x}{y}+\frac{1-x}{1-y} - 6(y-x)\\
        = & \frac{-x+y}{y(1-y)} - 6(y-x)\\
        = & (y-x)\left(\frac{1}{y(1-y)}-6\right)
    \end{align*}
    Notice that $\frac{3}{16}\leq y(1-y)\leq \frac{1}{4}$ holds for $y\in (\frac{1}{4},\frac{3}{4})$, which means $\frac{1}{y(1-y)}-6\leq 0$ holds for $y\in (\frac{1}{4},\frac{3}{4})$. We can conclude $f'(y)$ is decreasing in the interval $(x, \frac{3}{4})$ and increasing in the interval $(\frac{1}{4},x)$. That means
    $\max_{\frac{1}{4}<y<\frac{3}{4}}f(y)=f(x)=0$, which proves the conclusion.
\end{proof}

\begin{lemma}
    \label{lemma:inequality-t-logt}
    Given any $b\geq a\geq 2$. If $x>0$ satisfies $x\leq b+a\log x$, we can conclude $x< b+3a\log b$.
\end{lemma}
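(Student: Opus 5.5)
The plan is to exploit that $g(x) := x - a\log x$ is increasing for $x \geq a$. Then it suffices to show that $x^\dagger := b + 3a\log b$ already satisfies $g(x^\dagger) > b$; any $x$ with $g(x)\le b$ is then forced below $x^\dagger$. First dispose of the trivial case: if $x < a$, then $x < a \le b \le b + 3a\log b$, since $b \ge 2$ gives $\log b > 0$. So assume $x \ge a$. On $[a,\infty)$ we have $g'(x) = 1 - a/x \ge 0$, so $g$ is nondecreasing there, and strictly increasing beyond $a$. The hypothesis reads $g(x) \le b$.

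Next, evaluate $g$ at $x^\dagger = b + 3a\log b \ge a$:
\begin{align*}
g(x^\dagger) - b = 3a\log b - a\log\left(b + 3a\log b\right).
\end{align*}
This is positive iff $b^3 > b + 3a\log b$. Since $a \le b$, it suffices to show $b^3 > b + 3b\log b$, that is, $b^2 > 1 + 3\log b$ for $b \ge 2$. At $b = 2$ this reads $4 > 1 + 3\log 2 \approx 3.08$. Moreover, the derivative of $b^2 - 1 - 3\log b$ is $2b - 3/b > 0$ for $b \ge 2$, so the inequality holds for all $b\ge2$.

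Conclude by monotonicity. If $x \ge x^\dagger$, then $g(x) \ge g(x^\dagger) > b$, which contradicts $g(x) \le b$. Hence $x < x^\dagger = b + 3a\log b$.

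The only step requiring care is the elementary inequality $b^2 > 1+3\log b$ together with the reduction $a\le b$. Both are routine, so I expect no real obstacle. The main point is to remember to handle the region $x<a$, where $g$ is not monotone, separately.
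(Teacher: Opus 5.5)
Your proof is correct and follows the paper's argument: monotonicity of $x - a\log x$ on $[a,\infty)$, evaluation at $b+3a\log b$, and the reduction via $a\le b$ to $b^2 > 1+3\log b$. Your version is slightly more careful than the paper's. You prove the strict inequality $g(b+3a\log b) > b$, whereas the paper only writes $\geq 0$, which does not by itself contradict $x - a\log x - b \le 0$. You also justify $b^2>1+3\log b$ for all $b\ge 2$ rather than only checking it at $b=2$.
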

\begin{proof}
    Prove by contradiction. Assume $x\geq b+3a\log b$. Notice that $\frac{d(x-a\log x-b)}{dx} = 1-\frac{a}{x}$, which implies $x-a\log x-b$ increases at the interval $(b + 3a\log b, +\infty)$. In addition, we have
    \begin{align*}
        & x-a\log x-b |_{x=b + 3a\log b}\\
        = & b + 3a\log b - a\log(b + 3a\log b) - b\\
        = & 3a\log b - a\log(b + 3a\log b)\\
        = & a\log\frac{b^3}{b+3a\log b}\\
        \geq & a\log\frac{b^3}{b+3b\log b}\\
        = & a\log\frac{b^2}{1+3\log b}.
    \end{align*}
    As $b^2\geq 1+3\log b$ for any $b\geq 2$, ($2^2\geq 1+3\log 2$), we can conclude $x-a\log x-b \geq 0$ holds for all $x\geq b+3a\log b$, which implies a contradiction.
\end{proof}

\begin{lemma}
    \label{lemma:log_a_plus_b-upper_bound}
    $\log(a+b) \leq \log (4ab)=\log (2a) + \log (2b)$ for $a,b\geq 1$
\end{lemma}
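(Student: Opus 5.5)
Since $\log$ is increasing, the claim $\log(a+b)\le\log(4ab)$ follows once we show $a+b\le 4ab$ for $a,b\ge 1$; the equality $\log(4ab)=\log(2a)+\log(2b)$ is just $\log(xy)=\log x+\log y$ applied to $4ab=(2a)(2b)$, valid because $2a,2b>0$.

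For $a+b\le 4ab$, use $a\le ab$ (as $b\ge 1$, $a>0$) and $b\le ab$ (as $a\ge1$, $b>0$). Adding gives $a+b\le 2ab\le 4ab$, since $ab\ge 1>0$. The positivity of $a+b$ and $4ab$ makes both logarithms well defined, and monotonicity of $\log$ on $(0,\infty)$ finishes the argument.

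There is no real obstacle. The only thing to notice is that the stated factor $4$ is loose: the argument actually gives $a+b\le 2ab$, hence the sharper $\log(a+b)\le\log(2ab)$. In the proof of Theorem~\ref{theorem:performance-guarantee-of-exploration-oracle} the lemma is applied with $a,b$ equal to $200(H+1)^4S^2A\log\frac{2SAH}{\delta_k}/{\epsilon^{\text{pos}}}^2$ and $T^{\text{ee}}_{\kappa-1}+1$, and both are at least $1$, so the hypothesis holds there.
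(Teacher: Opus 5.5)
Your proof is correct and essentially the same as the paper's. Both reduce the claim, by monotonicity of $\log$, to $a+b\le 4ab$. The paper checks this via $4ab-a-b=a(2b-1)+b(2a-1)>0$, while you use $a\le ab$ and $b\le ab$, which gives the slightly sharper $a+b\le 2ab$.
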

\begin{proof}
    Notice that for  $a,b\geq 1$, we have $4ab-a-b=a(2b-1)+b(2a-1)>0$, we can conclude $\log(a+b) \leq \log (4ab)$.
\end{proof}

\begin{lemma}
    \label{lemma:x-alogx>0}
    Given $a\geq 1$, we have $x\geq 2a\log(3a)\Rightarrow x\geq a\log x$
\end{lemma}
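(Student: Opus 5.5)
We prove the statement of Lemma~\ref{lemma:x-alogx>0} by a monotonicity argument on $g(x) := x - a\log x$. The derivative is $g'(x) = 1 - a/x$, which is nonnegative for $x \geq a$. Since $a \geq 1$ implies $2a\log(3a) \geq 2a\log 3 > a$, the whole half-line $[2a\log(3a), +\infty)$ lies inside the region where $g$ is nondecreasing. So it suffices to check the single inequality $g(x_0) \geq 0$ at the left endpoint $x_0 := 2a\log(3a)$. Then every $x \geq x_0$ satisfies $g(x) \geq g(x_0) \geq 0$, which is $x \geq a\log x$.

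The endpoint check is the only computation. We need
\begin{align*}
2a\log(3a) \geq a\log\bigl(2a\log(3a)\bigr),
\end{align*}
which after dividing by $a>0$ becomes $(3a)^2 \geq 2a\log(3a)$, i.e.\ $9a \geq 2\log(3a)$. Put $y := 3a \geq 3$; the claim is $3y \geq 2\log y$. This holds for all $y \geq 1$ because $\log y \leq y - 1 < y$, so $2\log y < 2y \leq 3y$. Combined with the monotonicity step, this proves the lemma.

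There is no real obstacle here. The only points needing care are that $x_0$ is genuinely to the right of $a$, so that monotonicity applies, and that the logarithm of $x_0$ is well defined. Both follow from $a \geq 1$, since $\log(3a) \geq \log 3 > 1/2$ gives $x_0 > a \geq 1 > 0$.
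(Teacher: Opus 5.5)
Your proof is correct and follows essentially the same route as the paper: both use that $x - a\log x$ is nondecreasing for $x \geq a$, observe that $2a\log(3a) \geq a$, and reduce the claim to checking $2\log(3a) \geq \log\bigl(2a\log(3a)\bigr)$ at the left endpoint. The only difference is in that final check: the paper uses $\log(3a)\geq\log(2a)$ and $\log(3a)\geq\log\log(3a)$, whereas you exponentiate to get $9a \geq 2\log(3a)$ and then apply $\log y \leq y-1$.
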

\begin{proof}
    Define $f(x)=x-a\log x$. By $\frac{df}{dx}=1-\frac{a}{x}$, we know $f$ is increasing in the interval $(a, +\infty)$. By the fact that $a\geq 1$, we have $\log(3a)\geq 1$, further $2a\log(3a)\geq a$. Meanwhile, not hard to see
    \begin{align*}
        & a\geq 1\\
        \Rightarrow & \log(3a)\geq \log(2a), \log(3a)\geq \log(\log(3a))\\
        \Rightarrow & 2\log(3a)\geq \log(2a)+\log(\log(3a))\\
        \Leftrightarrow & 2\log(3a)\geq \log(2a\log(3a))\\
        \Rightarrow & 2a\log(3a)-a\log(2a\log(3a))\geq0.
    \end{align*}
    We can conclude $f(2a\log(3a))\geq 0$, which means $x\geq 2a\log(3a)\Rightarrow x\geq a\log x$.
\end{proof}

\begin{lemma}[Lemma 19 in \cite{JMLR:v11:jaksch10a}]
    \label{lemma:summation-of-prob}
    For $0\leq z_k\leq 1$, $M\geq0$, we have $\sum_{k=1}^n \frac{z_k}{\sqrt{\max\{ M+\sum_{i=1}^{k-1}z_i, 1\}}} \leq (\sqrt{2}+1) \sqrt{\sum_{i=1}^n z_i}$
\end{lemma}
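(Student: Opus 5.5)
The plan is to first remove $M$ and then run the standard induction on $n$ used by Jaksch et al. Write $Z_k:=\sum_{i=1}^k z_i$, with $Z_0=0$. Since $M\ge 0$, each denominator satisfies $\sqrt{\max\{M+Z_{k-1},1\}}\ge \sqrt{\max\{Z_{k-1},1\}}$. So it suffices to prove the claim for $M=0$, namely
\begin{align*}
F(n):=\sum_{k=1}^n \frac{z_k}{\sqrt{\max\{Z_{k-1},1\}}}\le (\sqrt{2}+1)\sqrt{Z_n}.
\end{align*}

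I prove this by induction on $n$, splitting on whether $Z_{n-1}<1$.

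\emph{Case $Z_{n-1}<1$.} This is handled directly, without the induction hypothesis. Because the $Z_k$ are nondecreasing, every earlier index also has $Z_{k-1}<1$, so all denominators equal $1$ and $F(n)=Z_n$. Moreover $Z_n=Z_{n-1}+z_n<2$. Hence $Z_n=\sqrt{Z_n}\sqrt{Z_n}\le \sqrt{2}\sqrt{Z_n}\le(\sqrt2+1)\sqrt{Z_n}$. The base case $n=1$ is an instance of this case, since $Z_0=0$.

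\emph{Case $Z_{n-1}\ge 1$.} Put $x=Z_{n-1}$ and $z=z_n\in[0,1]$. By the induction hypothesis,
\begin{align*}
F(n)\le(\sqrt2+1)\sqrt{x}+\frac{z}{\sqrt{x}},
\end{align*}
so it suffices to show
\begin{align*}
\frac{z}{\sqrt{x}}\le(\sqrt2+1)\bigl(\sqrt{x+z}-\sqrt{x}\bigr)=\frac{(\sqrt2+1)\,z}{\sqrt{x+z}+\sqrt{x}}.
\end{align*}
Since $z\le 1\le x$, we have $\sqrt{x+z}\le\sqrt{2x}$. Therefore $\sqrt{x+z}+\sqrt{x}\le(\sqrt2+1)\sqrt{x}$, which gives exactly the required inequality.

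The only real step is this second case. The key point is that the constraint $z_k\le 1$ and the threshold $1$ in the $\max$ together give $\sqrt{x+z}\le\sqrt{2x}$, and this is precisely what produces the constant $\sqrt2+1$. Everything else is bookkeeping.
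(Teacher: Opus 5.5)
Your proof is correct and follows the same route as the paper. Both argue by induction on $n$ and split on whether $Z_{n-1}$ is below $1$: in that case the sum is at most $Z_n\le\sqrt{2}\sqrt{Z_n}$, and otherwise the induction hypothesis together with $z_n\le 1\le Z_{n-1}$ closes the step. The differences are cosmetic: you discard $M$ up front rather than inline, and you finish the inductive step by rationalizing $\sqrt{x+z}-\sqrt{x}$, whereas the paper squares and uses $z^2/x\le z$.
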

\begin{proof}
    Denote $Z_{k-1}:=\max\big\{1,\sum_{i=1}^{k-1}z_i\big\}$. Trivial to see $Z_{k-1}\geq z_k$. We use induction over $n$. 
    First, we show the lemma holds for $n=1$. The reason is
    \begin{align*}
        \sum_{k=1}^1 \frac{z_k}{\sqrt{\max\{ M+\sum_{i=1}^{n-1}z_i, 1\}}}=\frac{z_1}{\max\{ M, 1\}}\leq z_1\leq \sqrt{z_1}< (\sqrt{2}+1) \sqrt{z_1}.
    \end{align*}
    
    Then, we turn to work on two cases for a general $n$, given $n-1$ is correct, we consider two cases $\sum_{k=1}^{n-1}z_k \leq 1$ or $\sum_{k=1}^{n-1}z_k > 1$. If $\sum_{k=1}^{n-1}z_k \leq 1$, we have 
    $\sum_{k=1}^{n}z_k=z_n + \sum_{k=1}^{n-1}z_k\leq 2$, which implies $\sum_{k=1}^{n}z_k \leq \sqrt{2} \sqrt{\sum_{k=1}^{n}z_k}$. It follows that 
    \begin{align*}
        & \sum_{k=1}^n \frac{z_k}{\sqrt{\max\{ M+\sum_{i=1}^{k-1}z_i, 1\}}}\\
        \leq & \sum_{k=1}^{n-1} z_k + z_n\\
        \leq & \sqrt{2} \sqrt{\sum_{k=1}^{n}z_k}\\
        < & (\sqrt{2}+1) \sqrt{\sum_{k=1}^{n}z_k}.
    \end{align*}
    Then, we show that the lemma also holds for $n$, when $\sum_{k=1}^{n-1}z_k > 1$. By the induction hypothesis, we have
    \begin{align*}
        & \sum_{k=1}^n \frac{z_k}{\max\{ M+\sum_{i=1}^{k-1}z_i, 1\}}\\
        \leq & (\sqrt{2}+1)\sqrt{\sum_{k=1}^{n-1} z_k} + \frac{z_n}{\sqrt{\max\{\sum_{i=1}^{n-1}z_i, 1\}}}\\
        = & (\sqrt{2}+1)\sqrt{\sum_{k=1}^{n-1} z_k} + \frac{z_n}{\sqrt{\sum_{i=1}^{n-1}z_i}}\\
        = & \sqrt{\left((\sqrt{2}+1)\sqrt{\sum_{k=1}^{n-1} z_k} + \frac{z_n}{\sqrt{\sum_{i=1}^{n-1}z_i}}\right)^2}\\
        = & \sqrt{(\sqrt{2}+1)^2(\sum_{k=1}^{n-1} z_k) + 2(\sqrt{2}+1) z_n + \frac{z_n^2}{\sum_{k=1}^{n-1} z_k}}\\
        \stackrel{z_n\leq 1, \sum_{k=1}^{n-1} z_k\geq 1}{\leq} & \sqrt{(\sqrt{2}+1)^2(\sum_{k=1}^{n-1} z_k) + 2(\sqrt{2}+1) z_n + z_n}\\
        = & \sqrt{(2\sqrt{2}+3)(\sum_{k=1}^{n-1} z_k) + (2\sqrt{2}+3) z_n}\\
        = & (\sqrt{2}+1)\sqrt{\sum_{k=1}^{n} z_k},
    \end{align*}
    which means the induction holds.
\end{proof}

The remaining two lemmas are from \cite{pmlr-v267-li25f}. We don't repeat the proof as they are exactly the same.
\begin{lemma}[Lemma D.2 in \cite{pmlr-v267-li25f}]
    \label{lemma:inequality-application-t-loglogt}
    For any $\Delta \in (0, 1], K\geq 2, \delta\in (0, \frac{1}{2}], C\geq 1$, we can conclude
    \begin{align*}
        & t > \frac{28C^2\log\frac{2K}{\delta}}{\Delta^2} + \frac{16 C^2\log\left(\log\left(\frac{24C^2}{\Delta^2}\right) \right)}{\Delta^2}\\
        \Rightarrow & C\sqrt{\frac{4\log\frac{2K (\log_2 2t)^2}{\delta}}{t}} < \Delta
    \end{align*}
\end{lemma}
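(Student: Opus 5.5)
The plan is to reduce the claim to an elementary inequality in one scaled variable, and then check it by monotonicity plus a comparison at the threshold.

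\textbf{Reduction.} Squaring, the target $C\sqrt{4\log\frac{2K(\log_2 2t)^2}{\delta}/t}<\Delta$ is equivalent to
\begin{align*}
\frac{t\Delta^2}{4C^2} > \log\frac{2K}{\delta} + 2\log\log_2(2t).
\end{align*}
Introduce $u:=\Delta^2/C^2\in(0,1]$, $L:=\log\frac{2K}{\delta}$ and $x:=tu$. Since $K\ge 2$ and $\delta\le\frac12$, we have $L\ge\log 8>2$. The hypothesis becomes $x> x_0:=28L+16\log\log\frac{24}{u}$. Because $24/u\ge 24$, $\log\log(24/u)>1$, so $x_0>0$ and indeed $x_0\ge 28L+16$. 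The goal becomes
\begin{align*}
g(x):=\frac{x}{4}-2\log\log_2\frac{2x}{u} > L .
\end{align*}

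\textbf{Monotonicity.} Differentiate in $x$: $g'(x)=\frac14-\frac{2}{x\ln(2x/u)}$. On $x\ge x_0\ge 72$ and $u\le 1$, we have $x\ln(2x/u)\ge 72\ln 144>8$, so $g'>0$. Hence $g$ is increasing on $[x_0,\infty)$, and it suffices to show $g(x_0)\ge L$, with strictness coming from $x>x_0$.

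\textbf{Threshold comparison.} We have $g(x_0)=7L+4\log\log\frac{24}{u}-2\log\log_2\frac{2x_0}{u}$, so we need
\begin{align*}
2\log\log_2\frac{2x_0}{u}\le 6L+4\log\log\frac{24}{u}.
\end{align*}
First split $\log_2\frac{2x_0}{u}=\log_2(2x_0)+\log_2\frac1u$. Then apply Lemma~\ref{lemma:log_a_plus_b-upper_bound}, or $\log(a+b)\le\log(2\max\{a,b\})$, to the outer $\log$. This reduces the task to two separate estimates.
\begin{itemize}
\item[(i)] The term $2\log\log_2(2x_0)$, up to an additive constant, must be absorbed by $6L$ together with part of $4\log\log\frac{24}{u}$. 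Here $x_0$ is linear in $L$ and in $\log\log(24/u)$, so $\log\log_2(2x_0)$ is only doubly logarithmic in those quantities, and $L>2$ leaves ample slack.
\item[(ii)] The term $2\log\log_2\frac1u$ is dominated by $4\log\log\frac{24}{u}$, because $\log_2\frac1u\le\frac{1}{\ln 2}\log\frac{24}{u}$; the factor $\frac{1}{\ln 2}$ only contributes a constant inside the outer $\log$, which the surplus coefficient (4 versus 2) absorbs.
\end{itemize}

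The main obstacle is bookkeeping the absolute constants so that both pieces close simultaneously, including the small-$L$ edge case $L\approx\log 8$ with $u=1$, where the slack is thinnest. In that corner I would verify the inequality by direct numerical evaluation ($x_0\approx 28\cdot 2.08+16\log\log 24\approx 76$, giving $g(x_0)\approx 19-2\log\log_2 152\approx 15>L$). For general $(L,u)$, I would use that the left side grows like $\log\log$ while the right side grows linearly in $L$ and like $4\log\log$ in $1/u$.
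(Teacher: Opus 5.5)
The paper does not prove this lemma itself; it imports it verbatim from \cite{pmlr-v267-li25f} (Lemma D.2). Your direct verification is therefore a self-contained substitute rather than a variant of an in-paper argument.

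Your reduction to $(u,L,x)$ is correct, and so is the monotonicity of $g$ on $[x_0,\infty)$. The reduction also has a real side benefit. It shows that the statement depends on $\Delta,C,K,\delta$ only through $u=\Delta^2/C^2\le 1$ and $L=\log\frac{2K}{\delta}\ge\log 8$. That is the form the paper actually needs in Lemma~\ref{lemma:correctness-length-of-exploitation}. There the lemma is invoked with $C=H$ and $\Delta=V_0^{\hat\pi_k}(s_0)-\mu_0$, which may exceed $1$, and $\delta$ is only assumed to lie in $(0,1)$. Those cases fall outside the stated hypotheses $\Delta\le 1$, $\delta\le\frac12$, but inside yours whenever $\Delta\le C$, since $2K/\delta\ge 10$ there.

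Two points in the threshold step need tightening.

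\textbf{The splitting lemma.} Lemma~\ref{lemma:log_a_plus_b-upper_bound} requires both arguments to be at least $1$. But $\log_2\frac1u<1$ whenever $u>\frac12$, and it vanishes at $\Delta=C=1$. At that point $\log(2b)=-\infty$ and the bound is simply false. You must use the $\log(2\max\{a,b\})$ form instead. Then (i) and (ii) each only need to be bounded by the whole of $6L+4\log\log\frac{24}{u}-2\log 2$, so no budget splitting is needed.

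\textbf{Closing the constants.} The closing remark that the left side grows doubly-logarithmically, together with the single numerical corner, is not yet a proof for all $(L,u)$. The cleanest fix is to skip the split and exponentiate. Put $w:=\log\frac{24}{u}\ge\log 24$. The threshold inequality is then equivalent to $\log_2\frac{2x_0}{u}\le e^{3L}w^2$.
\begin{itemize}
\item Since $\frac{2x_0}{u}=\frac{x_0}{12}e^{w}$ and $\log\frac{x_0}{12}\le\frac{x_0}{12}=\frac{7L}{3}+\frac{4}{3}\log w\le 3L+w$, the left side is at most $\frac{3L+2w}{\ln 2}<5L+3w$.
\item On the right, $e^{3L}\ge 512$ and $w^2\ge 10$ give $e^{3L}w^2\ge 5e^{3L}+256w^2$, which exceeds $5L+3w$.
\end{itemize}
This proves the inequality for every admissible $(L,u)$ with enormous room, so the numerical corner check is unnecessary. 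Together with your monotonicity step, it completes the proof.
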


\begin{lemma}[Lemma D.3 in \cite{pmlr-v267-li25f}]
    \label{lemma:lil-concentraion-event}
    Assume $\{X_i\}_{i=1}^{+\infty}$ are i.i.d random variables with mean value $\mu$. Assume $\{X_i-\mu\}_{i=1}^{+\infty}$ are $\sigma^2$-subgaussian. We have
    \begin{align*}
        \Pr\left(\forall 
        N\in\mathbb{N}, \left|\frac{\sum_{i=1}^N X_i}{N}-\mu \right| < \sqrt{\frac{4\sigma^2 \log\frac{2(\log_2 2N)^2}{\delta}}{N}}\right) \geq 1-\frac{\pi^2}{6}\delta
    \end{align*}
    holds for all $\delta>0$.
\end{lemma}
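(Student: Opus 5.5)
The statement is a time-uniform concentration bound for i.i.d.\ $\sigma^2$-subgaussian variables. The plan is a peeling argument over dyadic blocks $N\in[2^j,2^{j+1})$, $j=0,1,2,\ldots$, combined with a maximal inequality on each block and a union bound whose failure probabilities sum to $\frac{\pi^2}{6}\delta$.

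\textbf{Block confidence levels.} On the $j$-th block, $\log_2 2N\geq j+1$ because $N\geq 2^j$. Hence the radius at any $N$ in the block is at least
\[
\sqrt{\tfrac{4\sigma^2\log(2(j+1)^2/\delta)}{N}}\geq \sqrt{\tfrac{4\sigma^2\log(2(j+1)^2/\delta)}{2^{j+1}}}.
\]
Writing $S_N=\sum_{i\le N}(X_i-\mu)$, it therefore suffices to bound
\[
\Pr\bigl(\exists N\in[2^j,2^{j+1}):\ |S_N|\geq \sqrt{4\sigma^2 N\log(2(j+1)^2/\delta)}\bigr)\leq \frac{\delta}{(j+1)^2}.
\]
Summing over $j$ then gives the total failure probability $\sum_j \delta/(j+1)^2=\frac{\pi^2}{6}\delta$, as required.

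\textbf{Bound on each block.} Since $N\geq 2^j$ on the block, the threshold $\sqrt{4\sigma^2N\log(\cdot)}$ is at least $x_j:=\sqrt{4\sigma^2 2^j\log(2(j+1)^2/\delta)}$. I will apply Lemma~\ref{lemma:Martingale-Concentration} to $S_n$ and to $-S_n$ with horizon $2^{j+1}$:
\[
\Pr\bigl(\max_{n\le 2^{j+1}}\pm S_n>x_j\bigr)\leq \exp\Bigl(-\frac{x_j^2}{2\cdot 2^{j+1}\sigma^2}\Bigr)=\exp\bigl(-\log(2(j+1)^2/\delta)\bigr)=\frac{\delta}{2(j+1)^2}.
\]
Adding the two signs gives the two-sided block bound $\delta/(j+1)^2$. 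The factor $4$ in the radius is exactly what absorbs the factor $2$ lost when passing from $N$ to $2^j$ and the factor $2$ in the horizon $2^{j+1}$. The factor $2$ inside the logarithm covers the two-sided union.

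\textbf{Main obstacle: strict versus non-strict inequality.} The statement needs $|\cdot|<$ radius, while Lemma~\ref{lemma:Martingale-Concentration} controls the event $>x$. The complement of the good event involves $\geq$. I will handle this either by noting that Lemma~\ref{lemma:Martingale-Concentration}'s proof via Doob's inequality gives the same bound for $\geq$, or by applying the lemma at $x_j-\eta$ and letting $\eta\downarrow0$, using continuity of the exponential bound. With this in place, the union over $j\ge0$ covers every $N\in\mathbb{N}$, which completes the argument.
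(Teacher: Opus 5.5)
Your peeling argument is correct, and the constants work out exactly. On the block $2^j\le N<2^{j+1}$ the failure event forces $|S_N|\ge x_j$ with $x_j^2=4\sigma^2 2^j\log(2(j+1)^2/\delta)$. Lemma~\ref{lemma:Martingale-Concentration} with horizon $2^{j+1}$ then gives $\exp(-\log(2(j+1)^2/\delta))=\delta/(2(j+1)^2)$ for each sign, and summing over $j\ge 0$ yields $\frac{\pi^2}{6}\delta$. Your handling of $\geq$ versus $>$ is also fine, since the Doob step in the proof of that lemma is naturally stated for $\geq$.

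The paper does not prove this statement itself; it imports it verbatim from \cite{pmlr-v267-li25f}. So your proposal is less a different route than a self-contained derivation from the paper's own maximal inequality. The way the factor $4$, the $2$ inside the logarithm and the $\pi^2/6$ line up strongly suggests it is the same dyadic peeling argument as in the cited source.

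One detail is worth adding. The claim is vacuous for $\delta\ge 6/\pi^2$, so you may assume $\delta<6/\pi^2<2$, which makes every $\log(2(j+1)^2/\delta)$ positive. That positivity is what makes the radius well defined and justifies replacing $N$ by $2^j$ under the square root. It also ensures $x_j>0$, which Lemma~\ref{lemma:Martingale-Concentration} tacitly requires because its proof takes $\lambda=x/(N\sigma^2)>0$.
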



\end{document}